\definecolor{commentcolor}{RGB}{110,154,155}   
\newcommand{\PyComment}[1]{\footnotesize\ttfamily\textcolor{commentcolor}{\# #1}}  
\newcommand{\PyCode}[1]{\footnotesize\ttfamily\textcolor{black}{#1}} 
\crefname{equation}{}{}
\Crefname{equation}{}{}
\NewDocumentCommand{\bp}{m}{\left(#1\right)}
\NewDocumentCommand{\rp}{m}{\mleft(#1\mright)}
\theoremstyle{plain}
\newtheorem{theorem}{Theorem}
\newtheorem{lemma}[theorem]{Lemma}
\newtheorem{proposition}[theorem]{Proposition}
\newtheorem{approximation}[theorem]{Approximation}
\theoremstyle{remark}
\newtheorem{remark}[theorem]{Remark}
\newcommand{\e}{\bm{e}}
\newcommand{\w}{\bm{w}}
\newcommand{\x}{\bm{x}}
\newcommand{\z}{\bm{z}}
\newcommand{\eps}{\varepsilon}
\newcommand{\bR}{\mathbb{R}}
\newcommand{\M}{\bm{M}}
\newcommand{\I}{\bm{I}}
\newcommand{\W}{\bm{W}}
\newcommand{\X}{\bm{X}}
\newcommand{\Z}{\bm{Z}}
\newcommand{\D}{\bm{D}}
\newcommand{\Q}{\bm{Q}}
\newcommand{\K}{\bm{K}}
\newcommand{\V}{\bm{V}}
\renewcommand{\P}{\bm{P}}
\NewDocumentCommand{\logdet}{m}{\operatorname{logdet}\rp{#1}}
\NewDocumentCommand{\mat}{m}{\begin{bmatrix}#1\end{bmatrix}}
\NewDocumentCommand{\softmax}{m}{\operatorname{softmax}\rp{#1}}
\NewDocumentCommand{\diaglr}{m}{\operatorname{diag}\rp{#1}}
\NewDocumentCommand{\explr}{m}{\operatorname{exp}\rp{#1}}
\NewDocumentCommand{\given}{}{\;\ifnumequal{\currentgrouptype}{16}{\middle|}{\mid}\;}
\newcommand{\ISTA}[1]{\operatorname{\texttt{ISTA}}(#1)}
\newcommand{\SSA}[1]{\operatorname{\texttt{SSA}}(#1)}
\newcommand{\MSSA}[1]{\operatorname{\texttt{MSSA}}(#1)}
\newcommand{\ours}{\textsc{crate}}
\newcommand\blfootnote[1]{%
	\begingroup
	\renewcommand\thefootnote{}\footnote{#1}%
	\addtocounter{footnote}{-1}%
	\endgroup
}
\title{
White-Box Transformers via Sparse Rate Reduction 
}
\author{
    Yaodong Yu$^{1}$\, 
    Sam Buchanan$^{2}$\, 
    Druv Pai$^{1}$ \,
    Tianzhe Chu$^{1}$ \,
    Ziyang Wu$^{1}$ \,
    Shengbang Tong$^{1}$
}
\begin{document}
\allowdisplaybreaks

\maketitle

\vspace{-0.42in}
\begin{center}
    \textbf{Benjamin D. Haeffele}$^{3}$ \,
    \textbf{Yi Ma}$^{1}$
    \\
    \vspace{0.15in}
    $^{1}$University of California, Berkeley\quad
    $^{2}$TTIC\quad
    $^{3}$Johns Hopkins University
\end{center}

\blfootnote{\hspace*{-1.5mm}$^{1}$\texttt{\{yyu,yima\}@eecs.berkeley.edu}\texttt{,}\, \texttt{\{druvpai,chutzh,zywu,tsb\}@berkeley.edu}
\\ 
\hspace*{4.2mm}$^{2}$\,\texttt{sam@ttic.edu}
\\ 
\hspace*{4.2mm}$^{3}$\,\texttt{bhaeffele@jhu.edu}}

\begin{abstract} 
In this paper, we contend that the objective of representation learning is  to compress and transform the distribution of the data, say sets of tokens, towards a mixture of low-dimensional Gaussian distributions supported on incoherent subspaces. The quality of the final representation can be measured by a unified objective function called \textit{sparse rate reduction}. From this perspective, popular deep networks such as transformers can be naturally viewed as realizing iterative schemes to optimize this objective incrementally. Particularly, we show that the standard transformer block can be derived from alternating optimization on complementary parts of this objective: the multi-head self-attention operator can be viewed as a gradient descent step to compress the token sets by minimizing their lossy coding rate, and the subsequent multi-layer perceptron can be viewed as attempting to sparsify the representation of the tokens. This leads to a family of \textit{white-box} transformer-like deep network architectures which are mathematically fully interpretable. Despite their simplicity, experiments show that these networks indeed learn to optimize the designed objective: they compress and sparsify representations of large-scale real-world vision datasets such as ImageNet, and achieve performance very close to thoroughly engineered transformers such as ViT. 
Code is at \url{https://github.com/Ma-Lab-Berkeley/CRATE}.
\vspace{-2mm}
\end{abstract}

\setlength{\parskip}{0.4ex}
\setlength{\parindent}{0pt}

\vspace{2mm}
\section{Introduction}\label{sec:intro}
In recent years, deep learning has seen tremendous empirical success in processing
massive amounts of high-dimensional and multi-modal data. Much of this success
is owed to effective learning of the data distribution and then transforming
the distribution to a parsimonious, i.e. \textit{structured and compact},
representation \cite{Radford2021-ir,Chen2020-ha,He2021-lb,ma2022principles},
which facilitates many downstream tasks (e.g., in vision, classification
\cite{He2016-lc,dosovitskiy2020image}, recognition and segmentation
\cite{Carion2020-fm,He2017-be,Kirillov2023-pm}, and generation
\cite{Karras2018-si,rombach2022high,Saharia2022-na}). To this end, many models
and methods have been proposed and practiced, each with its own strengths and
limitations. Here, we give several popular methods a brief accounting as
context for a complete understanding and unification that we seek in this
work.

\paragraph{Transformer models and self-attention.} Transformers \cite{vaswani2017attention} are one of the latest popular models for learning a representation for high-dimensional structured data, such as text \cite{vaswani2017attention,devlin2018bert,brown2020language}, images \cite{dosovitskiy2020image,dehghani2023scaling}, and other types of signals \cite{gong2022contrastive,arnab2021vivit}. After the first block, which converts each data point (such as a text corpus or image) into a set or sequence of \textit{tokens}, further processing is performed on the token sets, in a medium-agnostic manner \cite{vaswani2017attention,dosovitskiy2020image}. A cornerstone of the transformer model is the so-called \textit{self-attention layer}, which exploits the statistical correlations among the sequence of tokens to refine the token representation. Transformers have been highly successful in learning compact representations that perform well on many downstream tasks. Yet the transformer network architecture is empirically designed and lacks a rigorous mathematical interpretation. In fact, the output of the attention layer itself has several competing interpretations \cite{vidal2022attention,li2023theoretical}. As a result, the statistical and geometric relationship between the data distribution and the final representation learned by a transformer largely remains a mysterious black box. 

\paragraph{Diffusion models and denoising.} Diffusion models
\cite{Sohl-Dickstein2015-kz,ho2020denoising,Song2019-ww,Song2020-xo,Song2020-hb}
have recently become a popular method for learning the data distribution,
particularly for generative tasks and natural image data which are highly
structured but notoriously difficult to effectively model
\cite{wakin2005multiscale,Donoho2005-ag}. The core concept of
diffusion models is to start with features sampled from a Gaussian noise
distribution (or some other standard template) and \textit{iteratively denoise}
and deform the feature distribution until it converges to the original data
distribution. This process is computationally intractable if
modeled in just one step \cite{Koehler2022-ed}, so it is typically broken into multiple
incremental steps. The key to each step is the so-called \textit{score
function}, or equivalently \cite{Efron2011-wn} an estimate for the ``optimal
denoising function''; in practice this function is modeled using a generic
black-box deep network. Diffusion models have shown effectiveness at learning
and sampling from the data distribution
\cite{karras2022elucidating,chen2022improved,rombach2022high}. However, despite
some recent efforts \cite{song2023consistency}, they generally do not establish
any clear correspondence between the initial features and data samples. Hence,
diffusion models themselves do not offer a parsimonious or interpretable
representation of the data distribution.

\paragraph{Structure-seeking models and rate reduction.} In both of the
previous two methods, the representations were constructed implicitly as a
byproduct of solving a downstream task (e.g., classification or generation/sampling)
using deep networks. However, one can also explicitly learn a representation of
the data distribution as a task in and of itself; this is most commonly done by
trying to identify and represent low-dimensional structures in the input data.
Classical examples of this paradigm include model-based approaches such as
sparse coding \cite{olshausen1997sparse,chen2018sparse} and dictionary learning \cite{Spielman2012-le,Gribonval2014-zr,zhai2020complete}, out of which grew
early attempts at designing and interpreting deep network architectures
\cite{Papyan2018-qc,Bruna2013-on}. More recent approaches build instead from a
model-free perspective, where one learns a representation through a
sufficiently-informative pretext task (such as compressing similar and
separating dissimilar data in contrastive
learning \cite{tian2020makes,wang2022rethinking,shwartz2023compress}, or
maximizing the information gain in the class of maximal coding rate reduction methods
\cite{ma2007segmentation,OriginalMCR2,chan2021redunet}). 
Compared to black-box deep learning approaches, both model-based and model-free
representation learning schemes have the advantage of being more interpretable: 
they allow users to explicitly design desired properties of the learned
representation \cite{OriginalMCR2,chan2021redunet,pai2022pursuit}. Furthermore,
they allow users to construct new white-box forward-constructed deep network
architectures
\cite{gregor2010learning,chan2021redunet,hinton2022forwardforward} by
\textit{unrolling the optimization strategy for the representation learning
objective}, such that each layer of the constructed network implements an
iteration of the optimization algorithm
\cite{gregor2010learning,chan2021redunet,tolooshams2021stable}. 
Unfortunately, in this paradigm, if the desired properties are narrowly defined, it may be difficult to achieve good practical performance on large  real-world datasets.

\begin{figure}[t!]
     \centering
     \begin{subfigure}[b]{0.99\textwidth}
         \centering
         \includegraphics[width=\textwidth]{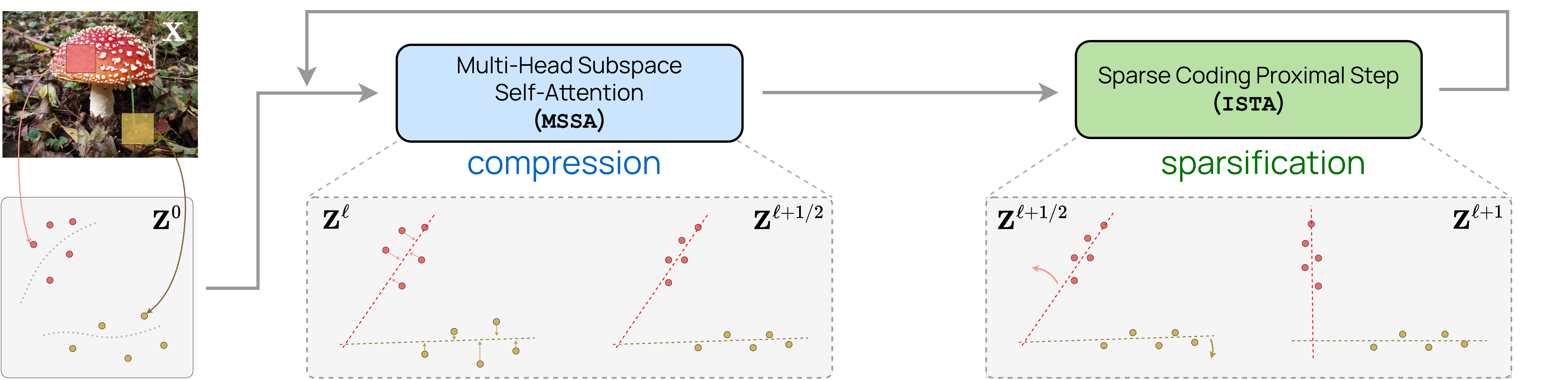}
     \end{subfigure}
     \caption{\textbf{The `main loop' of the \ours{} white-box deep network design.} After encoding input data $\vX$ as a sequence of tokens $\vZ^0$, \ours{} constructs a deep network that transforms the data to a canonical configuration of low-dimensional subspaces by successive {\color{blue!60!black}\bf\textit{compression}} against a local model for the distribution, generating $\vZ^{\ell+1/2}$, and {\color{green!60!black}\bf\textit{sparsification}} against a global dictionary, generating $\vZ^{\ell+1}$. Repeatedly stacking these blocks and training the model parameters via backpropagation yields a powerful and interpretable representation of the data.
        }
        \label{fig:diagram}
        \vspace{-0.1in}
\end{figure}

\paragraph{Our contributions, and outline of this work.} In this work, we aim
to remedy the limitations of these existing methods with a more unified framework for designing transformer-like network architectures that leads to both mathematical interpretability and good practical performance. 
To this end, we propose to learn a sequence of \textit{incremental mappings} to
obtain a most \textit{compressed and sparse} representation for the input data
(or their token sets) that optimizes 
{\em a unified objective function} known as the sparse rate reduction,
specified later in   \eqref{eq:sparse-rr}. The goal of the mapping is
illustrated in \Cref{fig:diagram}.
Within this framework, we unify the above three seemingly disparate approaches and show that \textit{transformer-like deep network layers can be naturally derived from unrolling iterative optimization schemes to incrementally optimize the sparse rate reduction objective.} In particular, our contributions and outline of the paper are as follows: 
\begin{itemize}[leftmargin=1.0cm]
    \item In \Cref{sub:denoising} we show, using an idealized model for the token distribution, that if one \textit{iteratively denoises} the tokens towards a family of low-dimensional subspaces, the associated score function assumes an explicit form similar to a self-attention operator seen in transformers. 
    \item In \Cref{sub:compression} we derive the multi-head self-attention layer as an unrolled gradient descent step to minimize  the lossy coding rate part of the rate reduction, showing another interpretation of the self-attention layer as compressing the token representation. 
    \item In \Cref{sub:sparse} we show that the multi-layer perceptron which immediately follows the multi-head self-attention in transformer blocks can be interpreted as (and replaced by) a layer which incrementally optimizes the remaining part of the sparse rate reduction objective by constructing a sparse coding of the token representations. 
    \item In \Cref{sub:architecture} we use this understanding to create a new white-box (fully mathematically interpretable) transformer architecture called \ours{} (i.e., Coding RAte reduction TransformEr), where each layer performs a \textit{single step} of an alternating minimization algorithm to optimize the sparse rate reduction objective. 
\end{itemize}
Hence, within our framework, the learning objective function, the deep learning architecture, and the final learned representation \textit{all become white boxes} that are fully mathematically interpretable. As the experiments in \Cref{sec:exp} show, the \ours{} networks, despite being simple, can already learn the desired compressed and sparse representations on large-scale real-world datasets and achieve performance on par with much more heavily engineered transformer networks (such as ViT) on a wide variety of tasks (e.g., classification and transfer learning).

\section{Technical Approach and Justification}\label{sec:approach}
\subsection{Objective and  Approach}  \label{sub:formulation}
We consider a general learning setup associated with real-world signals. We have some random variable \(\X = \mat{\x_{1}, \dots, \x_{N}} \in \bR^{D \times N}\) which is our data source; each \(\x_{i} \in \bR^{D}\) is interpreted as a \textit{token}\footnote{For language transformers, tokens roughly correspond to words~\cite{vaswani2017attention}, while for vision transformers, tokens correspond to image patches~\cite{dosovitskiy2020image}.}, and the \(\x_{i}\)'s may have arbitrary correlation structures. We use \(\Z = \mat{\z_{1}, \dots, \z_{N}} \in \bR^{d \times N}\) to denote the random variable which defines our representations. Each \(\z_{i} \in \bR^{d}\) is the representation of the corresponding token \(\x_i\). 
We are given \(B \geq 1\) i.i.d.~samples \(\X_{1}, \dots, \X_{B} \sim \X\), whose tokens are \(\x_{i, b}\). The representations of our samples are denoted \(\Z_{1}, \dots, \Z_{B} \sim \Z\), and those of our tokens are \(\z_{i, b}\). Finally, for a given network, we use \(\Z^{\ell}\) to denote the output of the first \(\ell\) layers when given \(\X\) as input. Correspondingly, the sample outputs are \(\Z_{i}^{\ell}\) and the token outputs are \(\z_{i, b}^{\ell}\).

\paragraph{Objective for learning a structured and compact representation.}
Following the framework of rate reduction \cite{chan2021redunet}, we contend
that the goal of representation learning is to find a feature mapping \(f
\colon \X \in \bR^{D \times N} \to \Z\in \bR^{d \times N}\) which transforms input
data \(\X \in \bR^{D \times N}\) with a potentially nonlinear and multi-modal
distribution to a (piecewise) \textit{linearized and compact} feature
representation \(\Z \in \bR^{d \times N}\). While the joint distribution of
tokens \((\z_{i})_{i = 1}^{N}\) in \(\Z\) may be sophisticated (and
task-specific), we further contend that it is reasonable and practical to
require that the target marginal distribution of individual tokens
\(\z_{i}\) should be highly compressed and structured, amenable for compact coding. Particularly, we require the distribution to be \textit{a mixture of low-dimensional (say \(K\)) Gaussian
distributions}, such that the \(k^{\mathrm{th}}\) Gaussian has mean \(\Zero \in \bR^{d}\), covariance \(\vSigma_{k} \succeq \Zero \in \bR^{d \times d}\), and support spanned by the orthonormal basis \(\vU_{k} \in \bR^{d \times p}\). We denote $\vU_{[K]} = (\vU_k)_{k=1}^K$ to be the set of bases of all Gaussians. 
Hence to maximize the \textit{information gain} \cite{ma2022principles} for the final token representation, we wish to maximize the rate reduction \cite{ma2007segmentation,OriginalMCR2} of the tokens, i.e., $\max_{\Z} \Delta R(\Z; \vU_{[K]}) = R(\Z) - R^c(\Z; \vU_{[K]})$, where \(R\) and \(R^{c}\) are estimates of lossy coding rates to be formally defined in \Cref{eq:coding_rate,eq:def-mcr-parts}. This also promotes token representations \(\z_{i}\) from different Gaussians to be \textit{incoherent} \cite{OriginalMCR2}.

Since rate reduction is an intrinsic measure of goodness for the representation, it is invariant to arbitrary rotations of the representations. Therefore, to ensure the final representations are amenable to more compact coding, we would like to transform the representations (and their supporting subspaces) so that they become \textit{sparse} with respect to the standard coordinates of the resulting representation space.\footnote{That is, having the fewest nonzero entries.}  
The combined rate reduction and sparsification process is illustrated in \Cref{fig:diagram}. Computationally, we may combine the above two goals into a unified objective for optimization:
\vspace{0.05in}
\begin{equation}
   \max_{f \in \mathcal{F}} \mathbb{E}_{\Z}\big[ \Delta R(\Z; \vU_{[K]}) - \lambda \|\bm{Z}\|_0 \big] = \max_{f \in \mathcal{F}} \mathbb{E}_{\Z}\big[ R(\Z) - {R}^c(\Z; \vU_{[K]})- \lambda \|\bm{Z}\|_0 \big]\ \text{s.t.}\ \Z = f(\X),
   \label{eq:sparse-rr}
\end{equation}

where the $\ell^0$ norm $\|\Z\|_0$ promotes the sparsity of the final token representations \(\Z = f(\X)\).\footnote{To simplify the notation, we will discuss the objective for one sample $\X$ at a time with the understanding that we always mean to optimize the expectation.} 
We call this objective ``\textit{sparse rate reduction.}''

\paragraph{White-box deep architecture as unrolled  incremental optimization.}
Although easy to state, each term of the above  objective can be computationally very challenging
to optimize \cite{Wright-Ma-2022,chan2021redunet}. Hence it is natural to take an approximation approach that realizes the global transformation $f$ optimizing 
\Cref{eq:sparse-rr} through a concatenation of multiple, say $L$, simple \textit{incremental and local} operations $f^\ell$ that push the representation distribution towards the desired parsimonious model distribution:
\begin{equation}
f\colon \X
\xrightarrow{\hspace{1mm} f^0 \hspace{1mm}} \Z^0 \rightarrow \cdots \rightarrow \Z^\ell \xrightarrow{\hspace{1mm} f^\ell \hspace{1mm}} \Z^{\ell+1} \rightarrow  \cdots \to \Z^L = \Z,
\label{eq:incremental}
\end{equation}

where $f^0: \bR^{D} \rightarrow \bR^{d}$ is the pre-processing mapping that transforms input tokens $\x_{i} \in \bR^{D}$ to their token representations $\z_{i}^{1} \in \bR^{d}$.  

Each incremental \textit{forward mapping} $\Z^{\ell + 1} = f^\ell(\Z^\ell)$, or a ``layer'', transforms the token distribution to \textit{optimize} the above sparse rate reduction objective \eqref{eq:sparse-rr}, conditioned on the distribution of its input tokens $\Z^\ell$. 
{In contrast to other unrolled optimization approaches such as the ReduNet \cite{chan2021redunet}}, we \textit{explicitly model} the distribution of $\Z^\ell$ at each layer, say as a mixture of linear subspaces or sparsely generated from a dictionary. The model parameters are learned from data (say via \textit{backward propagation} with end-to-end training). This separation of forward ``optimization'' and backward
``learning'' clarifies the mathematical role of each layer as an operator
transforming the distribution of its input, whereas the input distribution is
in turn modeled (and subsequently learned) by the parameters of the layer.

We show that we can derive these incremental, local operations through an unrolled optimization perspective to achieve \Cref{eq:sparse-rr} through \Cref{sub:compression,sub:sparse,sub:architecture}.  Once we decide on using an incremental approach to optimizing
\Cref{eq:sparse-rr}, there are a variety of possible choices to achieve the optimization. Given a model for $\Z^\ell$, say a mixture of subspaces $\vU_{[K]}$, we opt for a two-step \textit{alternating minimization} process with a strong conceptual basis: first in \Cref{sub:compression}, we \textit{compress} the tokens $\vZ^{\ell}$ via a gradient step to minimize the coding rate term $\min_{\Z} {R}^c(\vZ; \vU_{[K]})$; second, in \Cref{sub:sparse}, we \textit{sparsify} the compressed tokens, with a suitably-relaxed proximal gradient step on the difference of the sparsity penalty and the expansion term, i.e., $\min_{\Z}[\lambda \norm{\vZ}_{0} - R(\vZ)]$. Both actions are applied incrementally and repeatedly, as each $f^\ell$ in \Cref{eq:incremental} is instantiated with these two steps. 


\subsection{Self-Attention via Denoising Tokens Towards Multiple Subspaces}\label{sub:denoising}

There are many different ways to optimize the objective \eqref{eq:sparse-rr}
incrementally. In this work, we propose arguably \textit{the most basic}
scheme. To help clarify the intuition behind our derivation and approximation,
in this section (and Appendix \ref{app:proofs-denoising}) we study a largely
idealized model which nevertheless captures the essence of nearly the whole
process and particularly reveals the reason why  self-attention-like operators arise in many contexts. Assume that \(N = 1\), and the single token $\vx$ is drawn i.i.d.\ from an
unknown mixture of Gaussians \((\sN(\Zero, \vSigma_{k}))_{k = 1}^{K}\)
supported on low-dimensional subspaces with orthonormal bases \(\vU_{[K]} =
(\vU_{k})_{k = 1}^{K}\) and corrupted with additive Gaussian noise $\vw \sim
\sN(\Zero, \vI)$, i.e.,
\begin{equation}
    \vx = \vz + \sigma \vw,\label{eq:additive-gaussian-noise}
\end{equation}
where \(\z\) is distributed according to the mixture. Our goal is simply to transform the distribution of the noisy token \(\x\) to the mixture of low-dimensional Gaussians \(\z\). Towards incremental construction of a representation $f$ for this model following \Cref{eq:incremental}, we reason inductively: if $\vz^{\ell}$ is a noisy token \Cref{eq:additive-gaussian-noise} at noise level $\sigma^\ell$, it is natural to produce $\vz^{\ell+1}$ by denoising at the level $\sigma^\ell$. In the mean-square sense, the optimal estimate is $\E{\vz \given \vz^{\ell}}$, which has a variational characterization (e.g.\ \cite{Gyorfi2010-rn}):
\begin{equation}
   \E*{\vz \given \spcdot}
   =
   \argmin_{f 
   }\, \Esub*{\vz, \vw}{\norm*{
         f(\vz + \sigma^{\ell} \vw) - \vz
   }_2^2}.
   \label{eq:denoising-l2}
\end{equation}
Setting $\vz^{\ell+1} = \E{\vz \given \vz^{\ell}}$, \Cref{eq:denoising-l2} thus
characterizes the next stage of \Cref{eq:incremental} in terms of an
optimization objective based on a \textit{local signal model} for $\vz^{\ell}$. 
Moreover, letting $\vx \mapsto q^{\ell}(\vx)$ denote the density of
$\vz^{\ell}$,
Tweedie's formula \cite{Efron2011-wn} allows us to express the optimal
representation solving \Cref{eq:denoising-l2} in closed-form:
\begin{equation}
   \vz^{\ell+1} = \vz^{\ell} + (\sigma^{\ell})^2 \nabla_{\vx} \log
   q^{\ell}(\vz^\ell).
   \label{eq:tweedie}
\end{equation}
Tweedie's formula expresses the optimal representation in terms of
an additive correction (in general a nonlinear function of $\vz^{\ell}$) to the
noisy observations by the gradient of the \textit{log-likelihood} of the
distribution of the noisy observations, giving the optimal representation a
clear interpretation as an incremental perturbation to the current noisy
distribution $q^{\ell}$. 
This connection is well-known in the areas of estimation theory and inverse
problems
\cite{Efron2011-wn,Stein1981-jv,Raphan2011-by,Milanfar2013-js,Kadkhodaie2020-fc,Venkatakrishnan2013-rt,Romano2017-rp},
and more recently has found powerful applications in the training of generative
models for natural images
\cite{Hyvarinen2005-fi,Vincent2011-dr,Sohl-Dickstein2015-kz,Song2020-xo,Song2020-hb}.
Here, we can calculate a closed-form expression for
this \textit{score function} $\nabla_{\vx} \log q^{\ell}$, which, when combined with \Cref{eq:tweedie} and some technical assumptions\footnote{Such as \(\sigma\) being smaller than the nonzero eigenvalues of \(\vSigma_{k}\) and the normalization assumption \(\pi_{i}\det(\vSigma_{i} + \sigma^{2}\I)^{-1/2} = \pi_{j}\det(\vSigma_{j} + \sigma^{2}\I)^{-1/2}\) for all \(i, j \in [K]\), where \(\pi_{k}\) is the mixture proportion for the \(k^{\mathrm{th}}\) Gaussian.}, gives the following approximation (shown in \Cref{app:proofs-denoising}). Let \(\otimes\) denote the Kronecker product; then we have
\begin{equation}
    \vz^{\ell + 1} \approx \mat{\vU_{1}, \dots, \vU_{K}}\left[\diag\mathopen{}\left(\softmax{\frac{1}{2(\sigma^{\ell})^{2}}\mat{\norm{\vU_{1}\adj\z^{\ell}}_{2}^{2} \\ \vdots \\ \norm{\vU_{K}\adj\z^{\ell}}_{2}^{2}}}\right) \otimes \I_{p}\right]\mat{\vU_{1}\adj\z^{\ell} \\ \vdots \\ \vU_{K}\adj\z^{\ell}},
   \label{eq:mog-score-as-attention}
\end{equation}
This operation resembles a self-attention layer in a standard transformer architecture with \(K\) heads, sequence length \(N = 1\), the ``query-key-value'' constructs being replaced by a single linear projection \(\vU_{k}\adj\z^{\ell}\) of the token \(\z^{\ell}\), and the aggregation of head outputs (conventionally modeled by an MLP) done with the two leftmost matrices in \eqref{eq:mog-score-as-attention}. 
We thus derive the following useful interpretation, which we will exploit in
the sequel: \textit{Gaussian denoising against a mixture of subspaces model
leads to self-attention-type layers in the transformation $f$}.
Given an initial sample $\vx$ following the model
\Cref{eq:additive-gaussian-noise}, we can repeatedly apply local
transformations to the distribution with \Cref{eq:mog-score-as-attention} in order to realize the incremental mapping \(f \colon \vx \to \vz\) in
\Cref{eq:incremental}.\footnote{This statement can be made mathematically rigorous by
   exploiting a deep connection between neural ODEs and diffusion models,
following ideas in \textcite{Song2020-xo,Chen2023-an}.} 
These insights will guide us in the design of our white-box transformer
architecture in the upcoming subsections.

\subsection{Self-Attention via Compressing Token Sets through Optimizing Rate Reduction}\label{sub:compression}

In the last subsection, we have seen that the multi-head attention in a
transformer resembles the score-matching operator that aims to 
transform a token $\z^{\ell}$ towards a mixture of subspaces (or degenerate
Gaussians). 
Nevertheless, to carry out such an operation on any data, one needs
to first learn or estimate, typically from finite samples, the parameters of the
mixture of (degenerate) Gaussians, which is known to be a challenging task \cite{ma2007segmentation,Vidal:Springer16}.
This challenge is made even harder because in a typical learning setting, the given set
of tokens are {\em not} i.i.d.\ samples from the mixture of subspaces. The joint
distribution among these tokens can encode rich information about the
data---for example, co-occurrences between words or object parts in 
language and image data (resp.)---which we should also learn. 
Thus, we should compress\,/\,denoise\,/\,transform such a set of
tokens together. To this end, we need a measure of quality, i.e., compactness, for the resulting representation of the set of tokens.

A natural measure of the compactness of such a set of tokens
is the (lossy) coding rate to encode them up to a certain precision \(\eps > 0\) \cite{ma2007segmentation, OriginalMCR2}. For a
zero-mean Gaussian, this measure takes a closed form. If we view the tokens in
\(\Z \in \bR^{d\times N}\) as drawn from a single zero-mean Gaussian, an estimate of their
(lossy) coding rate, subject to quantization precision $\eps >0$, is given in 
\cite{ma2007segmentation} as:
\begin{equation}\label{eq:coding_rate}
    R(\Z) \doteq \frac{1}{2}\logdet{\I + \frac{d}{N\eps^{2}}\Z\adj\Z} = \frac{1}{2}\logdet{\I + \frac{d}{N\eps^{2}}\Z\Z\adj}.
\end{equation}
In practice, the data distribution is typically multi-modal, say an image
set consisting of many classes or a collection of image patches as in \Cref{fig:diagram}. It is more appropriate to require that the set
of tokens map to a mixture of, say $K$, subspaces (degenerate Gaussians)
\cite{chan2021redunet}. As before we  denote the (to be learned) bases of these subspaces as $\vU_{[K]} = (\vU_k)_{k=1}^{K}$, where $\vU_k \in
\mathbb{R}^{d \times p}$. Although the joint distribution of the tokens $\Z$ is
unknown, the desired marginal distribution of each token $\z_{i}$ is a mixture of
subspaces. So we may obtain an upper bound of the coding rate for the token set
$\Z$ by  projecting its tokens onto these subspaces and summing up the respective
coding rates:   
\begin{equation}\label{eq:def-mcr-parts}
    {R}^c(\Z; \vU_{[K]}) = \sum_{k=1}^{K}R(\vU_k\adj \Z) =
    \frac{1}{2}\sum_{k=1}^{K}\logdet{\I +
    \frac{p}{N\eps^{2}}(\vU_k\adj\Z)\adj(\vU_k\adj\Z)}.
\end{equation}

We would like to compress (or denoise) the set of tokens against these
subspaces by minimizing the coding rate. 
The gradient of ${R}^c(\vZ; \vU_{[K]})$ is
\begin{equation}
    \nabla_{\Z}{R}^c(\vZ; \vU_{[K]})
    = \frac{p}{N\eps^{2}}\sum_{k=1}^K \vU_k\vU_k\adj\Z\bp{\I +
    \frac{p}{N\eps^{2}}(\vU_k\adj\Z)\adj(\vU_k\adj\Z)}^{-1}.
    \label{eq:rate-gradient}
\end{equation}

The above expression approximates the residual of each projected token $\vU_k^* \z_{i}$ regressed by other tokens \(\vU_{k}\adj\z_{j}\) \cite{chan2021redunet}. 
But, differently from \cite{chan2021redunet}, not all tokens in $\Z$ are from the same subspace. Hence, to denoise each token with tokens from its own group, we can compute their similarity through an auto-correlation among the projected tokens as $(\vU_{k}\adj\Z)\adj(\vU_{k}\adj\Z)$ and convert it to a distribution of membership with a softmax, namely $\softmax{(\vU_{k}\adj\Z)\adj(\vU_{k}\adj\Z)}$. 
Then, as we show in \Cref{app:proofs-compression}, if we only use similar tokens to regress and denoise each other, then a gradient step on the coding rate with learning rate \(\kappa\) can be naturally approximated as follows:
\begin{equation}
    \Z^{\ell + 1/2} = \Z^{\ell} - \kappa\nabla_{\Z}R^{c}(\Z^{\ell}; \vU_{[K]}) \approx \left(1-\kappa\cdot\frac{p}{N\eps^{2}}\right) \Z^{\ell} + \kappa \cdot \frac{p}{N\eps^{2}}\cdot \MSSA{\Z^{\ell} \given \vU_{[K]}},  \label{eq:gd-mcr-parts} 
\end{equation}
where $\texttt{MSSA}$ is defined through an $\texttt{SSA}$ operator as:
\begin{align}
    \SSA{\Z \mid \vU_{k}} 
    &\doteq (\vU_{k}\adj \Z)\softmax{(\vU_{k}\adj\Z)\adj(\vU_{k}\adj\Z)}, \quad k \in [K], \label{eq:SSA} \\
    \MSSA{\Z \mid \vU_{[K]}} 
    &\doteq \frac{p}{N\eps^{2}} \cdot \mat{\vU_{1}, \dots, \vU_{K}}\mat{\SSA{\Z \mid \vU_{1}} \\ \vdots \\ \SSA{\Z \mid \vU_{K}}}.\label{eq:Multi-Head-SSA}
\end{align}

Here the \texttt{SSA} operator in \eqref{eq:SSA} resembles the {\em attention operator}  in a typical
transformer \cite{vaswani2017attention}, except that here the linear operators of value, key, and query  are
all set to be {\em the same} as the subspace basis, i.e., $\V = \K = \Q = \vU_k^*$.\footnote{We note a recent suggestion of    \textcite{hinton2021represent} that it is more sensible to set the
``value, key, and query''  projection matrices in a transformer to be equal. Our derivation in this section confirms this mathematically.} Hence, we name $\SSA{\spcdot|\vU_k}: \bR^{d\times N} \rightarrow \bR^{p\times N}$  the \textbf{S}ubspace \textbf{S}elf-\textbf{A}ttention (SSA) operator (more details and justification can be found in \Cref{eq:appendix-ssa-derivation} in \Cref{app:proofs-compression}). Then, the whole \texttt{MSSA} operator in \eqref{eq:Multi-Head-SSA}, formally defined as \(\MSSA{\spcdot | \vU_{[K]}} \colon \bR^{d \times N} \to \bR^{d \times N}\) and called the \textbf{M}ulti-Head \textbf{S}ubspace \textbf{S}elf-\textbf{A}ttention (MSSA) operator, aggregates the attention head outputs by averaging using model-dependent weights, similar in concept to the popular multi-head self-attention operator in existing transformer networks. The overall gradient step \eqref{eq:gd-mcr-parts} resembles the multi-head self-attention implemented with a skip connection in transformers. 

Notice that if we have \(N = 1\) tokens as well as take an aggressive gradient step (\(\kappa = 1\)) and tune the quantization error (\(\eps = \sqrt{p/N}\)), the multi-head subspace self-attention operator in \Cref{eq:Multi-Head-SSA} becomes the ideal denoiser defined in \Cref{eq:mog-score-as-attention}, with the one minor difference that the aggregation of the heads is done by a linear function here, while in \Cref{eq:mog-score-as-attention} it is done by a nonlinear mixture-of-experts type function.\footnote{This suggests that we could also consider such a mixture of expert type aggregation of the multiple attention heads. In this work, we use linear aggregation, and leave evaluation of more variants for future work.} This provides two very related interpretations of the multi-head self-attention operator, as denoising and compression against a mixture of low-dimensional subspaces.

\subsection{
MLP via Iterative Shrinkage-Thresholding Algorithms (ISTA) for Sparse Coding 
}
\label{sub:sparse}

In the previous subsection, we focused on how to compress a set of tokens against
a set of (learned) low-dimensional subspaces.  
Optimizing the remaining terms in the sparse rate reduction objective
\eqref{eq:sparse-rr}, including the non-smooth term, serves to sparsify
the compressed tokens, 
hence leading to a more compact and structured (i.e., \textit{parsimonious})
representation. From \Cref{eq:sparse-rr,eq:coding_rate}, this term is
\begin{equation}
   \max_{\Z}\, [ R(\Z) - \lambda \|\bm{Z}\|_0 ] = \min_{\Z} \left[\lambda \|\bm{Z}\|_0 - \frac{1}{2}\logdet{\I + \frac{d}{N\eps^{2}}\Z\adj\Z}\right],
   \label{eq:whole-sparse}
\end{equation}

where \(R(\Z)\) denotes the coding rate of the whole token set, as defined in \eqref{eq:coding_rate}. 
In addition to sparsification via the \(\norm{\Z}_{0}\) term, the expansion term $R(\vZ)$ in
\Cref{eq:whole-sparse} promotes diversity and non-collapse of the
representation, a highly desirable property. However, prior work has
struggled to realize this benefit on large-scale datasets due to poor
scalability of the gradient $\nabla_{\vZ} R(\vZ)$, which requires a matrix
inverse \cite{chan2021redunet}. 

To simplify things, we therefore take a different approach to trading off
between representational diversity and sparsification: we posit a
(complete) incoherent or orthogonal dictionary $\vD \in \bbR^{d \times d}$, and ask to sparsify the intermediate iterates $\vZ^{\ell+1/2}$ with respect to \(\vD\). That is, $\vZ^{\ell+1/2} = \vD \vZ^{\ell+1}$ where $\vZ^{\ell+1}$ is more sparse. The dictionary \(\vD\) is global, i.e., is used to sparsify all tokens simultaneously.

By the incoherence assumption, we have $\D^* \D \approx \I_{d}$; thus from \Cref{eq:coding_rate} we have $R(\Z^{\ell+1}) \approx R(\D\vZ^{\ell+1}) = R(\vZ^{\ell+1/2})$. Thus we approximately solve \Cref{eq:whole-sparse} with the following program:
\begin{equation}
   \Z^{\ell + 1} = \argmin_{\Z}  \|\bm{Z}\|_0 \quad \mbox{subject to} \quad \Z^{\ell+{1}/{2}} = \D \Z.
   \label{eq:sparse}
\end{equation}
The above sparse representation program is usually solved by relaxing it to an unconstrained convex program, known as LASSO: 
\begin{equation}
    \Z^{\ell + 1} = \argmin_{\Z} \Big[\lambda \|\bm{Z}\|_1 + \|\Z^{\ell+{1}/{2}} - \D \Z\|_F^2 \Big].
\end{equation}
In our implementation, motivated by \textcite{sun2018supervised, zarka2019deep}, we also add a non-negative constraint to $\Z^{\ell+1}$,
\begin{equation}
    \Z^{\ell+1} = \argmin_{\Z \geq \Zero}  \Big[\lambda\|\Z\|_1 + \|\Z^{\ell+1/2} - \D \Z\|_{F}^{2}\Big],
    \label{eq:sparse-nonnegative}
\end{equation}
which we then incrementally optimize by performing an unrolled proximal gradient descent step, known as an ISTA step \cite{beck2009fast}, to give the update:
\begin{equation}\label{eq:ista-block}
        \vZ^{\ell+1} = \operatorname{ReLU}(\Z^{\ell+1/2} + \eta \D\adj(\Z^{\ell+1/2} - \vD\Z^{\ell+1/2}) - \eta \lambda \bm{1}) \doteq \ISTA{\Z^{\ell+1/2} \mid \vD}.
\end{equation}
In \Cref{app:proofs-sparse}, we will show one can arrive at a similar operator to the above ISTA-like update for optimizing \eqref{eq:whole-sparse} by properly linearizing and approximating the rate term $R(\Z)$.

\subsection{The Overall White-Box \ours{} Architecture}
\label{sub:architecture}
\begin{figure}[t!]
     \centering
     \begin{subfigure}[b]{0.98\textwidth}
         \centering
         \includegraphics[width=\textwidth]{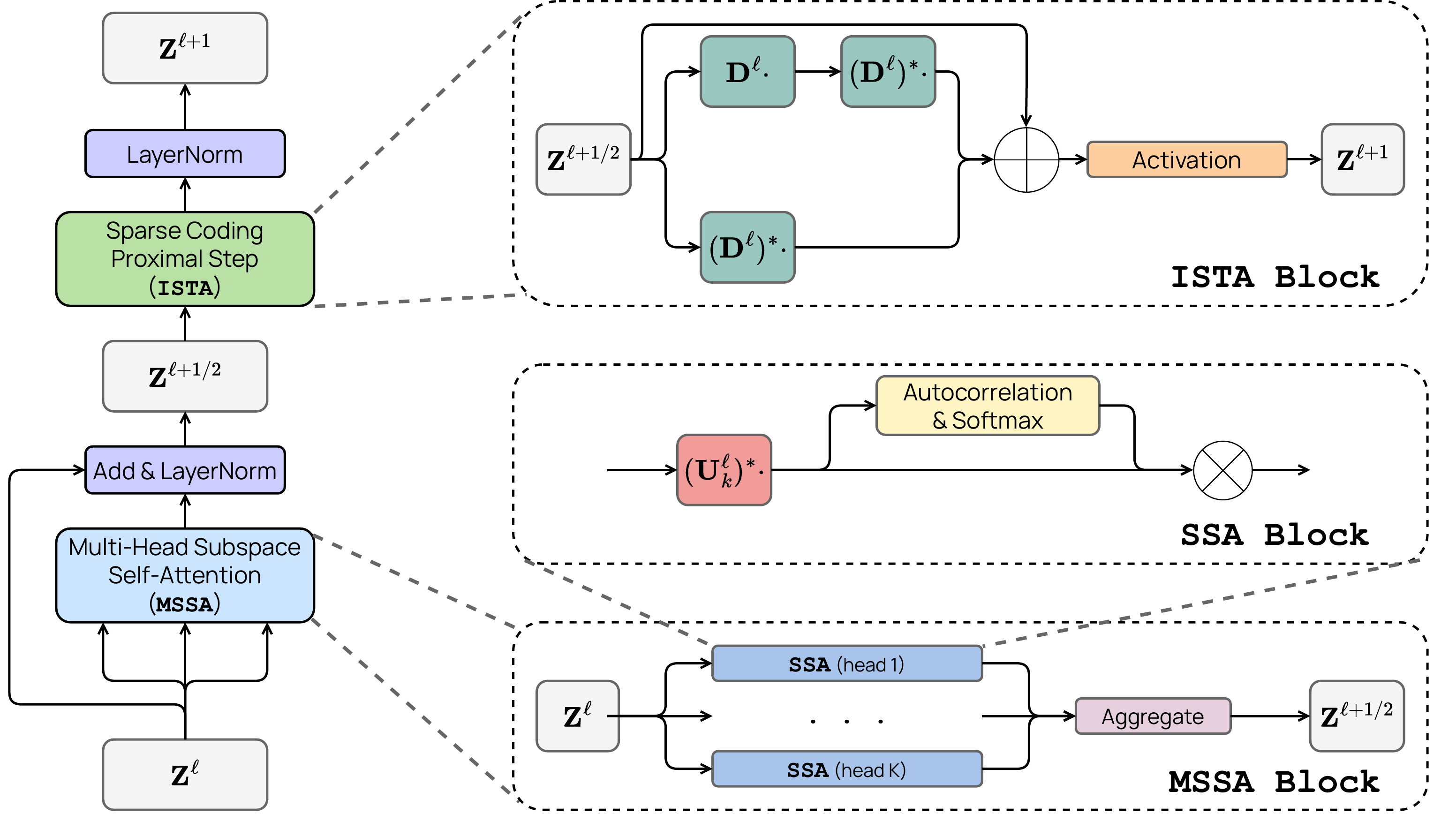}
     \end{subfigure}
        \caption{\small One layer of the CRATE architecture. The full architecture is simply a concatenation of such layers, with some initial tokenizer and final task-specific architecture (i.e., a classification head).
        }
        \label{fig:arch}
        \vspace{-0.1in}
\end{figure}

By combining the above two steps:
\begin{enumerate}[leftmargin=0.7cm]
    \item (\Cref{sub:denoising,sub:compression}) Local denoising and compression of tokens within a sample towards a mixture-of-subspace structure, leading to the multi-head subspace self-attention block -- \texttt{MSSA};
    \item (\Cref{sub:sparse}) Global compression and sparsification of token sets across all samples through sparse coding, leading to the sparsification block -- \texttt{ISTA};
\end{enumerate}
we can get the following rate-reduction-based transformer layer, illustrated in Figure \ref{fig:arch}, 
\begin{equation}
\Z^{\ell+1/2} \doteq \Z^{\ell} + \texttt{MSSA}(\Z^{\ell} \mid \vU_{[K]}^{\ell}), 
\qquad 
\Z^{\ell+1}\doteq \texttt{ISTA}(\Z^{\ell+1/2} \mid \D^\ell).
\end{equation}
Composing multiple such layers following the incremental construction of our representation in \Cref{eq:incremental}, we obtain a white-box transformer architecture that transforms the data tokens towards a compact and sparse union of incoherent subspaces.

This model has the parameters \((\vU_{[K]}^{\ell})_{\ell = 1}^{L}\) and \((\vD^{\ell})_{\ell = 1}^{L}\), which are learned from data via \textit{back-propagation}. Notably, in each layer \(\ell\), the learned \(\vU_{[K]}^{\ell}\) retain their interpretation as incoherent bases for supporting subspaces for the mixture-of-Gaussians model at layer \(\ell\), and the learned \(\vD^{\ell}\) retains its interpretation as a sparsifying dictionary at layer \(\ell\). {We emphasize that the parameters \(\vU_{[K]}^{\ell}\) and \(\vD^{\ell}\) are dependent on the layer \(\ell\) --- that is, we learn a different set of parameters at each layer. This is because at each layer we learn an approximate local parametric model for the input data distribution, then use that learned model to construct the layer operators that transform the distribution. Our procedure of parameterizing the data distribution at each layer distinguishes this work from previous works on unrolled optimization for neural networks such as the ReduNet  \cite{chan2021redunet}.} Our interpretation clarifies the roles of the network forward pass (given local signal models at each layer, denoise/compress/sparsify the input) and the backward pass (learn the local signal models from data via supervision). 

We note that in this work, at each stage of our construction, we have chosen arguably the \textit{simplest possible} construction to use. We can substitute each part of this construction, so long as the new part maintains the same conceptual role, and obtain another white-box architecture. Nevertheless, our such-constructed architecture, called \ours{} (i.e., Coding RAte TransformEr), connects to existing transformer models, obtains competitive results on real-world datasets, and is fully mathematically interpretable.

\section{Experiments}\label{sec:exp}

In this section, we conduct experiments to study the performance of our proposed white-box transformer \ours{} on real-world datasets and tasks. As the analysis in \Cref{sec:approach} suggests, either the compression or the sparsification step can be achieved through various alternative design choices or strategies. \ours{} arguably adopts the most basic choices and so our goal with the experiments is {\em not} simply to compete with other heavily engineered transformers while using such a rudimentary design. Rather, our goals are twofold. 
First, unlike any empirically designed black-box networks that are usually evaluated only on end-to-end performance, the white-box design of our network allows us to \textit{look inside} the deep architecture and verify if layers of the learned network indeed perform their design objective---say performing incremental optimization for the objective \Cref{eq:sparse-rr}. 
Second, despite their simplicity, our experiments will actually reveal the vast practical potential of our so-derived \ours{} architectures since, as we will show, they already achieve very strong performance on large-scale real-world datasets and tasks. 
In the remainder of this section we highlight a selection of results; additional experimental details and results can be found in \Cref{sec:appendix-exp}.

\paragraph{Model architecture.} 
We implement the architecture that is described in \Cref{sub:architecture}, with minor modifications that are described in \Cref{sub:appendix-exp-details}. We consider different model sizes of \ours{} by varying the token dimension $d$, number of heads $K$, and the number of layers $L$. 
We consider four model sizes in this work: \ours{-Tiny}, \ours{-Small}, \ours{-Base}, and \ours{-Large}. A PyTorch-style pseudocode can be found in \Cref{sub:appendix-exp-details}, which contains more implementation details.
For training using supervised classification, we first take the \texttt{CLS} token $\overline{\z}_{b} = \z_{1, b}^{L + 1}$ of for each sample, 
then apply a linear layer; the output of this linear layer \(\vu_{b} \doteq \vW\overline{\z}_{b}\) is used as input to the standard cross-entropy loss. The overall loss averages over all samples \(b \in [B]\).

\paragraph{Datasets and optimization.} We mainly consider ImageNet-1K~\cite{deng2009imagenet} as the testbed for our architecture. 
Specifically, we apply the Lion optimizer~\cite{chen2023symbolic} to train \ours{} models with different model sizes. 
Meanwhile, we also evaluate the transfer learning performance of \ours{}: by considering the models trained on ImageNet-1K as pre-trained models, we fine-tune \ours{} on several commonly used downstream datasets (CIFAR10/100, Oxford Flowers, Oxford-IIT-Pets). 
More details about the training and datasets can be found in \Cref{sub:appendix-exp-details}.

\subsection{In-depth Layer-wise Analysis of \ours{}}\label{subsec:exp-in-depth-analysis}
\textbf{Do layers of \ours{} achieve their design goals?} 
As described in \Cref{sub:compression} and \Cref{sub:sparse}, the \texttt{MSSA} block is designed to optimize the compression term  $R^{c}(\Z)$ and the \texttt{ISTA} block to sparsify the token representations (corresponding to the sparsification term $\|\Z\|_0$). 
To understand whether \ours{} indeed optimizes these terms, for each layer $\ell$, we measure (i) the compression term $R^{c}(\Z^{\ell+1/2})$ on the \texttt{MSSA} block outputs $\Z^{\ell+1/2}$; and (ii) sparsity $\|\Z^{\ell+1}\|_0$ on the \texttt{ISTA} block outputs $\Z^{\ell+1}$. 
Specifically, we evaluate these two terms by using training/validation samples from ImageNet-1K. Both terms are evaluated at the per-sample level and averaged over \(B = 10^3\) samples.

\Cref{fig:exp-rc-sparisty-small} shows the plots of these two key measures at all layers for the learned \ours{-small} model. We find that as the layer index $\ell$ increases, both the compression and the sparsification terms improve in most cases. The increase in the sparsity measure of the last layer is caused by the extra linear layer for classification.\footnote{ Note that the learned sparse (tokens) features need to be mixed in the last layer for predicting the class. The phenomenon of increase in the sparsity measure  at the last layer suggests that each class of objects may be associated with a number of features, and some of these features are likely to be shared across different classes.} 
These results suggest that \ours{} aligns well with the original design goals: once learned, it essentially learns to gradually compress and sparsity the representations through its layers. In addition, we also measure the compression and sparsification terms on \ours{} models with different model sizes as well as intermediate model checkpoints and the results are shown by plots in \Cref{fig:appendix-exp-rc-sparisty-all-model-size} of \Cref{subsec:appendix-exp-results}. The observations are very consistent across all different model sizes---both the compression and sparsification terms improve in most scenarios. Models with more layers tend to optimize the objectives more effectively, confirming our understanding of each layer's roles. 

To see the effect of learning, we present the evaluations on \ours{-Small} trained with different number of epochs in \Cref{fig:exp-rc-sparisty-small-epochs}. 
When the model is not trained enough (e.g.~untrained), the architecture does not optimize the objectives effectively. However, during training---learning better subspaces $\vU_{[K]}^{\ell}$ and dictionaries $\vD^{\ell}$---the designed blocks  start to optimize the objectives much more effectively.

\begin{figure}[t!]
     \centering
     \begin{subfigure}[b]{0.47\textwidth}
         \centering
    \includegraphics[width=\textwidth]{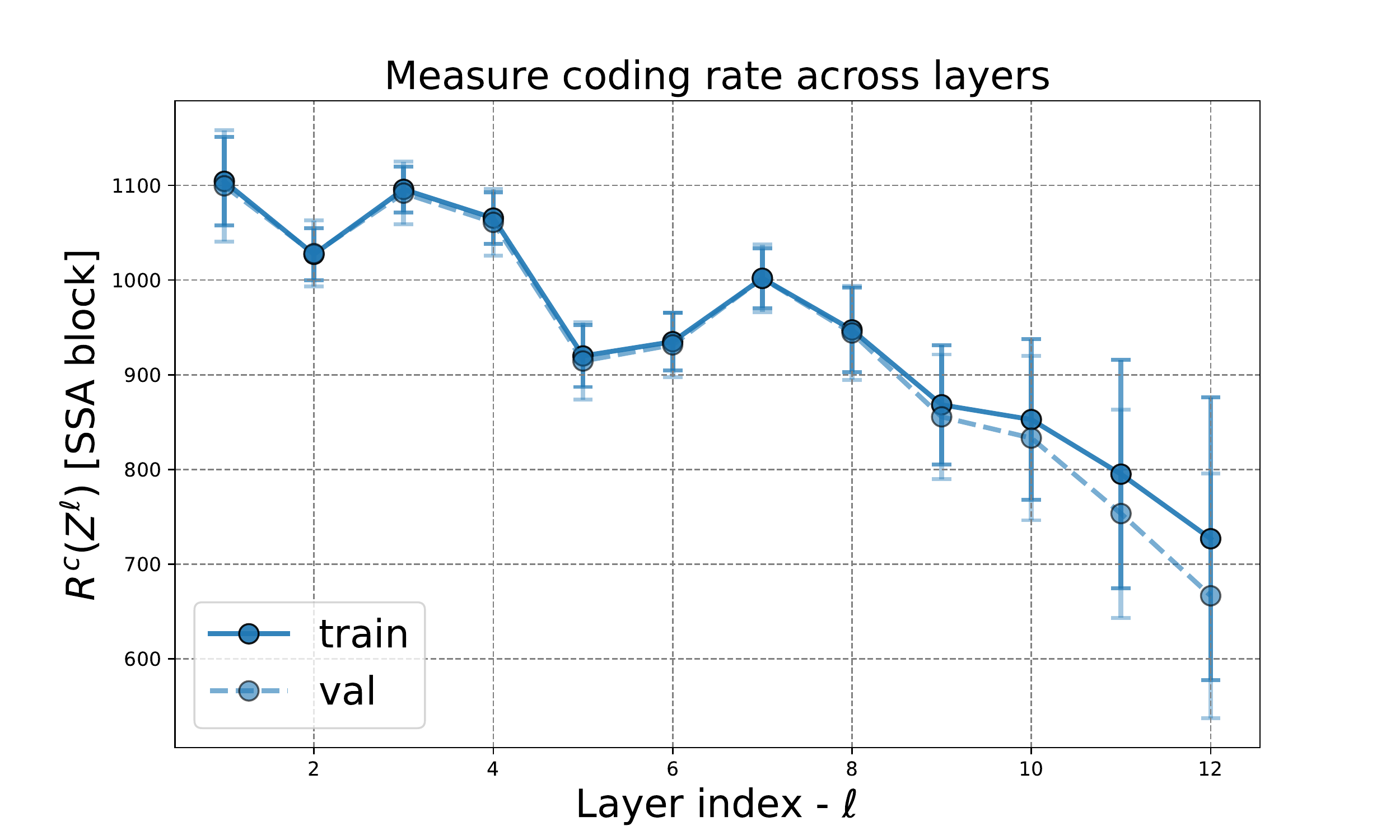}
     \end{subfigure}
     \begin{subfigure}[b]{0.482\textwidth}
         \centering
    \includegraphics[width=\textwidth]{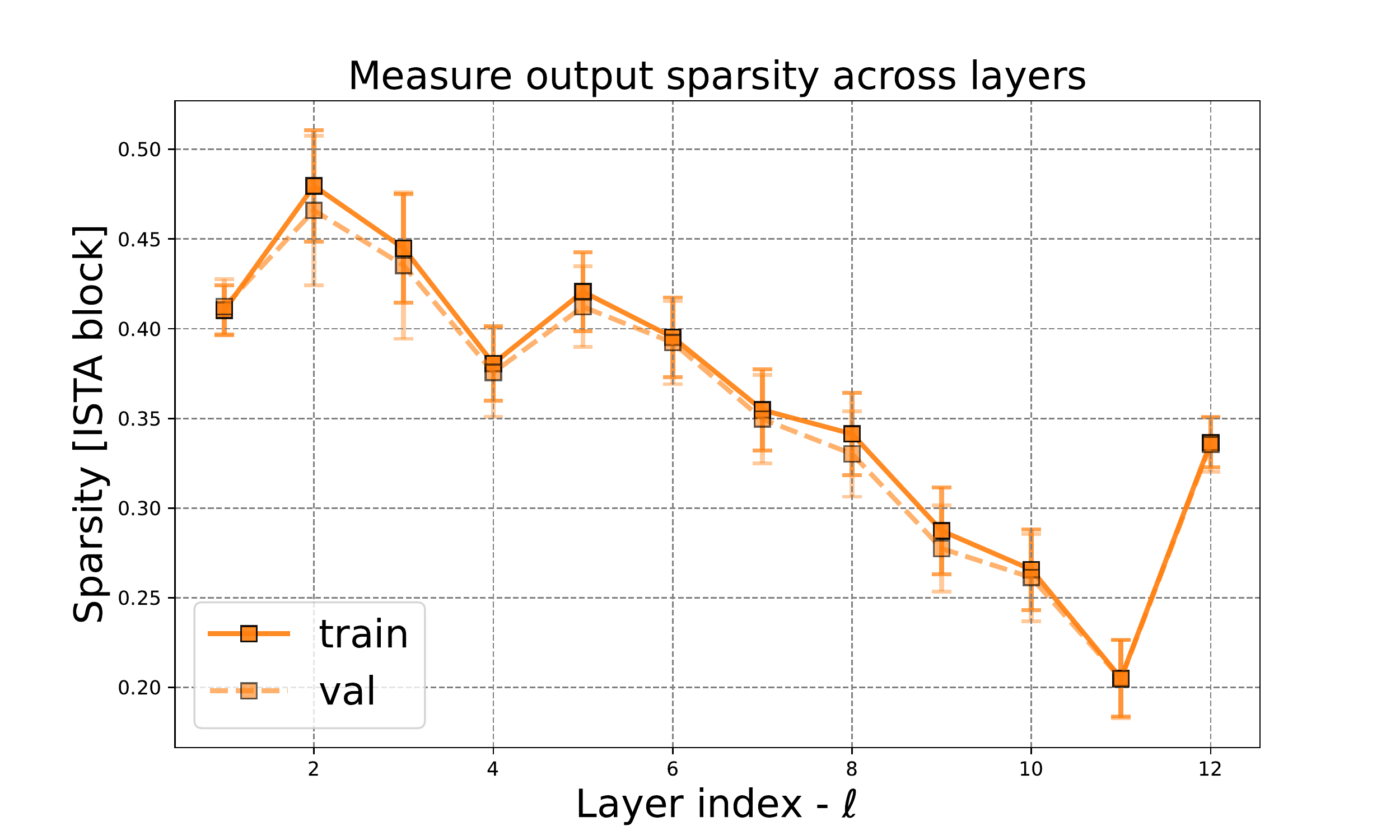}
     \end{subfigure}
     \vspace{-2mm}
        \caption{\small \textit{Left}: The compression term $R^{c}(\Z^{\ell+1/2})$ of the \texttt{MSSA} outputs at different layers. \textit{Right}: the sparsity of the \texttt{ISTA} output block, $\|\Z^{\ell+1}\|_0 / (d\cdot N)$, at different layers. 
        (Model: \ours{-Small}).}
        \label{fig:exp-rc-sparisty-small}
        \vspace{-0.15in}
\end{figure}

\begin{figure}[t!]
     \centering
     \begin{subfigure}[b]{0.49\textwidth}
         \centering
    \includegraphics[width=\textwidth]{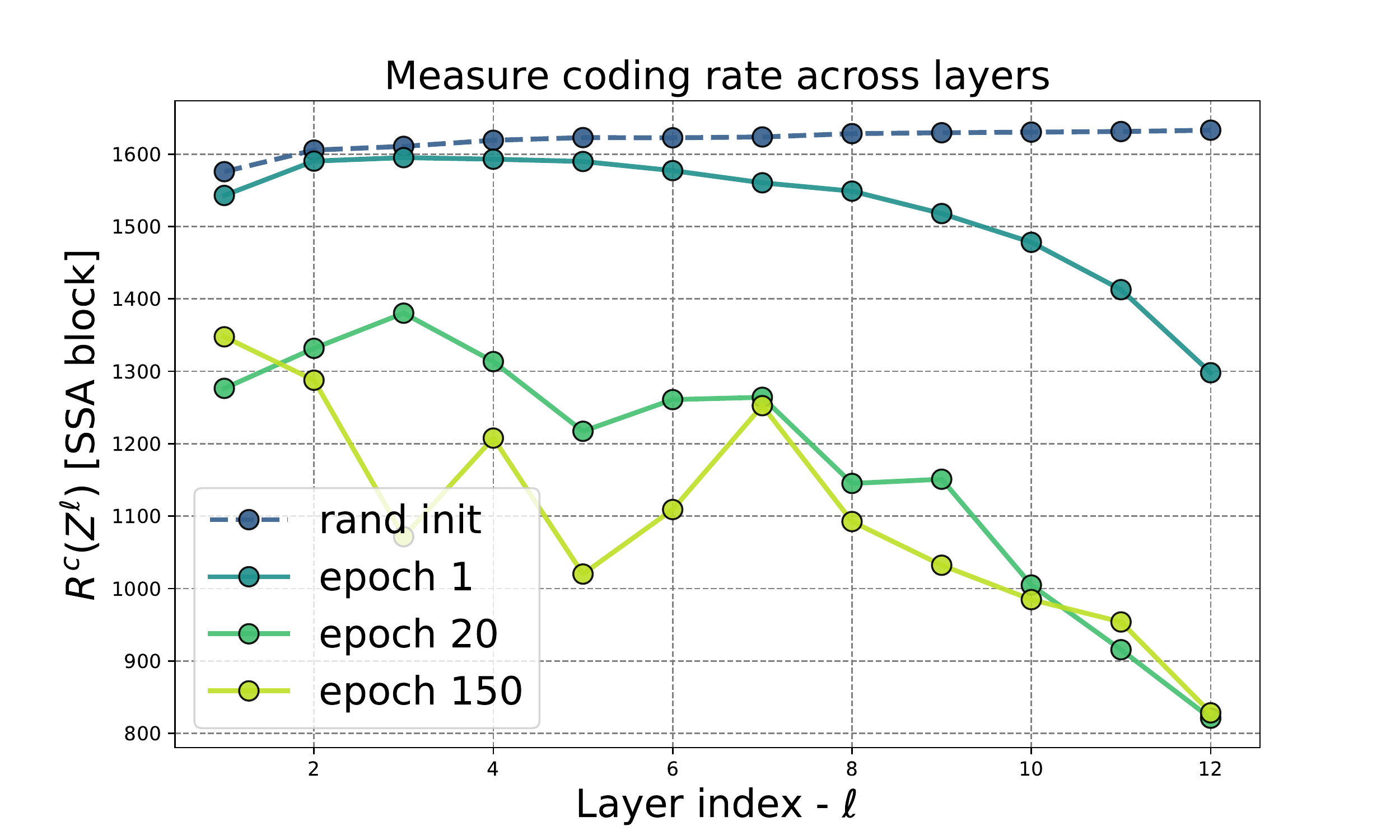}
     \end{subfigure}
     \begin{subfigure}[b]{0.49\textwidth}
         \centering
    \includegraphics[width=\textwidth]{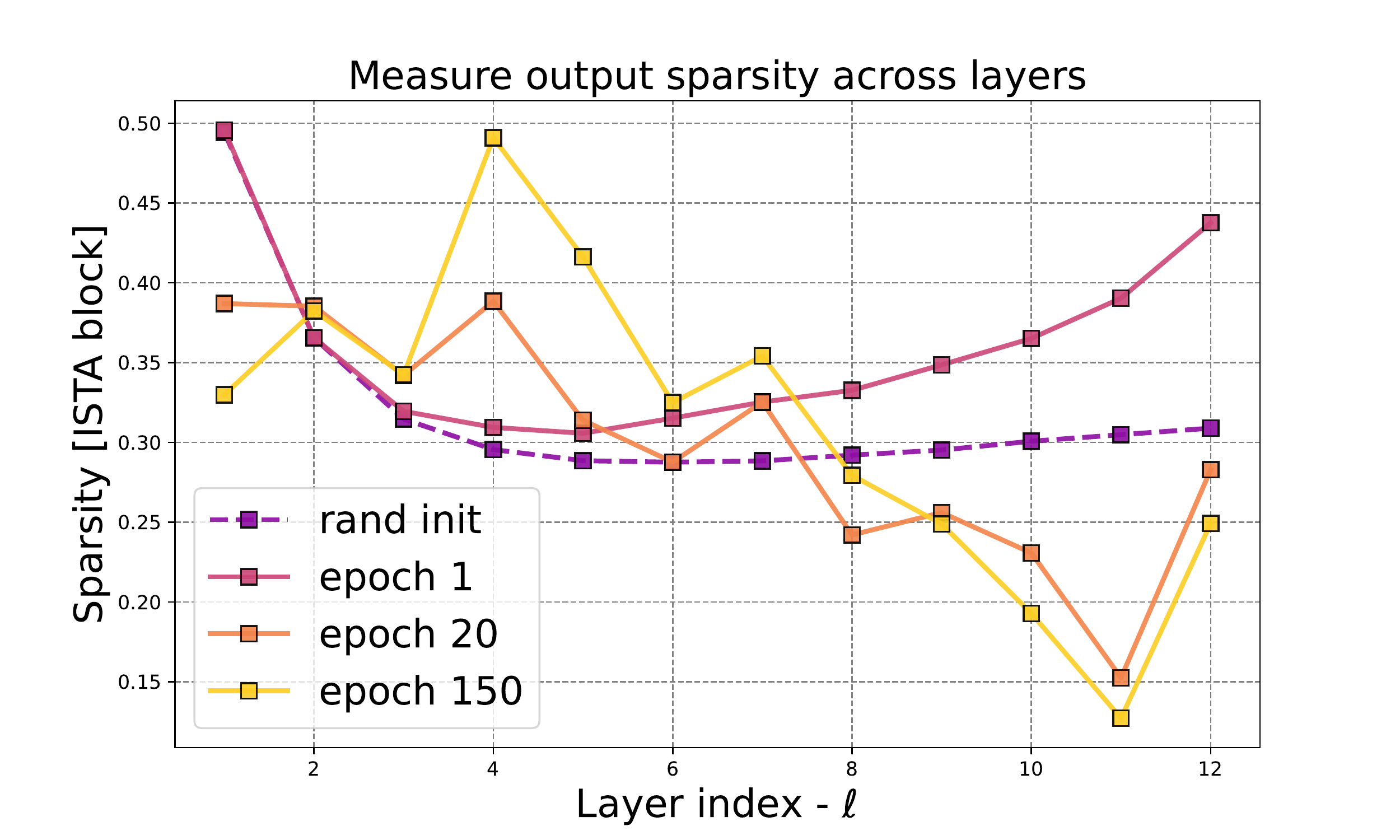}
     \end{subfigure}
     \vspace{-2mm}
        \caption{\small The compression term $R^c(\Z)$ (\textit{left}) and sparsification term $\|\Z\|_0 / (d \cdot N)$ (\textit{right}) across models trained with different numbers of epochs. (Model: \ours{-Base}).}
        \label{fig:exp-rc-sparisty-small-epochs}
        \vspace{-0.1in}
\end{figure}

\paragraph{Visualizing layer-wise token representations.} 
To gain a better understanding of the token representations of \ours{}, we visualize the output of each \texttt{ISTA} block at layer $\ell$ in Figure \ref{fig:appendix-exp-ista-sparsity-heatmap} of \Cref{subsec:appendix-exp-results}. 
Specifically, we visualize the $\Z^{\ell+1}$ via heatmap plots. 
We observe that the output $\Z^{\ell+1}$ becomes more sparse as the layer increases. 
Moreover, besides the sparsity, we also find that $\Z^{\ell+1}$ becomes more structured (i.e., low-rank), which indicates that the set of token representations become closer to linear subspaces, confirming our mental picture of the geometry of each layer (as in \Cref{fig:diagram}).

\paragraph{Visualizing layer-wise subspaces in multi-head self-attention.} 
We now visualize the $\vU_{[K]}^{\ell}$ matrices used in the \texttt{MSSA} block. 
In \Cref{sub:compression}, we assumed that $\vU_{[K]}^{\ell}$ were incoherent to capture different ``views'' of the set of tokens. 
In Fig.~\ref{fig:appendix-exp-visualize-UiUj} of \Cref{subsec:appendix-exp-results}, we first normalize the columns in each $\vU_k^{\ell}$, then we visualize the $[\vU_1^{\ell}, \dots, \vU_K^{\ell}]^{\adj}[\vU_1^{\ell}, \dots, \vU_K^{\ell}] \in \bR^{pK \times pK}$. 
The $(i, j)$-th block in each sub-figure corresponds to $(\vU_i^{\ell})\adj\vU_j^{\ell}$ for $i, j \in [K]$ at a particular layer $\ell$. 
We find that the learned $\vU_{[K]}^{\ell}$ are approximately incoherent, which aligns well with our assumptions. One interesting observation is that the $\vU_{[K]}^{\ell}$ becomes more incoherent when the layer index $\ell$ is larger, which suggests that the token representations are more separable. This mirrors the situation in other popular deep networks \cite{he2022law}.

\subsection{Evalutions of \ours{} on Large Real-World Datasets and Tasks} 
We now study the empirical performance of the proposed networks by measuring their top-1 accuracy on ImageNet-1K as well as transfer learning performance on several widely used downstream datasets. 
We summarize the results in \Cref{tab:Comparison_with_Sota}. 
As our designed architecture leverages parameter sharing in both the attention block (\texttt{MSSA}) and the MLP block (\texttt{ISTA}), our \ours{-Base} model (22.08 million) has a similar number of parameters to the ViT-Small (22.05 million).  

From \Cref{tab:Comparison_with_Sota}, we find that with a similar number of model parameters, our proposed network achieves similar ImageNet-1K and transfer learning performance as ViT, despite the simplicity and interpretability of our design. Moreover, with the same set of training hyperparameters, we observe promising scaling behavior in \ours{}---we consistently improve the performance by scaling up the model size. 
For comparison, directly scaling ViT on ImageNet-1K does not always lead to consistent performance improvement measured by top-1 accuracy~\cite{dosovitskiy2020image}. 
To summarize, we achieve promising performance on real-world large-scale datasets by directly implementing our principled architecture.

\begin{table*}[t!]
\centering
\caption{\small Top 1 accuracy of \ours{} on various datasets with different model scales 
when pre-trained on ImageNet. For ImageNet/ImageNetReaL, we directly evaluate the top-1 accuracy. For other datasets, we use models that are pre-trained on ImageNet as initialization and the evaluate the transfer learning performance via fine-tuning.}
\label{tab:Comparison_with_Sota}
\vspace{-1mm}
\small
    \setlength{\tabcolsep}{13.6pt}
\resizebox{0.98\textwidth}{!}{%
\begin{tabular}{@{}lcccc|cc@{}}
\toprule
\textbf{Datasets} & \ours{-T}  &  \ours{-S} & \ours{-B} & \ours{-L} & { \color{gray} ViT-T} &  { \color{gray}ViT-S } \\ 
\midrule
\midrule
 \# parameters & 6.09M & 13.12M & 22.80M & 77.64M & { \color{gray} 5.72M} & { \color{gray} 22.05M} \\
\midrule
 ImageNet & 66.7 & 69.2 & 70.8 & 71.3 & { \color{gray} 71.5} & { \color{gray} 72.4} \\
 ImageNet ReaL & 74.0 & 76.0 & 76.5 & 77.4 & { \color{gray} 78.3 } & { \color{gray} 78.4} \\
 \midrule
 CIFAR10 & 95.5 & 96.0 & 96.8 & 97.2 & { \color{gray} 96.6} & { \color{gray} 97.2} \\
 CIFAR100 & 78.9 & 81.0 & 82.7 & 83.6 & { \color{gray} 81.8} & { \color{gray} 83.2}\\
 Oxford Flowers-102 & 84.6 & 87.1 & 88.7 & 88.3 & { \color{gray} 85.1} & { \color{gray} 88.5}\\
 Oxford-IIIT-Pets & 81.4 & 84.9 & 85.3 & 87.4 & { \color{gray} 88.5} & { \color{gray} 88.6} \\
 \bottomrule
\end{tabular}%
}
\vspace{-0.1in}
\end{table*}

\section{Conclusion}\label{sec:conclusion}
In this paper, we propose a new theoretical framework that allows us to derive deep transformer-like network architectures as incremental optimization schemes to learn compressed and sparse representation of the input data (or token sets). The so derived and learned deep architectures are not only fully mathematically interpretable, but also consistent on a layer-by-layer level with their design objective. Despite being arguably the simplest among all possible designs, these networks already demonstrate  performance on large-scale real-world datasets and tasks close to seasoned transformers. We believe this work truly helps bridge the gap between theory and practice of deep neural networks as well as help unify seemingly separate approaches to learning and representing data distributions. Probably more importantly for practitioners, our framework provides theoretical guidelines to design and justify new, potentially more powerful, deep architectures for representation learning.

\newpage
\printbibliography

\clearpage
\appendix

\begin{center}
    \LARGE \textbf{Appendix}
\end{center}

\section{Technical Details from \Cref{sec:approach}}

\subsection{Companion to \Cref{sub:denoising}} \label{app:proofs-denoising}

We first wish to re-iterate the core contributions of our approach in \Cref{sub:denoising} at a slightly more technical level. Connections between denoising and score matching are well-understood \cite{karras2022elucidating}, and computing the optimal denoising function (i.e., the conditional expectation) against a mixture-of-Gaussians model is a rather simple computation giving existing tools such as Tweedie's formula \cite{Efron2011-wn}. These are not our main contributions. Instead, the main contributions of \Cref{sub:denoising} are two-fold:
\begin{itemize}
    \item First, we demonstrate a mechanism to learn representations via denoising within a idealized mixture of Gaussian data model for a single token (i.e., with sequence length \(N = 1\)).
    \item Second, we illustrate the similarities between a such-derived representation learning scheme and existing self-attention layers within the transformer (with sequence length \(1\)), thus demonstrating an interpretation of the self-attention layer as a generalized mechanism to denoise against a mixture-of-Gaussian-marginal model for a set of tokens.
\end{itemize}

Now we produce the proofs alluded to in \Cref{sub:denoising}, which mostly form the technical aspects of the first listed contribution. To simplify the proofs, we use the following notation correspondences: \(\x \mapsto \z^{\ell}\), \(\z \mapsto \z^{\ell + 1}\), and \(\sigma \mapsto \sigma^{\ell}\).

\begin{proposition}\label{thm:score_multi_subspaces}
    Let \(\vu_{1}, \dots, \vu_{K} \in \bR^{d}\) be independent and have distribution \(\vu_{k} \sim \mathcal{N}(\Zero, \vSigma_{k})\) for \(\vSigma_{k} \succeq \Zero \),
    and let \(\z\) take value \(\vu_{k}\) with probability \(\pi_{k} > 0\). Let \(\w \sim \mathcal{N}(\Zero, \I_{d})\) be independent of \(\z\). Let \(\x \doteq \z + \sigma \w\). Let \(\x \mapsto q(\x)\) be the density of \(\x\). We define
    \begin{equation}
        \vM_{k} \doteq (\vSigma_{k} + \sigma^{2}\I_{d})^{-1/2}
    \end{equation}
    and assume that \(\pi_{i}\det(\vM_{i}) = \pi_{j}\det(\vM_{j})\) for all \(1 \leq i \leq j \leq K\). Then we have
    \begin{align}
        &\nabla_{\x}\log q(\x) \\
        &= -\mat{\M_{1}, \cdots, \M_{K}}\left[\diaglr{\softmax{-\frac{1}{2}\mat{\norm{\M_{1}\adj\x}_{2}^{2} \\ \vdots \\ \norm{\M_{K}\adj\x}_{2}^{2}}}} \otimes \I_{d}\right]\mat{\M_{1}\adj\x \\ \vdots \\ \M_{K}\adj\x},
    \end{align}
    where \(\otimes\) denotes the Kronecker product, i.e., the block matrix defined by
    \begin{equation}
        \bm{A} \otimes \bm{B} = \mat{A_{11}\bm{B} & \cdots & A_{1n}\bm{B} \\ \vdots & \ddots & \vdots \\ A_{m1}\bm{B} & \cdots & A_{mn}\bm{B}}
    \end{equation}
\end{proposition}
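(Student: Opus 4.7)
The plan is to compute the density $q$ of $\vx$ explicitly, then differentiate its logarithm and massage the result into the block form given in the statement. Concretely, since $\vz \mid \{\vz = \vu_k\}$ is Gaussian with mean $\Zero$ and covariance $\vSigma_k$, and $\sigma \vw$ is independent Gaussian noise with covariance $\sigma^2 \I_d$, the conditional law of $\vx$ given the mixture component is $\mathcal{N}(\Zero, \vSigma_k + \sigma^2 \I_d)$. Writing $\vM_k = (\vSigma_k + \sigma^2 \I_d)^{-1/2}$, which is symmetric positive definite, we have $(\vSigma_k + \sigma^2 \I_d)^{-1} = \vM_k \vM_k = \vM_k \vM_k\adj$ and $\det(\vSigma_k + \sigma^2 \I_d)^{1/2} = \det(\vM_k)^{-1}$, so
\begin{equation*}
    q(\vx) = \sum_{k=1}^{K} \frac{\pi_k \det(\vM_k)}{(2\pi)^{d/2}} \exp\!\left(-\tfrac{1}{2} \norm{\vM_k\adj \vx}_2^2\right).
\end{equation*}

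Next I would invoke the normalization hypothesis $\pi_i \det(\vM_i) = \pi_j \det(\vM_j)$ for all $i,j$: it pulls the prefactor out as a constant $c$ (independent of $k$), so
\begin{equation*}
    q(\vx) = c \sum_{k=1}^{K} \exp\!\left(-\tfrac{1}{2} \norm{\vM_k\adj \vx}_2^2\right).
\end{equation*}
Differentiating each summand gives $\nabla_{\vx} \exp(-\tfrac{1}{2}\norm{\vM_k\adj \vx}_2^2) = -\vM_k \vM_k\adj \vx \cdot \exp(-\tfrac{1}{2}\norm{\vM_k\adj \vx}_2^2)$ (here I use symmetry of $\vM_k$). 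Dividing by $q(\vx)$ to form $\nabla_{\vx} \log q(\vx)$ turns the exponential weights into exactly the softmax coefficients appearing in the statement:
\begin{equation*}
    \nabla_{\vx} \log q(\vx) = -\sum_{k=1}^{K} \left[\softmax{-\tfrac{1}{2}\mat{\norm{\vM_1\adj \vx}_2^2 \\ \vdots \\ \norm{\vM_K\adj \vx}_2^2}}\right]_k \vM_k \vM_k\adj \vx.
\end{equation*}

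The final step is purely bookkeeping: I would rewrite this sum as a matrix-vector product. Concatenating $[\vM_1, \dots, \vM_K] \in \bR^{d \times Kd}$ on the left, stacking the vectors $\vM_k\adj \vx \in \bR^d$ into a single vector in $\bR^{Kd}$ on the right, and noting that the scalar weights act as $\diag(\cdot) \otimes \I_d$ on stacked $d$-blocks, gives exactly the Kronecker form stated in the proposition. I do not anticipate a genuine obstacle: the calculation is elementary once Tweedie-type identities are set aside in favor of direct differentiation of a mixture density, and the normalization hypothesis is precisely what is needed to avoid additional softmax terms involving $\pi_k \det(\vM_k)$. The only part requiring care is the symmetry bookkeeping ($\vM_k = \vM_k\adj$) so that the quadratic form and its gradient are expressed consistently with $\vM_k\adj \vx$ rather than $\vM_k \vx$ in the final display.
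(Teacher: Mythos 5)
Your proposal is correct and follows essentially the same route as the paper's proof: both compute the mixture density $q$ by conditioning on the component (so that $\x \mid k \sim \mathcal N(\Zero, \vSigma_k + \sigma^2\I_d)$), invoke the normalization hypothesis to cancel the $\pi_k\det(\vM_k)$ prefactors, differentiate to obtain the softmax-weighted sum $-\sum_k [\softmax{\cdot}]_k \vM_k\vM_k\adj\x$, and then repackage that sum as the stated Kronecker block product. The only cosmetic difference is that the paper phrases the first step via $\nabla_\x \log q = \sum_k \pi_k\nabla_\x q(\x\mid k)/\sum_k \pi_k q(\x\mid k)$ before substituting, whereas you carry the quotient through directly; the content is identical.
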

\begin{proof}
    Let \(u\) be the multinomial random variable such that \(\z = \z_{u}\), so that \(u\) has probability mass function \(\pi\). Then by the law of total probability, we have
    \begin{align}
        \nabla_{\x}\log q(\x) 
        &= \nabla_{\x} \log \sum_{k = 1}^{K}q(\x \mid k)\pi_{k} \\
        &= \frac{\sum_{k = 1}^{K}\pi_{k}\nabla_{\x}q(\x \mid k)}{\sum_{k = 1}^{K}q(\x \mid k)\pi_{k}}
    \end{align}
    where \(q(\x \mid k)\) is the conditional density of \(\x\) given the event \(\{u = k\}\). To compute this quantity, note that \textit{conditional on the value of \(u\)}, we have
    \begin{equation}
        \x = \z_{u} + \sigma \w \sim \mathcal{N}(\Zero, \vSigma_{u} + \sigma^{2}\I_{d}).
    \end{equation}
    Thus we have
    \begin{equation}
        q(\x \mid k) = \frac{1}{\sqrt{(2\pi)^{d}\det(\vSigma_{k} + \sigma^{2}\I_{d})}}\explr{-\frac{1}{2}\x\adj(\vSigma_{k} + \sigma^{2}\I_{d})^{-1}\x},
    \end{equation}
    This gives
    \begin{equation}
        \nabla_{\x}q(\x \mid k) = -q(\x \mid k) \cdot (\vSigma_{k} + \sigma^{2}\I_{d})^{-1}\x.
    \end{equation}
    Putting this all together, we get
    \begin{align}
        &\nabla_{\x}\log q(\x) \\
        &= -\frac{\sum_{k = 1}^{K}q(\x \mid k)\pi_{k} \cdot (\vSigma_{k} + \sigma^{2}\I_{d})^{-1}\x}{\sum_{k = 1}^{K}q(\x \mid k)\pi_{k}} \\
        &= -\frac{\sum_{k = 1}^{K}\pi_{k}\det(\vSigma_{k} + \sigma^{2}\I_{d})^{-1/2}\explr{-\frac{1}{2}\x\adj(\vSigma_{k} + \sigma^{2}\I_{d})^{-1}\x}\cdot (\vSigma_{k} + \sigma^{2}\I_{d})^{-1}\x}{\sum_{k = 1}^{K}\pi_{k}\det(\vSigma_{k} + \sigma^{2}\I_{d})^{-1/2}\explr{-\frac{1}{2}\x\adj(\vSigma_{k} + \sigma^{2}\I_{d})^{-1}\x}}.
    \end{align}
    Now define \(\vM_{k} \doteq (\vSigma_{k} + \sigma^{2}\I_{d})^{-1/2}\). With this notation, we have
    \begin{align}
        \nabla_{\x}\log q(\x) 
        &= -\frac{\sum_{k = 1}^{K}\pi_{k}\det(\vM_{k})\explr{-\frac{1}{2}\x\adj\vM_{k}\vM_{k}\adj\x}\cdot \vM_{k}\vM_{k}\adj\x}{\sum_{k = 1}^{K}\pi_{k}\det(\vM_{k})\explr{-\frac{1}{2}\x\adj\vM_{k}\vM_{k}\adj\x}} \\
        &= -\frac{\sum_{k = 1}^{K}\pi_{k}\det(\vM_{k})\explr{-\frac{1}{2}\norm{\vM_{k}\adj\x}_{2}^{2}}\cdot \vM_{k}\vM_{k}\adj\x}{\sum_{k = 1}^{K}\pi_{k}\det(\vM_{k})\explr{-\frac{1}{2}\x\adj\vM_{k}\vM_{k}\adj\x}}.
    \end{align}
    Given our assumption that 
    each \(\pi_{k}\det(\M_{k})\) is the same, we have
    \begin{align}
        &\nabla_{\x}\log q(\x) \\
        &= -\frac{\sum_{k = 1}^{K}\pi_k\det(\vM_{k})\explr{-\frac{1}{2}\norm{\vM_{k}\adj\x}_{2}^{2}}\cdot \vM_{k}\vM_{k}\adj\x}{\sum_{k = 1}^{K}\pi_k\det(\vM_{k})\explr{-\frac{1}{2}\norm{\vM_{k}\adj\x}_{2}^{2}}} \\
        &= -\frac{\sum_{k = 1}^{K}\explr{-\frac{1}{2}\norm{\vM_{k}\adj\x}_{2}^{2}}\cdot \vM_{k}\vM_{k}\adj\x}{\sum_{k = 1}^{K}\explr{-\frac{1}{2}\norm{\vM_{k}\adj\x}_{2}^{2}}} \\
        &= -\sum_{k = 1}^{K}\e_{k}\adj\softmax{-\frac{1}{2}\mat{\norm{\vM_{1}\adj\x}_{2}^{2} \\ \vdots \\ \norm{\vM_{K}\adj\x}_{2}^{2}}}\vM_{k}\vM_{k}\adj\x \\
        &= -\mat{\vM_{1}, \dots, \vM_{K}}\left[\diaglr{\softmax{-\frac{1}{2}\mat{\norm{\vM_{1}\adj\x}_{2}^{2} \\ \vdots \\ \norm{\vM_{K}\adj\x}_{2}^{2}}}} \otimes \I_{d}\right]\mat{\vM_{1}\adj\x \\ \vdots \\ \vM_{K}\adj\x}.
    \end{align}
\end{proof}

Now we provide a final justification for the result cited in \Cref{sub:denoising}.

\begin{approximation}\label{thm:opt_denoiser_multi_subspaces}
    In the setting of \Cref{thm:score_multi_subspaces}, 
    diagonalize \(\vSigma_{k} = \vU_{k}\vLambda_{k}\vU_{k}\adj\) where \(\vU_{k} \in \bR^{d \times p}\) is orthogonal and \(\vLambda_{k} \succ \Zero \in \bR^{p \times p}\) is diagonal.\footnote{This assumption can be easily relaxed to \(\vLambda_{k} \succeq \Zero\) for all \(k\), but requires some more notation to handle, and the form of the solution does not change. Thus we handle the case where all matrices are full rank for simplicity.} Then we have the approximation
    \begin{equation}
        \mathbb{E}[\z \mid \x] \approx \mat{\vU_{1}, \dots, \vU_{K}}\left[\diaglr{\softmax{\frac{1}{2\sigma^{2}}\mat{\norm{\vU_{1}\adj\x}_{2}^{2} \\ \vdots \\ \norm{\vU_{K}\adj\x}_{2}^{2}}}} \otimes \I_{p}\right]\mat{\vU_{1}\adj\x \\ \vdots \\ \vU_{K}\adj\x}.
    \end{equation}
\end{approximation}
\begin{proof}
    We have
    \begin{align}
        \nabla_{\x}\log q(\x)
        &= -\sum_{k = 1}^{K}\e_{k}\adj\softmax{-\frac{1}{2}\mat{\norm{\vM_{1}\adj\x}_{2}^{2} \\ \vdots \\ \norm{\vM_{K}\adj\x}_{2}^{2}}}\vM_{k}\vM_{k}\adj\x \\
        &= -\sum_{k = 1}^{K}\e_{k}\adj\softmax{-\frac{1}{2\sigma^{2}}\mat{\norm{\sigma \vM_{1}\adj\x}_{2}^{2} \\ \vdots \\ \norm{\sigma \vM_{K}\adj\x}_{2}^{2}}}\vM_{k}\vM_{k}\adj\x \\
        &= -\sum_{k = 1}^{K}\e_{k}\adj\softmax{\frac{1}{2\sigma^{2}}\mat{\norm{\x}_{2}^{2} - \norm{\sigma \vM_{1}\adj\x}_{2}^{2} \\ \vdots \\ \norm{\x}_{2}^{2} - \norm{\sigma \vM_{K}\adj\x}_{2}^{2}}}\vM_{k}\vM_{k}\adj\x.
    \end{align}
    Now define \(\P_{k} \doteq \I_{d} - \sigma \M_{k}\), and let \(\vU_{k}^{\perp} \in \bR^{d \times (d - p)}\) be an orthogonal complement of \(\vU_{k}\). Then we have
    \begin{align}
        \P_{k}
        &= \I_{d} - \sigma \M_{k} \\
        &= \I_{d} - \sigma \left(\vSigma_{k} + \sigma^{2}\I_{d}\right)^{-1/2} \\
        &= \I_{d} - \sigma \left(\mat{\vU_{k} & \vU_{k}^{\perp}}\mat{\vLambda_{k} & \Zero \\ \Zero & \Zero}\mat{\vU_{k}\adj \\ (\vU_{k}^{\perp})\adj} + \sigma^{2}\I_{d}\right)^{-1/2} \\ 
        &= \I_{d} - \sigma \left(\mat{\vU_{k} & \vU_{k}^{\perp}}\mat{\vLambda_{k} + \sigma^{2}\I_{p} & \Zero \\ \Zero & \sigma^{2}\I_{d - p}}\mat{\vU_{k}\adj \\ (\vU_{k}^{\perp})\adj}\right)^{-1/2} \\ 
        &= \I_{d} - \mat{\vU_{k} & \vU_{k}^{\perp}} \mat{\sigma(\vLambda_{k} + \sigma^{2}\I_{p})^{-1/2} & \Zero \\ \Zero & \sigma \cdot (\sigma^{2})^{-1/2}\I_{d - p}}\mat{\vU_{k}\adj \\ (\vU_{k}^{\perp})\adj} \\ 
        &= \I_{d} - \mat{\vU_{k} & \vU_{k}^{\perp}} \mat{(\sigma^{-2}\vLambda_{k} + \I_{p})^{-1/2} & \Zero \\ \Zero & \I_{d - p}}\mat{\vU_{k}\adj \\ (\vU_{k}^{\perp})\adj} \\ 
        &= \mat{\vU_{k} & \vU_{k}^{\perp}} \mat{\I_{p} - (\sigma^{-2}\vLambda_{k} + \I_{p})^{-1/2} & \Zero \\ \Zero & \Zero}\mat{\vU_{k}\adj \\ (\vU_{k}^{\perp})\adj} \\
        &\approx \mat{\vU_{k} & \vU_{k}^{\perp}} \mat{\I_{p} & \Zero \\ \Zero & \Zero}\mat{\vU_{k}\adj \\ (\vU_{k}^{\perp})\adj} \\
        &= \vU_{k}\vU_{k}\adj.
    \end{align}
    Thus \(\vP_{k}\) is approximately a projection when \(\sigma\) is small. Under this algebraic relation, we have
    \begin{align}
        &\nabla_{\x}\log q(\x) \\
        &= -\sum_{k = 1}^{K}\e_{k}\adj\softmax{\frac{1}{2\sigma^{2}}\mat{\norm{\x}_{2}^{2} - \norm{\sigma \vM_{1}\adj\x}_{2}^{2} \\ \vdots \\ \norm{\x}_{2}^{2} - \norm{\sigma \vM_{K}\adj\x}_{2}^{2}}}\vM_{k}\vM_{k}\adj\x \\
        &= -\frac{1}{\sigma^{2}}\sum_{k = 1}^{K}\e_{k}\adj\softmax{\frac{1}{2\sigma^{2}}\mat{\norm{\x}_{2}^{2} - \norm{(\I_{d} - \vP_{1})\adj\x}_{2}^{2} \\ \vdots \\ \norm{\x}_{2}^{2} - \norm{(\I_{d} - \vP_{K})\adj\x}_{2}^{2}}}(\I_{d} - \vP_{k})(\I_{d} - \vP_{k})\adj\x \\
        &\approx -\frac{1}{\sigma^{2}}\sum_{k = 1}^{K}\e_{k}\adj\softmax{\frac{1}{2\sigma^{2}}\mat{\norm{\vP_{1}\adj\x}_{2}^{2} \\ \vdots \\ \norm{\vP_{K}\adj\x}_{2}^{2}}}(\I_{d} - \vP_{k})(\I_{d} - \vP_{k})\adj\x \\
        &\approx -\frac{1}{\sigma^{2}}\sum_{k = 1}^{K}\e_{k}\adj\softmax{\frac{1}{2\sigma^{2}}\mat{\norm{\P_{1}\adj\x}_{2}^{2} \\ \vdots \\ \norm{\P_{K}\adj\x}_{2}^{2}}}(\I_{d} - \P_{k})^{*}\x \\
        &= -\frac{\x}{\sigma^{2}}\sum_{k = 1}^{K}\e_{k}\adj\softmax{\frac{1}{2\sigma^{2}}\mat{\norm{\P_{1}\adj\x}_{2}^{2} \\ \vdots \\ \norm{\P_{K}\adj\x}_{2}^{2}}} + \frac{1}{\sigma^{2}}\sum_{k = 1}^{K}\e_{k}\adj\softmax{\frac{1}{2\sigma^{2}}\mat{\norm{\P_{1}\adj\x}_{2}^{2} \\ \vdots \\ \norm{\P_{K}\adj\x}_{2}^{2}}}\P_{k}\adj\x \\
        &= -\frac{1}{\sigma^{2}}\x + \frac{1}{\sigma^{2}}\sum_{k = 1}^{K}\e_{k}\adj\softmax{\frac{1}{2\sigma^{2}}\mat{\norm{\P_{1}\adj\x}_{2}^{2} \\ \vdots \\ \norm{\P_{K}\adj\x}_{2}^{2}}}\P_{k}\adj\x \\
        &\approx -\frac{1}{\sigma^{2}}\x + \frac{1}{\sigma^{2}}\sum_{k = 1}^{K}\e_{k}\adj\softmax{\frac{1}{2\sigma^{2}}\mat{\norm{\vU_{1}\adj\x}_{2}^{2} \\ \vdots \\ \norm{\vU_{K}\adj\x}_{2}^{2}}}\vU_{k}\vU_{k}\adj\x \\
        &= -\frac{1}{\sigma^{2}}\x + \frac{1}{\sigma^{2}}\mat{\vU_{1}, \cdots, \vU_{K}}\left[\diaglr{\softmax{\frac{1}{2\sigma^{2}}\mat{\norm{\vU_{1}\adj\x}_{2}^{2} \\ \vdots \\ \norm{\vU_{K}\adj\x}_{2}^{2}}}} \otimes \I_{p}\right]\mat{\vU_{1}\adj\x \\ \vdots \\ \vU_{K}\adj\x}.
    \end{align}
    Plugging this into Tweedie's formula, we have
    \begin{equation}
        \mathbb{E}[\z \mid \x] \approx \mat{\vU_{1}, \cdots, \vU_{K}}\left[\diaglr{\softmax{\frac{1}{2\sigma^{2}}\mat{\norm{\vU_{1}\adj\x}_{2}^{2} \\ \vdots \\ \norm{\vU_{K}\adj\x}_{2}^{2}}}} \otimes \I_{p}\right]\mat{\vU_{1}\adj\x \\ \vdots \\ \vU_{K}\adj\x}.
    \end{equation}
\end{proof}

\begin{remark}
   Although \Cref{thm:opt_denoiser_multi_subspaces} is stated as an approximation rather than as a proposition, we believe it should be possible without too much extra work to convert it into a statement of asymptotic equivalence as $\sigma \to 0$ (in particular, holding for $\sigma$ below the smallest (nonzero) eigenvalue of any $\vSigma_k$. Most approximations taken in the derivation of \Cref{thm:opt_denoiser_multi_subspaces} can immediately be turned into asymptotic claims; the only slightly delicate point is treating the softmax, which can be accomplished using standard ``high temperature'' convergence behavior of the softmax function %
   (in particular, as $\sigma \to 0$ in our expressions, the softmax concentrates on the ``best head'').
\end{remark}

\subsection{Companion to \Cref{sub:compression}} \label{app:proofs-compression}

We again wish to re-iterate the core contribution of our approach in \Cref{sub:compression}. The application of a compression perspective to representation learning has been discussed before, for example in the line of maximal coding rate reduction works \cite{OriginalMCR2}. In \Cref{sub:compression}, we provide the following contributions and developments to this perspective:
\begin{itemize}
    \item We propose a generalized coding rate function \(R^{c}(\cdot; \vU_{[K]})\) which measures the coding rate with respect to a set of subspaces \(\vU_{[K]}\) as opposed to a set of classes (as in \cite{OriginalMCR2,chan2021redunet}), making the underlying formulation unsupervised.
    \item We then show how if we adopt the framework of alternating minimization of the sparse rate reduction objective, then unrolling the first alternating step --- gradient descent on this coding rate objective --- nearly exactly recovers the common multi-head attention mechanism found in transformer networks (except that the query/key/value operators are all the same operation \(\vU_{k}\adj\) now, which we interpret as projection onto a single subspace).
\end{itemize}
In the process of the second contribution, and in the following proofs, we make some simple approximations and technical assumptions. The validity of these assumptions may be explored, and the approximations refined, altogether providing a more complex (and possibly more performant) resulting self-attention like operator. For the sake of technical clarity and simplicity in this work, we make perhaps the \textit{simplest possible choices}. As a result, we \textit{do not} claim that our network is optimally designed, but rather that the principles we develop in this work (compression, denoising, sparsification, unrolled optimization) can provide the backbone for far superior and more interpretable network architectures in the future on sundry tasks. As it is, with our straightforward, simple, and interpretable design, we still obtain meaningful conceptual results and very solid empirical performance.

We now give the derivation of the approximation alluded to in \Cref{sub:compression}.

\begin{approximation}\label{thm:grad_codingrate_approx} 
    Let \(\Z \in \bR^{d \times N}\) have unit-norm columns, and \(\vU_{[K]} = (\vU_1, \dots, \vU_K) \) such that each \(\vU_{k} \in \bR^{d \times p}\) is an orthogonal matrix, the \((\vU_{k})_{k = 1}^{K}\) are incoherent, and the columns of \(\Z\) approximately lie on \(\bigcup_{k = 1}^{K}\mathrm{Span}(\vU_{k})\). Let \(\gamma = \frac{p}{N\eps^{2}}\). Let \(\kappa > 0\). Then
    \begin{equation}
        \Z - \kappa \nabla_{\Z}R^{c}(\Z \mid \vU_{[K]}) \approx (1 - \kappa\gamma)\Z + \kappa\gamma\MSSA{\Z | \vU_{[K]}},
    \end{equation}
    where as in \Cref{sub:compression} we have
    \begin{align}
        \SSA{\Z | \vU_{k}} 
        &= (\vU_{k}\adj\Z)\softmax{(\vU_{k}\adj\Z)\adj(\vU_{k}\adj\Z)}, \\
        \MSSA{\Z | \vU_{[K]}} 
        &= \gamma\mat{\vU_{1}, \dots, \vU_{K}}\mat{\SSA{\Z | \vU_{1}} \\ \vdots \\ \SSA{\Z | \vU_{K}}},
    \end{align}
    where \(\softmax{\cdot}\) is the softmax operator (applied to each column of an input matrix), i.e.,
    \begin{align}
        \softmax{\vv} 
        &= \frac{1}{\sum_{i = 1}^{n}e^{v_{i}}}\mat{e^{v_{1}} \\ \vdots \\ e^{v_{n}}}, \\
        \softmax{\mat{\vv_{1}, \dots, \vv_{K}}} 
        &= \mat{\softmax{\vv_{1}}, \dots, \softmax{\vv_{K}}}.
    \end{align}
\end{approximation}

\begin{proof}
    According to \Cref{eq:rate-gradient}, the gradient $\nabla_{\Z}{R}^c(\Z; \vU_{[K]})$ is
    \begin{equation}
     \nabla_{\Z}{R}^c(\Z; \vU_{[K]}) 
         = \gamma \sum_{k=1}^K \vU_k\vU_k\adj\Z\bp{\I +
    \gamma(\vU_k\adj\Z)\adj(\vU_k\adj\Z)}^{-1}.
    \end{equation}
    Notice that according to \cite{chan2021redunet}, the gradient is precisely the residual of a ridge regression for each (projected) token $\vU_k\adj\z_{i}$ using other projected tokens \(\vU_{k}\adj\z_{j}\) as the regressors, hence being the residual of an auto-regression. 
    
    However, as we have seen in the work of ReduNet \cite{chan2021redunet}, computing the inverse $\bp{\I +
    \gamma(\vU_k^*\Z)^*(\vU_k^*\Z)}^{-1}$ can be expensive. Hence for computational efficiency, we may approximate it with the first order term of its von Neumann expansion: 
    \begin{align}
    \nabla_{\Z}{R}^c(\Z; \vU_{[K]}) 
        & = \gamma \sum_{k=1}^K \vU_k\vU_k^*\Z\Big({\I +
    \gamma(\vU_k^*\Z)^*(\vU_k^*\Z)}\Big)^{-1} \\
     & \approx \gamma \sum_{k=1}^K \vU_k\vU_k^*\Z\Big({\I - 
    \gamma(\vU_k^*\Z)^*(\vU_k^*\Z)}\Big) \\
        &=  \gamma\sum_{k=1}^{K}\vU_k\Big({\vU^*_k\Z - \gamma\vU^*_k\Z [(\vU_{k}\adj\Z)\adj(\vU_{k}\adj\Z)]}\Big)
        \label{eqn:approximation-1}
    \end{align}
    Notice that the term \((\vU_{k}\adj\Z)\adj(\vU_{k}\adj\Z)\) is the auto-correlation among the projected tokens. As the tokens $\Z$ may be from different subspaces, we would prefer to use only tokens that belong to the \textit{same} subspace to regress and compress themselves. Hence we may convert the above correlation term into a subspace-membership indicator with a softmax operation, whence \eqref{eqn:approximation-1} becomes
    \begin{eqnarray}
        \nabla_{\Z}{R}^c(\Z; \vU_{[K]}) 
        &\approx & \gamma\sum_{k=1}^{K}\vU_k\Big({\vU^*_k\Z - \gamma\vU^*_k\Z [(\vU_{k}\adj\Z)\adj(\vU_k\adj\Z)]}\Big)\\
        &\approx & \gamma\sum_{k=1}^{K}\vU_k\vU_k^*\Z - \gamma^{2}\sum_{k=1}^{K}\vU_k\Big(\vU_k^*\Z \softmax{(\vU_{k}\adj\Z)\adj(\vU_{k}\adj\Z)}\Big)
        \label{eqn:approximation-2}
    \end{eqnarray}
    
    Then, we can rewrite the above approximation to the gradient of $R^c$ as: 
    \begin{align}\label{eq:appendix-ssa-derivation}
        \nabla_{\Z}{R}^c(\Z; \vU_{[K]}) 
        &\approx \gamma\sum_{k=1}^{K}\vU_k\vU_k^*\Z - \gamma^{2}\sum_{k=1}^{K}\vU_k\bp{\vU_k^*\Z \softmax{(\vU_{k}\adj\Z)\adj(\vU_k\adj\Z)}} \\
         &= \gamma\sum_{k=1}^{K}\vU_k\vU_k^*\Z - \gamma^{2}\sum_{k=1}^{K}\vU_k\SSA{\Z\given\vU_k} \\
         &= \underbrace{\bp{\gamma\sum_{k=1}^{K}\vU_k\vU_k^*}\Z}_{\approx \gamma \Z} -\gamma^{2}\mat{\vU_1, \cdots, \vU_K}
         \mat{\SSA{\Z\given\vU_1} \\ \vdots \\  \SSA{\Z\given\vU_K} } \\
         &\approx\gamma \Z - \gamma^{2}\mat{\vU_{1}, \cdots, \vU_{K}} \mat{\SSA{\Z \given \vU_{1}} \\ \vdots \\ \SSA{\Z \given \vU_{K}}}.
    \end{align}
    Thus the gradient descent step with learning rate \(\kappa > 0\) gives
    \begin{equation}\label{eq:appendix-mssa-block-variant}
        \Z - \kappa\nabla_{\Z}R^{c}(\Z \mid \vU_{[K]}) \approx (1 - \kappa\gamma)\Z + \kappa \gamma^{2}\mat{\vU_{1}, \dots, \vU_{K}}\mat{\SSA{\Z | \vU_{1}} \\ \vdots \\ \SSA{\Z | \vU_{K}}}.
    \end{equation}
\end{proof}

\subsection{Companion to \Cref{sub:sparse}} \label{app:proofs-sparse}

We again wish to re-iterate the core contribution of our approach in \Cref{sub:sparse}.
\begin{itemize}
    \item Within the framework of alternating minimization of the sparse rate reduction objective, we show that the second alternating step --- gradient descent on the overall coding rate plus a sparse regularization term --- has heuristic connections to a particular LASSO optimization.
    \item We show that the unrolling of the proximal gradient step to solve this LASSO optimization resembles the MLP which immediately follows the self-attention layer within transformer blocks.
\end{itemize}
In the main text, our connection between the second step of the alternating minimization and the LASSO optimization was high-level and heuristic. In some sense, the choice to pose the minimization step as a LASSO was a \textit{simple, reliable, and interpretable choice} which works well in practice, but is nonetheless not backed up by rigorous theoretical justification. In the following subsection, we provide a mathematical justification for a reformulation of the minimization step using a majorization-minimization framework. We further show that the associated unrolled optimization step bears a strong resemblance to the ISTA step. This confirms our earlier discussion --- we took the \textit{simplest possible choice} in designing \ours{}, but by more rigorous derivation we can uncover alternative operators which nonetheless have the same conceptual function and may perform better in practice.

\paragraph{Assumptions.} In this section, we present a rigorous optimization
analysis of an incremental minimization approach to the objective
\Cref{eq:whole-sparse}. We will show that under two simplifying assumptions,
namely
\begin{enumerate}
    \item The columns of $\vZ^{\ell + 1/2}$ are normalized, in the sense that
        $\diag((\vZ^{\ell+1/2})\adj \vZ^{\ell + 1/2}) = \bm{1}$;\footnote{This is
            a natural assumption in transformer-type architectures such as
            \ours{} due to the use of LayerNorm blocks---although these blocks
            (indeed, as we use them in \ours{}) include trainable mean and
            scale offsets as well as an additional mean subtraction operation
            \cite{phuong2022formal}, they are initialized to have zero mean and
            unit norm, hence this assumption corresponds to an analysis of the
        network at its initialization. }
    \item We have $d \geq N$,\footnote{This assumption is without loss
            of generality, as we will see in the analysis below. The reason is that
            $\vZ\adj \vZ$ and $\vZ\adj \vZ$ have the same nonzero eigenvalues
            regardless of the shape of $\vZ$, which implies that $\log\det(\vI + \alpha
            \vZ\adj \vZ) = \log\det(\vI + \alpha \vZ\vZ\adj)$. In particular,
            interpreting the norms appropriately (with a slight abuse of notation), we
            have $\varphi(\vZ) = \varphi(\vZ\adj)$, so for the purposes of analysis
            we can always proceed as though $\vZ$ is a tall matrix (as long as
            we do not use any special properties of  $\alpha$ in our
        derivation).} and the columns of $\vZ^{\ell
        + 1/2}$ are
        orthogonal, so that $(\vZ^{\ell+1/2})\adj \vZ^{\ell + 1/2} =
        \vI$.\footnote{This assumption is strictly stronger than the previous
            one, and strictly stronger than an assumption of incoherence on the
            columns. It corresponds to the representation $\vZ^{\ell+1/2}$ being non-collapsed, which we expect to hold at initialization due to the projections $\vU_{[K]}$ being random. %
        }
\end{enumerate}
the approach leads to an update iteration that is equal to a slightly
simplified version of the ISTA block \Cref{eq:ista-block}. We see this as a
justification for our derivation in \Cref{sub:sparse}, which obtained the ISTA
block by introducing an additional simplifying assumption on the distribution
of the data at layer $\ell$.

\paragraph{Analysis.} Following \Cref{eq:sparse-nonnegative}, we will consider
the natural relaxation of the $\ell{}_0$ ``norm'' to the $\ell^1$ norm, and
incorporate a nonnegativity constraint.  Consider the objective
\begin{equation}
    \varphi(\vZ) = \lambda \norm{\vZ}_1 + \chi_{\set{\vZ \geq \Zero}}(\vZ)
    - \underbrace{ \frac{1}{2} \log\det\left( \vI + \alpha \vZ\adj \vZ
    \right)}_{R(\vZ)},
\end{equation}
where $\vZ \in \bbR^{d \times N}$ and $\alpha = d / N \veps^2$, and
$\chi_{\set{\vZ \geq \Zero}}$ denotes the characteristic function for the set
of elementwise-nonnegative matrices $\vZ$. As in
\Cref{app:proofs-compression}, we calculate
\begin{equation}
    \nabla_{\vZ} R(\vZ) = \alpha \vZ \left( \vI + \alpha \vZ\adj \vZ \right)\inv.
\end{equation}
We consider an incremental optimization scheme for the highly nonlinear and
nonconvex objective $\varphi$. Following \Cref{sub:compression}, we optimize
locally at a ``post-compression'' iterate $\vZ^{\ell + 1/2}$. We follow the
standard proximal majorize-minimize framework \cite{Wright-Ma-2022} for
incremental/local optimization: this begins with the second-order Taylor
expansion for the smooth part of $\varphi$ in a neighborhood of the
current iterate $\vZ^{\ell + 1/2}$:
\begin{equation}
    \begin{split}
        R(\vZ)
        =
        R(\vZ^{\ell + 1/2}) 
        &+ \ip*{\nabla_{\vZ} R(\vZ^{\ell + 1/2})}{\vZ - \vZ^{\ell + 1/2}} \\
        &+ \int_0^1 (1 - t) \ip*{\vZ - \vZ^{\ell+1/2}}{\nabla^2 R(\vZ_t)\left(
                \vZ - \vZ^{\ell+1/2}
            \right)
        }\ \mathrm{d}t,
    \end{split}
    \label{eq:R-quadratic-taylor}
\end{equation}
where for any $\vZ \in \bbR^{d \times N}$, $\vZ_t = t \vZ^{\ell+1/2} + (1-t)
\vZ$. The proximal majorization-minimization approach alternates two steps to
minimize $\varphi$:
\begin{enumerate}
    \item First, use assumptions on $\vZ^{\ell + 1/2}$ to derive an upper bound
        on the operator norm of the Hessian $\nabla^2 R(\vZ)$ over the
        effective domain of the optimization problem. We will write $L$ for
        this (uniform) upper bound. This yields a quadratic upper bound for the
        smooth part of the objective $\varphi$.
    \item Then, alternately minimize the \textit{smooth part} of the quadratic
        upper bound as a function of $\vZ$, and take a \textit{proximal step}
        on the nonsmooth part. It can be shown \cite{Wright-Ma-2022} that
        corresponds to the iteration
        \begin{equation}
            \vZ^+ = \prox{\frac{\lambda}{L} (\norm{}_1 + \chi_{\set{\vZ \geq \Zero}} ) }\left(
                \vZ + \frac{1}{L} \nabla_{\vZ} R(\vZ)%
            \right)
            \label{eq:mm-iter}
        \end{equation}
        In the alternating minimization setting of this paper for optimizing
        \Cref{eq:sparse-rr}, we only take one such step, starting at
        $\vZ^{\ell+1/2}$.
\end{enumerate}
We will instantiate this program below, showing quantitative error bounds
related to our assumptions above as necessary. Rather than directly applying
the iteration \Cref{eq:mm-iter}, we will derive it below under our
aforementioned assumptions.

Starting at \Cref{eq:R-quadratic-taylor}, our first task is to upper bound the
quadratic residual. 
This corresponds to estimating
\begin{align}
    &\ip*{\vZ - \vZ^{\ell+1/2}}{\nabla^2 R(\vZ_t)\left(
            \vZ - \vZ^{\ell+1/2}
        \right)
    } \\
    &\qquad\leq
    \sup_{t \in [0,1]}
    \norm*{
        \nabla^2 R(\vZ_t)
    }_{\ell^2 \to \ell^2}
    \norm*{
            \vZ - \vZ^{\ell+1/2}
    }_{\frob}^2
\end{align}
with Cauchy-Schwarz. 
Using
\Cref{lem:logdet-hessian}, we can estimate the operator norm term in the
previous bound in terms of properties of $\vZ^{\ell + 1/2}$. We need to bound 
\begin{equation}
    \alpha \sup_{\norm{\vDelta}_{\frob} \leq 1}
    \norm*{
        \left(
            \vDelta
            - \alpha \vZ_t(\vI + \alpha \vZ_t\adj \vZ_t)\inv (\vZ_t\adj \vDelta +
            \vDelta\adj \vZ_t)
        \right)(\vI + \alpha \vZ_t\adj \vZ_t)\inv
    }_{\frob},
\end{equation}
and \Cref{lem:logdet-grad-lipschitz} gives that this term is no larger than
$9\alpha / 4$ for any $\vZ$ and any $t$.
With this
estimate and \Cref{eq:R-quadratic-taylor}, we have a quadratic upper bound for
$-R(\vZ)$:
\begin{equation}
    -R(\vZ)
    \leq
    -R(\vZ^{\ell + 1/2}) 
    + \ip*{-\nabla_{\vZ} R(\vZ^{\ell + 1/2})}{\vZ - \vZ^{\ell + 1/2}} \\
    + \frac{9\alpha}{8} \norm*{
        \vZ - \vZ^{\ell+1/2}
    }_{\frob}^2.
\end{equation}
Meanwhile, by our assumptions above, we have
\begin{equation}
    -\nabla_{\vZ} R(\vZ^{\ell + 1/2})
    = -\alpha \vZ^{\ell+1/2} \left( \vI + \alpha \vI \right)\inv
    = -\frac{\alpha}{1 + \alpha} \vZ^{\ell + 1/2}.
\end{equation}
We now minimize the preceding quadratic upper bound as a
function of $\vZ$. Differentiating, the minimizer $\vZ_{\mathrm{opt}}$ is
calculated as
\begin{equation}
    \vZ_{\mathrm{opt}}
    = \left(1 + \frac{4}{9(1 + \alpha)}\right) \vZ^{\ell+1/2},
\end{equation}
and it is well-known that the proximal operator of the sum of
$\chi_{\set{\vZ\geq \Zero}}$ and $\lambda \norm{}_{1}$ is simply the one-sided
soft-thresholding operator \cite{Wright-Ma-2022}
\begin{equation}
    \prox{\chi_{\set{\vZ\geq\Zero}} + \lambda \norm{}_1}\left( \vZ \right)
    =
    \max \set{
        \vZ - \lambda \One, \Zero
    },
\end{equation}
where the maximum is applied elementwise. As in \Cref{sub:sparse}, we may write
this elementwise maximum simply as $\operatorname{ReLU}$.  Thus, one step of
proximal majorization-minimization under our simplifying assumptions takes the
form
\begin{equation}
    \vZ^{\ell+1}
    =
    \operatorname{ReLU}\left(
        \left(1 + \frac{4}{9(1 + \alpha)}\right) \vZ^{\ell+1/2}
        - \frac{4\lambda}{9\alpha}\One
    \right).
\end{equation}
Finally, we point out one additional elaboration which introduces the
dictionary $\vD$ that appears in the ISTA block in \Cref{sub:sparse}. 
Notice that for any orthogonal $\vD$, one has $R(\vD \vZ) = R(\vZ)$ for every
$\vZ$. This symmetry implies equivariance properties of $\nabla_{\vZ} R(\vZ)$
and $\nabla^2_{\vZ} R(\vZ)$: for every $\vZ$ and every $\vDelta$ and every
orthogonal $\vD$,
\begin{align}
    \vD \nabla_{\vZ} R(\vZ) &= \nabla_{\vZ} R(\vD \vZ), \\
    \ip{\vD \vDelta}{\nabla^2_{\vZ} R(\vZ) \left(\vD \vDelta\right)}
    &= 
    \ip{\vDelta}{\nabla^2_{\vZ} R(\vD \vZ) \left(\vDelta\right)}.
\end{align}
Hence the quadratic Taylor expansion \Cref{eq:R-quadratic-taylor} can be
written equivalently as
\begin{equation}
    \begin{split}
        R(\vZ)
        =
        R(\vD\adj \vZ^{\ell + 1/2}) 
        &+ \ip*{\nabla_{\vZ} R(\vD\adj \vZ^{\ell + 1/2})}{\vZ - \vZ^{\ell + 1/2}} \\
        &+ \int_0^1 (1 - t) \ip*{\vZ - \vZ^{\ell+1/2}}{\nabla^2 R(\vD\adj\vZ_t)\left(
                \vZ - \vZ^{\ell+1/2}
            \right)
        }\ \mathrm{d}t,
    \end{split}
\end{equation}
for any orthogonal $\vD$. The significance of this is that we have obtained an
expression equivalent to \Cref{eq:R-quadratic-taylor}, but with
$\vZ^{\ell+1/2}$ replaced by $\vD\adj \vZ^{\ell + 1/2}$; moreover, because our
approximation arguments above are not affected by left-multiplication of
$\vZ^{\ell + 1/2}$ by an orthogonal matrix (this operation does not change the
norms of the columns of $\vZ^{\ell + 1/2}$, or their correlations, and hence the
matrix's incoherence), we can apply exactly the same line of reasoning above to
obtain that an equivalent proximal majorization-minimization iteration is given
by 
\begin{equation}\label{eq:prox_maj_min_iteration}
    \vZ^{\ell+1}
    =
    \operatorname{ReLU}\left(
        \left(1 + \frac{4}{9(1 + \alpha)}\right) \vD\adj \vZ^{\ell+1/2}
        - \frac{4\lambda}{9\alpha} \One
    \right),
\end{equation}
for any orthogonal dictionary $\vD$. This gives an update quite similar to the
ISTA block \Cref{eq:ista-block} in the case where the dictionary used in
\Cref{sub:sparse} is orthogonal, but without a skip connection.

We thus obtain a natural
white-box version of this part of the architecture, along with the natural
interpretation \textit{that its purpose is to sparsify the compressed tokens
$\vZ^{\ell+1/2}$ in a (learnable) dictionary}, which accords with recent
empirical studies \cite{Li2023-ig}.

\paragraph{Other architectures?} As we mentioned at the start of this section,
the preceding derivation is performed in the most elementary possible setting
in order to demonstrate the majorization-minimization approach for layer
design. More precise approximations or assumptions may lead to superior layer
designs that better optimize the target objective \Cref{eq:sparse-rr} (and in
particular \Cref{eq:whole-sparse}). We mention two here:
\begin{enumerate}
    \item \textbf{Beyond exactly-incoherent features}: our derivations above assumed that the
        incoming representations $\vZ^{\ell+1/2}$ were already maximal for the
        expansion term $R$ in \Cref{eq:whole-sparse}. It is desirable to obtain
        a `perturbative' derivation, which applies in cases where
        $\vZ^{\ell+1/2}$ is not fully orthogonal, but instead near-orthogonal,
        in particular \textit{incoherent} \cite{Wright-Ma-2022}. The
        derivations above can be adapted to this setting; the perturbation
        bounds become slightly more delicate, and the ultimate layer
        \Cref{eq:prox_maj_min_iteration} changes to involve additional
        normalization.
    \item \textbf{Beyond orthogonal dictionaries}: The symmetries of the
        expansion term $R$ in \Cref{eq:whole-sparse} may be followed to lead to
        a pair of dictionaries $\vD$ and $\vD'$ and an objective that
        sparsifies $\vD \vZ \vD'$. This type of transformation is suggestive of
        popular architectures that mix over tokens
        \cite{Tolstikhin2021-yh,Trockman2022-po}, however we consider the
        simpler form $\vD \vZ$ in this work. In addition, we have
        focused for simplicity on orthogonal dictionaries $\vD$; as in the
        previous bullet, one may consider in a similar way dictionaries $\vD$
        which are complete and near-orthogonal. Adapting the derivation to
        \textit{overcomplete dictionaries} is an interesting future direction
        that we expect to improve the scalability of \ours{}; one avenue to
        achieve this could be increasing the number of projections $\vU_{[K]}$
        and their embedding dimensions.
\end{enumerate}

\subsubsection{Auxiliary Lemmas}

\begin{lemma}
    \label{lem:logdet-hessian}
    Consider the function
    \begin{equation}
        R(\vZ) = \frac{1}{2} \log \det \left( \vI + \alpha \vZ\adj \vZ \right),
    \end{equation}
    where $\alpha > 0$ is a constant. Then we have
    \begin{equation}
        \nabla_{\vZ} R(\vZ) = \alpha \vZ \left( \vI + \alpha \vZ\adj \vZ\right)
        \inv,
    \end{equation}
    and the Hessian operator $\nabla_{\vZ}^{2}R(\vZ) \colon \mathbb{R}^{d \times N} \to \mathbb{R}^{d \times N}$ satisfies that for any $\vDelta \in \mathbb{R}^{d \times N}$, 
    \begin{align}
        &\nabla^2_{\vZ} R(\vZ)\left( \vDelta \right) \\
        &=
        \alpha\vDelta \left( \vI + \alpha \vZ\adj \vZ \right)\inv
        - \alpha^2 \vZ 
        \left( \vI + \alpha \vZ\adj \vZ \right)\inv
        \left( \vZ\adj \vDelta + \vDelta\adj \vZ \right)
        \left( \vI + \alpha \vZ\adj \vZ \right)\inv.
    \end{align}
    \begin{proof}
        The gradient calculation follows from \cite{OriginalMCR2}, for example.
        For the Hessian, we use the usual approach to calculating derivatives:
        if $\vDelta$ is any matrix with the same shape as $\vZ$ and $t > 0$,
        \begin{equation}
            \nabla_{\vZ}^2 R(\vZ)\left(
                \vDelta
            \right) = 
            \dac\left[t\mapsto \nabla_{\vZ} R(\vZ + t
            \vDelta)\right],
        \end{equation}
        valid since $R$ is smooth. We have
        \begin{align*}
            &\nabla_{\vZ} R(\vZ + t \vDelta)\\
            =&
            \alpha(\vZ + t \vDelta) \left( \vI + \alpha (\vZ +
            t\vDelta)\adj(\vZ + t\vDelta) \right)\inv \\
            =&
            \alpha(\vZ + t \vDelta) \left( \vI + \alpha \vZ\adj \vZ
                + \alpha t\left[
                    \vZ\adj \vDelta + \vDelta\adj \vZ + t \vDelta\adj \vDelta
                \right]
            \right)\inv \\
            =&
            \alpha(\vZ + t \vDelta) \left( 
                \vI + 
                \alpha t
                \left(\vI + \alpha \vZ\adj \vZ\right)\inv
                \left[
                    \vZ\adj \vDelta + \vDelta\adj \vZ + t \vDelta\adj \vDelta
                \right]
            \right)\inv
            \left( \vI + \alpha \vZ\adj \vZ \right)\inv
            \\
            =&
            \alpha(\vZ + t \vDelta) \left( 
                \sum_{k = 0}^\infty
                (- \alpha t)^k
                \left(
                    \left(\vI + \alpha \vZ\adj \vZ\right)\inv
                    \left[
                        \vZ\adj \vDelta + \vDelta\adj \vZ + t \vDelta\adj \vDelta
                    \right]
                \right)^k
            \right)
            \left( \vI + \alpha \vZ\adj \vZ \right)\inv,
        \end{align*}
        where in the fourth line we require that $t$ is sufficiently close to
        $0$ in order to invoke the Neumann series. First, notice that the term
        involving $\vDelta\adj \vDelta$ does not play a role in the final
        expression: after we differentiate with respect to $t$ and take a limit
        $t \to 0$, terms arising due to differentiation of $t \mapsto t
        \vDelta\adj \vDelta$ go to zero, because whenever the summation index
        $k > 0$ we have a term $(-\alpha t)^k$ that goes to zero as $t \to 0$.
        We thus obtain with the product rule
        \begin{align}
            &\dac\left[t\mapsto \nabla_{\vZ} R(\vZ + t
            \vDelta)\right]\\
            =\,\, 
            &\alpha\vDelta \left( \vI + \alpha \vZ\adj \vZ \right)\inv 
             - \alpha^2 \vZ 
            \left( \vI + \alpha \vZ\adj \vZ \right)\inv
            \left( \vZ\adj \vDelta + \vDelta\adj \vZ \right)
            \left( \vI + \alpha \vZ\adj \vZ \right)\inv.
        \end{align}
    \end{proof}
\end{lemma}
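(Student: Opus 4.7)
The plan is to treat both the gradient and Hessian as directional derivatives in the matrix parameter, reducing everything to scalar calculus along straight lines. For any perturbation $\vDelta \in \bR^{d \times N}$ and small $t$, I will compute $R(\vZ + t\vDelta)$ and its first derivative at $t = 0$ to recover the gradient, then compute $\nabla_{\vZ} R(\vZ + t\vDelta)$ and differentiate once more at $t = 0$ to recover the Hessian-vector product. This is just the scheme $\nabla^2 R(\vZ)(\vDelta) = \left.\tfrac{d}{dt}\right|_{t=0} \nabla R(\vZ + t\vDelta)$, which is valid since $R$ is smooth on the (open) domain where $\vI + \alpha \vZ\adj \vZ$ is invertible.

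For the gradient, I would invoke the standard matrix-calculus identity $\Diff \log\det(\vM) = \tr(\vM^{-1} \Diff \vM)$ applied to $\vM(\vZ) = \vI + \alpha \vZ\adj \vZ$. The product rule gives $\Diff \vM = \alpha(\Diff \vZ\adj) \vZ + \alpha \vZ\adj (\Diff \vZ)$, and cyclic trace together with symmetry of $\vM^{-1}$ collapse the two contributions and produce $\nabla_{\vZ} R(\vZ) = \alpha \vZ(\vI + \alpha \vZ\adj \vZ)^{-1}$. (Equivalently, this matches the known formula from \textcite{OriginalMCR2}.)

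The core of the work is the Hessian. Substituting $\vZ + t\vDelta$ into the gradient formula yields
\[
\nabla_{\vZ}R(\vZ + t\vDelta) = \alpha(\vZ + t\vDelta)\bigl(\vI + \alpha \vZ\adj \vZ + \alpha t (\vZ\adj \vDelta + \vDelta\adj \vZ) + \alpha t^2 \vDelta\adj \vDelta\bigr)^{-1}.
\]
I would factor out $(\vI + \alpha \vZ\adj \vZ)^{-1}$ on the right, so that the remaining inverse is of the form $(\vI + t \vE(t))^{-1}$ with $\vE(t)$ smooth in $t$; then either the Neumann series $\sum_{k \geq 0}(-t)^k \vE(t)^k$ or the inverse-derivative identity $\left.\tfrac{d}{dt}\right|_{t=0}(\vI + t\vE(t))^{-1} = -\vE(0)$ immediately gives the linear-in-$t$ coefficient. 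Combining this with the product-rule contribution from the $\vZ + t\vDelta$ prefactor, and noting that the $\vDelta\adj \vDelta$ term vanishes at $t=0$ after differentiation, produces the claimed formula.

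I expect the only delicate step is bookkeeping: making sure the Neumann expansion (or directly the inverse-perturbation identity) is applied in the correct direction (left vs.\ right factors of $(\vI + \alpha \vZ\adj \vZ)^{-1}$ do not commute with $\vZ$ in general) and that the $t^2 \vDelta\adj \vDelta$ correction to $\vM$, as well as all $k \geq 2$ terms in the Neumann series, genuinely do not contribute after taking $t \to 0$. Once this is done carefully the two terms of the stated Hessian emerge naturally: the $\alpha \vDelta(\vI + \alpha \vZ\adj \vZ)^{-1}$ piece from differentiating the prefactor $\vZ + t\vDelta$, and the $-\alpha^2 \vZ(\vI + \alpha \vZ\adj \vZ)^{-1}(\vZ\adj \vDelta + \vDelta\adj \vZ)(\vI + \alpha \vZ\adj \vZ)^{-1}$ piece from differentiating the inverse.
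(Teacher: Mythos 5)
Your proposal follows essentially the same route as the paper: you compute the Hessian as $\dac[t \mapsto \nabla_\vZ R(\vZ + t\vDelta)]$, substitute into the gradient formula, factor out $(\vI + \alpha\vZ\adj\vZ)^{-1}$, expand the remaining inverse via a Neumann series (or the equivalent first-order inverse-perturbation identity), and observe that the $t^2\vDelta\adj\vDelta$ term and all $k\geq 2$ series terms vanish after differentiating at $t = 0$. The only cosmetic difference is that you spell out the gradient computation via $\Diff\log\det(\vM) = \tr(\vM^{-1}\Diff\vM)$ whereas the paper simply cites \textcite{OriginalMCR2}.
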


\begin{lemma}
    \label{lem:logdet-grad-lipschitz}
    One has
    \begin{equation}
        \sup_{\norm{\vDelta}_{\frob} \leq 1}
        \norm*{
            \left(
                \vDelta
                - \alpha \vZ_t(\vI + \alpha \vZ_t\adj \vZ_t)\inv (\vZ_t\adj \vDelta +
                \vDelta\adj \vZ_t)
            \right)(\vI + \alpha \vZ_t\adj \vZ_t)\inv
        }_{\frob}
        \leq
        \frac{9}{4}.
    \end{equation}
    \begin{proof}
        Fix $\vDelta$ satisfying $\norm{\vDelta}_{\frob}\leq 1$. By the
        triangle inequality,
        \begin{align}
            &\norm*{
                \left(
                    \vDelta
                    - \alpha \vZ_t(\vI + \alpha \vZ_t\adj \vZ_t)\inv (\vZ_t\adj \vDelta +
                    \vDelta\adj \vZ_t)
                \right)(\vI + \alpha \vZ_t\adj \vZ_t)\inv
            }_{\frob}
            \\
            &\leq
            \norm*{
                \vDelta (\vI + \alpha \vZ_t\adj \vZ_t)\inv
            }_{\frob}
            +
            \alpha \norm*{
                \vZ_t(\vI + \alpha \vZ_t\adj \vZ_t)\inv (\vZ_t\adj \vDelta +
                \vDelta\adj \vZ_t)
                (\vI + \alpha \vZ_t\adj \vZ_t)\inv
            }_{\frob}.
        \end{align}
        For the first term, we note that
        \begin{equation}
            \norm*{
                \vDelta (\vI + \alpha \vZ_t\adj \vZ_t)\inv
            }_{\frob}
            =
            \norm*{
                \left(
                    (\vI + \alpha \vZ_t\adj \vZ_t)\inv
                    \kron \vI
                \right) \vec(\vDelta)
            }_{\frob},
        \end{equation}
        and since $(\vI + \alpha \vZ_t\adj \vZ_t)\inv \preceq \vI$,
        we obtain from Cauchy-Schwarz\footnote{Recall that the eigenvalues of a Kronecker product
            of symmetric matrices are the tensor product of the eigenvalues
        (with multiplicity).}
        \begin{equation}
            \norm*{
                \vDelta (\vI + \alpha \vZ_t\adj \vZ_t)\inv
            }_{\frob}
            \leq \norm{\vDelta}_{\frob}.
        \end{equation}
        We can use a similar idea to control the second term. We have from the
        triangle inequality
        \begin{align}
            &\norm*{
                \vZ_t(\vI + \alpha \vZ_t\adj \vZ_t)\inv (\vZ_t\adj \vDelta +
                \vDelta\adj \vZ_t)
                (\vI + \alpha \vZ_t\adj \vZ_t)\inv
            }_{\frob} \\
            &\quad\leq
            \norm*{
                \vZ_t(\vI + \alpha \vZ_t\adj \vZ_t)\inv 
                \vZ_t\adj \vDelta
                (\vI + \alpha \vZ_t\adj \vZ_t)\inv
            }_{\frob} \\
            &\qquad+
            \norm*{
                (\vI + \alpha \vZ_t\adj \vZ_t)\inv \vZ_t\adj
                \vDelta 
                (\vI + \alpha \vZ_t\adj \vZ_t)\inv
                \vZ_t\adj
            }_{\frob}.
        \end{align}
        For the first term, we have
        \begin{align}
            &\norm*{
                \vZ_t(\vI + \alpha \vZ_t\adj \vZ_t)\inv 
                \vZ_t\adj \vDelta
                (\vI + \alpha \vZ_t\adj \vZ_t)\inv
            }_{\frob} \\
            &\quad=
            \norm*{
                \left(
                    (\vI + \alpha \vZ_t\adj \vZ_t)\inv
                    \kron
                    \vZ_t(\vI + \alpha \vZ_t\adj \vZ_t)\inv \vZ_t\adj
                \right)
                \vec(\vDelta)
            }_{\frob} \\
            &\quad\leq
            \sigma_{\max}\left(
                (\vI + \alpha \vZ_t\adj \vZ_t)\inv
            \right)
            \sigma_{\max}\left(
                \vZ_t(\vI + \alpha \vZ_t\adj \vZ_t)\inv \vZ_t\adj
            \right)
            \norm{\vDelta}_{\frob} \\
            &\quad\leq \frac{1}{\alpha} \norm{\vDelta}_{\frob}.
        \end{align}
        The last estimate follows from a computation using the SVD of $\vZ_t$.
        Meanwhile, we have for the second term by a similar argument (using the
        fact that the singular values of $\vA$ and $\vA\adj$ are identical for
        any matrix $\vA$)
        \begin{align}
            \norm*{
                (\vI + \alpha \vZ_t\adj \vZ_t)\inv \vZ_t\adj
                \vDelta 
                (\vI + \alpha \vZ_t\adj \vZ_t)\inv
                \vZ_t\adj
            }_{\frob}
            &\leq
            \sigma_{\max}\left( (\vI + \alpha \vZ_t\adj \vZ_t)\inv \vZ_t\adj
            \right)^2
            \norm{\vDelta}_{\frob} \\
            &\leq
            \frac{1}{4\alpha} \norm{\vDelta}_{\frob},
        \end{align}
        where once again the estimate follows from a computation involving the
        SVD of $\vZ_t$ (together with the fact that the function $\sigma
        \mapsto \sigma / (1 + \alpha \sigma^2)$ is bounded on $\sigma \geq 0$
        by $1/(2 \sqrt{\alpha})$). Putting it together, we have obtained
        \begin{equation}
            \norm*{
                \left(
                    \vDelta
                    - \alpha \vZ_t(\vI + \alpha \vZ_t\adj \vZ_t)\inv (\vZ_t\adj \vDelta +
                    \vDelta\adj \vZ_t)
                \right)(\vI + \alpha \vZ_t\adj \vZ_t)\inv
            }_{\frob}
            \leq \frac{9}{4} \norm{\vDelta}_{\frob},
        \end{equation}
        which gives the claim after taking suprema.

    \end{proof}
\end{lemma}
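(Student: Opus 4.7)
The plan is to invoke the triangle inequality to decompose the target expression into two pieces: a linear-in-$\vDelta$ piece $\vDelta(\vI + \alpha \vZ_t\adj \vZ_t)\inv$ and a quadratic-in-$\vZ_t$ piece $\alpha\vZ_t(\vI + \alpha \vZ_t\adj \vZ_t)\inv(\vZ_t\adj \vDelta + \vDelta\adj \vZ_t)(\vI + \alpha \vZ_t\adj \vZ_t)\inv$. For the linear piece, I would vectorize via $\vect(\vA \vX \vB) = (\vB\tp \kron \vA)\vect(\vX)$, observe that $(\vI + \alpha \vZ_t\adj \vZ_t)\inv \preceq \vI$, and conclude that this term is bounded by $\norm{\vDelta}_{\frob}$.

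Next, I would split the quadratic piece by the triangle inequality into the two summands corresponding to $\vZ_t\adj \vDelta$ and $\vDelta\adj \vZ_t$. Each summand I would put into the same vectorized form $(\vB\tp \kron \vA)\vect(\vDelta)$ (with appropriate $\vA$, $\vB$), so that its Frobenius norm is at most $\sigma_{\max}(\vA)\sigma_{\max}(\vB) \norm{\vDelta}_{\frob}$. The first summand yields the factors $\vZ_t(\vI + \alpha \vZ_t\adj \vZ_t)\inv \vZ_t\adj$ and $(\vI + \alpha \vZ_t\adj \vZ_t)\inv$; the second yields $(\vI + \alpha \vZ_t\adj \vZ_t)\inv \vZ_t\adj$ and its transpose-like counterpart, whose singular values agree.

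The required operator-norm bounds reduce to two scalar optimizations over the singular values $\sigma \geq 0$ of $\vZ_t$: namely $\sigma^2/(1 + \alpha \sigma^2) \leq 1/\alpha$ (monotone increasing, supremum as $\sigma \to \infty$) and $\sigma/(1+\alpha \sigma^2) \leq 1/(2\sqrt{\alpha})$ (maximum at $\sigma = 1/\sqrt{\alpha}$). These give bounds of $1/\alpha$ for the first summand and $1/(4\alpha)$ for the second summand. Re-introducing the prefactor $\alpha$ from the original expression, the quadratic piece is bounded by $\alpha(1/\alpha + 1/(4\alpha)) \norm{\vDelta}_{\frob} = (5/4)\norm{\vDelta}_{\frob}$, and adding the linear-piece contribution of $\norm{\vDelta}_{\frob}$ yields the claimed constant $9/4$ after taking the supremum over $\norm{\vDelta}_{\frob} \leq 1$.

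The main obstacle is not conceptual but bookkeeping: correctly identifying the right matrices to vectorize (so the Kronecker-product operator-norm bound is applicable), and carefully verifying the two scalar optimizations over singular values so as not to introduce a suboptimal constant. Everything else reduces to elementary inequalities for positive semidefinite matrices.
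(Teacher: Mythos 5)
Your proposal matches the paper's proof essentially step for step: triangle inequality to separate the linear and quadratic pieces, vectorization via Kronecker products to reduce each piece to an operator-norm bound, a further triangle-inequality split of the quadratic piece into the $\vZ_t\adj\vDelta$ and $\vDelta\adj\vZ_t$ summands, and the two scalar optimizations over singular values giving $1/\alpha$ and $1/(4\alpha)$ respectively. The only difference is cosmetic — you state the scalar optimizations more explicitly than the paper does, which relegates them to asides — so the approaches are the same.
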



\newpage
\section{Additional Experiments and Details}\label{sec:appendix-exp}

In this section, we provide details about our experiments, and report the results of additional experiments that were not covered in the main text. \ours{} takes arguably the most basic design choices possible, and so we do \textit{not} attempt to directly compete with state-of-the-art performance from heavily engineered and empirically designed transformers. 
The results of our experiments are meant to convey a few core messages:
\begin{itemize}
    \item \textit{Despite not being engineered to compete with the state-of-the-art, \ours{} performs strongly on large-scale real-world datasets}, including classification on ImageNet-1K. \ours{} also achieves strong transfer learning performance.
    \item \textit{Because our model is designed through unrolled optimization of a well-understood objective, each layer is interpretable}. In particular, we can analyze the performance of \ours{}, as well as design network modifications, on a \textit{layer-wise basis}. This is powered by an arguably unparalleled level of insight into the role of each operator in our network.
    \item \textit{We make the simplest possible choices during the design of \ours{}, but these can be changed easily while keeping the same framework}. We study a few modifications later in this section (\Cref{subsec:appendix-arch-variants}) and show that they do not significantly hurt empirical performance, but emphasize here that there is significant potential for improvement with different architecture choices (and in particular a different theoretical analysis).
\end{itemize}

\subsection{Implementation details}\label{sub:appendix-exp-details}
In this subsection, we provide more details for implementing \ours{} on vision tasks.

\subsubsection{Architecture of \ours{}}
\paragraph{Architectural modifications.} Compared to the conceptual architecture proposed in \Cref{sub:architecture,sec:exp}, we make the following change for the sake of implementation simplicity:
\begin{itemize}
    \item In the compression step, replace the term \(\frac{p}{N\eps^{2}}\mat{\vU_{1},\dots,\vU_{K}}\) in the \(\texttt{MSSA}\) operator with another trainable parameter \(\vW \in \bR^{d \times pK}\). Thus the \(\texttt{MSSA}\) block becomes
    \begin{equation}\label{eq:mssa_trainable_w}
        \MSSA{\Z \mid \vU_{[K]}, \W} \doteq \vW\mat{\SSA{\Z \mid \vU_{1}} \\ \vdots \\ \SSA{\Z \mid \vU_{K}}}.
    \end{equation}
\end{itemize}

\textbf{\texttt{PyTorch} code for \ours{}.} 
We provide \texttt{PyTorch}-style code for implementing our proposed network architecture. 
\Cref{algo:pseudocode_crate} defines the overall architecture, \Cref{algo:pseudo code} and \Cref{algo:pseudocode_others} contain details for the transformer block, self-attention block (\texttt{MSSA}-block), and MLP block (\texttt{ISTA}-block).

\subsubsection{Training Setup}

\paragraph{Pre-training on ImageNet-1K.} 
We apply the Lion optimizer~\cite{chen2023symbolic} for pre-training both \ours{} and ViT models. 
We configure the learning rate as $2.4 \times 10^{-4}$, weight decay as 0.5, and batch size as 2,048. 
We incorporate a warm-up strategy with a linear increase over 5 epochs, followed by training the models for a total of 150 epochs with cosine decay. 
For data augmentation, we only apply the standard techniques, random cropping and random horizontal flipping, on the ImageNet-1K dataset. 
We apply label smoothing with smoothing parameter $0.1$. 
One training epoch of $\ours{-Base}$ takes around 240 seconds using 16 A100 40GB GPUs.

\paragraph{Fine-tuning.} 
We fine-tune our pre-trained \ours{} and ViT models on the following target datasets: CIFAR10/CIFAR100~\cite{krizhevsky2009learning}, Oxford Flowers-102~\cite{nilsback2008automated}, Oxford-IIIT-Pets~\cite{parkhi2012cats}. 
We also evaluate our pre-trained models on the commonly used ImageNet Real~\cite{beyer2020we} benchmark. For each fine-tuning task, we use the AdamW optimizer~\cite{loshchilov2017decoupled}. 
We configure the learning rate as $5 \times 10^{-5}$, weight decay as 0.01, and batch size to be 512. 
To allow transfer learning, we first resize our input data to 224. For data augmentations, we also adopt several standard techniques: random cropping, random horizontal flipping, and random augmentation (with number of transformations $n=2$ and magnitude of transformations $m=14$).\footnote{\url{https://github.com/huggingface/pytorch-image-models/blob/main/timm/data/auto_augment.py}}

\begin{algorithm}[ht]
\SetAlgoLined
  \PyComment{Class ViT\_dictionary definition} \\
\PyCode{CRATE:} \\
\Indp
    \PyComment{initialization} \\
    \PyCode{def init(self, image\_size, patch\_size, num\_classes, dim, depth, heads, mlp\_dim, pool = 'cls', channels = 3, dim\_head = 64, dropout = 0., emb\_dropout = 0.):} \\
    \Indp
        \PyComment{define patch, image dimensions and number of patches}\\
        \PyCode{image\_height, image\_width = pair(image\_size)}\\
        \PyCode{patch\_height, patch\_width = pair(patch\_size)}\\
       \PyCode{num\_patches = (image\_height // patch\_height) * (image\_width // patch\_width)}\\
        \PyCode{patch\_dim = channels * patch\_height * patch\_width}\\
        \PyCode{}\\
        \PyComment{define patch embedding, positional embedding, dropout, and transformer}\\
        \PyCode{self.to\_patch\_embedding = Sequential(Rearrange, LayerNorm(patch\_dim), Linear(patch\_dim, dim), LayerNorm(dim))}\\
        \PyCode{self.pos\_embedding = Parameter(random(1, num\_patches + 1, dim))}\\
        \PyCode{self.cls\_token = Parameter(random(1, 1, dim))}\\
        \PyCode{self.dropout = Dropout(emb\_dropout)}\\
        \PyCode{self.transformer = Transformer(dim, depth, heads, dim\_head, mlp\_dim, dropout)}\\
        \PyCode{}\\
        \PyComment{define pooling, latent layer, and MLP head}\\
        \PyCode{self.pool = pool}\\
        \PyCode{self.to\_latent = Identity()}\\
        \PyCode{self.mlp\_head = Sequential(LayerNorm(dim), Linear(dim, num\_classes))}\\
    \Indm
    \PyCode{}\\
    \PyComment{forward pass}\\
    \PyCode{def forward(self, img):}\\
    \Indp
        \PyCode{x = self.to\_patch\_embedding(img)}\\
        \PyCode{b, n, \_ = shape(x)}\\
        \PyCode{cls\_tokens = repeat(self.cls\_token, '1 1 d -> b 1 d', b = b)}\\
        \PyCode{x = concatenate((cls\_tokens, x), dim=1)}\\
        \PyCode{x += self.pos\_embedding[:, :(n + 1)]}\\
        \PyCode{x = self.dropout(x)}\\
        \PyCode{x = self.transformer(x)}\\
        \PyCode{x = mean(x, dim = 1) if self.pool == 'mean' else x[:, 0]}\\
        \PyCode{x = self.to\_latent(x)}\\
        \PyCode{return self.mlp\_head(x)}\\
    \Indm
\Indm

\caption{PyTorch-style pseudocode for \ours Network}
\label{algo:pseudocode_crate}
\end{algorithm}

\begin{algorithm}[ht]
\SetAlgoLined

    \PyComment{Class Transformer definition} \\
    \PyCode{class Transformer:} \\
    \Indp
        \PyComment{initialization} \\
        \PyCode{def init(self, dim, depth, heads, dim\_head, mlp\_dim, dropout = 0.):} \\
        \Indp
            \PyComment{define layers}\\
            \PyCode{self.layers = []}\\
            \PyCode{self.depth = depth}\\
            \PyCode{for \_ in range(depth):} \\
            \Indp
                \PyCode{self.layers.append([LayerNorm(dim, Attention(dim, heads, dim\_head, dropout))])}\\
                \PyCode{self.layers.append([LayerNorm(dim, FeedForward(dim, mlp\_dim, dropout))])}\\
            \Indm
            
        \PyCode{}\\
        \Indm
        \PyComment{forward pass}\\
        \PyCode{def forward(self, x):}\\
        \Indp
        \PyCode{for attn, ff in self.layers:} \\
            \Indp
                \PyCode{x\_ = attn(x) + x}\\
                \PyCode{x = ff(x\_)}\\
            \Indm
            \PyCode{return x}\\
        \Indm
        \PyCode{}\\
        \Indm
    \Indm
\caption{Pytorch Style Pseudocode for Transformer Block in \ours{}}
\label{algo:pseudo code}
\end{algorithm}

\begin{algorithm}[h]
\SetAlgoLined

    \PyComment{Class FeedForward definition} \\
    \PyCode{class FeedForward:} \\
    \Indp
        \PyComment{initialization} \\
        \PyCode{def init(self, dim, hidden\_dim, dropout = 0., step\_size=0.1, lambd=0.1):} \\
        \Indp
            \PyCode{self.weight = Parameter(Tensor(dim, dim))}\\
            \PyCode{init.kaiming\_uniform\_(self.weight)}\\
            \PyCode{self.step\_size = step\_size}\\
            \PyCode{self.lambd = lambd}\\
        \Indm
        \PyComment{forward pass}\\
        \PyCode{def forward(self, x):}\\
        \Indp
            \PyCode{x1 = linear(x, self.weight, bias=None)}\\
            \PyCode{grad\_1 = linear(x1, self.weight.t(), bias=None)}\\
            \PyCode{grad\_2 = linear(x, self.weight.t(), bias=None)}\\
            \PyCode{grad\_update = self.step\_size * (grad\_2 - grad\_1) - self.step\_size * self.lambd}\\
            \PyCode{output = relu(x + grad\_update)}\\
            \PyCode{return output}\\
        \Indm
    \Indm

    \PyComment{Class Attention definition} \\
    \PyCode{class Attention:} \\
    \Indp
        \PyComment{initialization} \\
        \PyCode{def init(self, dim, heads = 8, dim\_head = 64, dropout = 0.):} \\
        \Indp
            \PyCode{inner\_dim = dim\_head *  heads}\\
            \PyCode{project\_out = not (heads == 1 and dim\_head == dim)}\\
            \PyCode{self.heads = heads}\\
            \PyCode{self.scale = dim\_head ** -0.5}\\
            \PyCode{self.attend = Softmax(dim = -1)}\\
            \PyCode{self.dropout = Dropout(dropout)}\\
            \PyCode{self.qkv = Linear(dim, inner\_dim, bias=False)}\\
            \PyCode{self.to\_out = Sequential(Linear(inner\_dim, dim), Dropout(dropout)) if project\_out else nn.Identity()}
        \Indm
        \PyCode{}\\
        \PyComment{forward pass}\\
        \PyCode{def forward(self, x):}\\
        \Indp
            \PyCode{w = rearrange(self.qkv(x), 'b n (h d) -> b h n d', h = self.heads)}\\
            \PyCode{dots = matmul(w, w.transpose(-1, -2)) * self.scale}\\
            \PyCode{attn = self.attend(dots)}\\
            \PyCode{attn = self.dropout(attn)}\\
            \PyCode{out = matmul(attn, w)}\\
            \PyCode{out = rearrange(out, 'b h n d -> b n (h d)')}\\
            \PyCode{return self.to\_out(out)}\\
        \Indm
        \Indm
    \Indm
\caption{Pseudocode for Attention and FeedForward}
\label{algo:pseudocode_others}
\end{algorithm}

\clearpage
\subsection{Experimental Results}\label{subsec:appendix-exp-results}
In this subsection, we provide additional experimental results on \ours{}, including layer-wise measurements, visualizations, as well as ablation studies.

\subsubsection{Layer-wise Evaluation and Visualization}

\paragraph{Layer-wise evaluation of compression and sparsity.} 
Similar to \Cref{fig:exp-rc-sparisty-small}, we conduct the layer-wise evaluation of compression term and sparsity for \ours{-Tiny}, \ours{-Base}, and \ours{-Large}. 
We observe similar behavior as mentioned in \Cref{subsec:exp-in-depth-analysis}: both the compression term and the sparsity term improves as the layer index increases.

\begin{figure}[ht]
     \centering
     \begin{subfigure}[b]{0.47\textwidth}
         \centering
    \includegraphics[width=\textwidth]{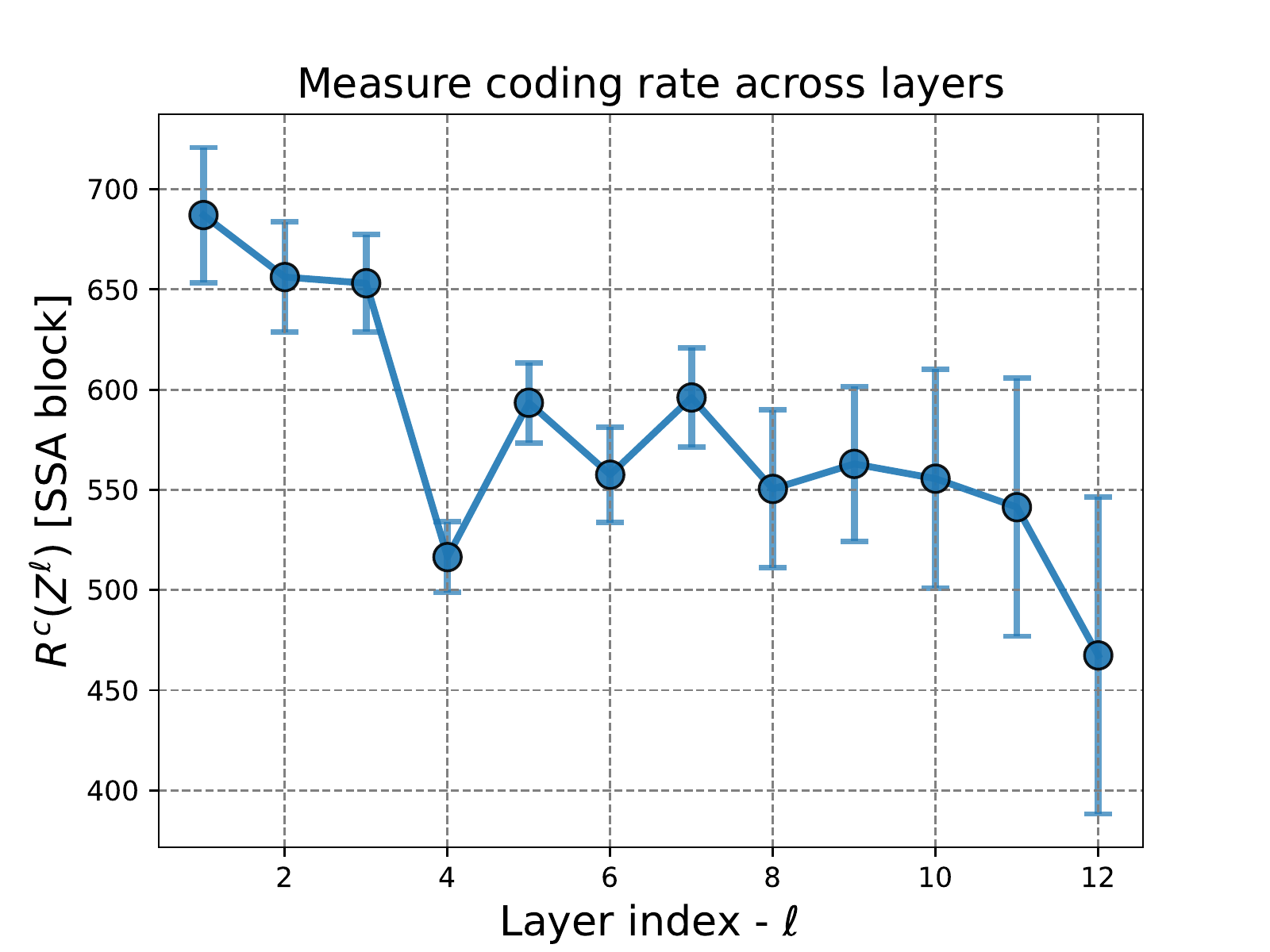}
         \caption{Compression (Model: \ours{-Tiny}).}
     \end{subfigure}
     \begin{subfigure}[b]{0.482\textwidth}
         \centering
    \includegraphics[width=\textwidth]{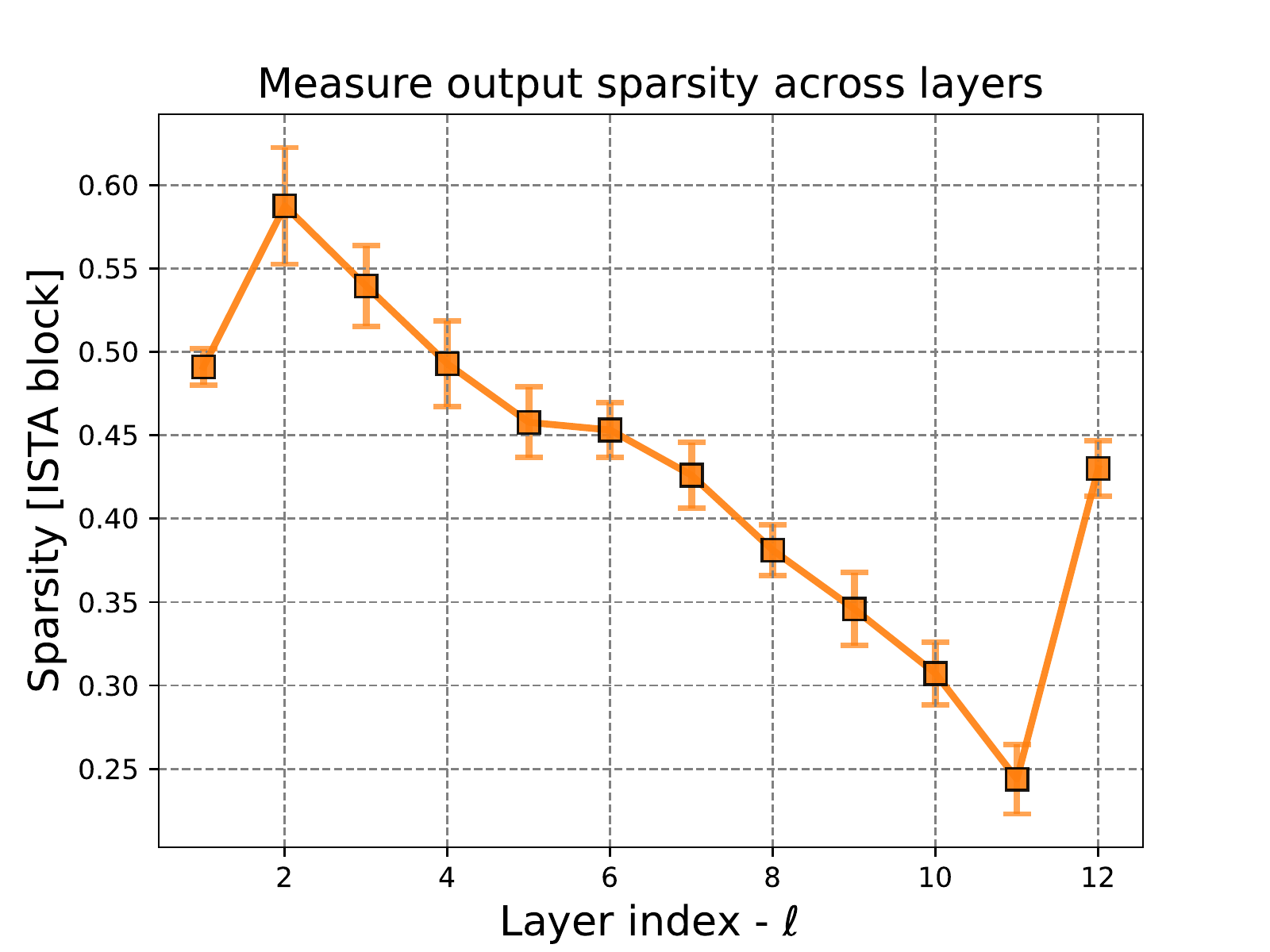}
         \caption{Sparsity (Model: \ours{-Tiny}).}
     \end{subfigure}
     \begin{subfigure}[b]{0.47\textwidth}
         \centering
    \includegraphics[width=\textwidth]{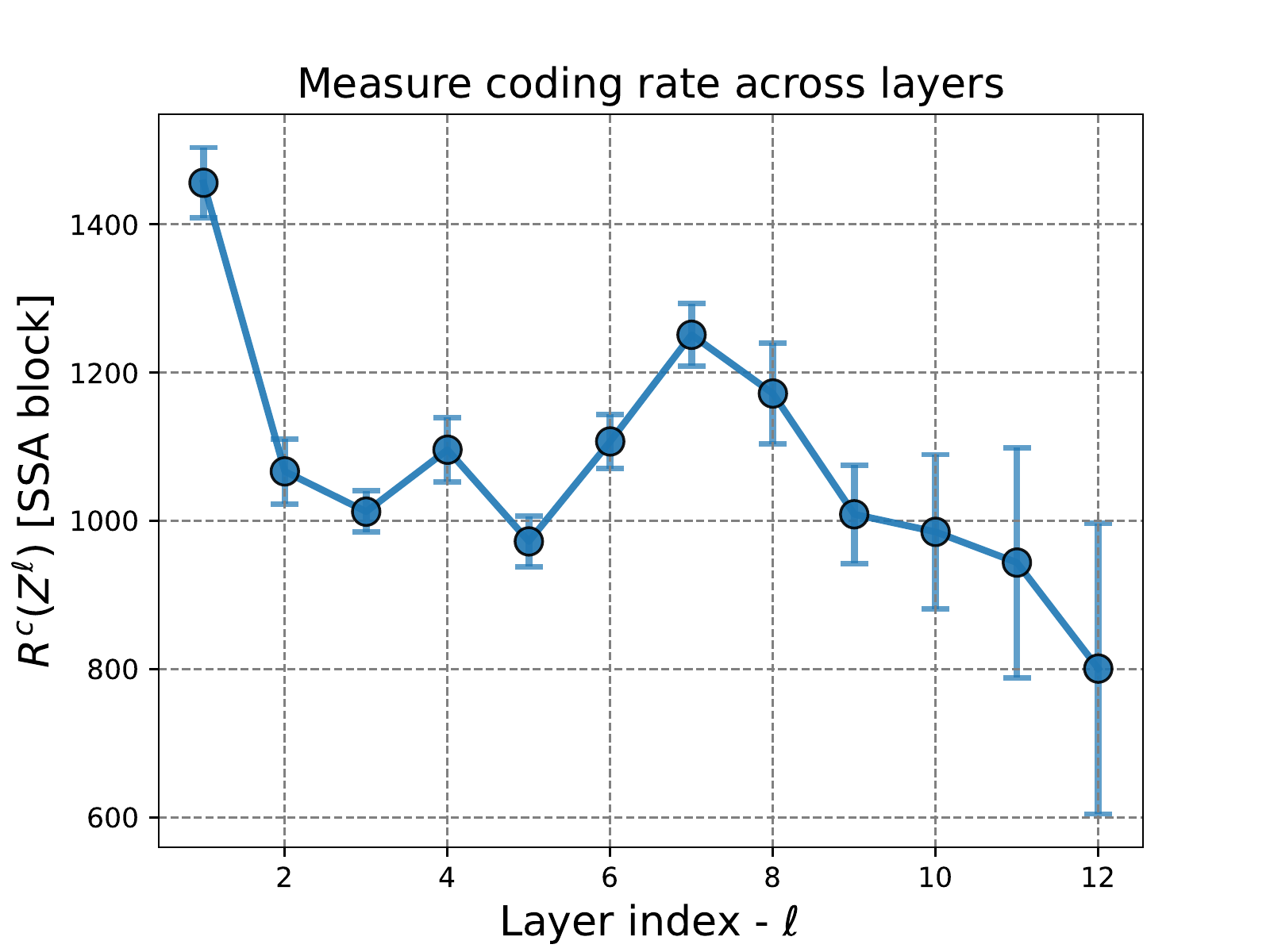}
         \caption{Compression (Model: \ours{-Base}).}
     \end{subfigure}
     \begin{subfigure}[b]{0.482\textwidth}
         \centering
    \includegraphics[width=\textwidth]{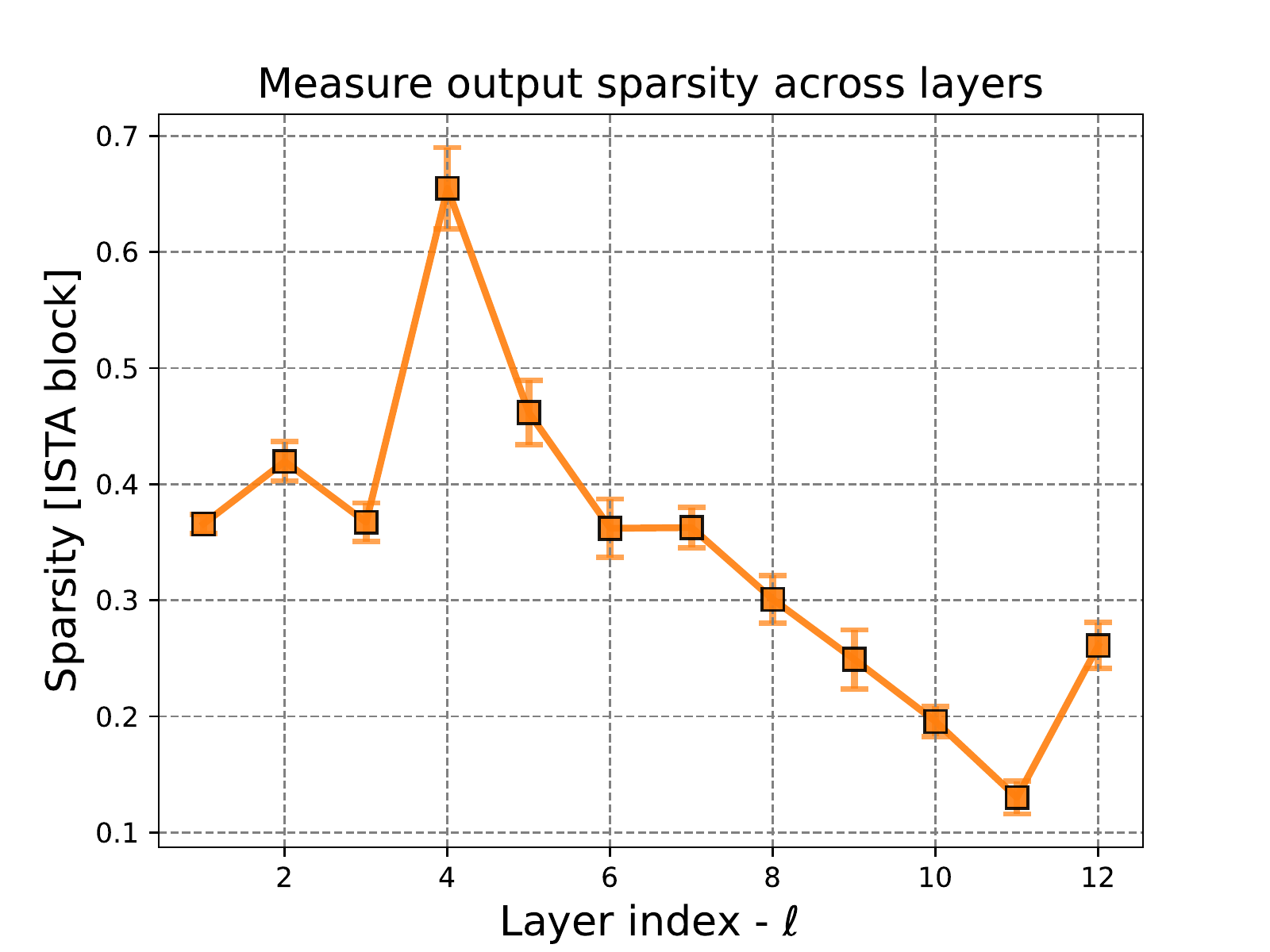}
         \caption{Sparsity (Model: \ours{-Base}).}
     \end{subfigure}
     \begin{subfigure}[b]{0.47\textwidth}
         \centering
    \includegraphics[width=\textwidth]{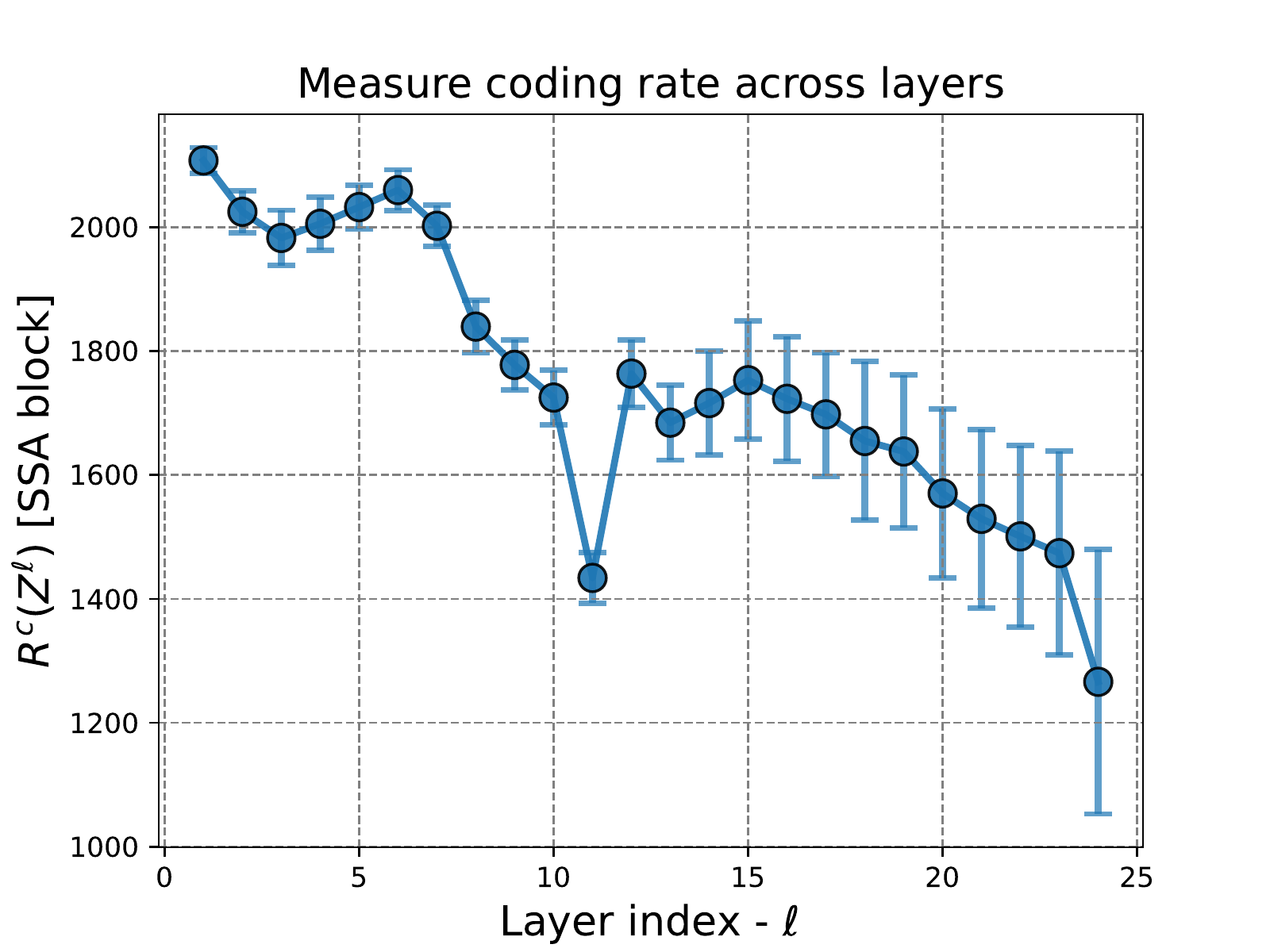}
         \caption{Compression (Model: \ours{-Large}).}
     \end{subfigure}
     \begin{subfigure}[b]{0.482\textwidth}
         \centering
    \includegraphics[width=\textwidth]{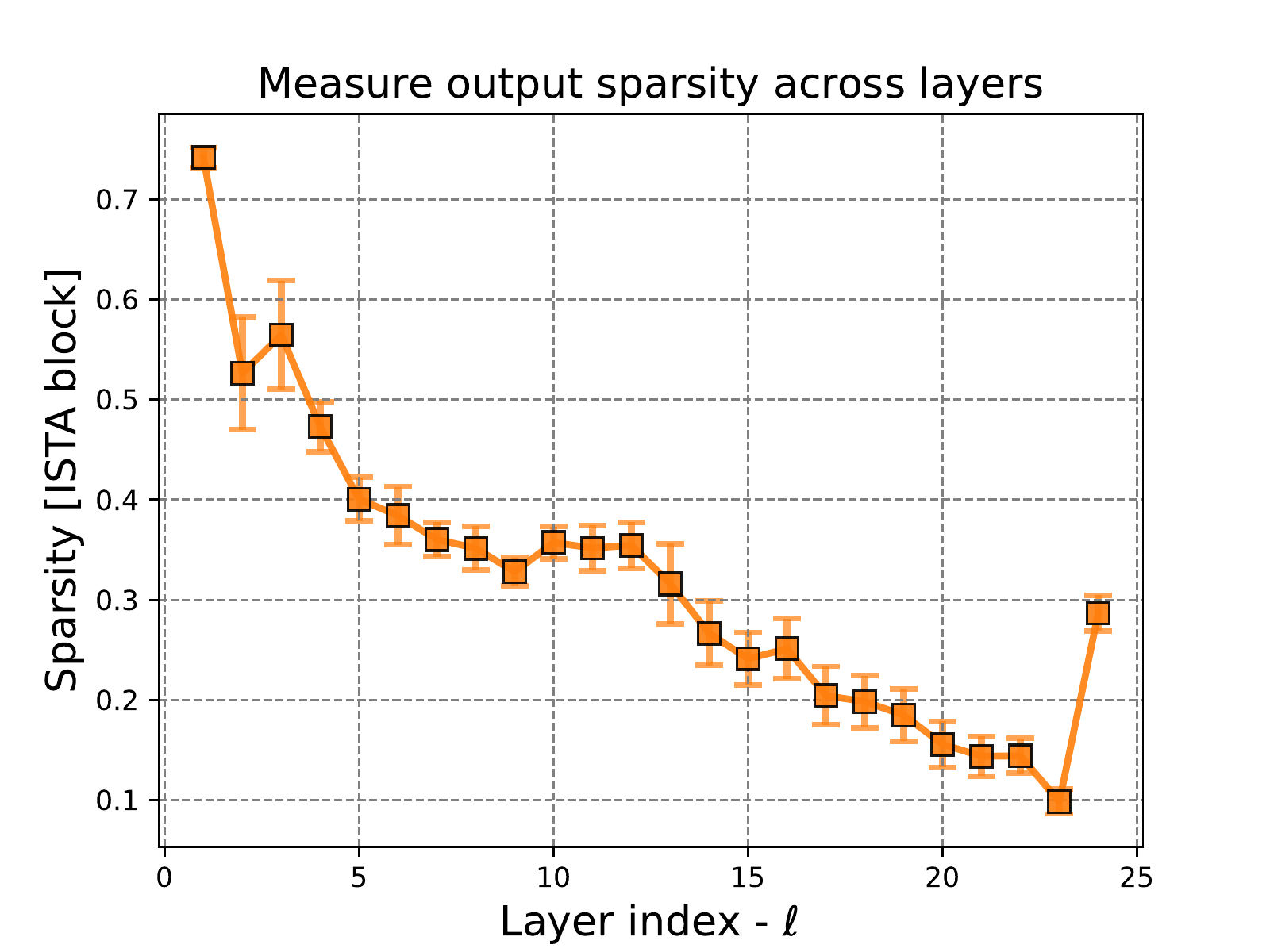}
         \caption{Sparsity (Model: \ours{-Large}).}
     \end{subfigure}
        \caption{\small \textit{Left}: The compression term $R^{c}(\Z^{\ell+1/2})$ of the \texttt{MSSA} outputs at different layers. \textit{Right}: the sparsity of the \texttt{ISTA} output block, $\|\Z^{\ell+1}\|_0 / (d\cdot N)$, at different layers.}
        \label{fig:appendix-exp-rc-sparisty-all-model-size}
        \vspace{-0.1in}
\end{figure}

\paragraph{Visualizing layer-wise token representations.}  
In \Cref{fig:appendix-exp-ista-sparsity-heatmap}, we visualize the token representations $\Z^{\ell}$ at different layers $\ell \in \{1, \dots, 12\}$. 
We provide more results evaluated on other samples in \Cref{subsec:appendix-token-representation-visualize}.

\paragraph{Visualizing layer-wise subspaces in multi-head self-attention.}
We provide the visualization of $\vU_{[K]}^{\ell}$ in \Cref{fig:appendix-exp-visualize-UiUj}.

\begin{figure}[ht]
     \centering
     \begin{subfigure}[b]{0.24\textwidth}
         \centering
    \includegraphics[width=\textwidth]{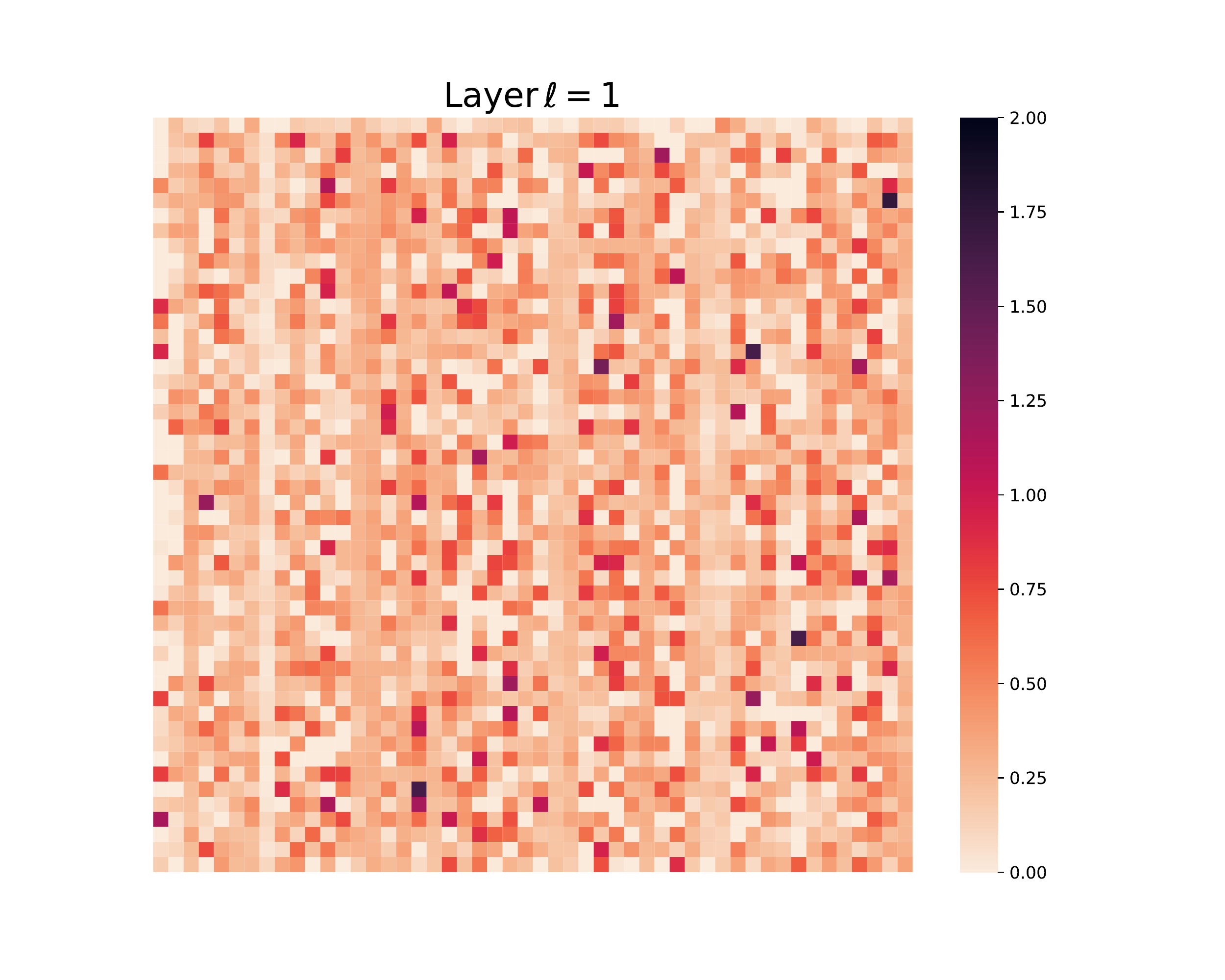}
         \caption{$\ell=1$.}
     \end{subfigure}
     \begin{subfigure}[b]{0.24\textwidth}
         \centering
    \includegraphics[width=\textwidth]{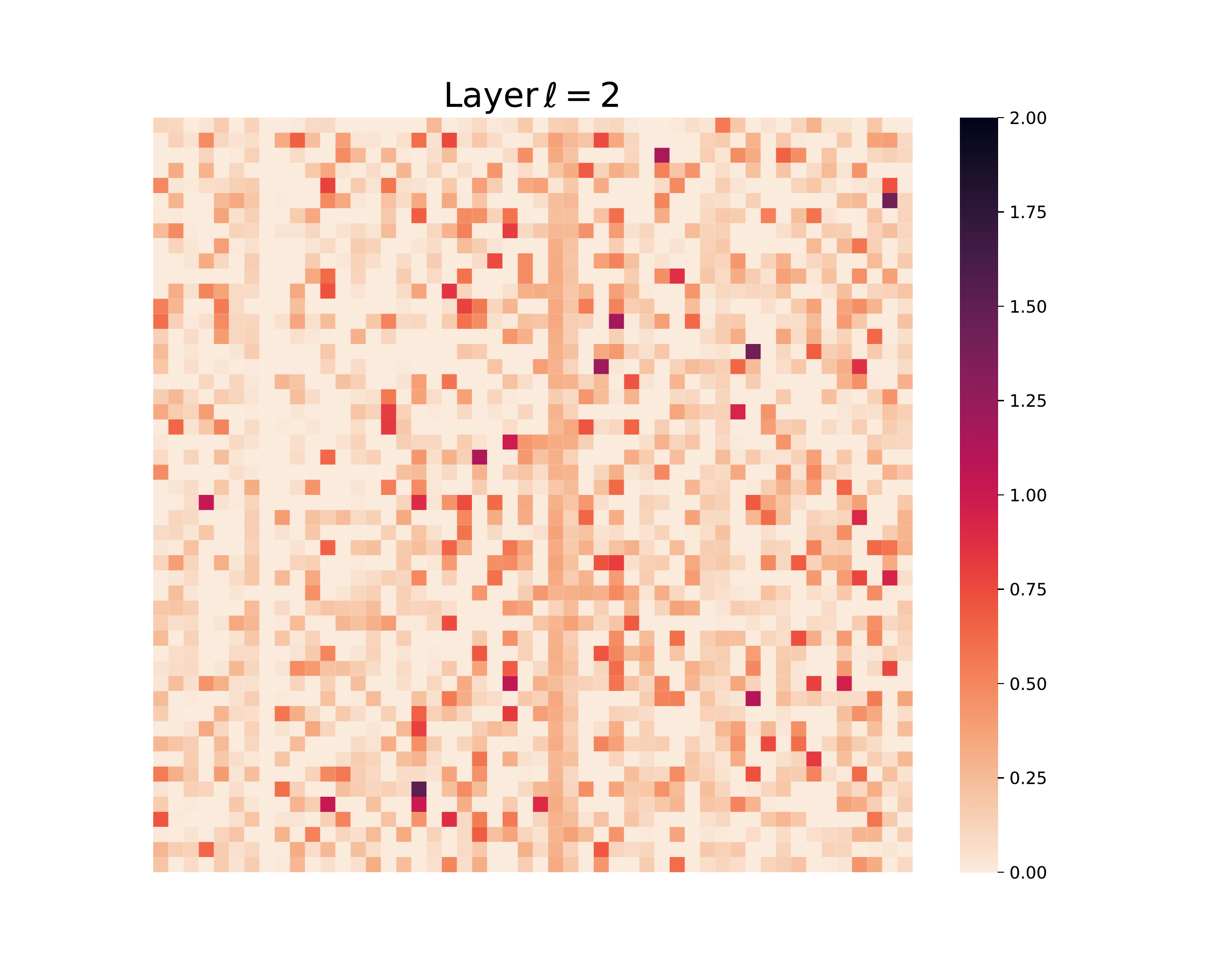}
         \caption{$\ell=2$.}
     \end{subfigure}
     \begin{subfigure}[b]{0.24\textwidth}
         \centering
    \includegraphics[width=\textwidth]{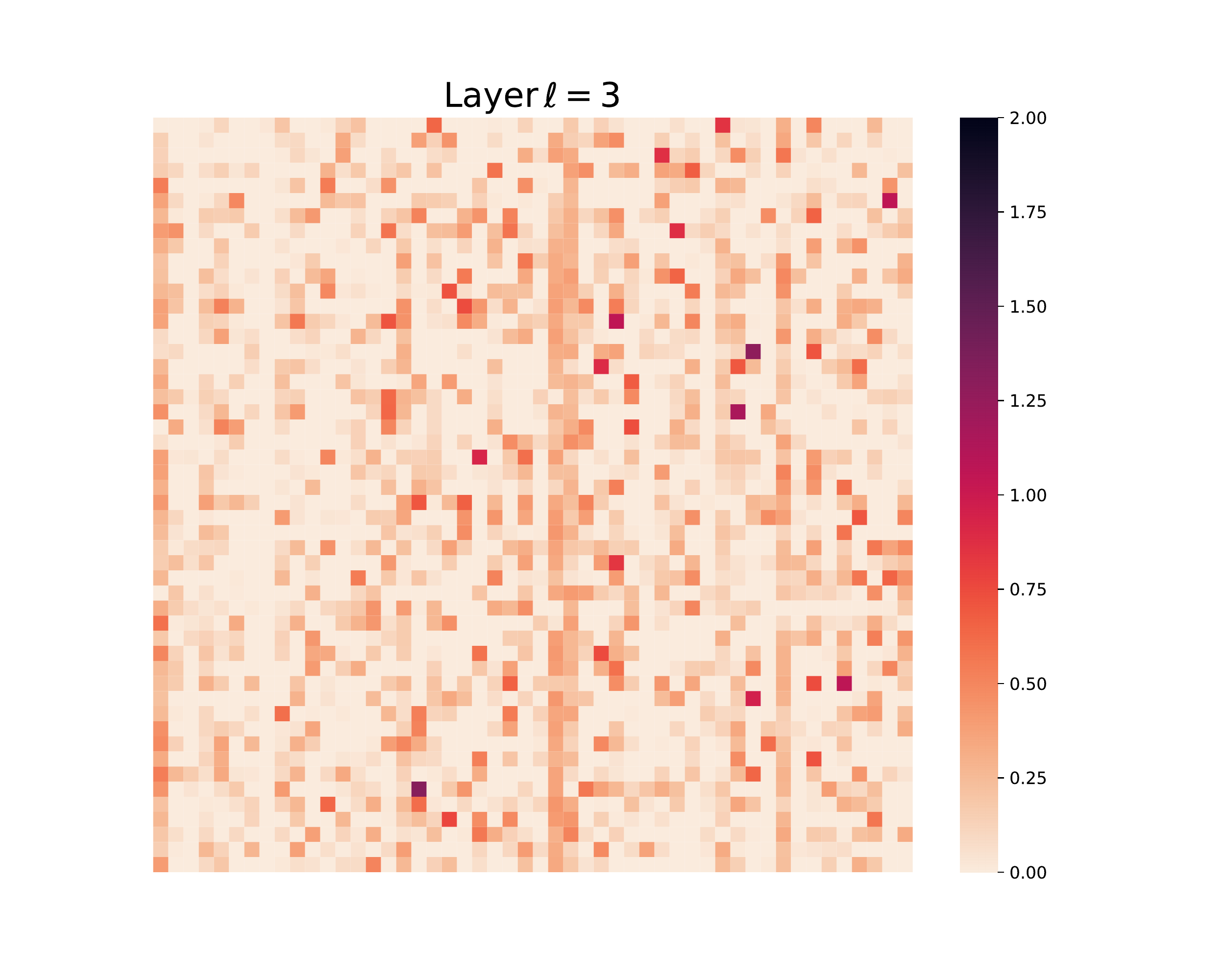}
         \caption{$\ell=3$.}
     \end{subfigure}
     \begin{subfigure}[b]{0.24\textwidth}
         \centering
    \includegraphics[width=\textwidth]{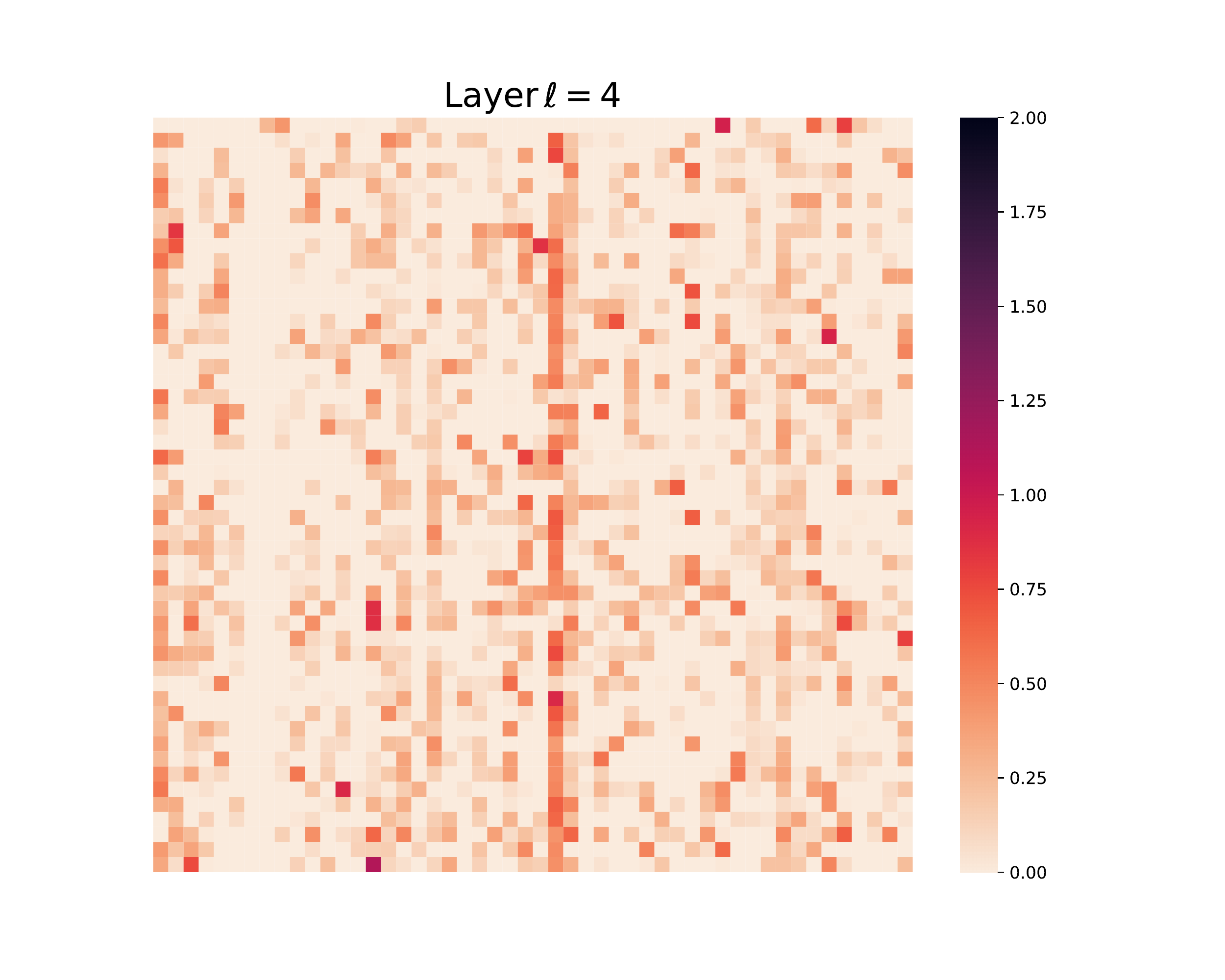}
         \caption{$\ell=4$.}
     \end{subfigure}
     \begin{subfigure}[b]{0.24\textwidth}
         \centering
    \includegraphics[width=\textwidth]{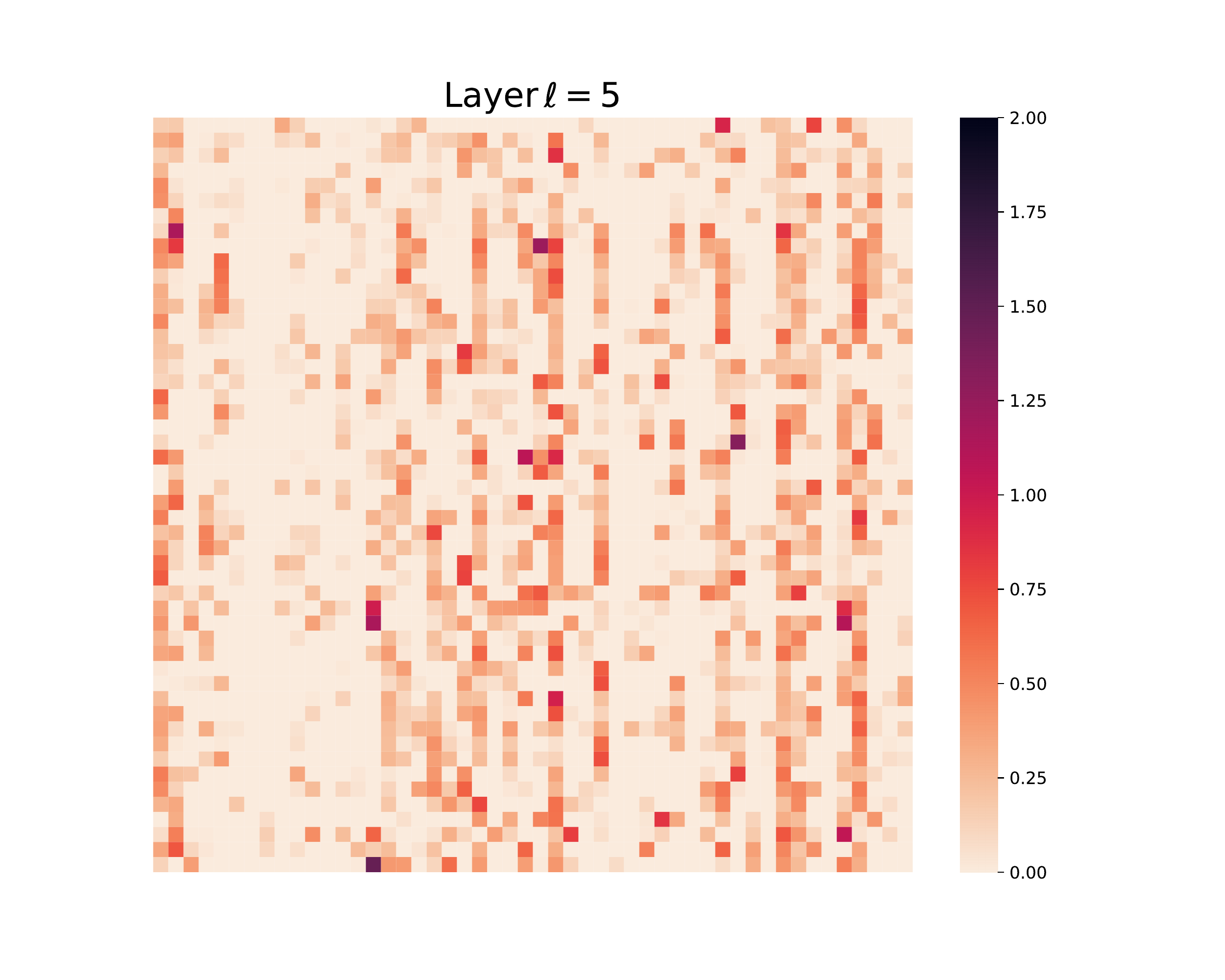}
         \caption{$\ell=5$.}
     \end{subfigure}
     \begin{subfigure}[b]{0.24\textwidth}
         \centering
    \includegraphics[width=\textwidth]{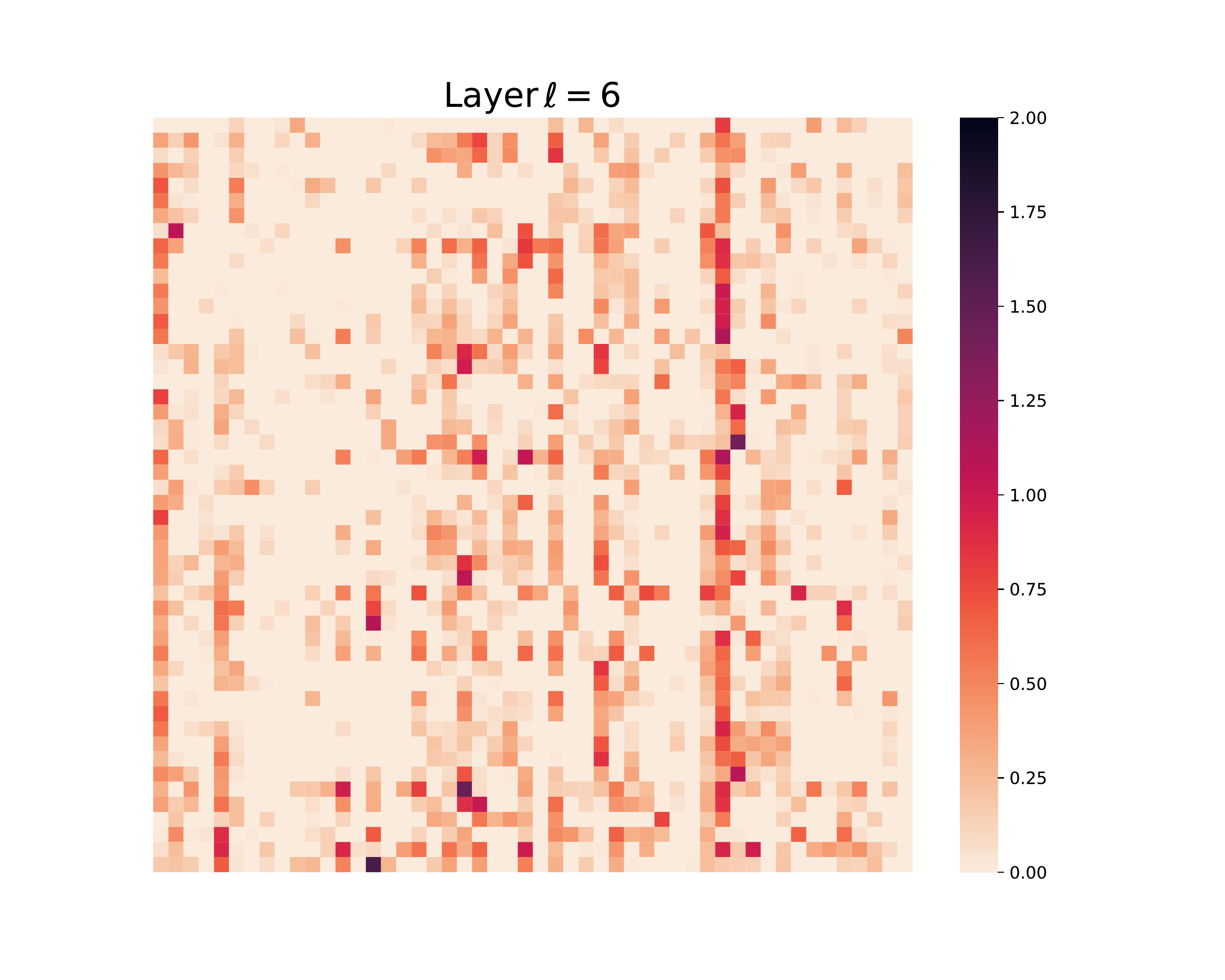}
         \caption{$\ell=6$.}
     \end{subfigure}
     \begin{subfigure}[b]{0.24\textwidth}
         \centering
    \includegraphics[width=\textwidth]{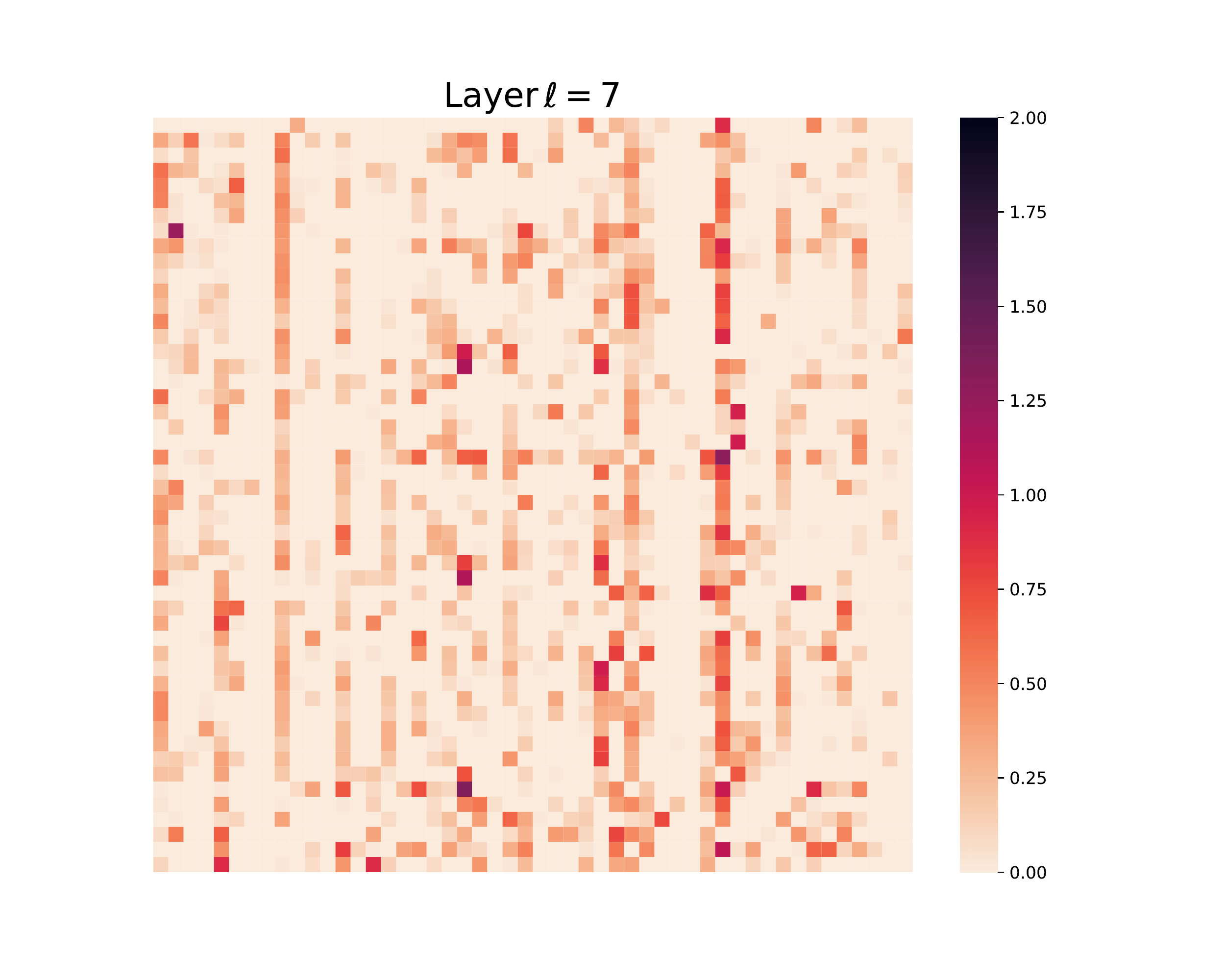}
         \caption{$\ell=7$.}
     \end{subfigure}
     \begin{subfigure}[b]{0.24\textwidth}
         \centering
    \includegraphics[width=\textwidth]{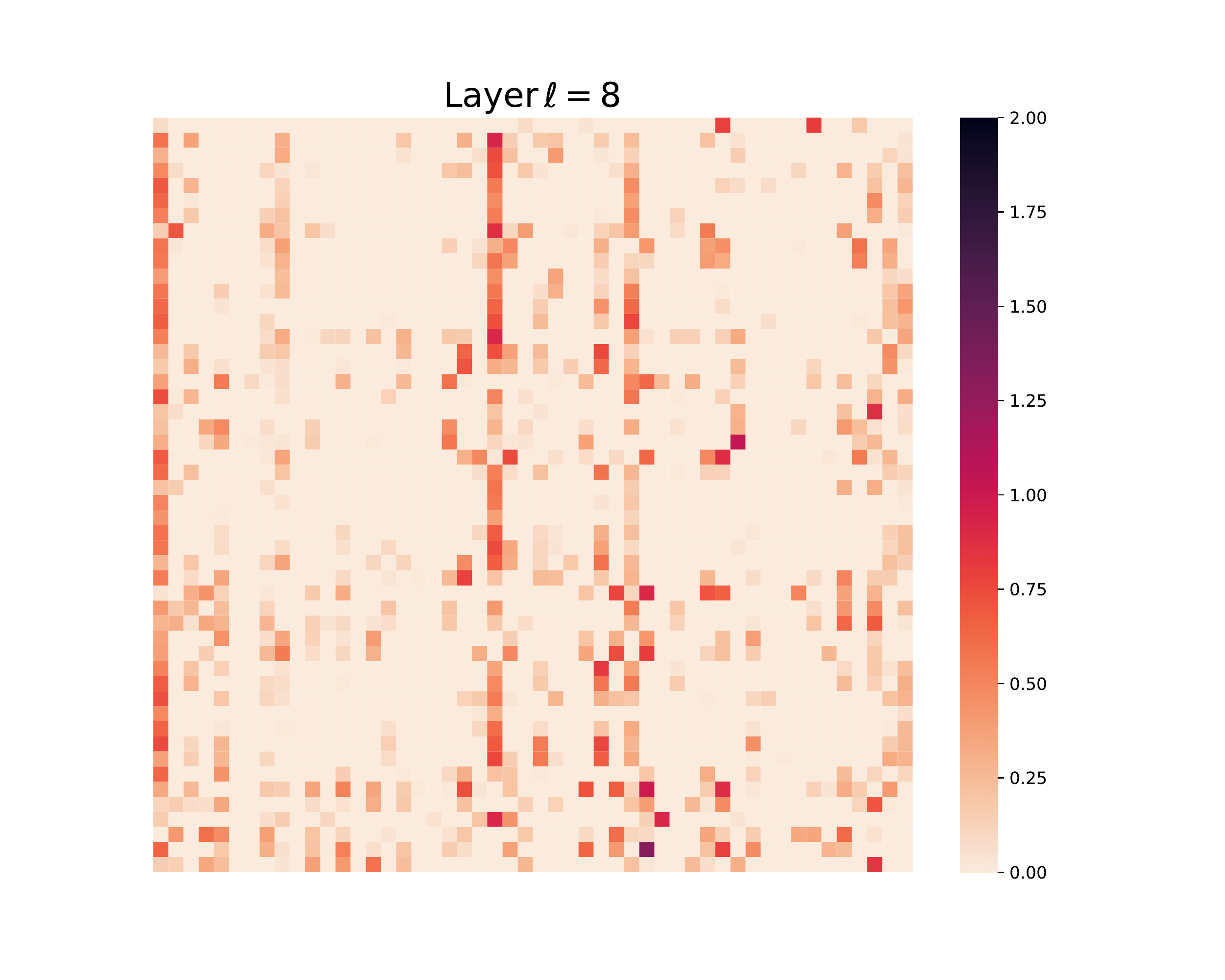}
         \caption{$\ell=8$.}
     \end{subfigure}
     \begin{subfigure}[b]{0.24\textwidth}
         \centering
    \includegraphics[width=\textwidth]{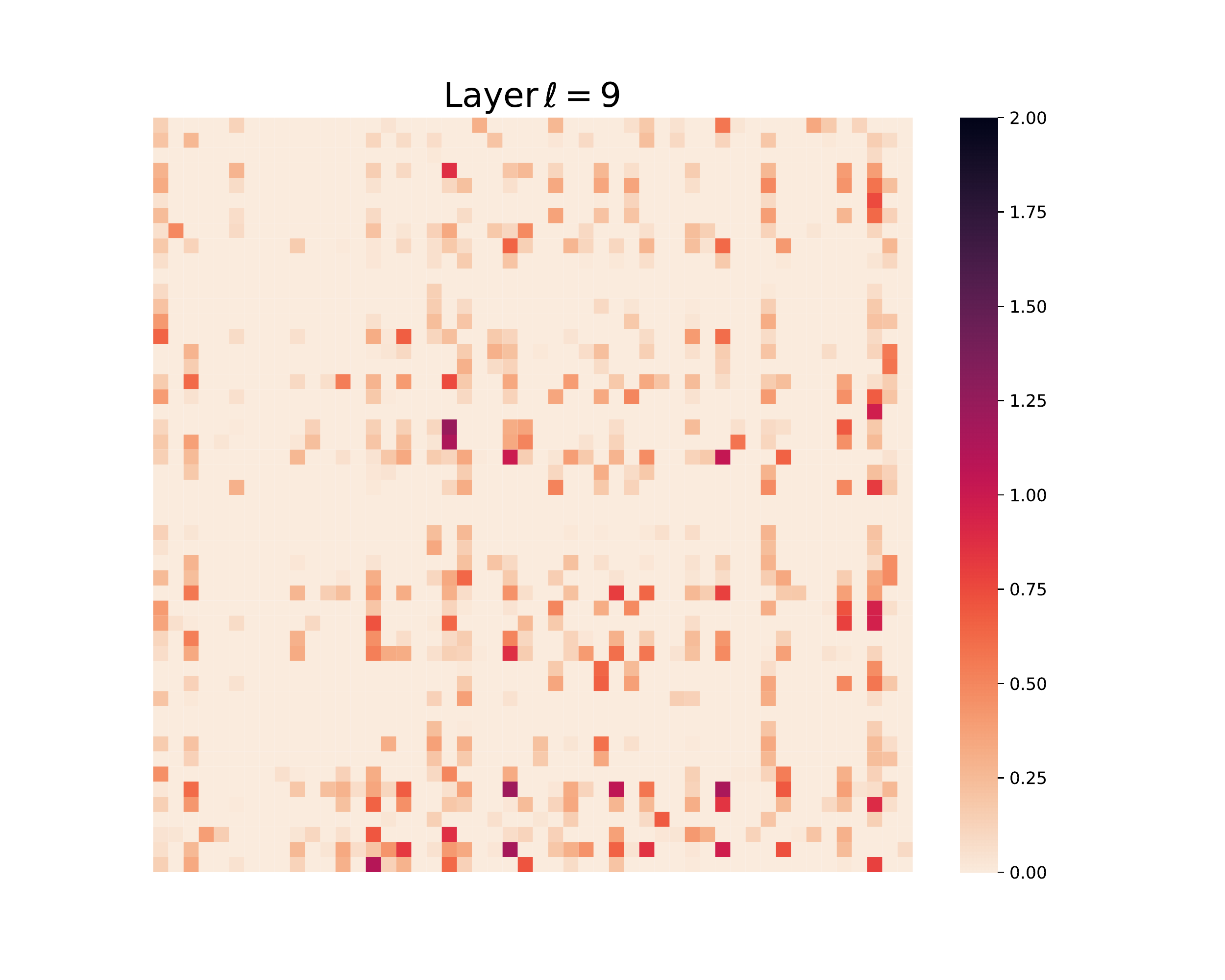}
         \caption{$\ell=9$.}
     \end{subfigure}
     \begin{subfigure}[b]{0.24\textwidth}
         \centering
    \includegraphics[width=\textwidth]{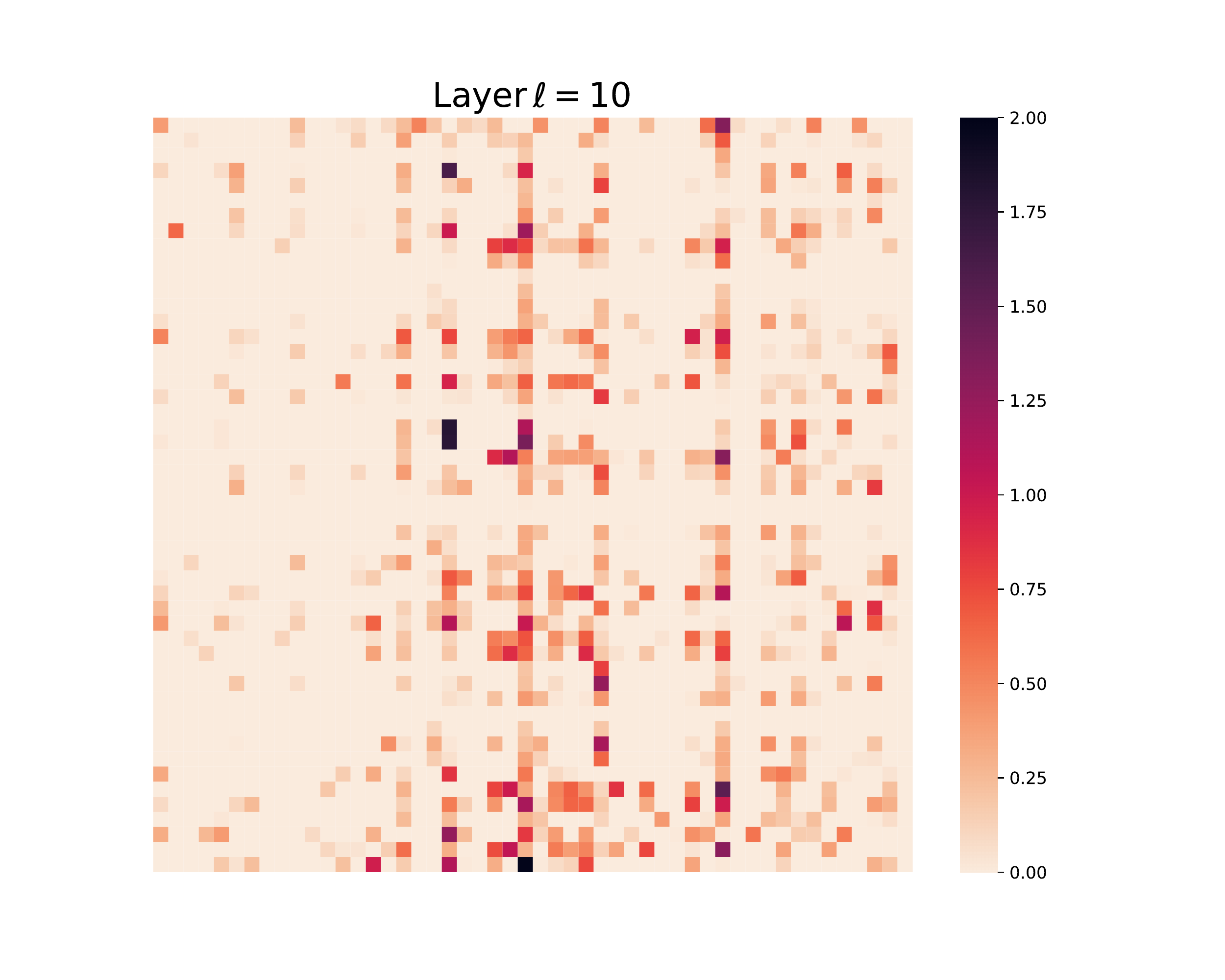}
         \caption{$\ell=10$.}
     \end{subfigure}
     \begin{subfigure}[b]{0.24\textwidth}
         \centering
    \includegraphics[width=\textwidth]{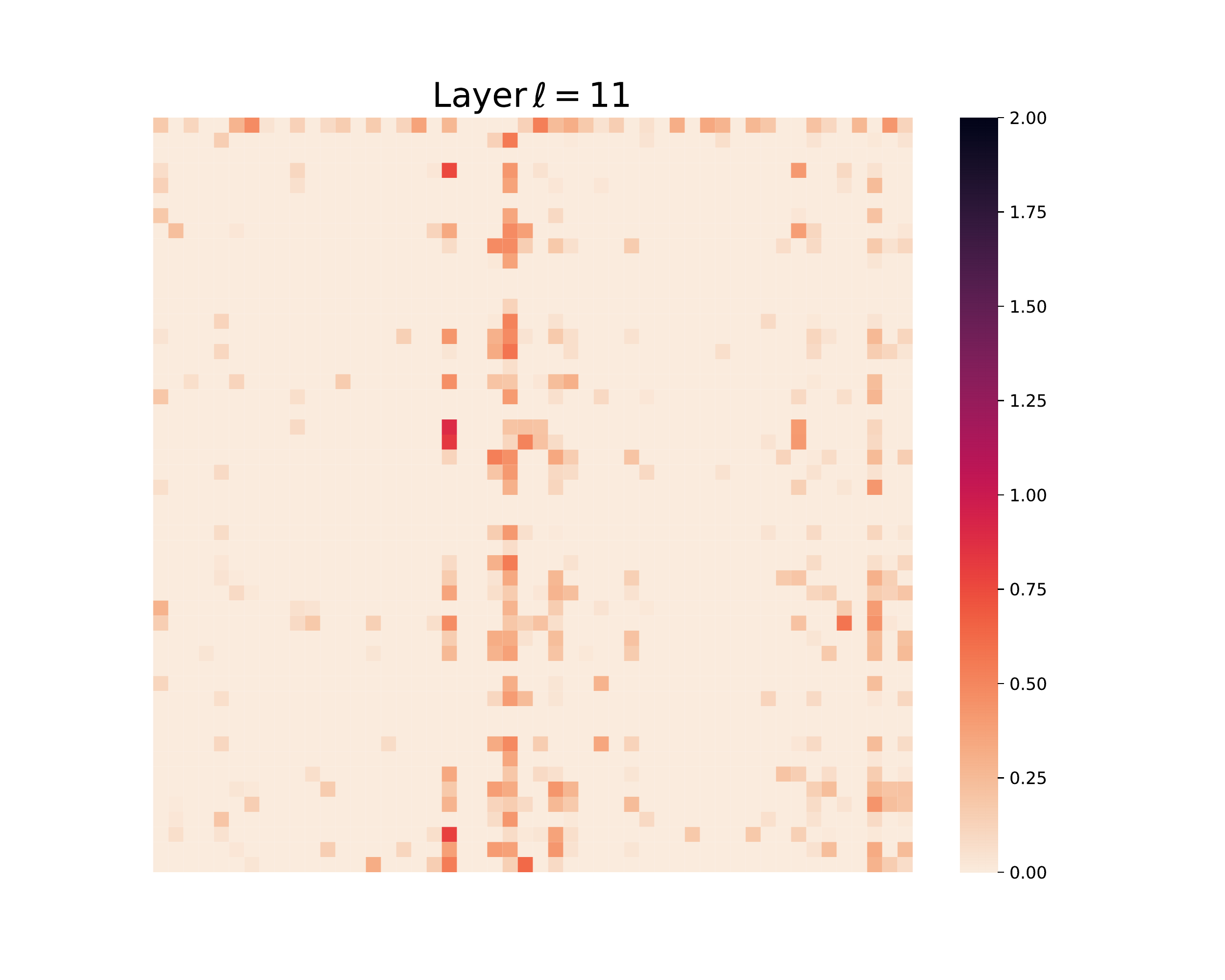}
         \caption{$\ell=11$.}
     \end{subfigure}
     \begin{subfigure}[b]{0.24\textwidth}
         \centering
    \includegraphics[width=\textwidth]{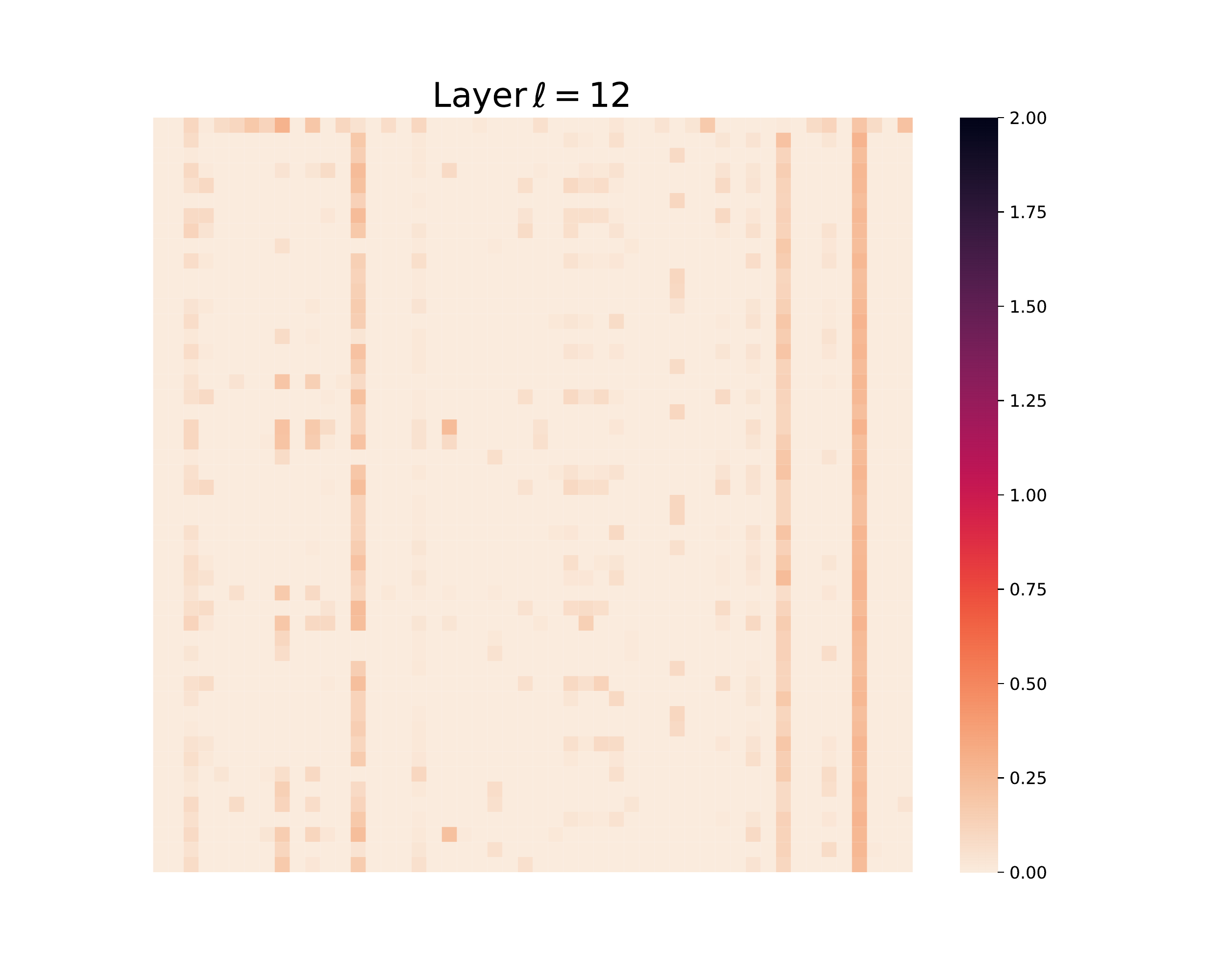}
         \caption{$\ell=12$.}
     \end{subfigure}
        \caption{Visualizing layer-wise token $\Z^{\ell}$ representations at each layer $\ell$. To enhance the visual clarity, we randomly extract a 50$\times$50 sub-matrix from $\Z^{\ell}$ for display purposes. (Model: \ours{-Tiny})}
        \label{fig:appendix-exp-ista-sparsity-heatmap}
        \vspace{-0.1in}
\end{figure}

\begin{figure}[t!]
     \centering
     \begin{subfigure}[b]{0.24\textwidth}
         \centering
    \includegraphics[width=\textwidth]{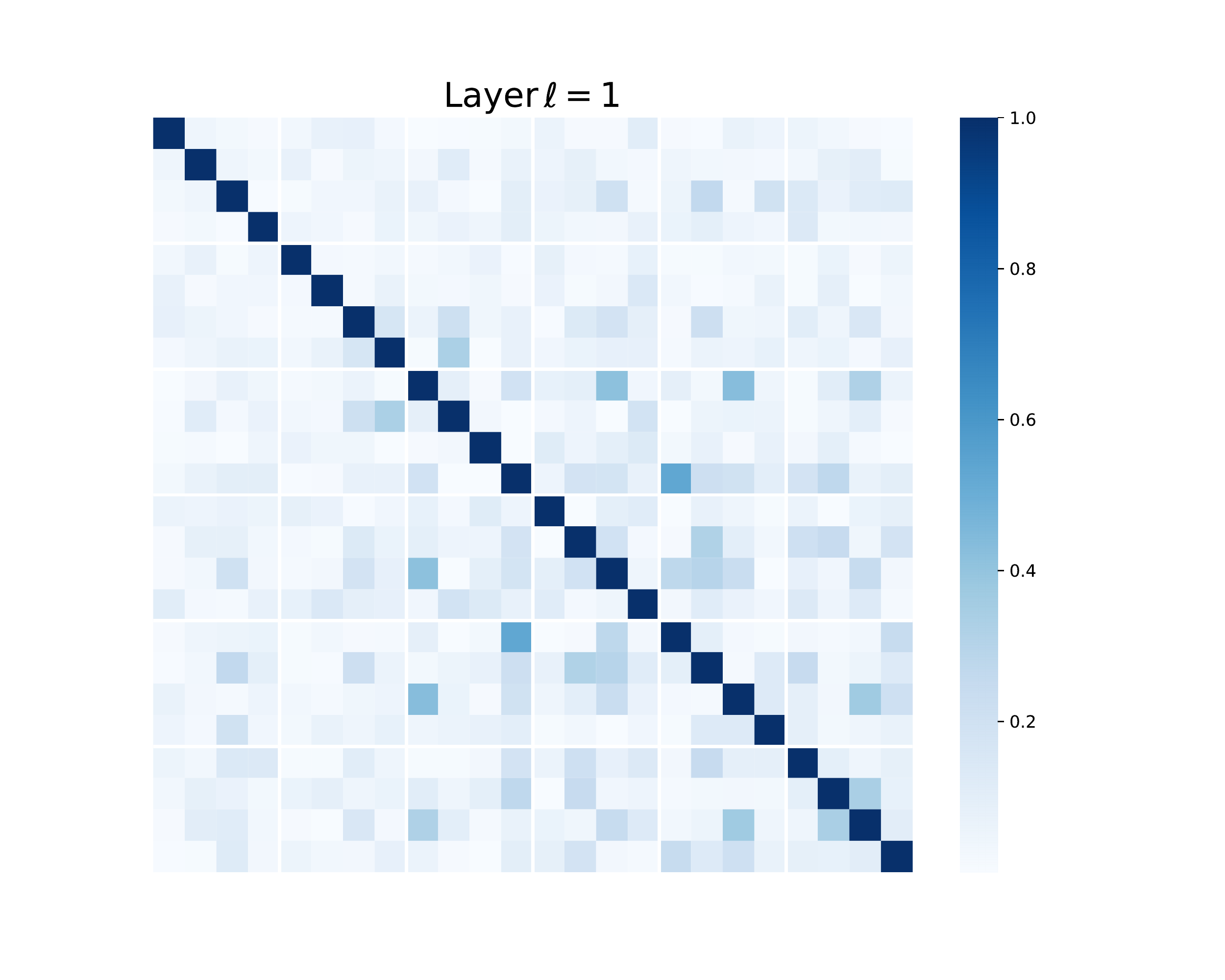}
         \caption{$\ell=1$.}
     \end{subfigure}
     \begin{subfigure}[b]{0.24\textwidth}
         \centering
    \includegraphics[width=\textwidth]{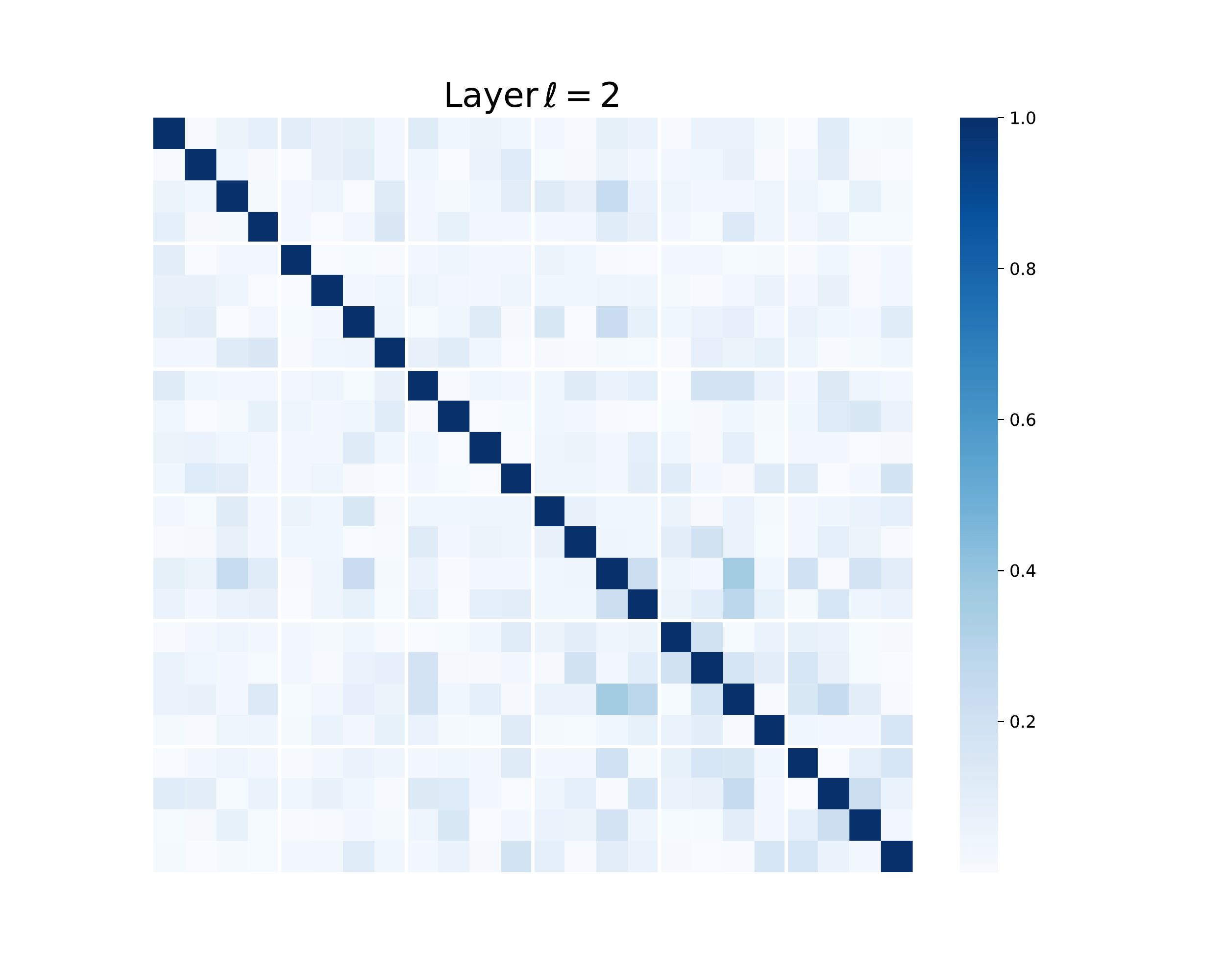}
         \caption{$\ell=2$.}
     \end{subfigure}
     \begin{subfigure}[b]{0.24\textwidth}
         \centering
    \includegraphics[width=\textwidth]{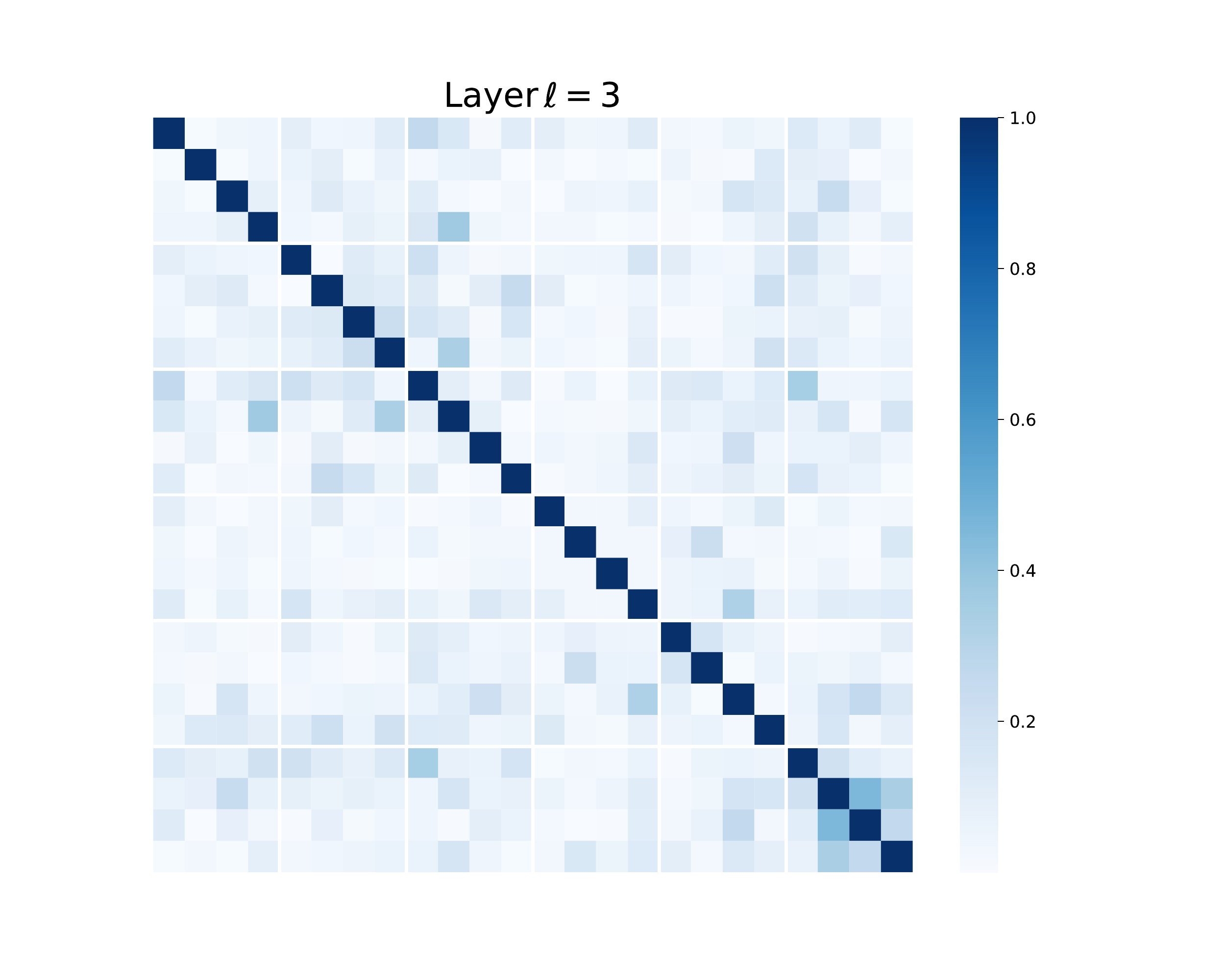}
         \caption{$\ell=3$.}
     \end{subfigure}
     \begin{subfigure}[b]{0.24\textwidth}
         \centering
    \includegraphics[width=\textwidth]{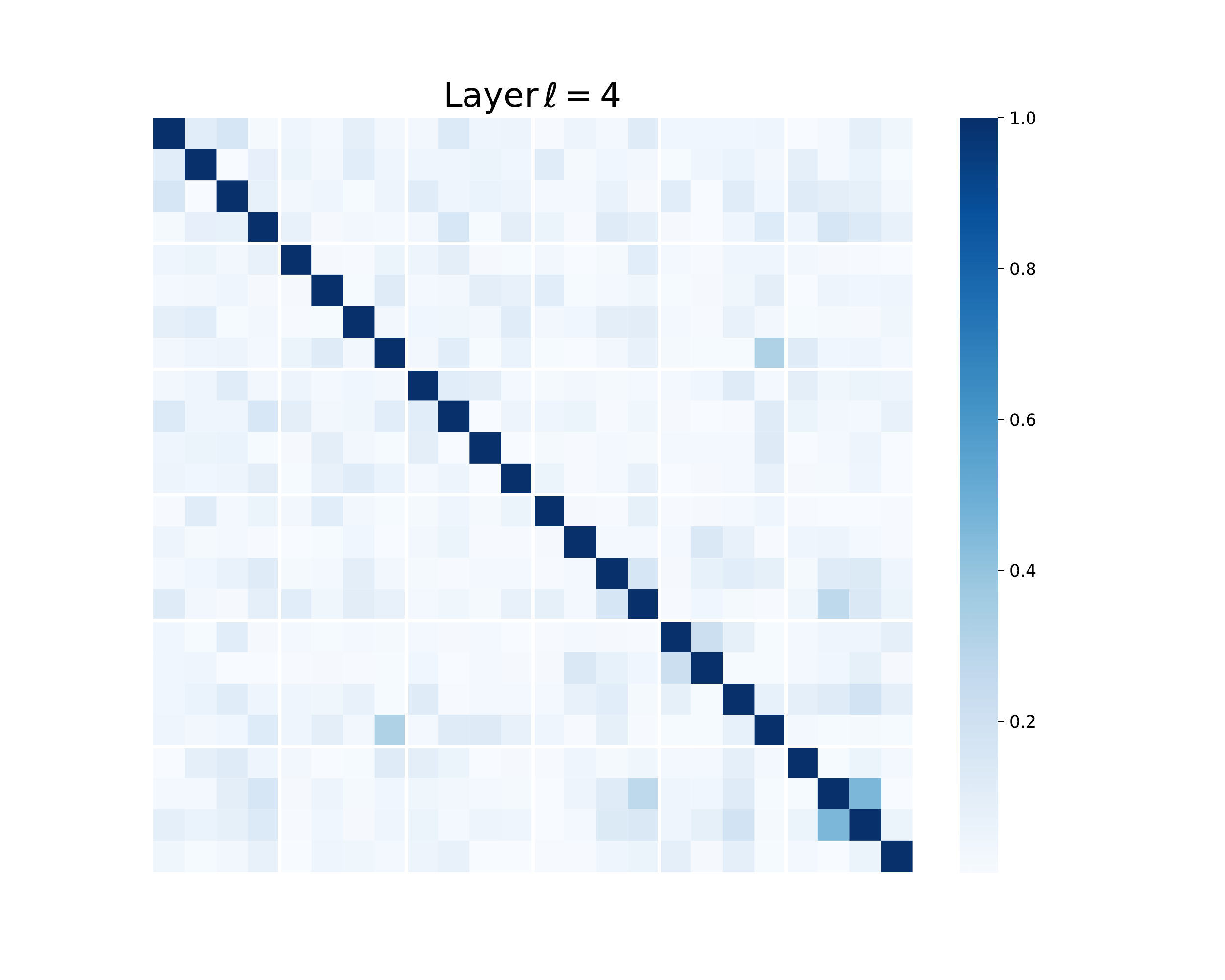}
         \caption{$\ell=4$.}
     \end{subfigure}
     \begin{subfigure}[b]{0.24\textwidth}
         \centering
    \includegraphics[width=\textwidth]{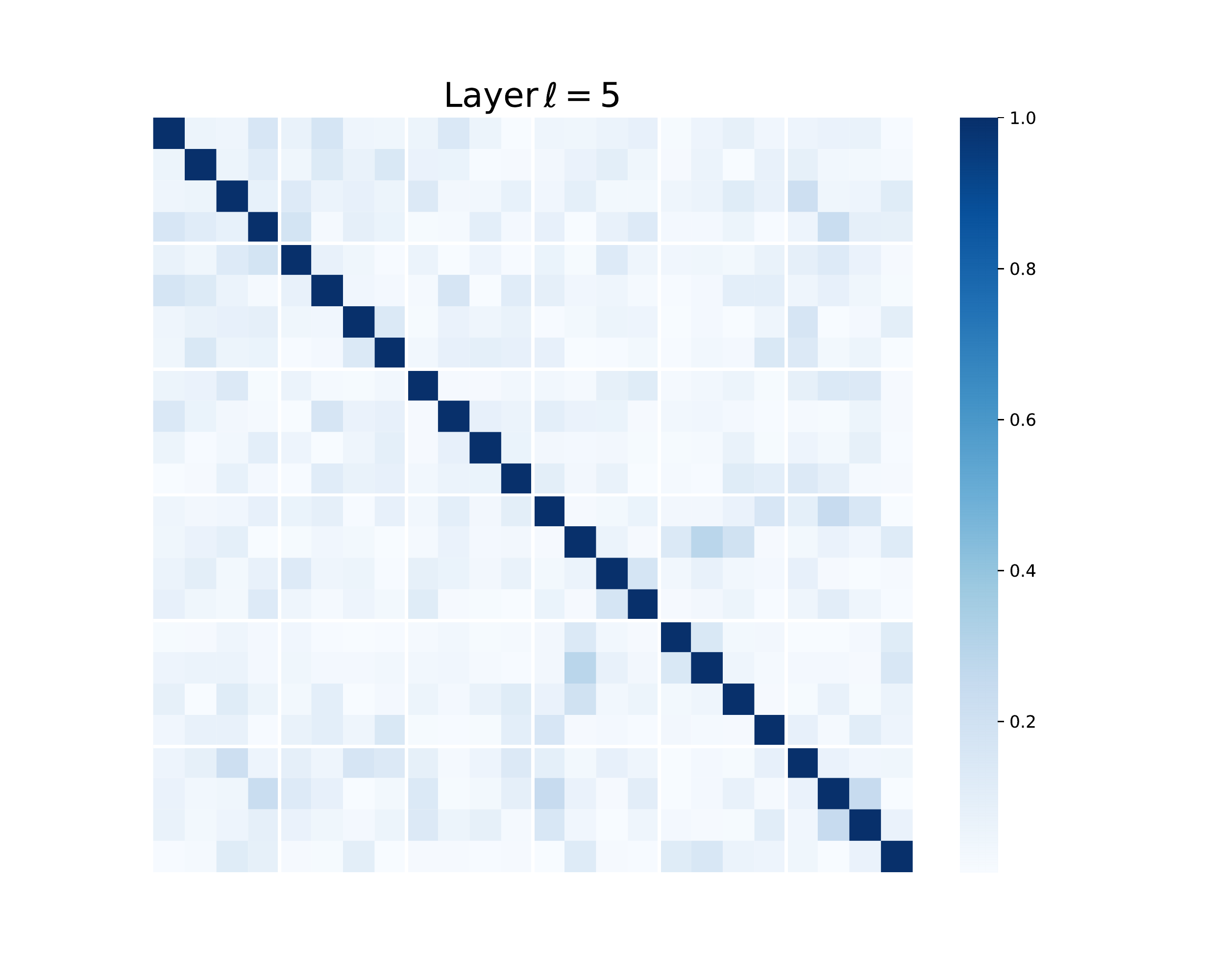}
         \caption{$\ell=5$.}
     \end{subfigure}
     \begin{subfigure}[b]{0.24\textwidth}
         \centering
    \includegraphics[width=\textwidth]{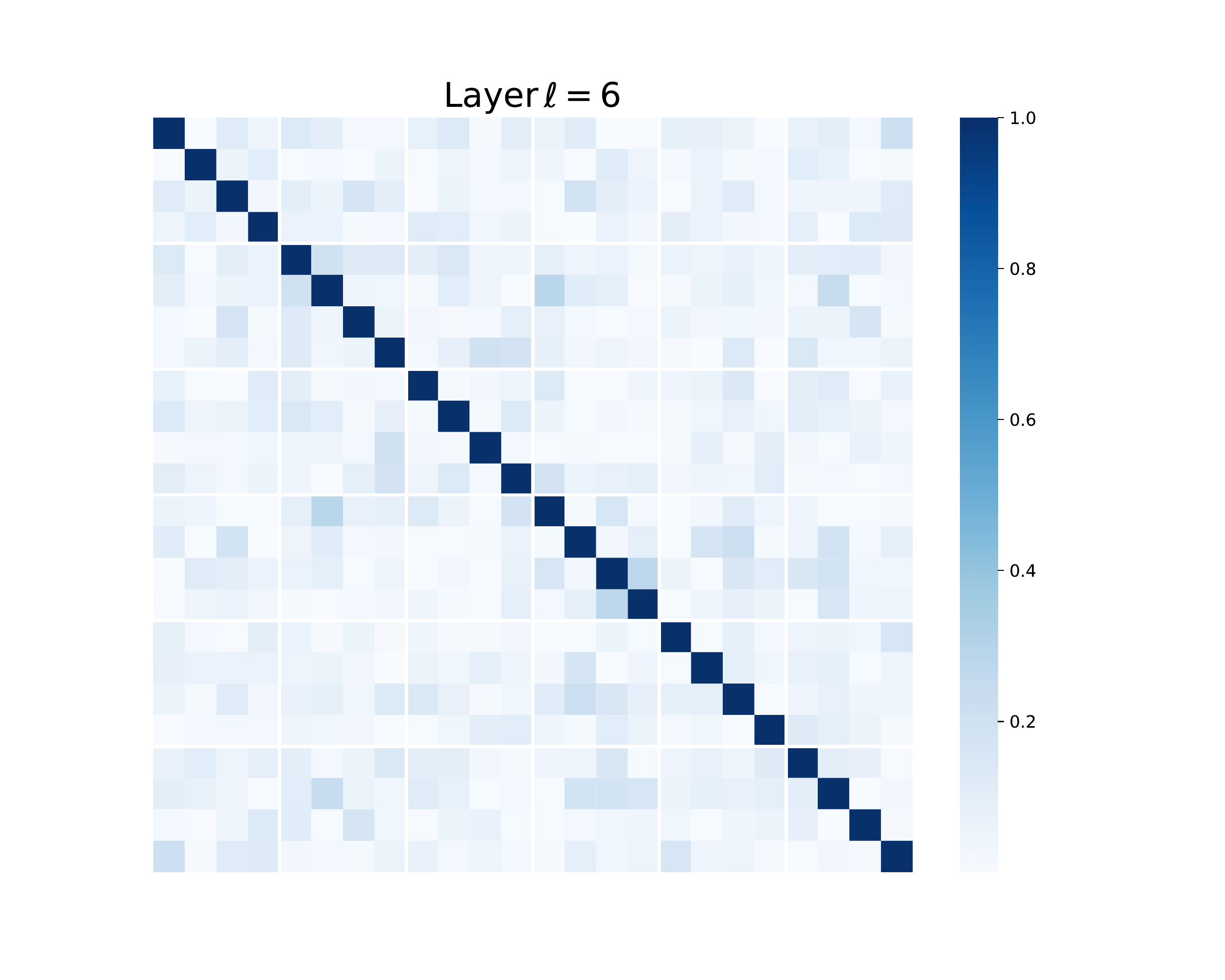}
         \caption{$\ell=6$.}
     \end{subfigure}
     \begin{subfigure}[b]{0.24\textwidth}
         \centering
    \includegraphics[width=\textwidth]{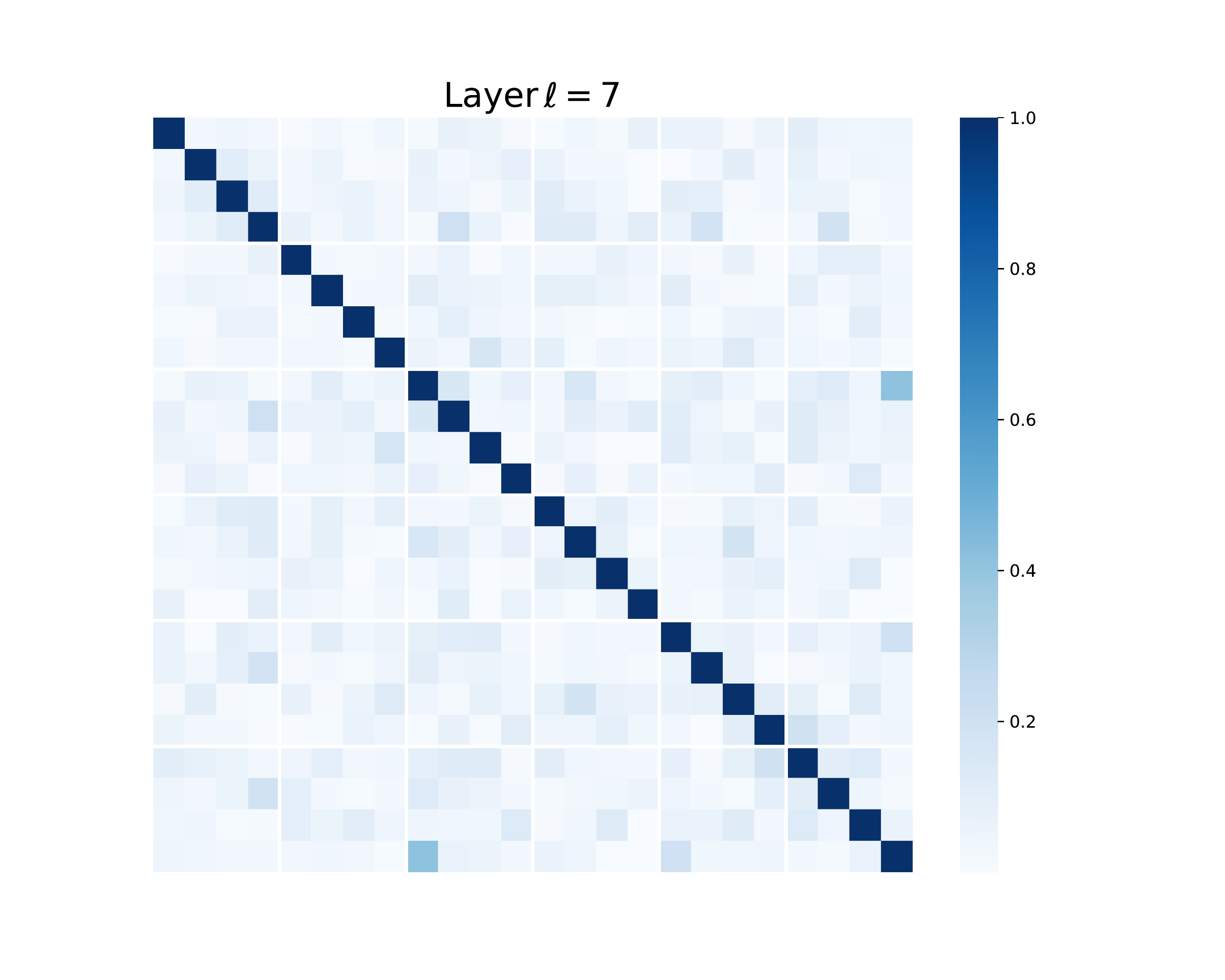}
         \caption{$\ell=7$.}
     \end{subfigure}
     \begin{subfigure}[b]{0.24\textwidth}
         \centering
    \includegraphics[width=\textwidth]{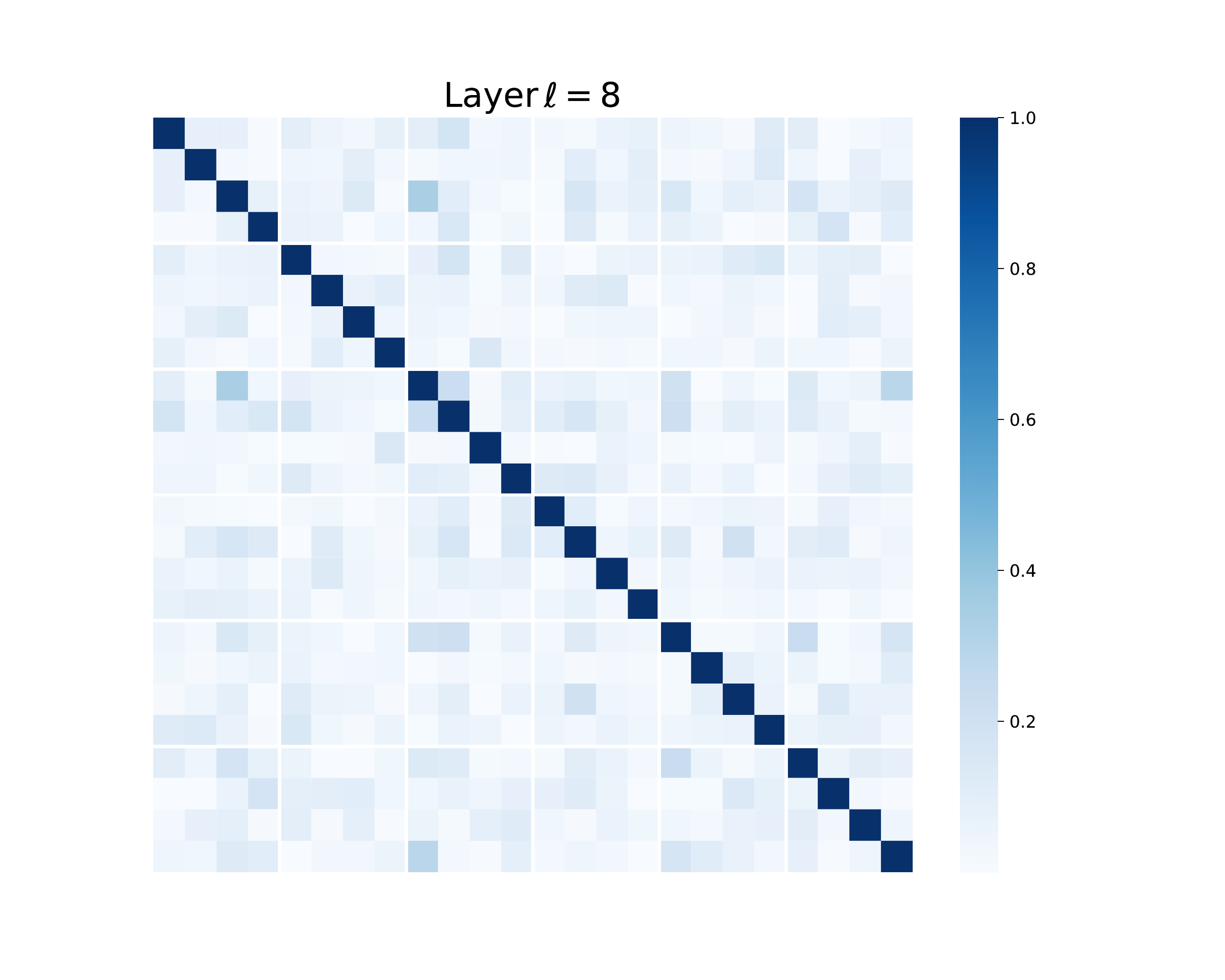}
         \caption{$\ell=8$.}
     \end{subfigure}
     \begin{subfigure}[b]{0.24\textwidth}
         \centering
    \includegraphics[width=\textwidth]{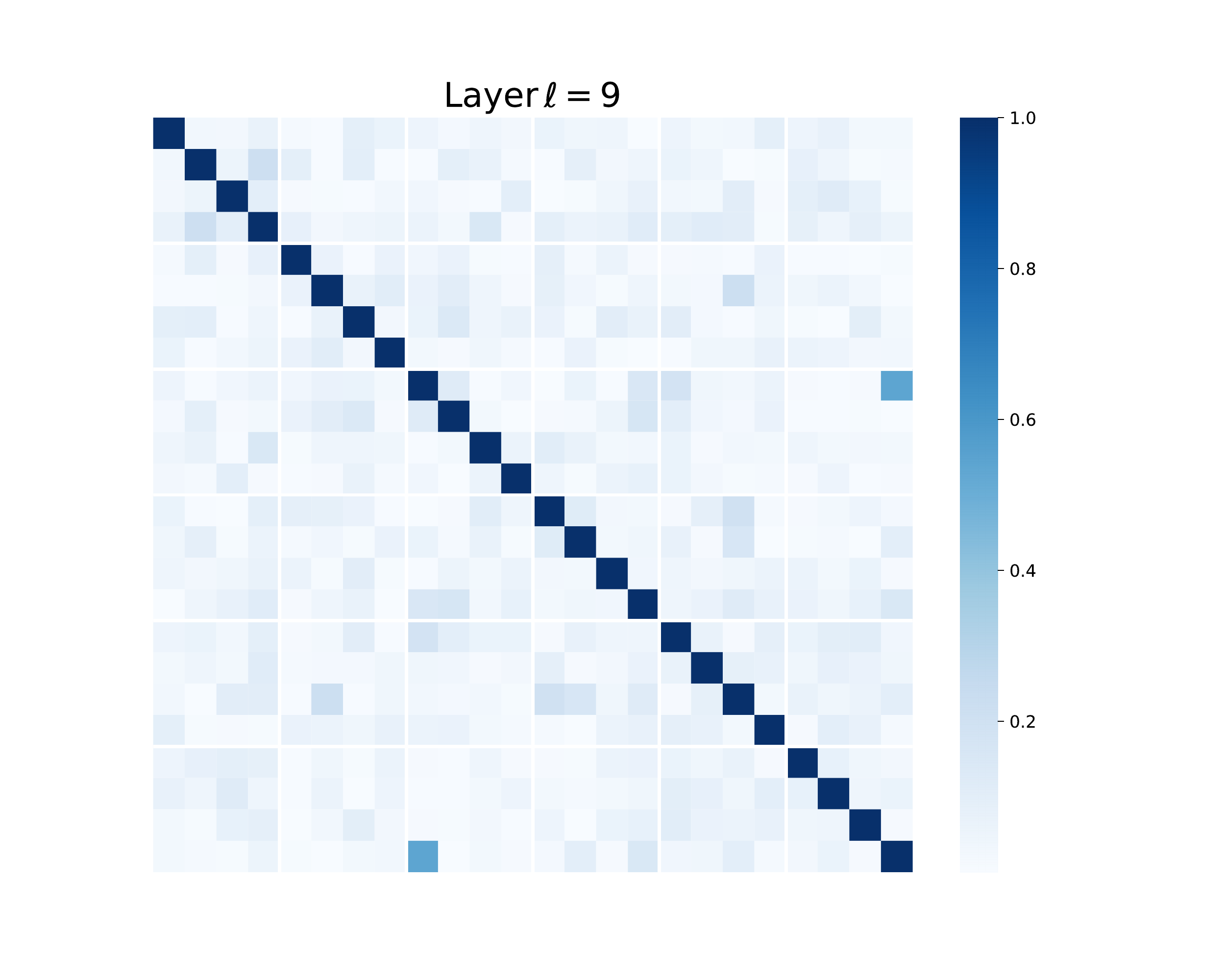}
         \caption{$\ell=9$.}
     \end{subfigure}
     \begin{subfigure}[b]{0.24\textwidth}
         \centering
    \includegraphics[width=\textwidth]{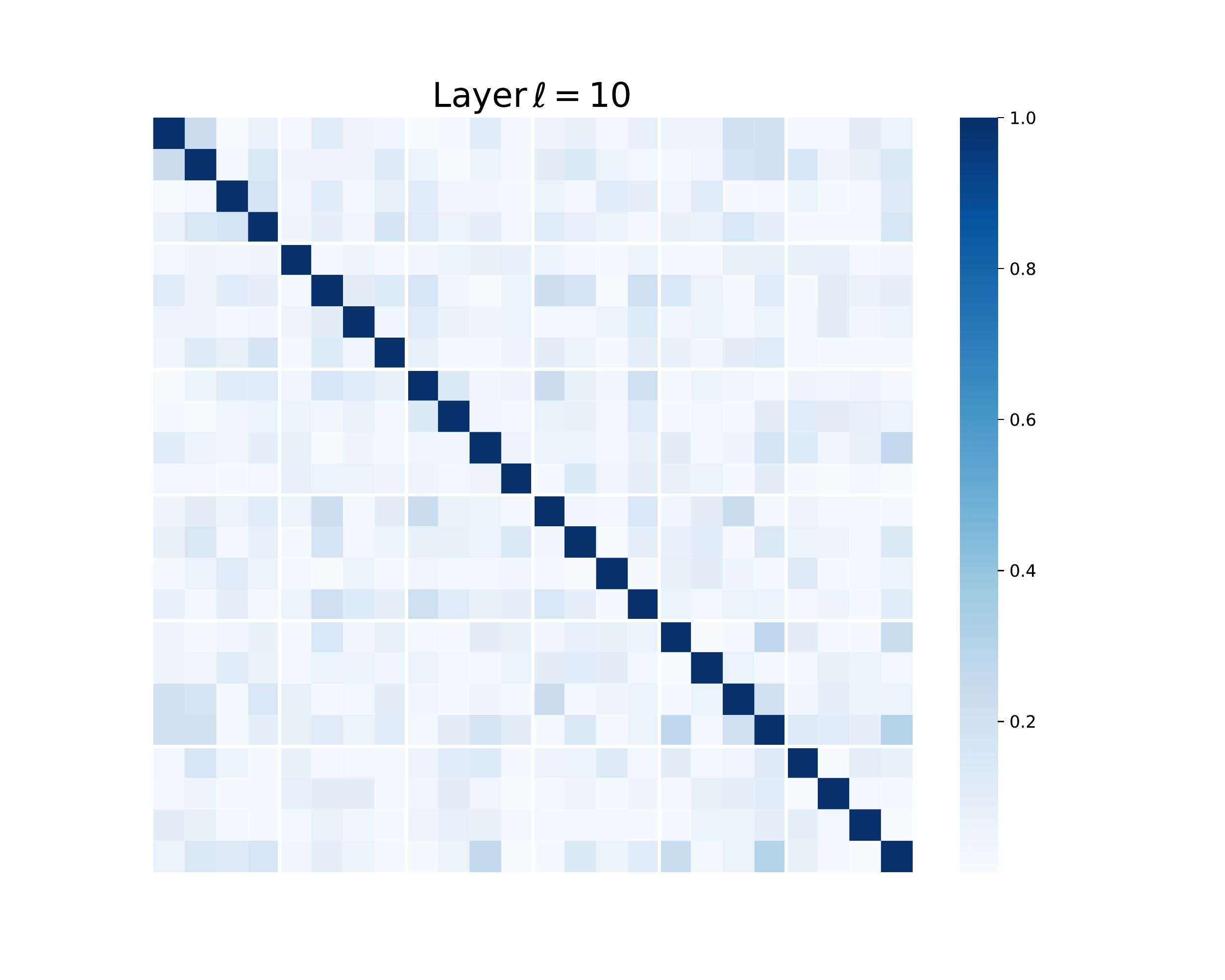}
         \caption{$\ell=10$.}
     \end{subfigure}
     \begin{subfigure}[b]{0.24\textwidth}
         \centering
    \includegraphics[width=\textwidth]{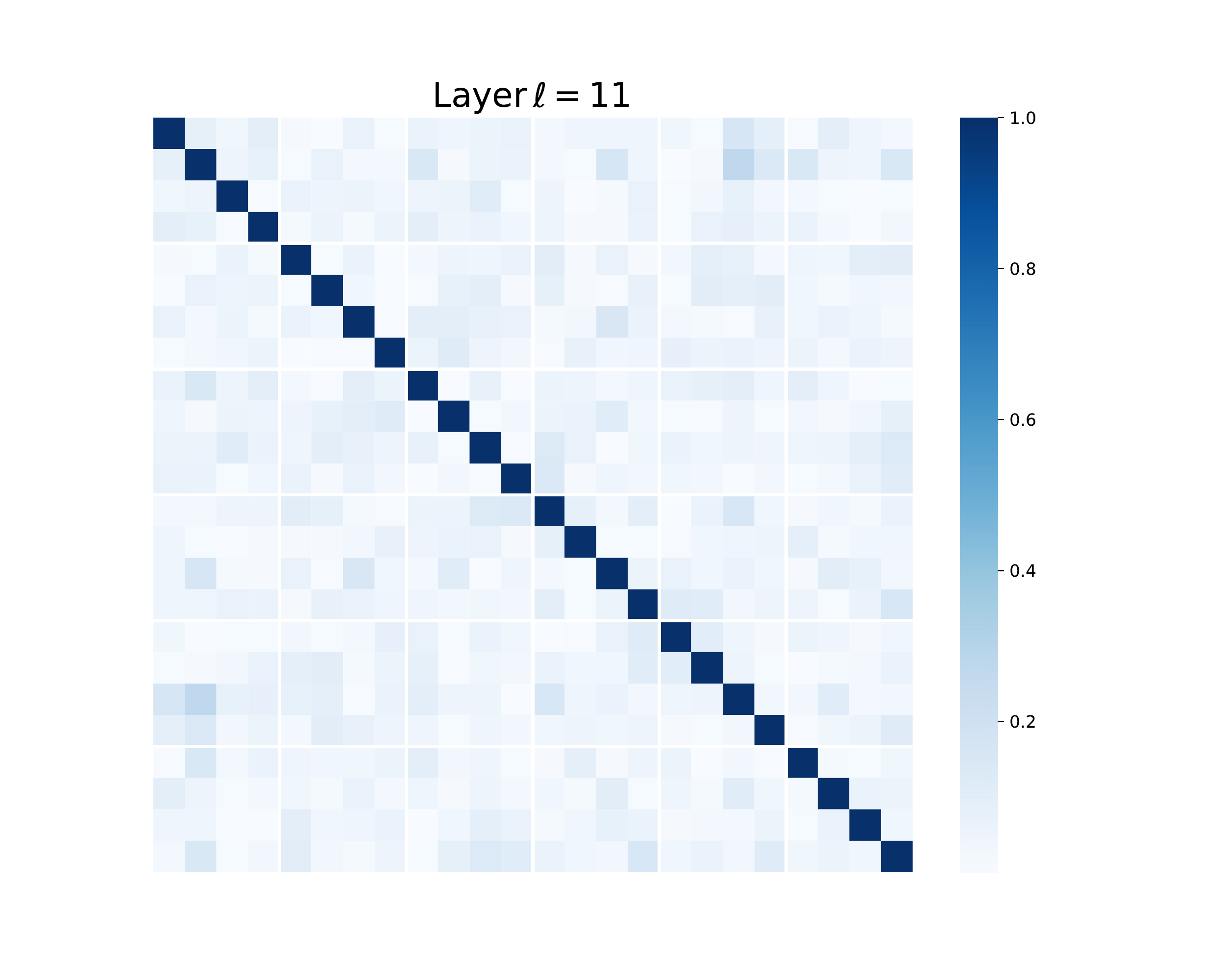}
         \caption{$\ell=11$.}
     \end{subfigure}
     \begin{subfigure}[b]{0.24\textwidth}
         \centering
    \includegraphics[width=\textwidth]{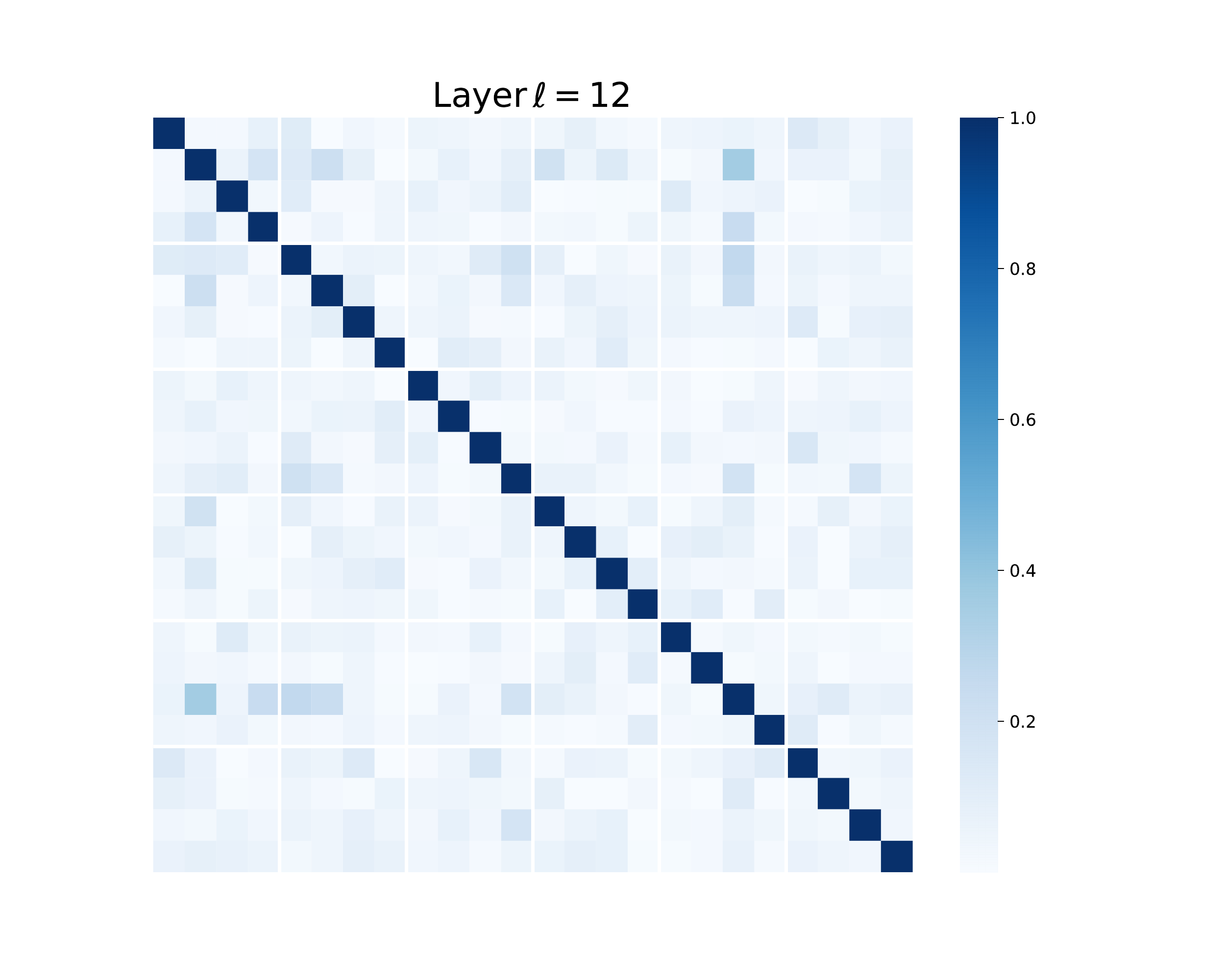}
         \caption{$\ell=12$.}
     \end{subfigure}
        \caption{We visualize the $[\vU_1^{\ell}, \dots, \vU_K^{\ell}]^{\adj}[\vU_1^{\ell}, \dots, \vU_K^{\ell}] \in \bR^{pK \times pK}$ at different layers.  
        The $(i, j)$-th block in each sub-figure corresponds to $(\vU_i^{\ell})\adj\vU_j^{\ell}$ for $i, j \in [K]$ at a particular layer $\ell$. To enhance the visual clarity, for each subspace $\vU_i$, we randomly pick 4 directions for display purposes. (Model: \ours{-Tiny})}
        \label{fig:appendix-exp-visualize-UiUj}
        \vspace{-0.1in}
\end{figure}

\clearpage
\subsubsection{Additional Layer-wise Visualization}\label{subsec:appendix-token-representation-visualize}
We provide more results of the layer-wise token representation visualization on other samples in \Cref{fig:appendix-exp-ista-sparsity-heatmap-sample1}, \Cref{fig:appendix-exp-ista-sparsity-heatmap-sample2}, \Cref{fig:appendix-exp-ista-sparsity-heatmap-sample3},  and \Cref{fig:appendix-exp-ista-sparsity-heatmap-sample4} (Model: \ours{-Base}).

\begin{figure}[ht]
     \centering
     \begin{subfigure}[b]{0.22\textwidth}
         \centering
    \includegraphics[width=\textwidth]{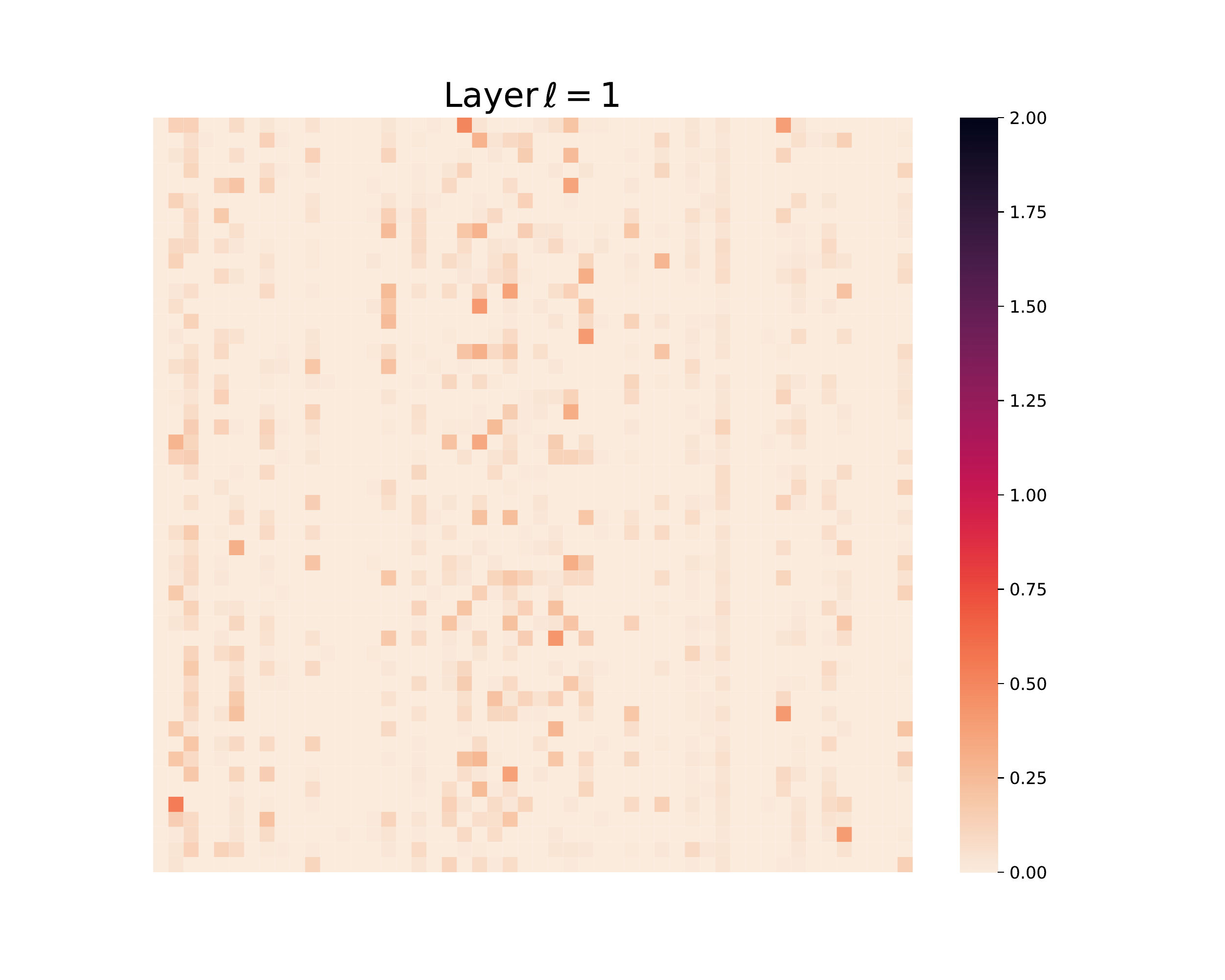}
     \end{subfigure}
     \begin{subfigure}[b]{0.22\textwidth}
         \centering
    \includegraphics[width=\textwidth]{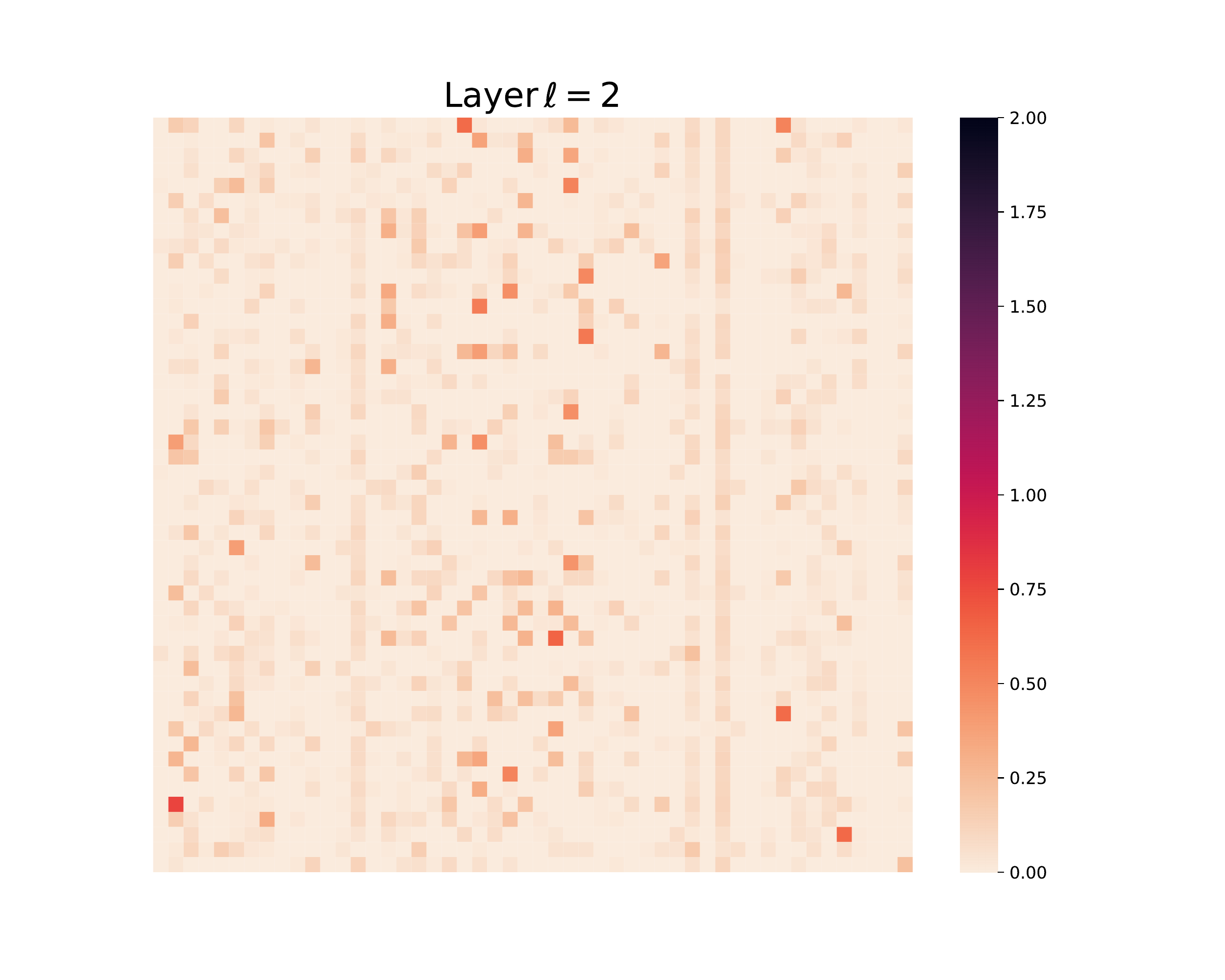}
     \end{subfigure}
     \begin{subfigure}[b]{0.22\textwidth}
         \centering
    \includegraphics[width=\textwidth]{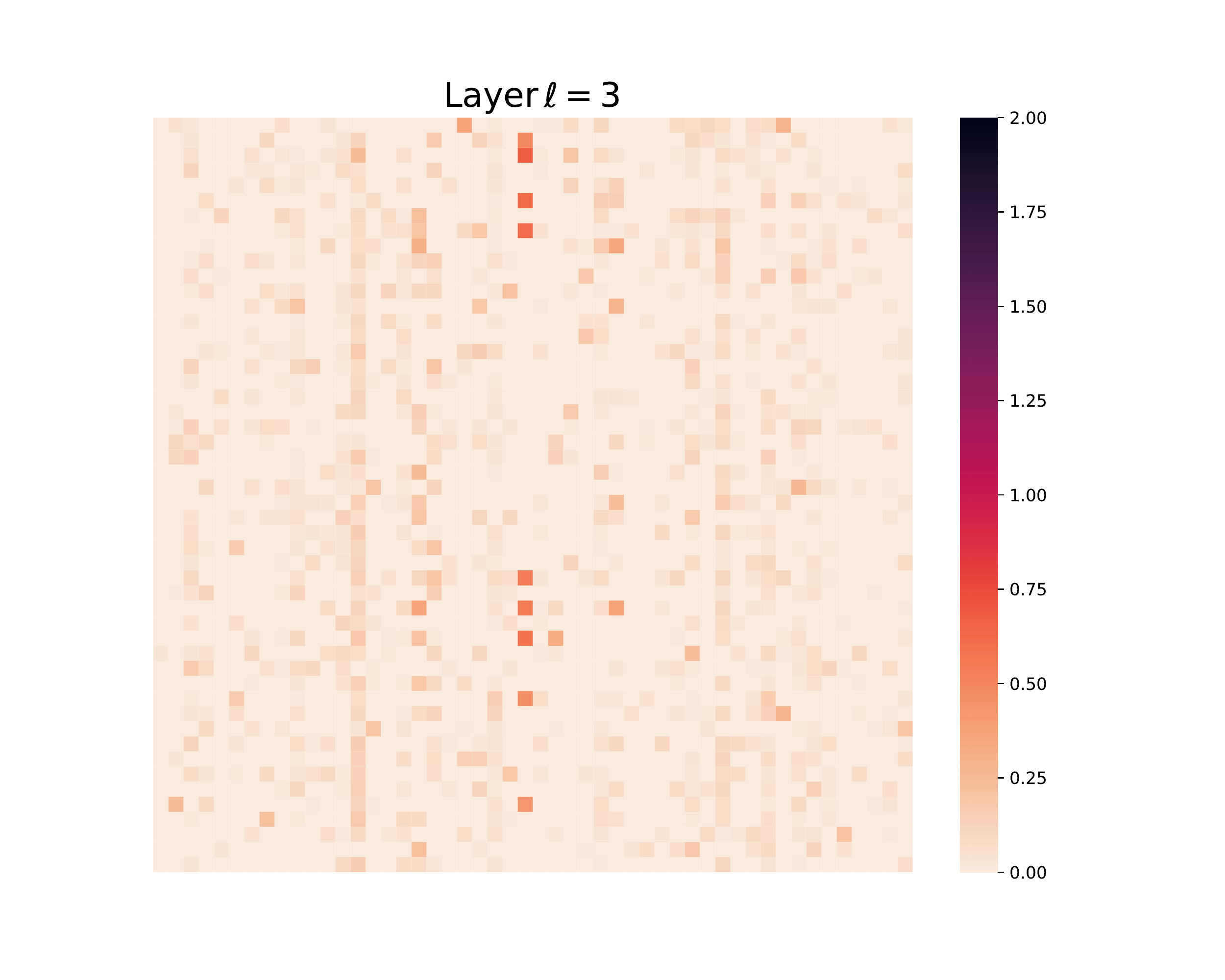}
     \end{subfigure}
     \begin{subfigure}[b]{0.22\textwidth}
         \centering
    \includegraphics[width=\textwidth]{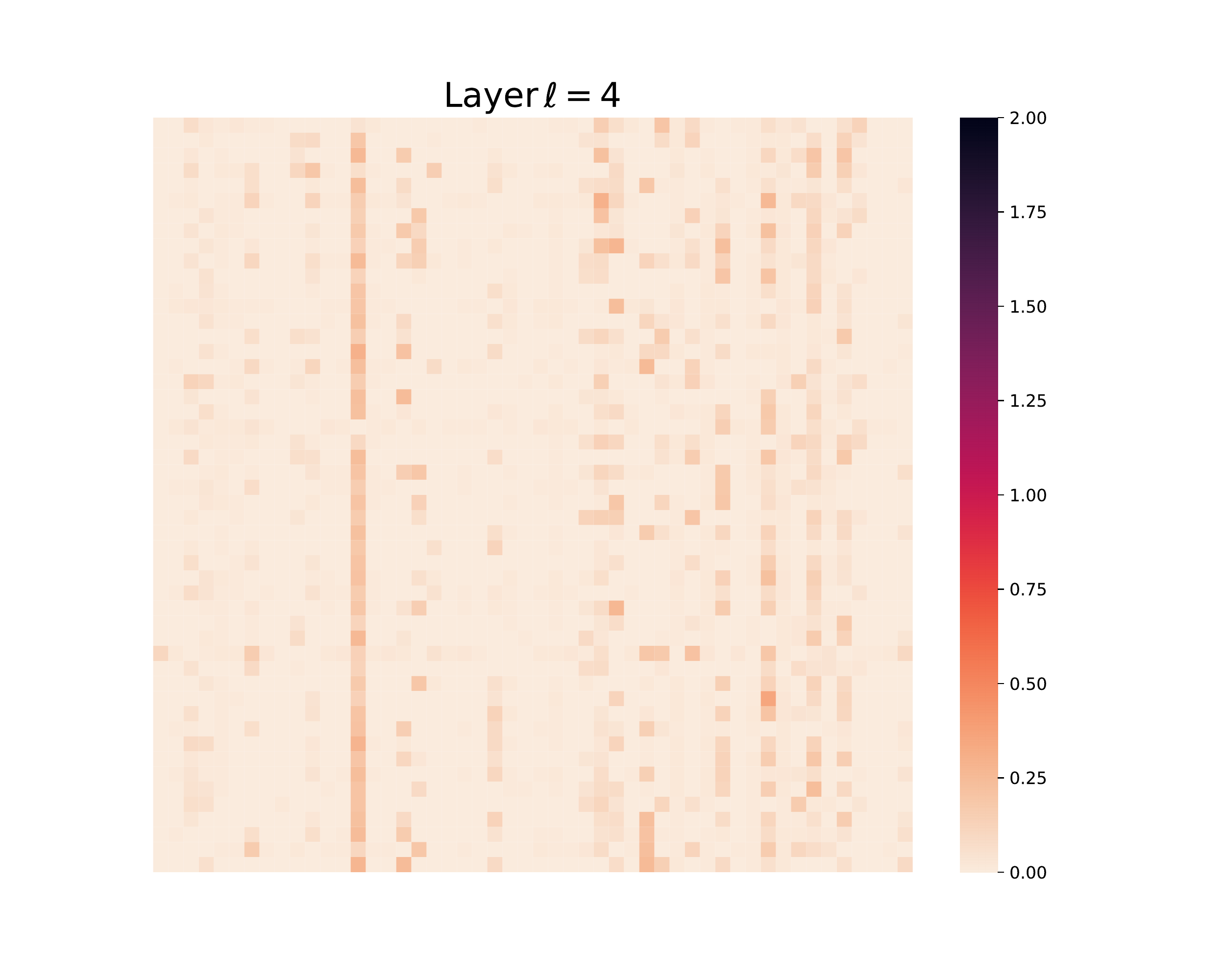}
     \end{subfigure}
     \begin{subfigure}[b]{0.22\textwidth}
         \centering
    \includegraphics[width=\textwidth]{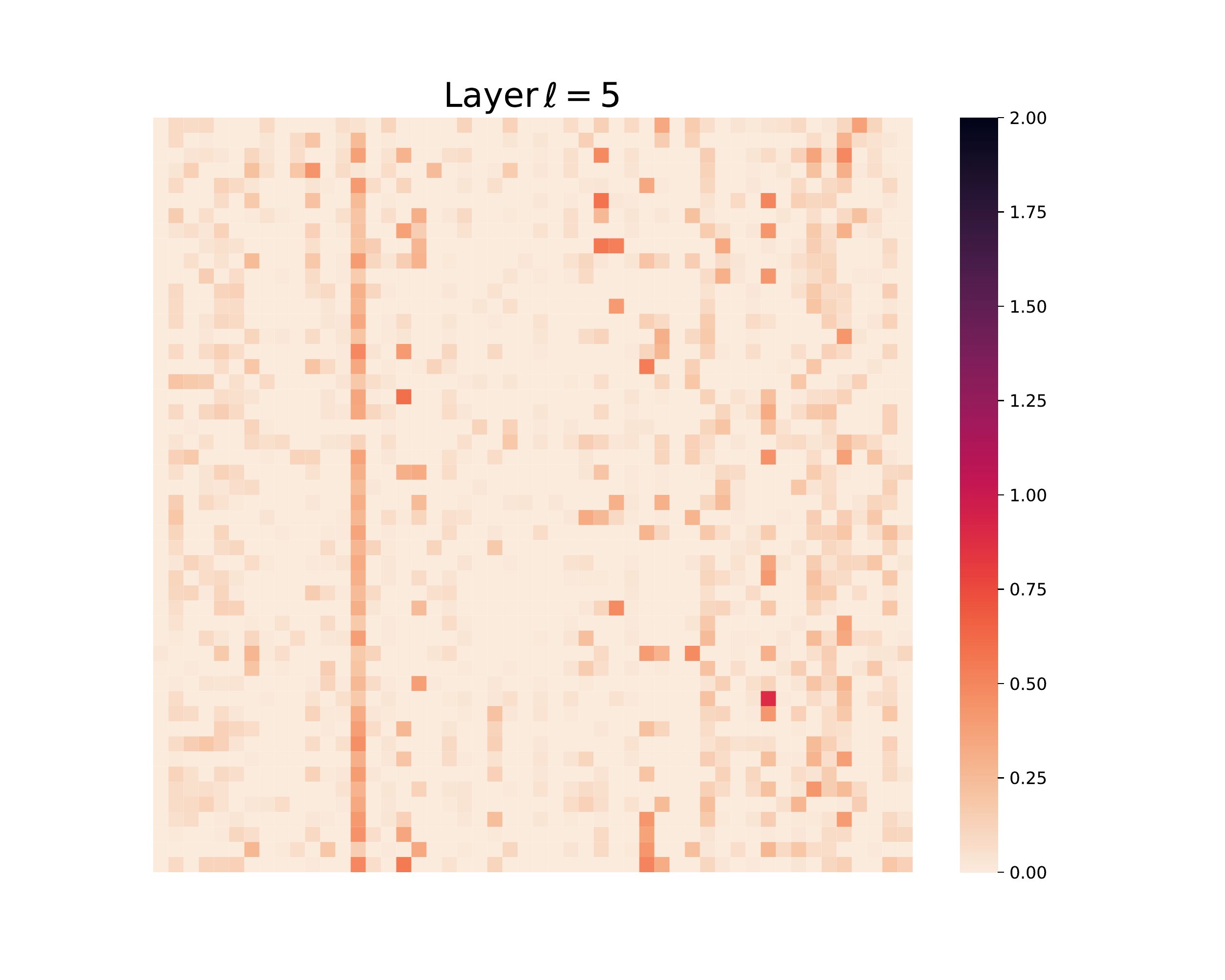}
     \end{subfigure}
     \begin{subfigure}[b]{0.22\textwidth}
         \centering
    \includegraphics[width=\textwidth]{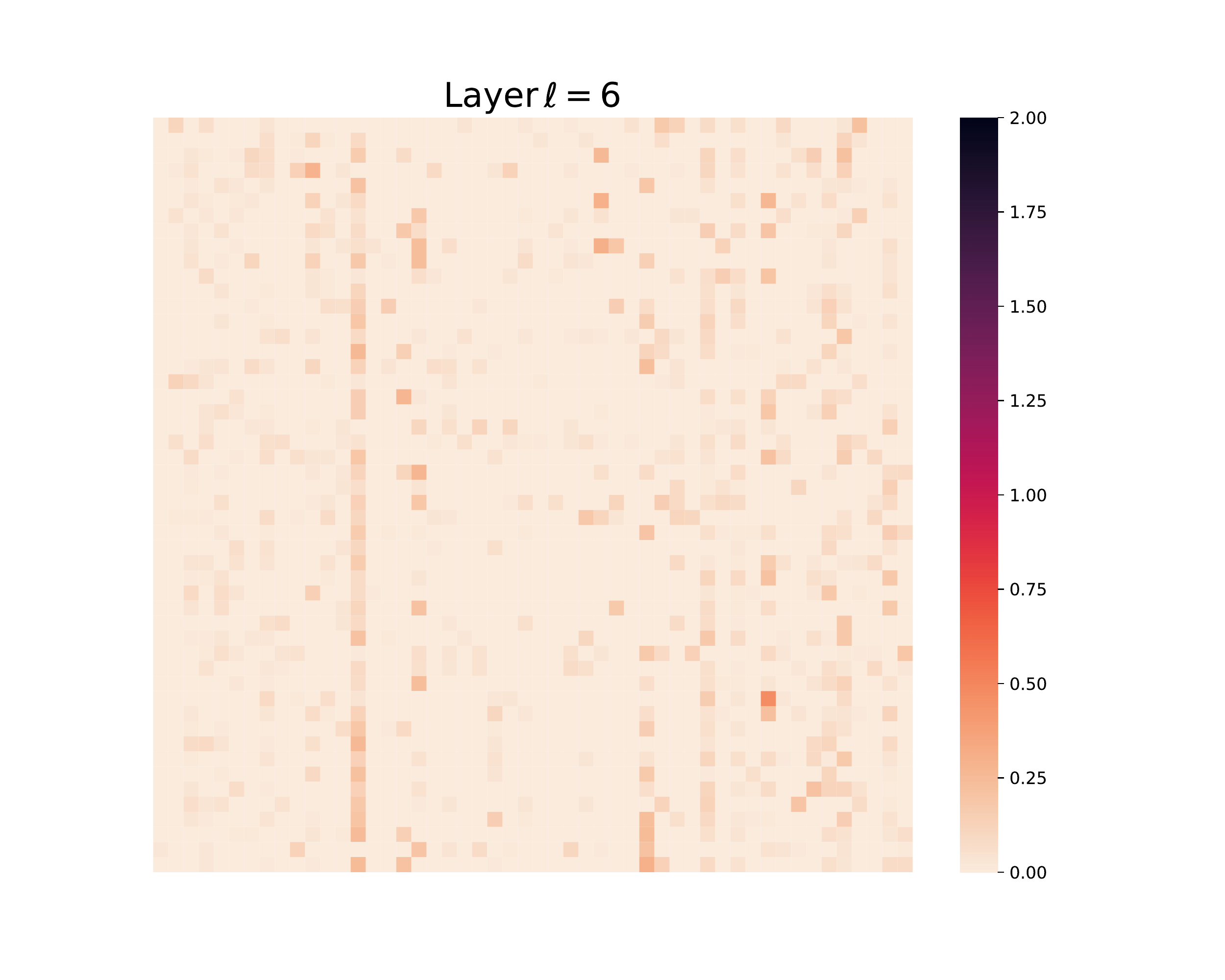}
     \end{subfigure}
     \begin{subfigure}[b]{0.22\textwidth}
         \centering
    \includegraphics[width=\textwidth]{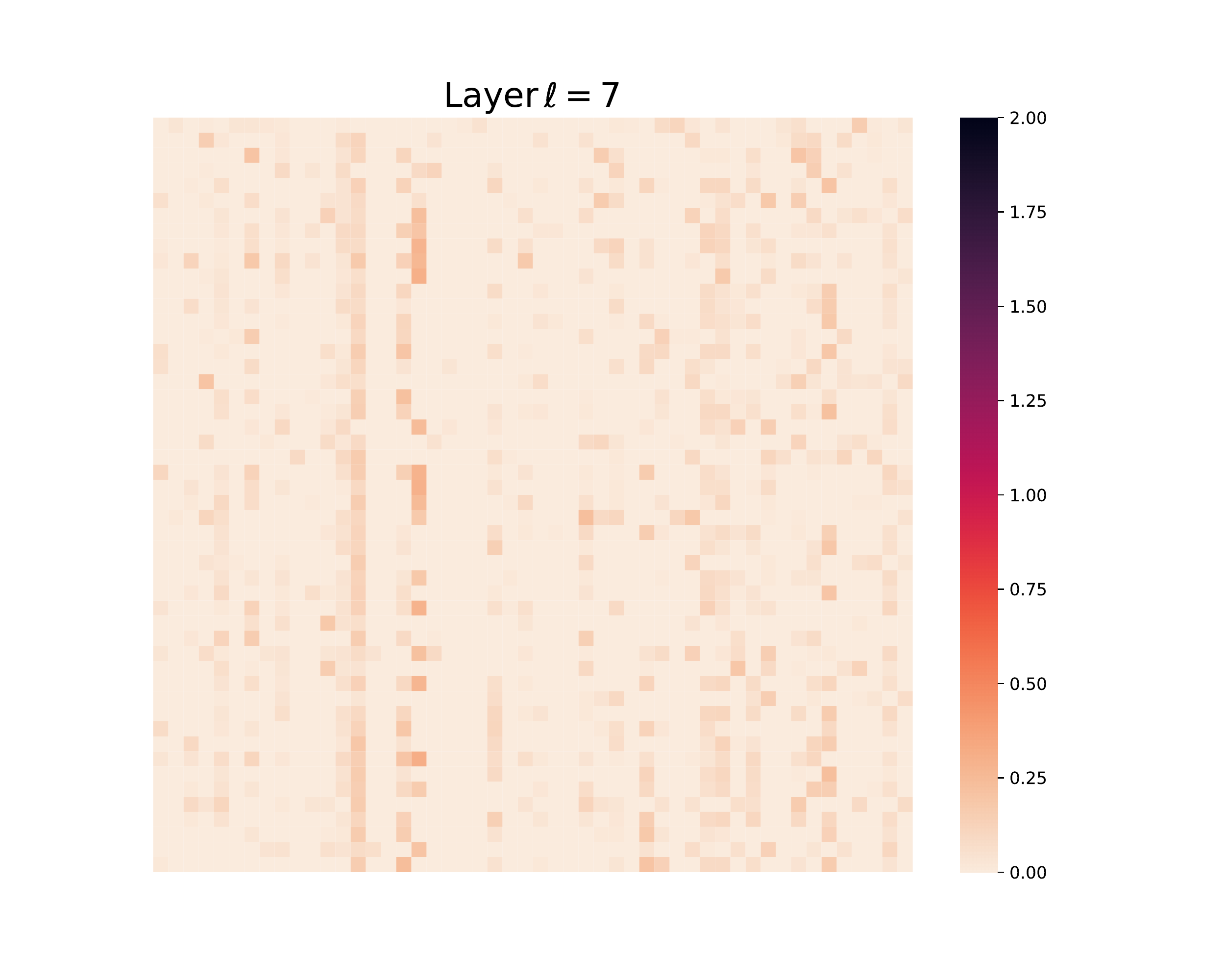}
     \end{subfigure}
     \begin{subfigure}[b]{0.22\textwidth}
         \centering
    \includegraphics[width=\textwidth]{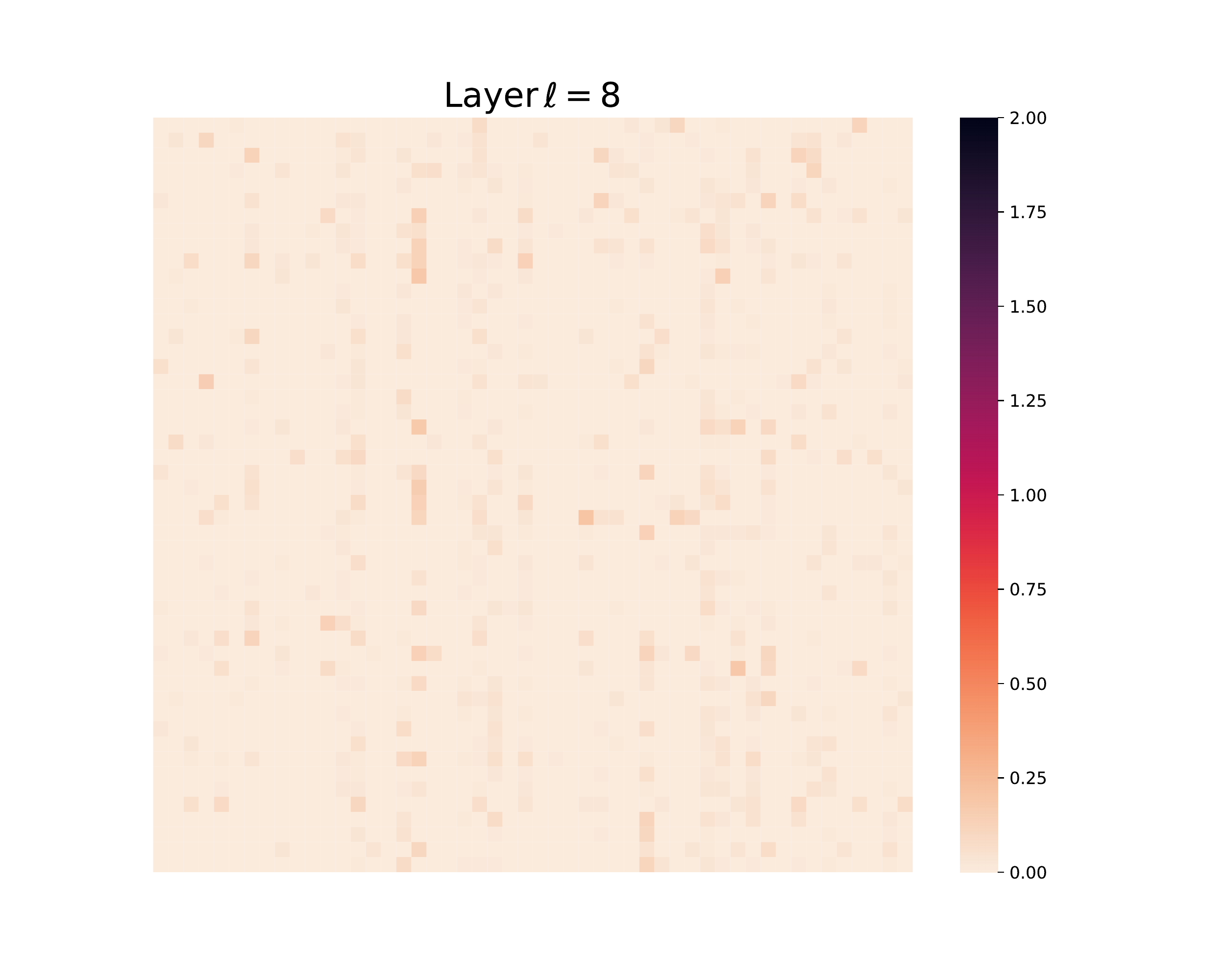}
     \end{subfigure}
     \begin{subfigure}[b]{0.22\textwidth}
         \centering
    \includegraphics[width=\textwidth]{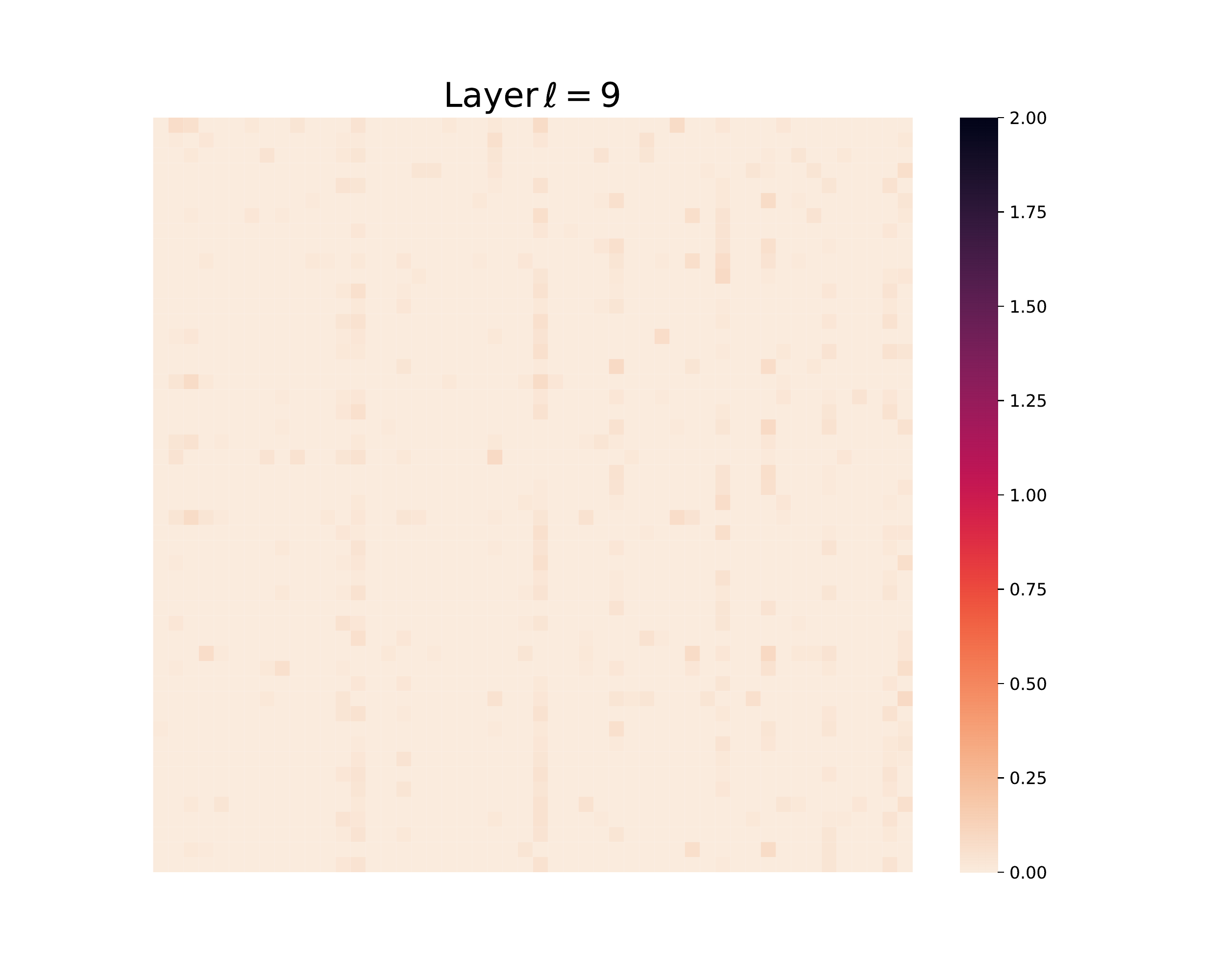}
     \end{subfigure}
     \begin{subfigure}[b]{0.22\textwidth}
         \centering
    \includegraphics[width=\textwidth]{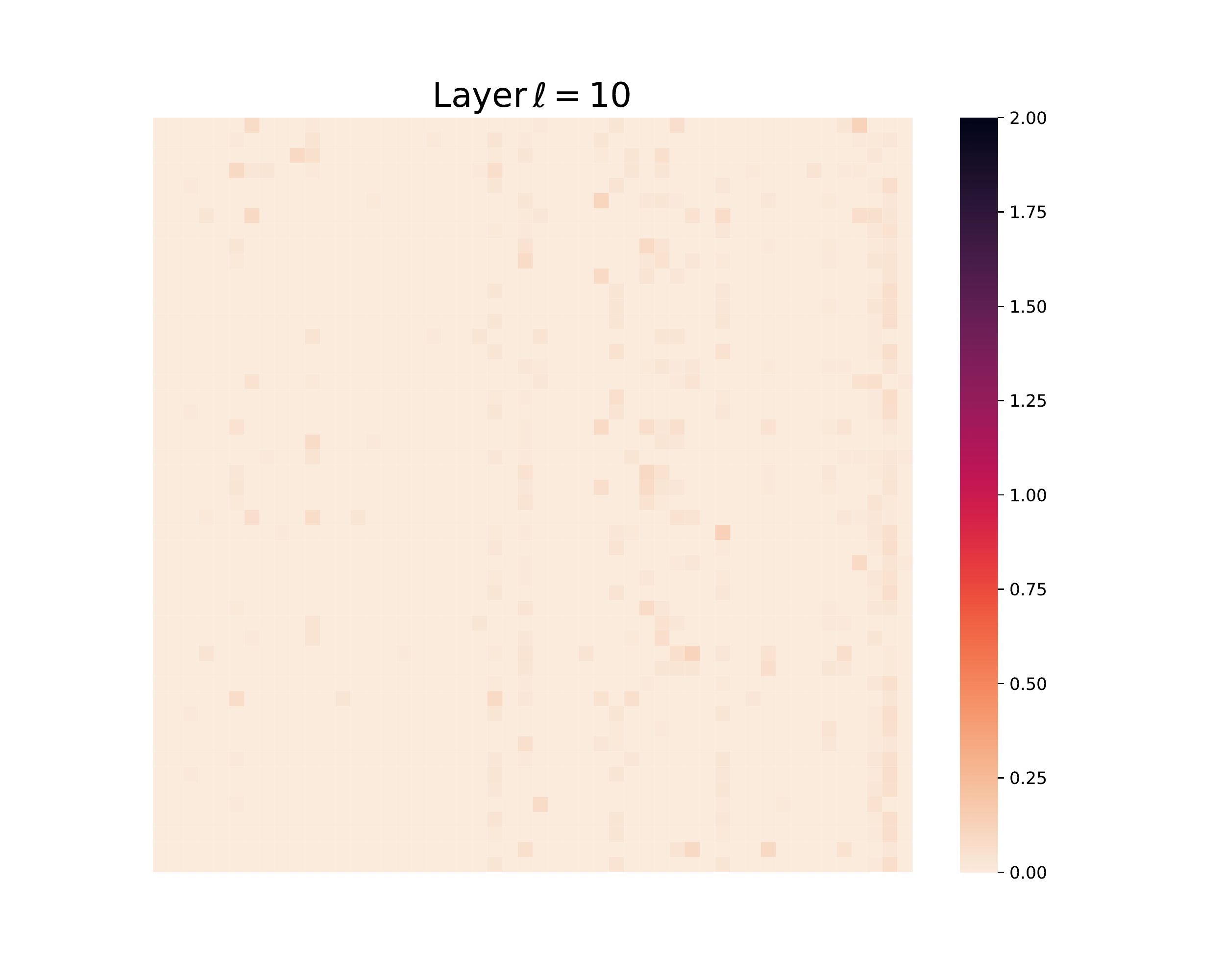}
     \end{subfigure}
     \begin{subfigure}[b]{0.22\textwidth}
         \centering
    \includegraphics[width=\textwidth]{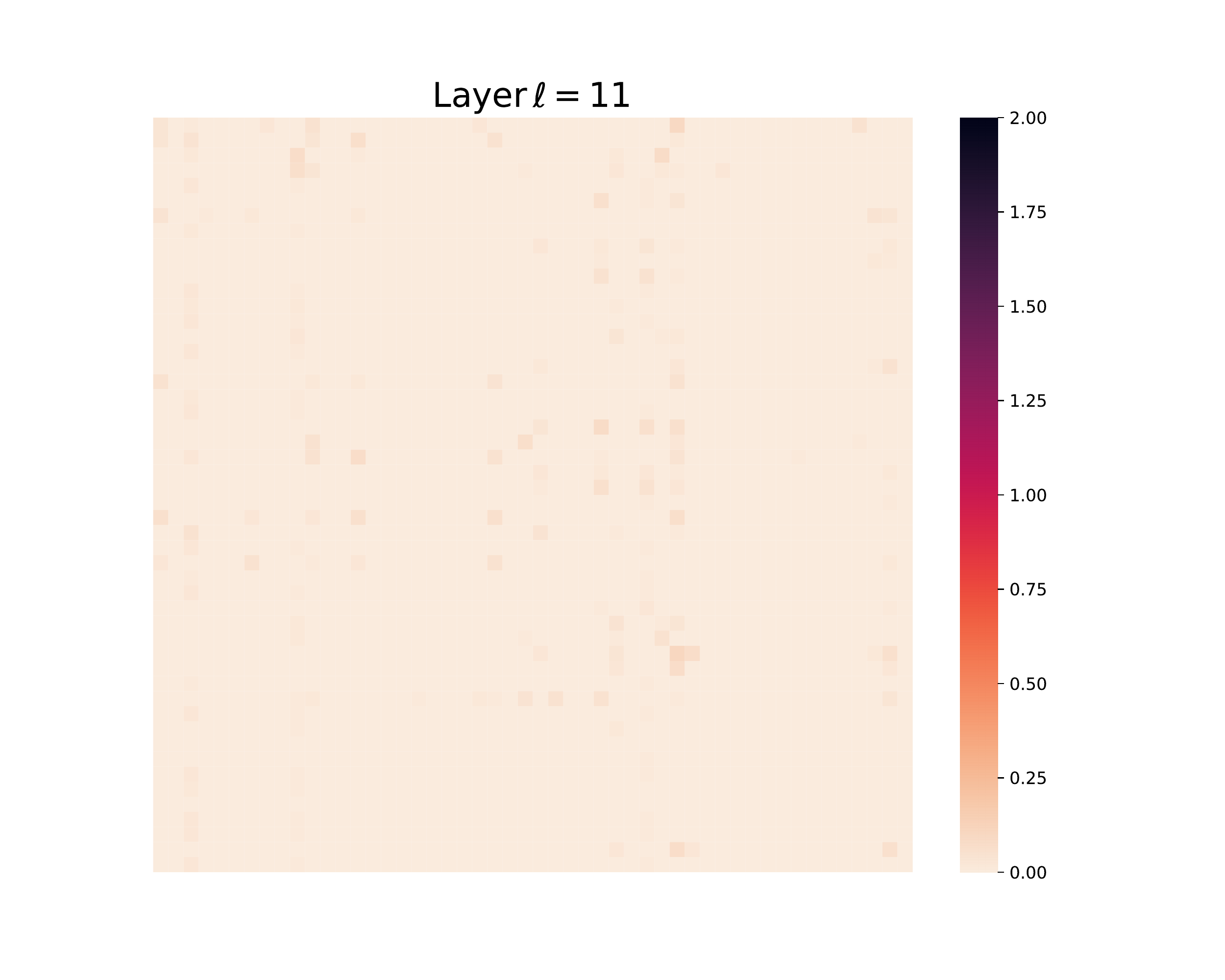}
     \end{subfigure}
     \begin{subfigure}[b]{0.22\textwidth}
         \centering
    \includegraphics[width=\textwidth]{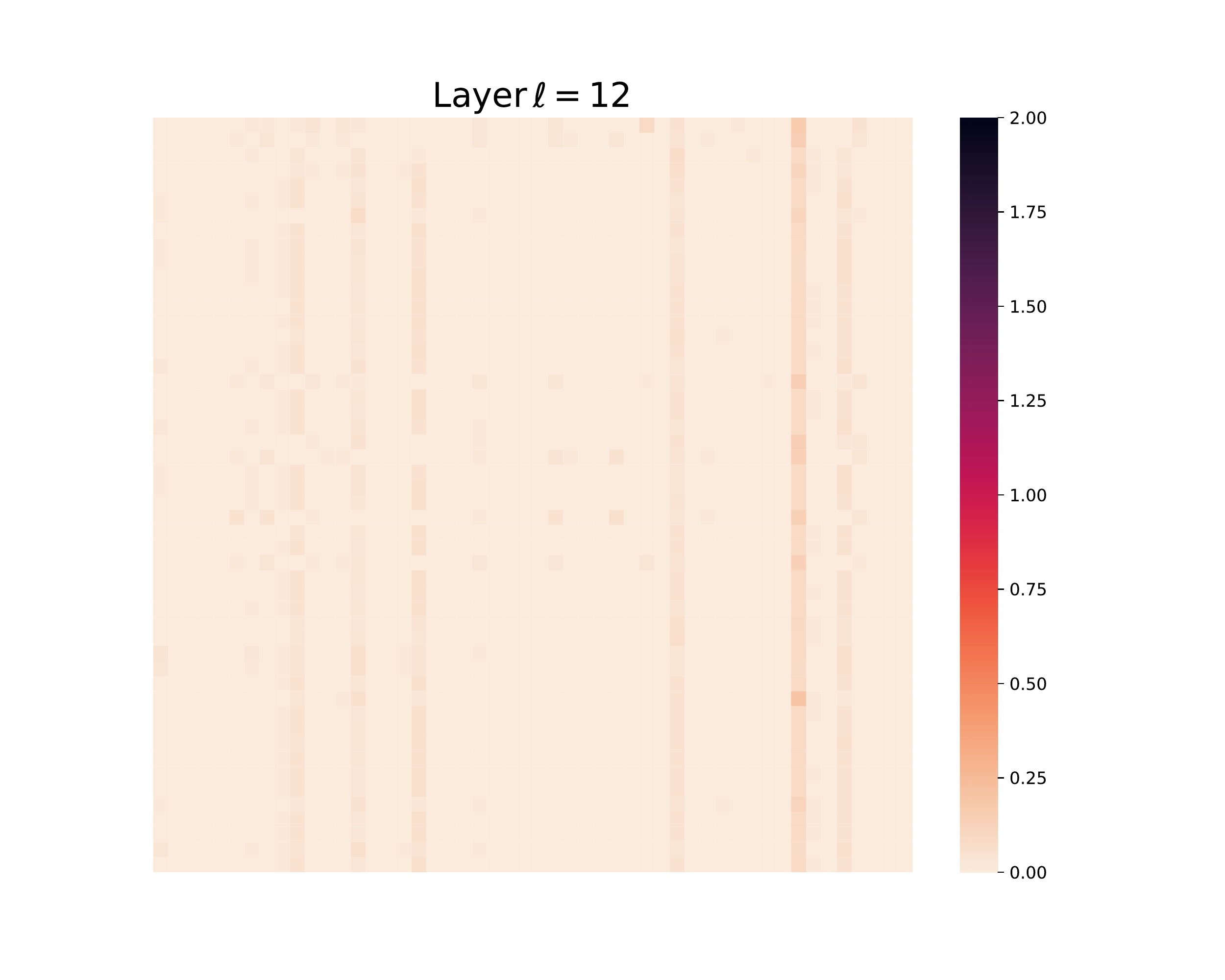}
     \end{subfigure}
        \caption{Visualizing layer-wise token $\Z^{\ell}$ representations at each layer $\ell$. To enhance the visual clarity, we randomly extract a 50$\times$50 sub-matrix from $\Z^{\ell}$ for display purposes. (\textit{Sample 1})}
        \label{fig:appendix-exp-ista-sparsity-heatmap-sample1}
        \vspace{-0.1in}
\end{figure}

\begin{figure}[ht]
     \centering
     \begin{subfigure}[b]{0.22\textwidth}
         \centering
    \includegraphics[width=\textwidth]{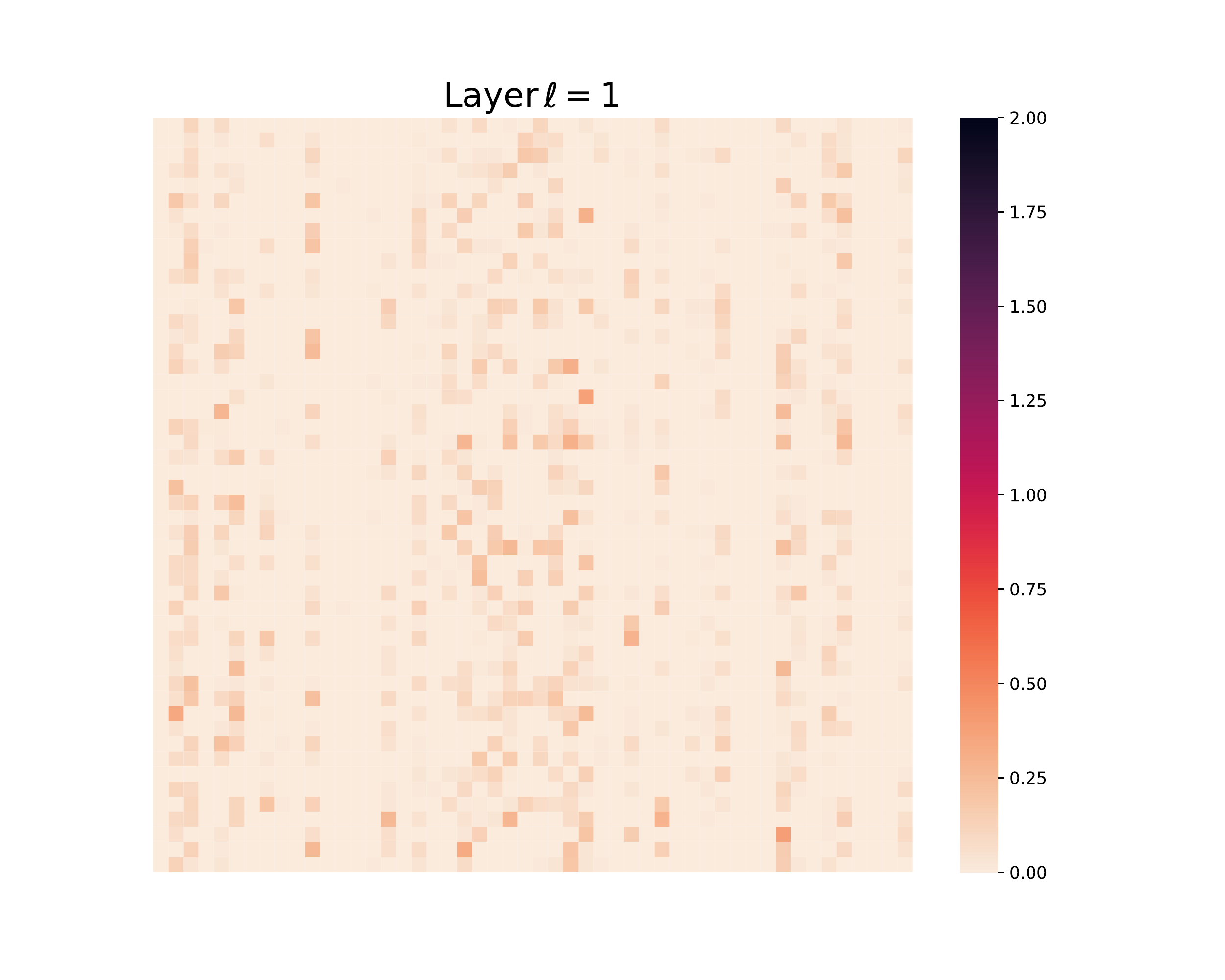}
     \end{subfigure}
     \begin{subfigure}[b]{0.22\textwidth}
         \centering
    \includegraphics[width=\textwidth]{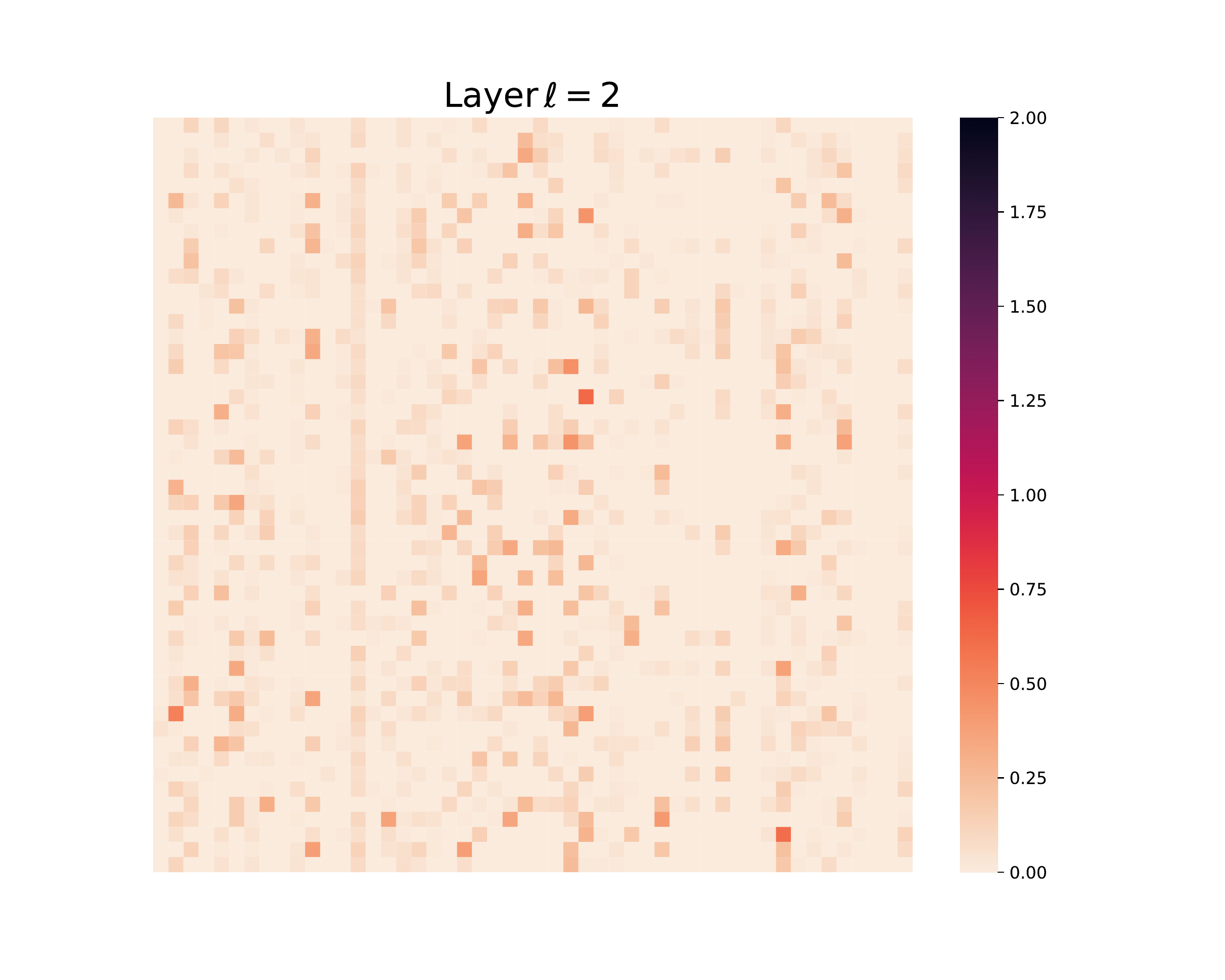}
     \end{subfigure}
     \begin{subfigure}[b]{0.22\textwidth}
         \centering
    \includegraphics[width=\textwidth]{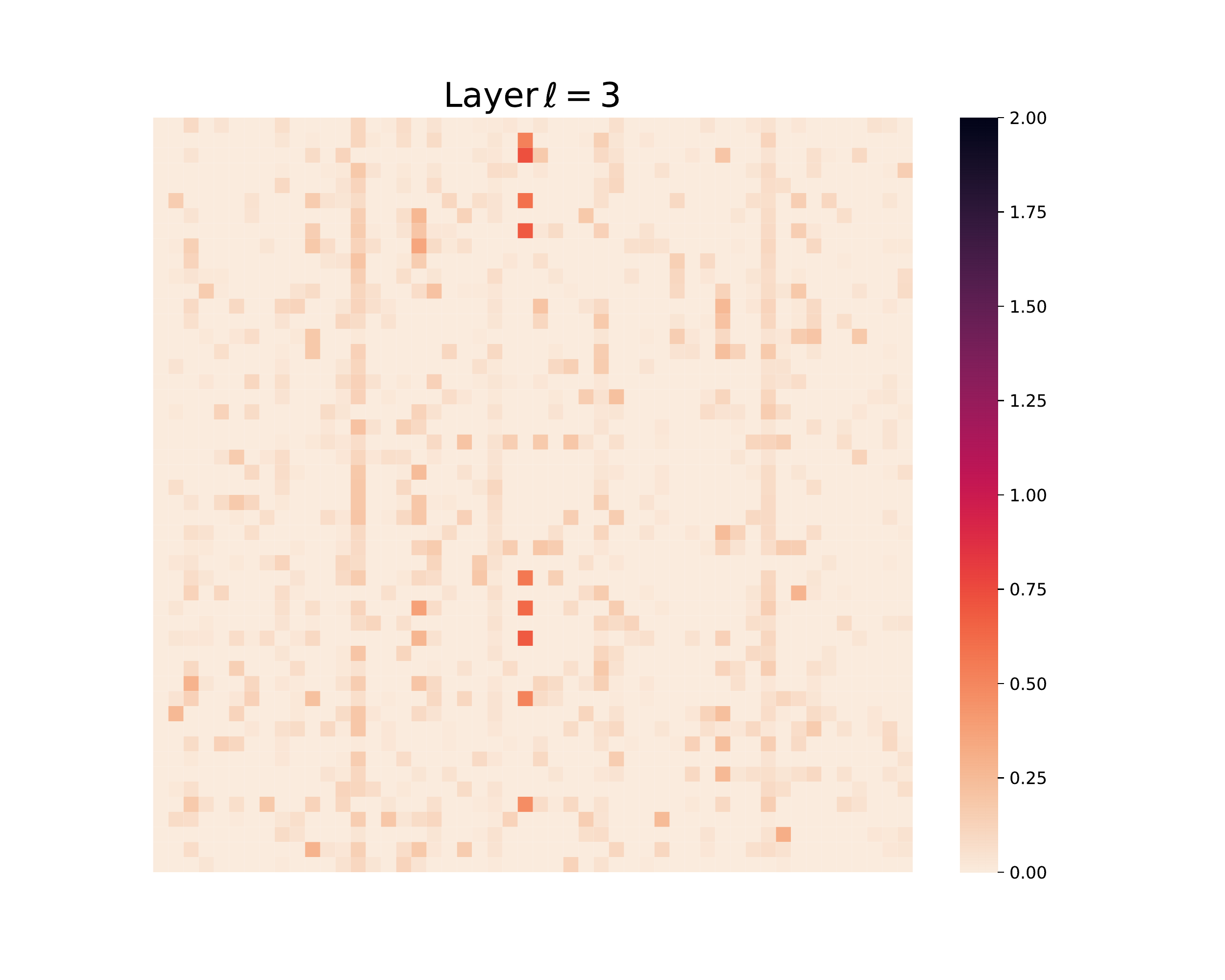}
     \end{subfigure}
     \begin{subfigure}[b]{0.22\textwidth}
         \centering
    \includegraphics[width=\textwidth]{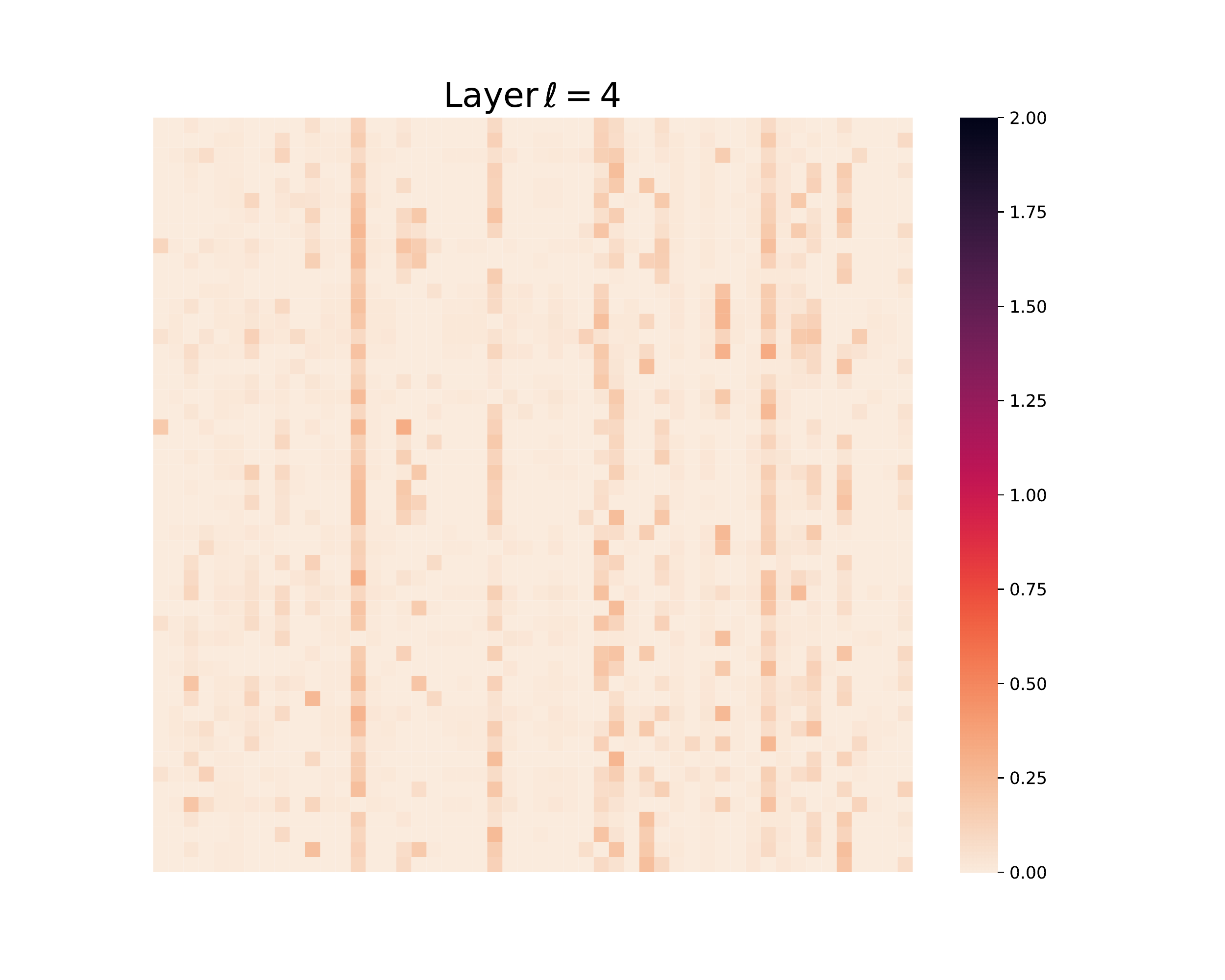}
     \end{subfigure}
     \begin{subfigure}[b]{0.22\textwidth}
         \centering
    \includegraphics[width=\textwidth]{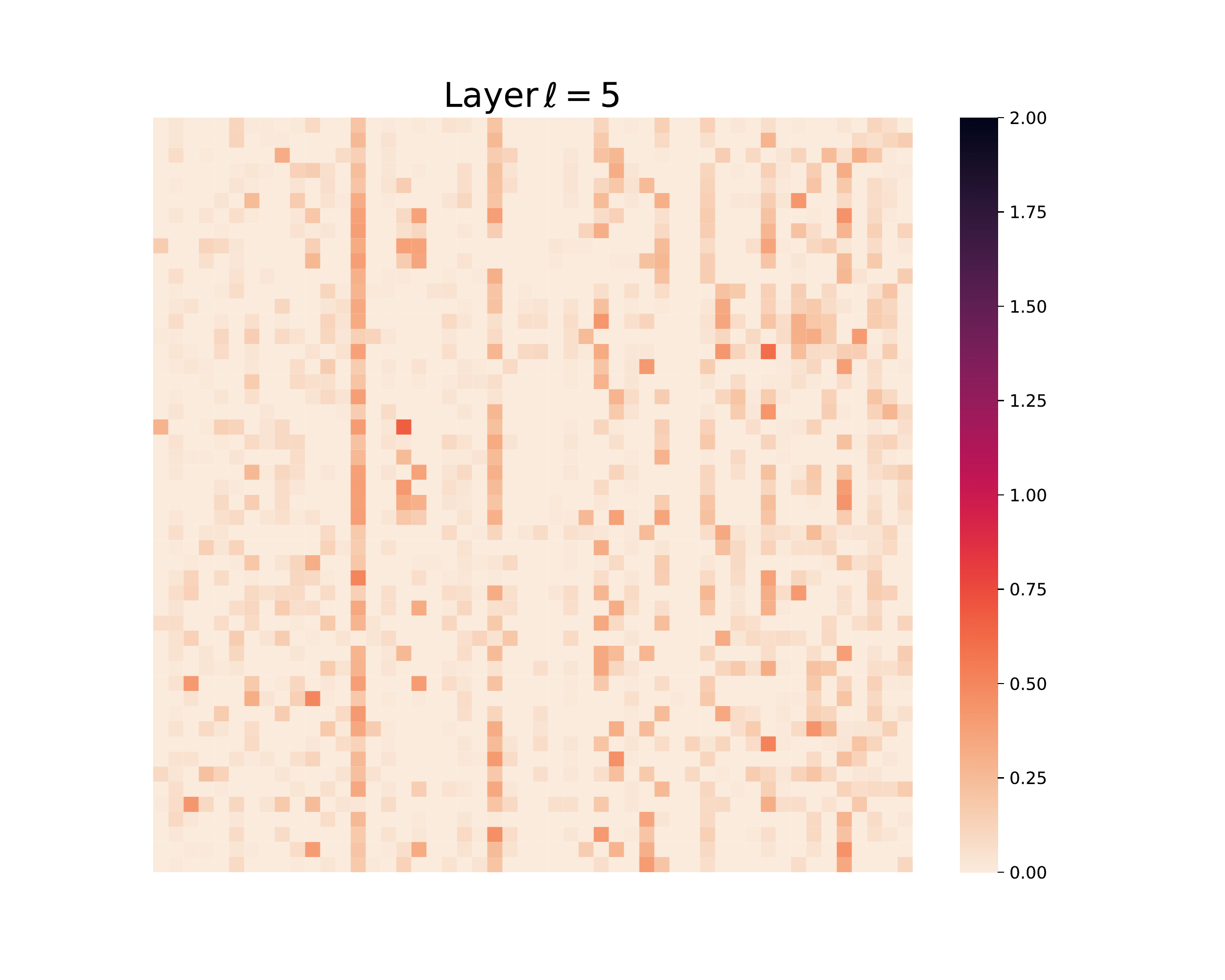}
     \end{subfigure}
     \begin{subfigure}[b]{0.22\textwidth}
         \centering
    \includegraphics[width=\textwidth]{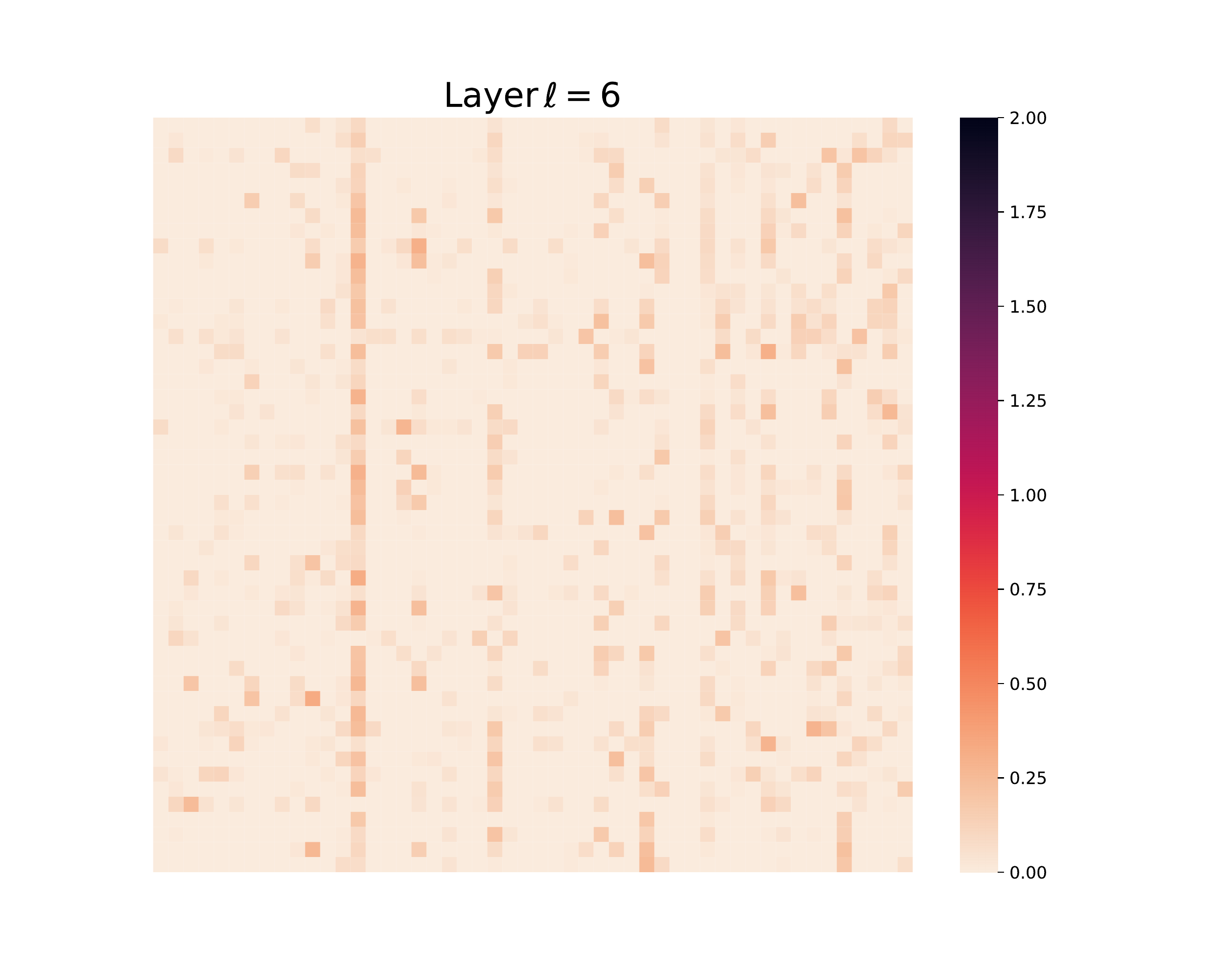}
     \end{subfigure}
     \begin{subfigure}[b]{0.22\textwidth}
         \centering
    \includegraphics[width=\textwidth]{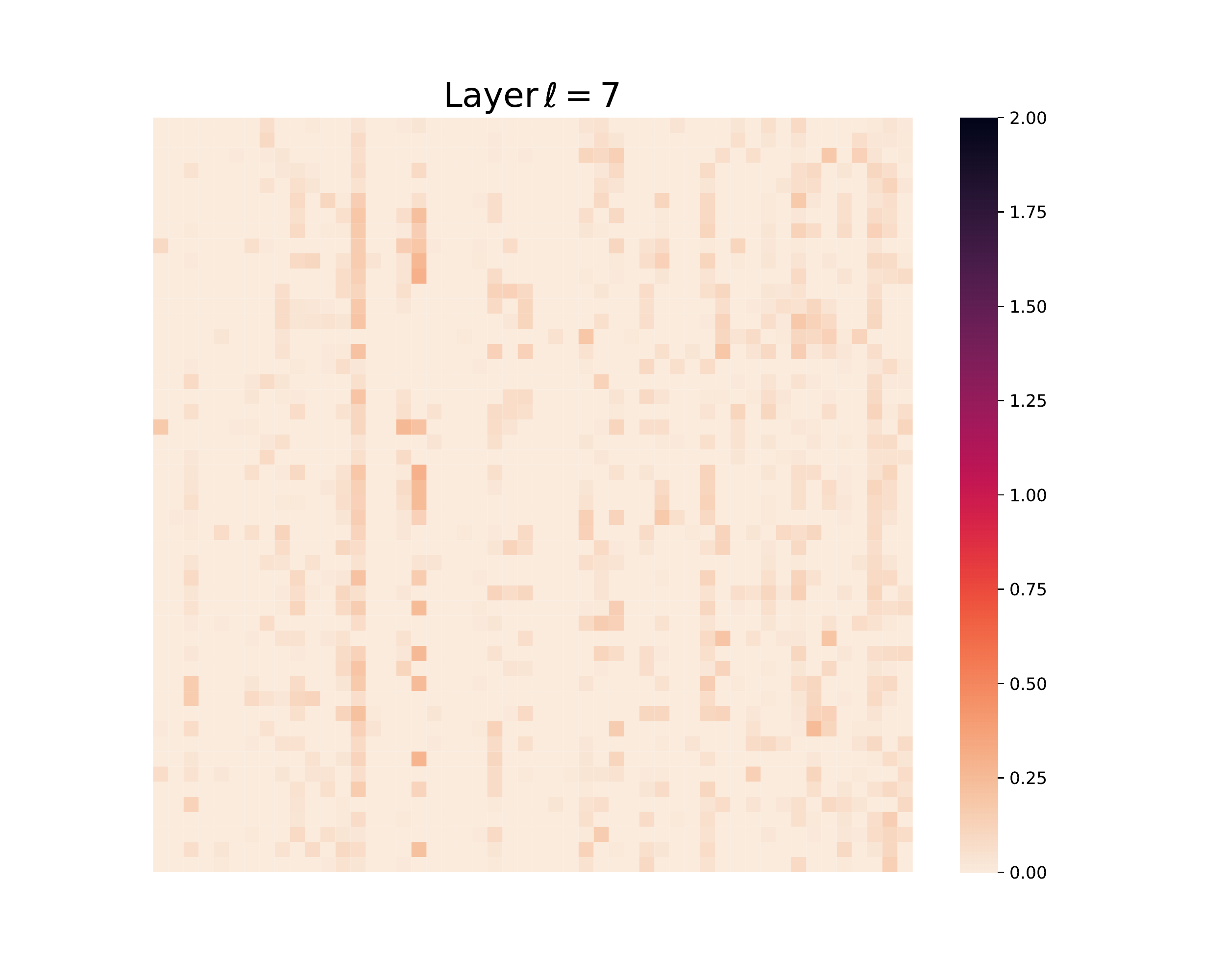}
     \end{subfigure}
     \begin{subfigure}[b]{0.22\textwidth}
         \centering
    \includegraphics[width=\textwidth]{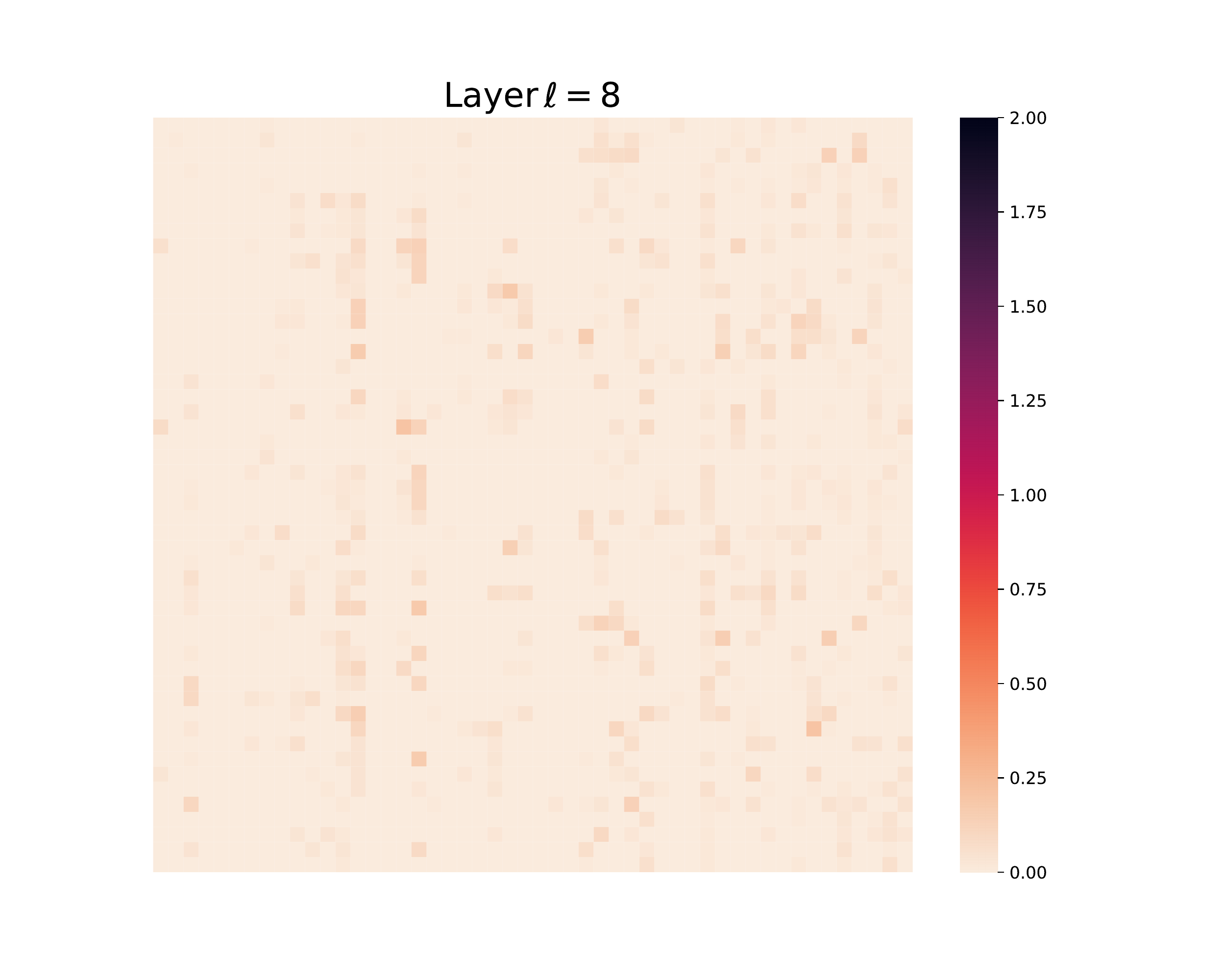}
     \end{subfigure}
     \begin{subfigure}[b]{0.22\textwidth}
         \centering
    \includegraphics[width=\textwidth]{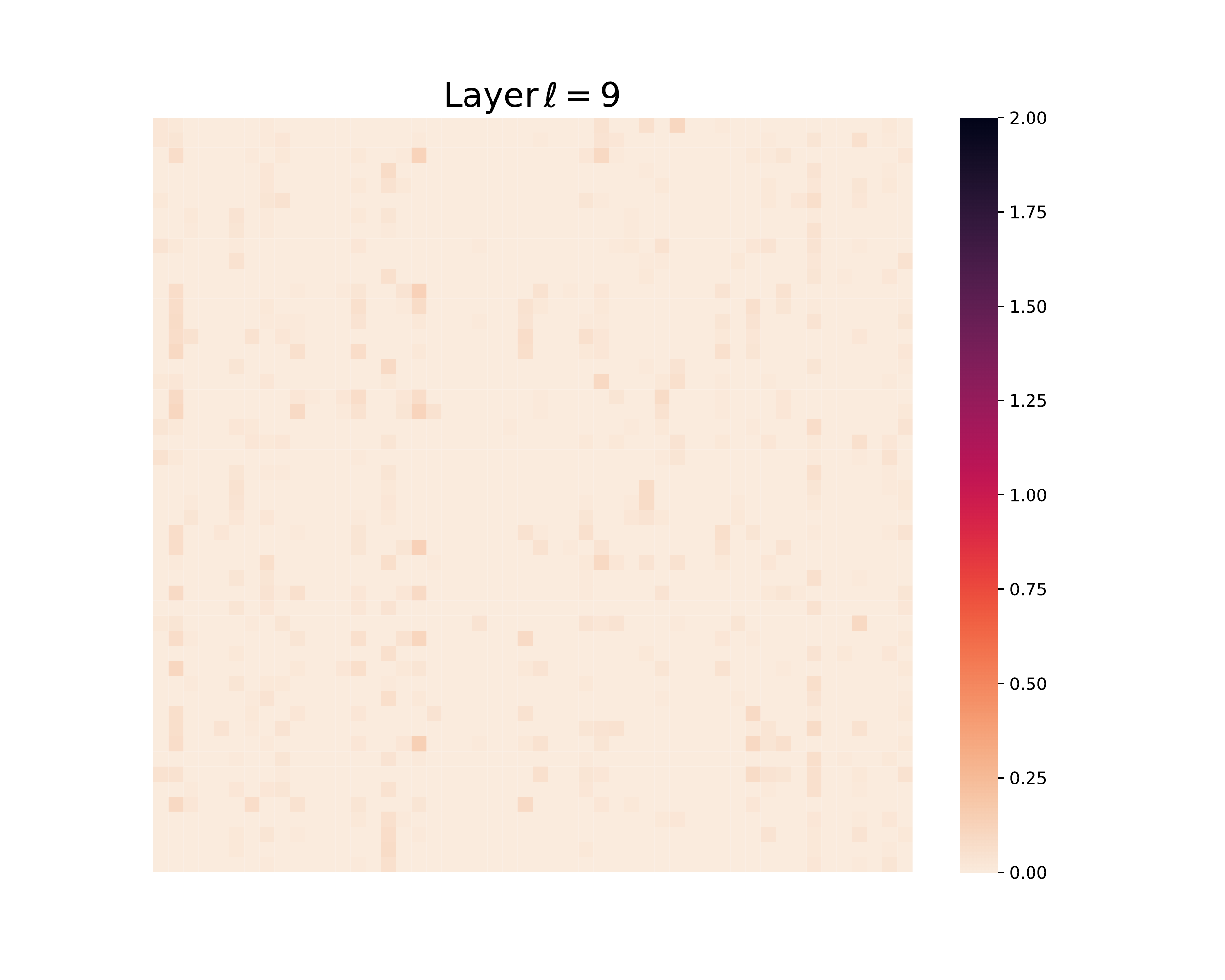}
     \end{subfigure}
     \begin{subfigure}[b]{0.22\textwidth}
         \centering
    \includegraphics[width=\textwidth]{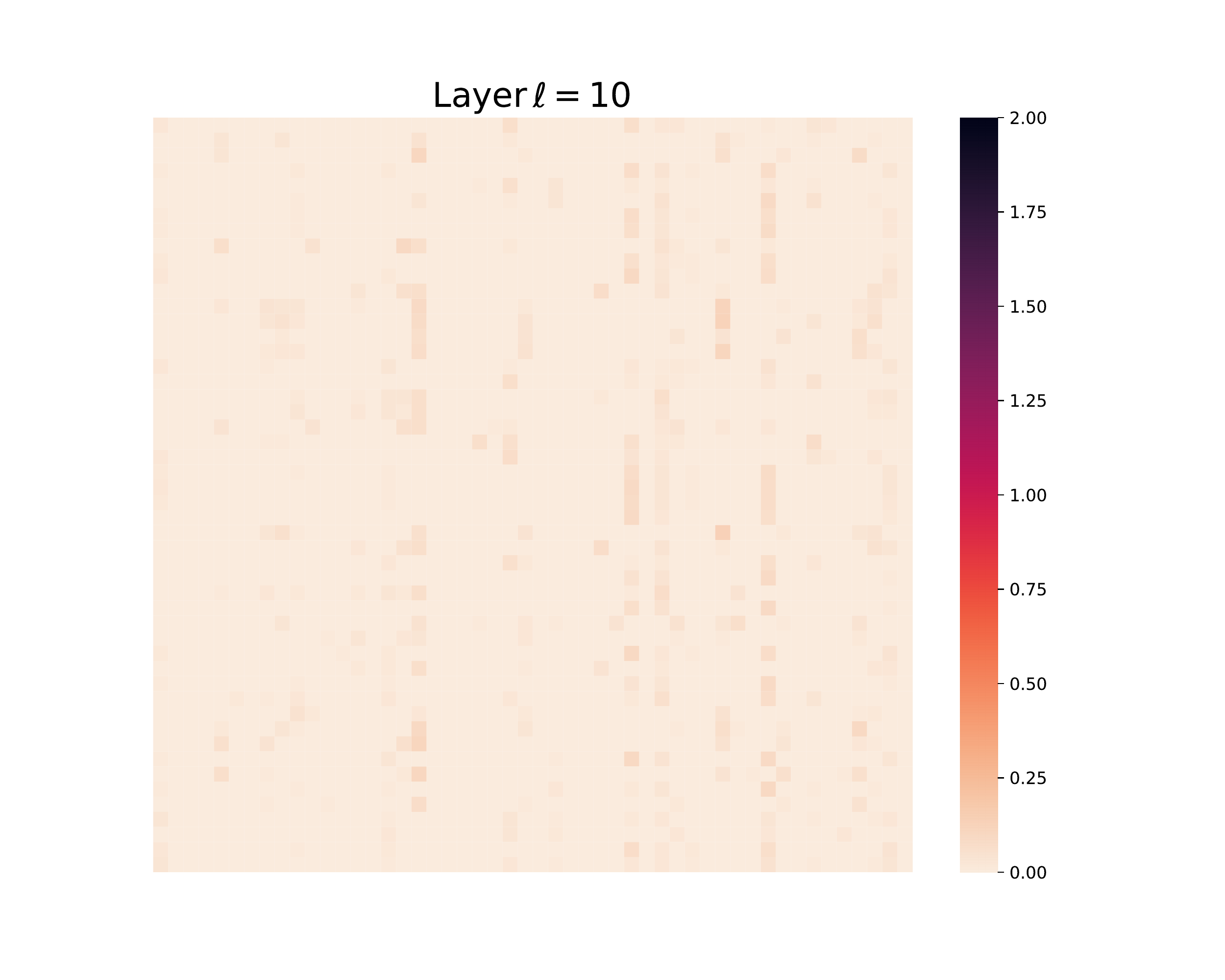}
     \end{subfigure}
     \begin{subfigure}[b]{0.22\textwidth}
         \centering
    \includegraphics[width=\textwidth]{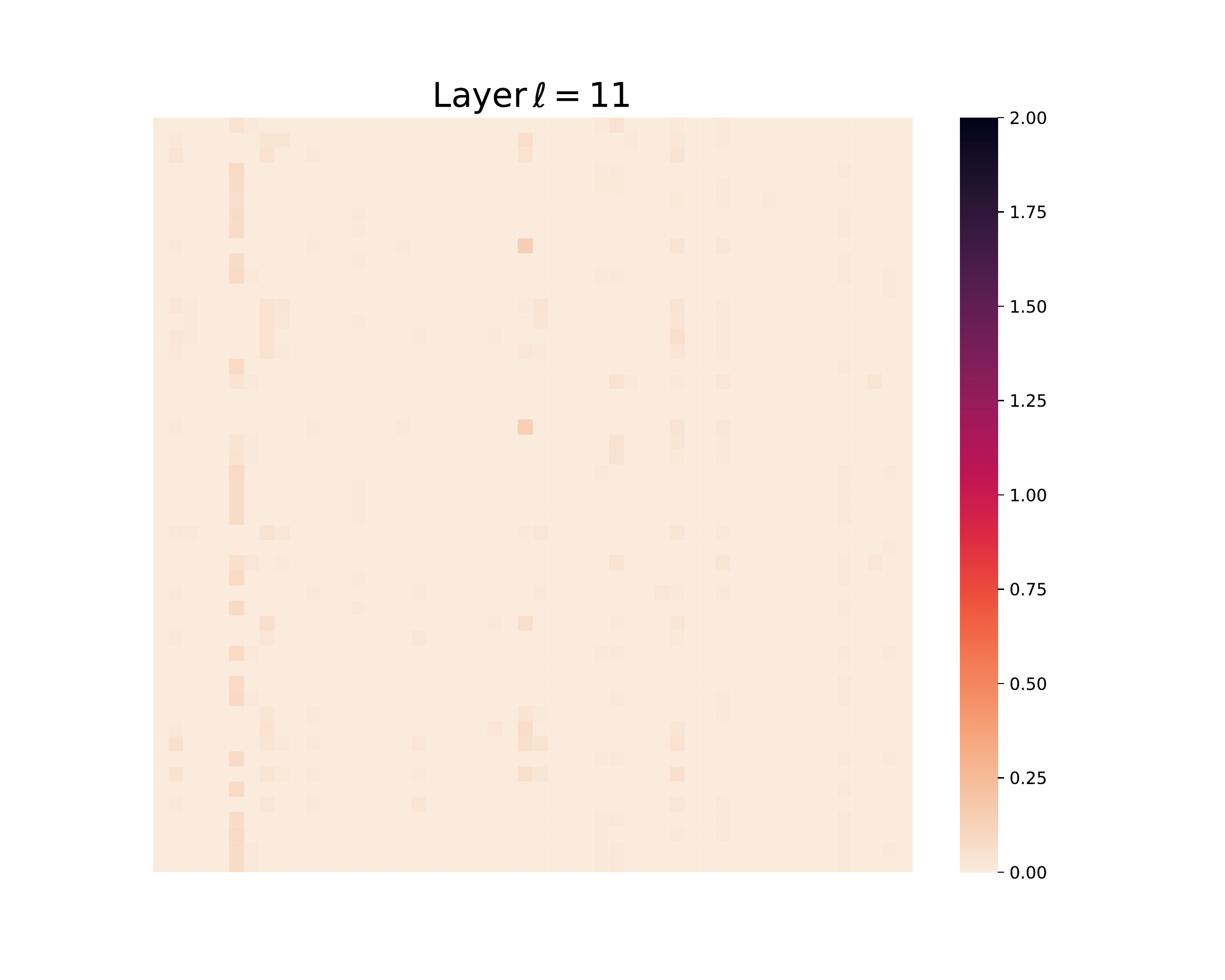}
     \end{subfigure}
     \begin{subfigure}[b]{0.22\textwidth}
         \centering
    \includegraphics[width=\textwidth]{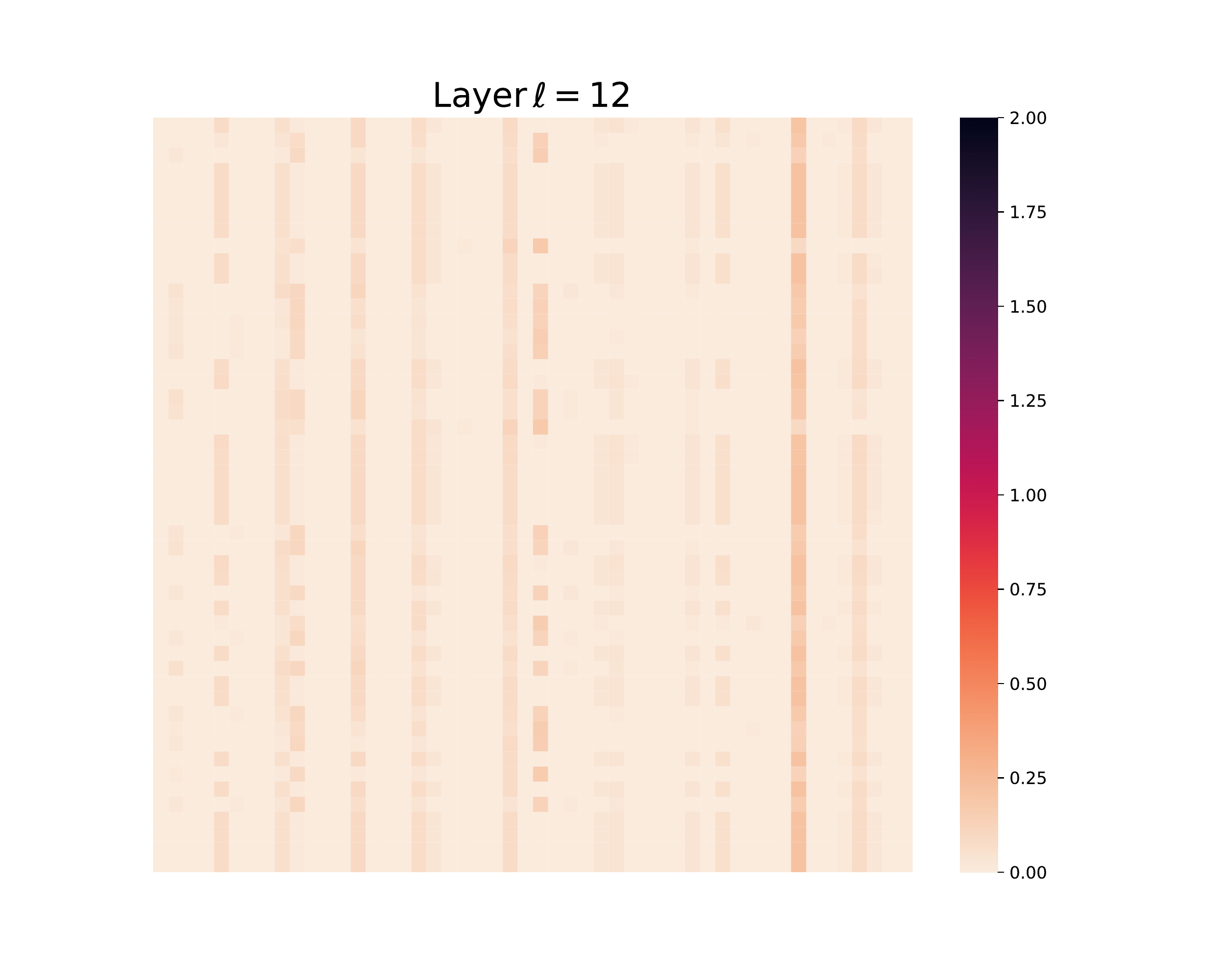}
     \end{subfigure}
        \caption{Visualizing layer-wise token $\Z^{\ell}$ representations at each layer $\ell$. To enhance the visual clarity, we randomly extract a 50$\times$50 sub-matrix from $\Z^{\ell}$ for display purposes. (\textit{Sample 2})}
        \label{fig:appendix-exp-ista-sparsity-heatmap-sample2}
        \vspace{-0.1in}
\end{figure}

\begin{figure}[ht]
     \centering
     \begin{subfigure}[b]{0.22\textwidth}
         \centering
    \includegraphics[width=\textwidth]{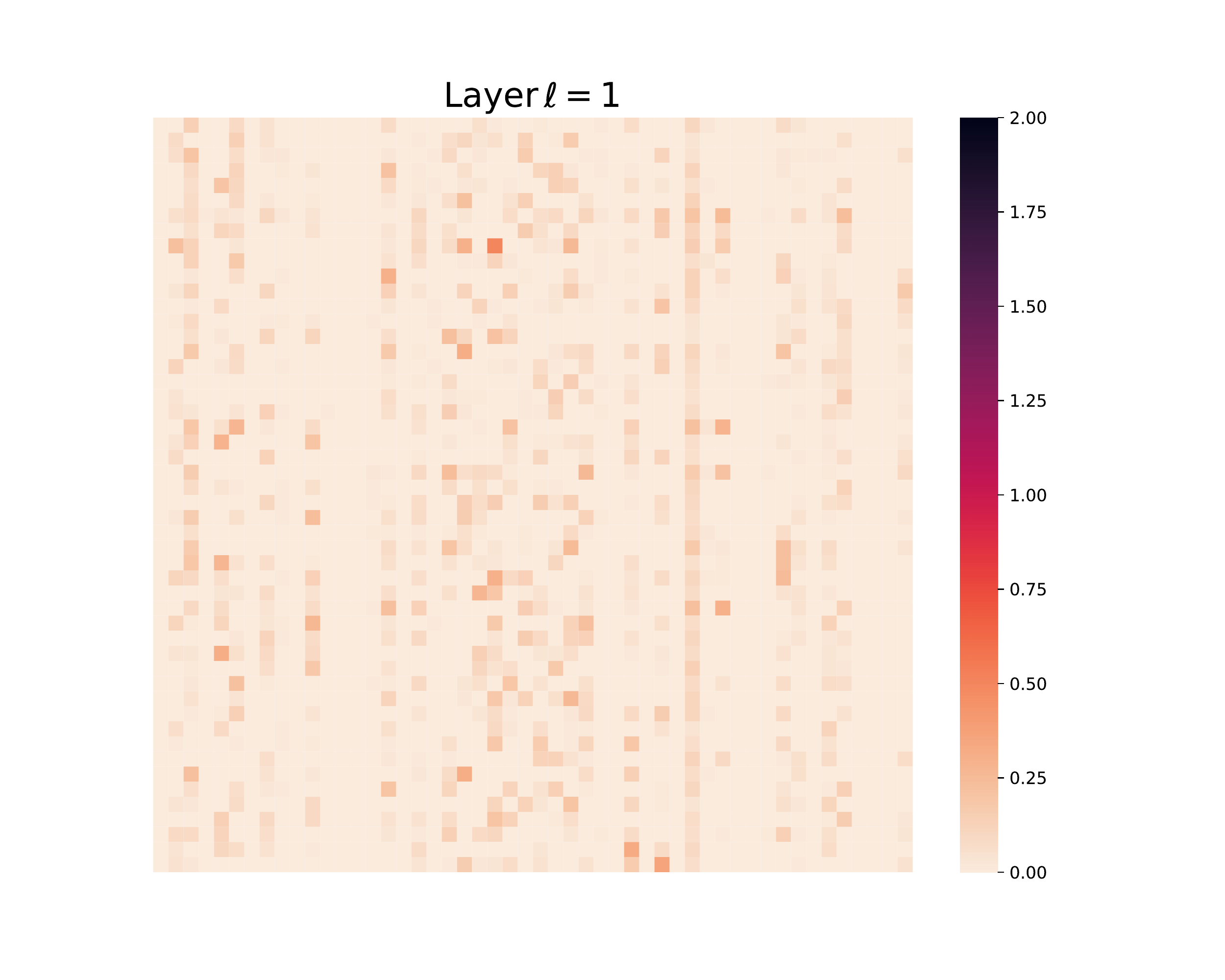}
     \end{subfigure}
     \begin{subfigure}[b]{0.22\textwidth}
         \centering
    \includegraphics[width=\textwidth]{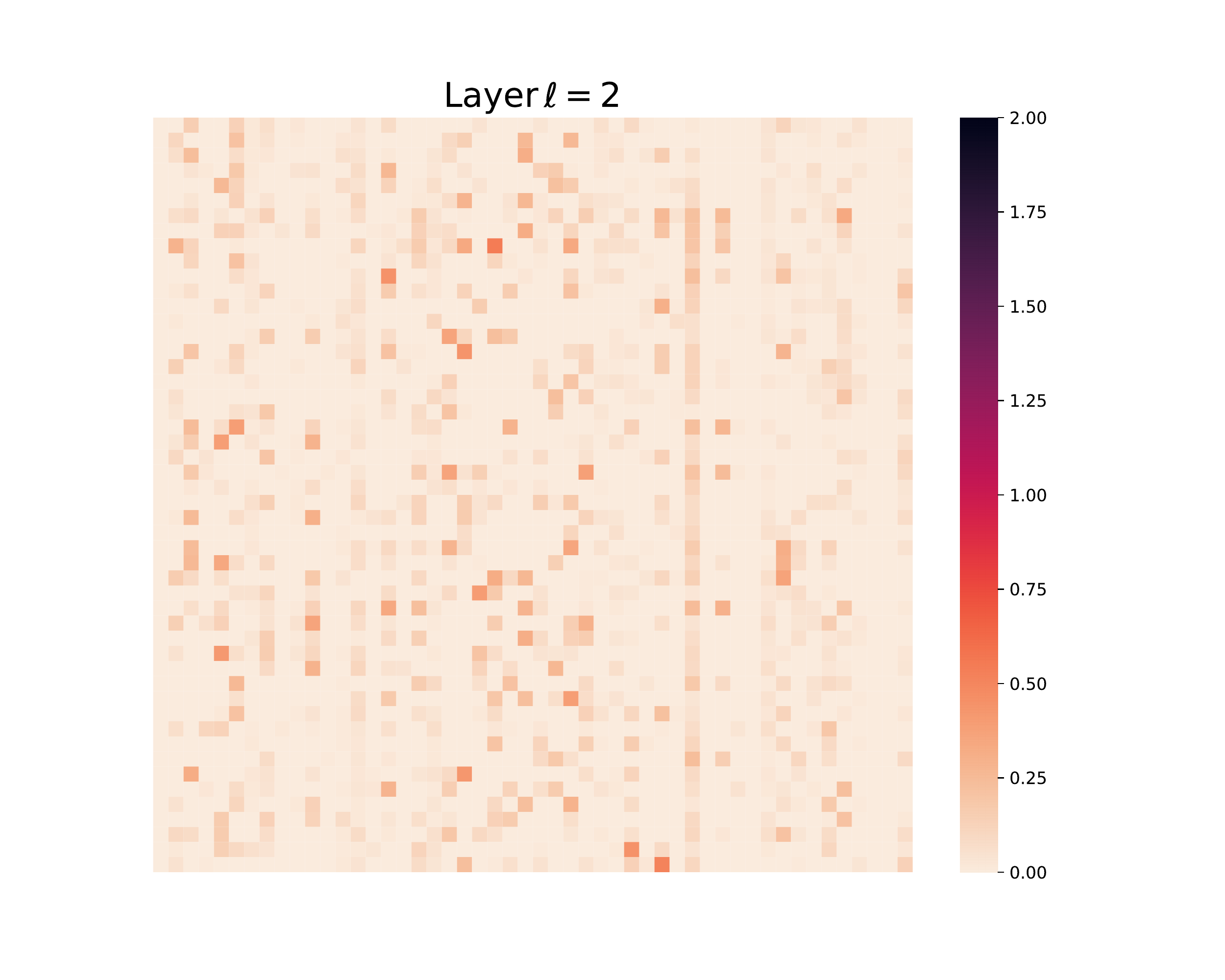}
     \end{subfigure}
     \begin{subfigure}[b]{0.22\textwidth}
         \centering
    \includegraphics[width=\textwidth]{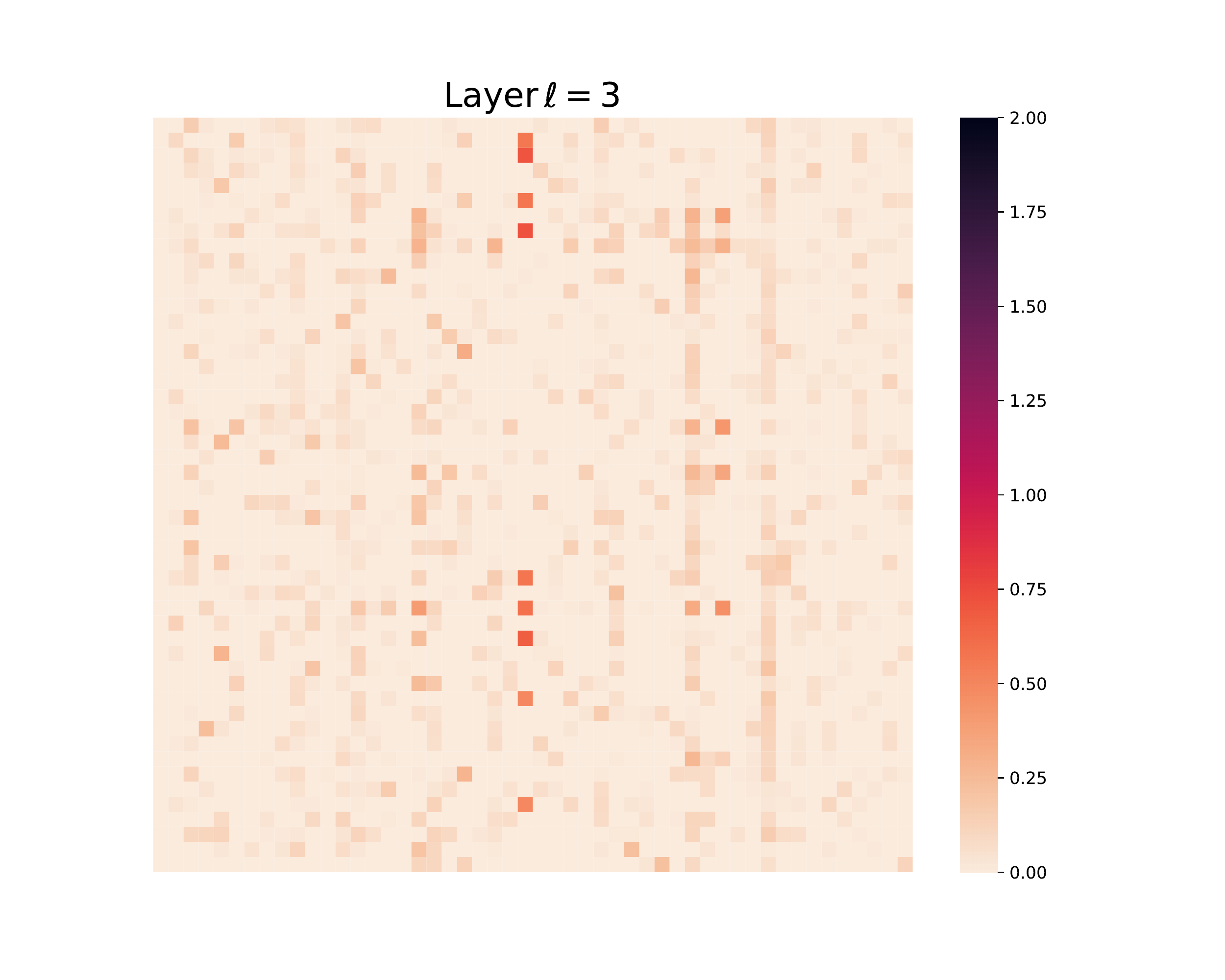}
     \end{subfigure}
     \begin{subfigure}[b]{0.22\textwidth}
         \centering
    \includegraphics[width=\textwidth]{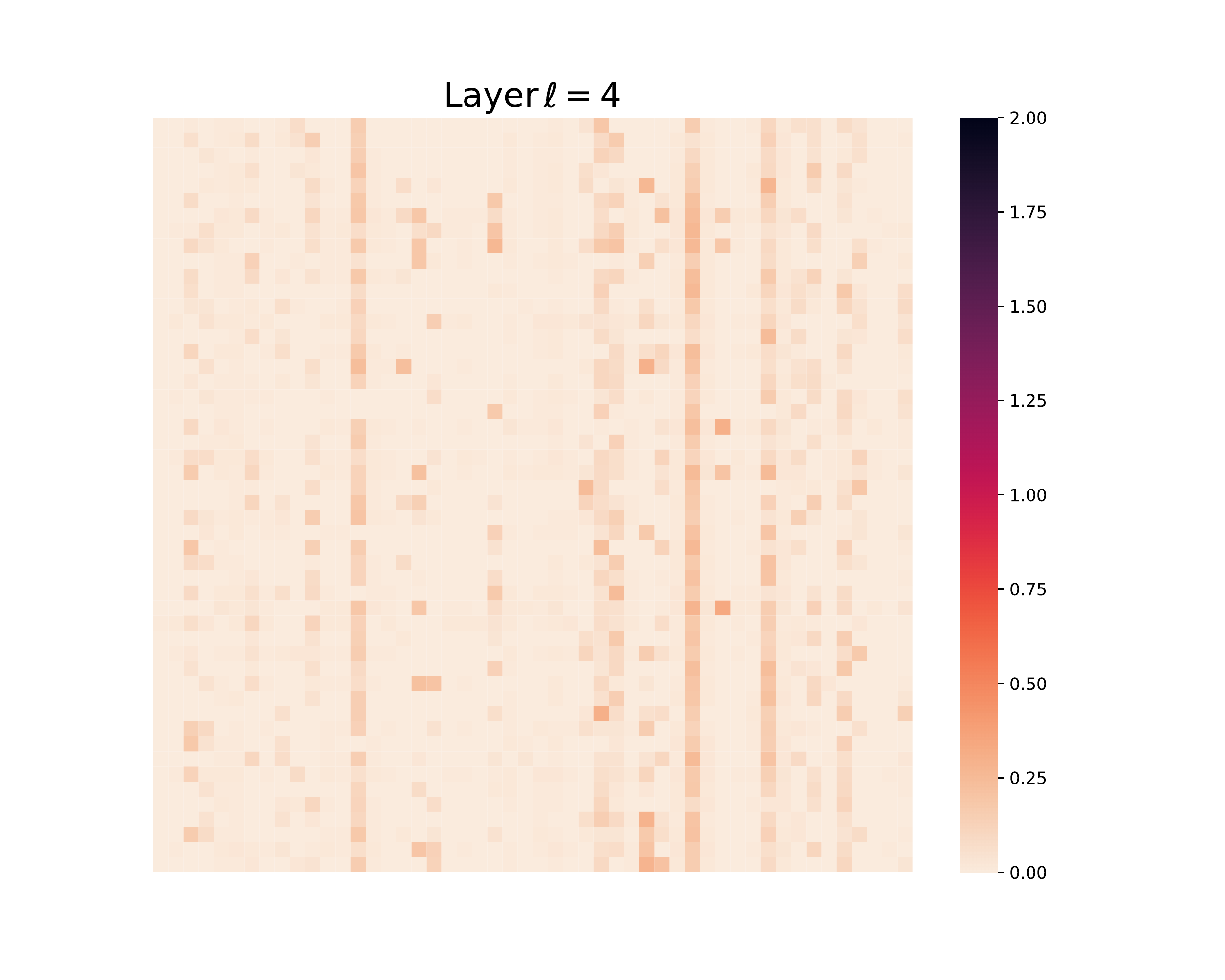}
     \end{subfigure}
     \begin{subfigure}[b]{0.22\textwidth}
         \centering
    \includegraphics[width=\textwidth]{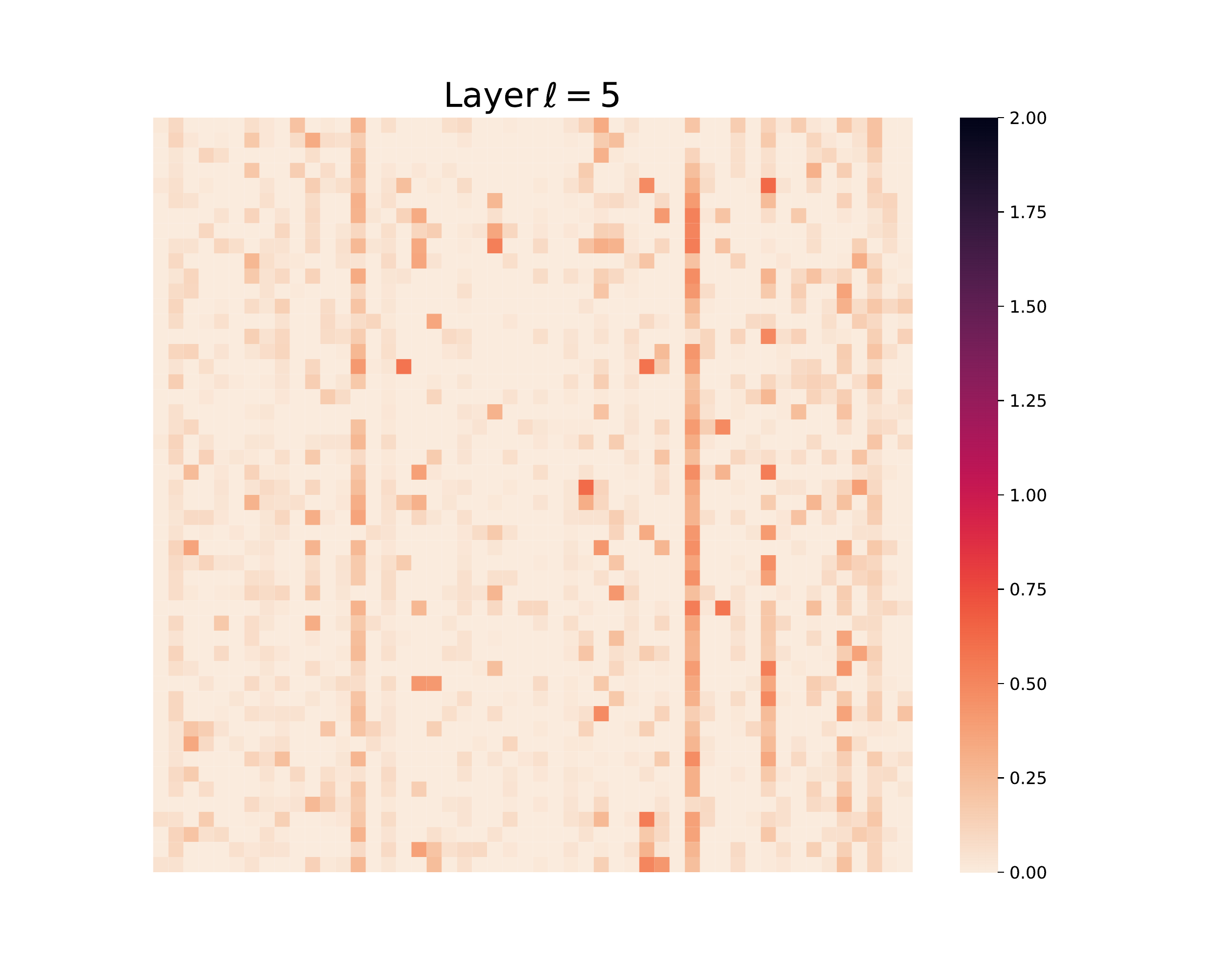}
     \end{subfigure}
     \begin{subfigure}[b]{0.22\textwidth}
         \centering
    \includegraphics[width=\textwidth]{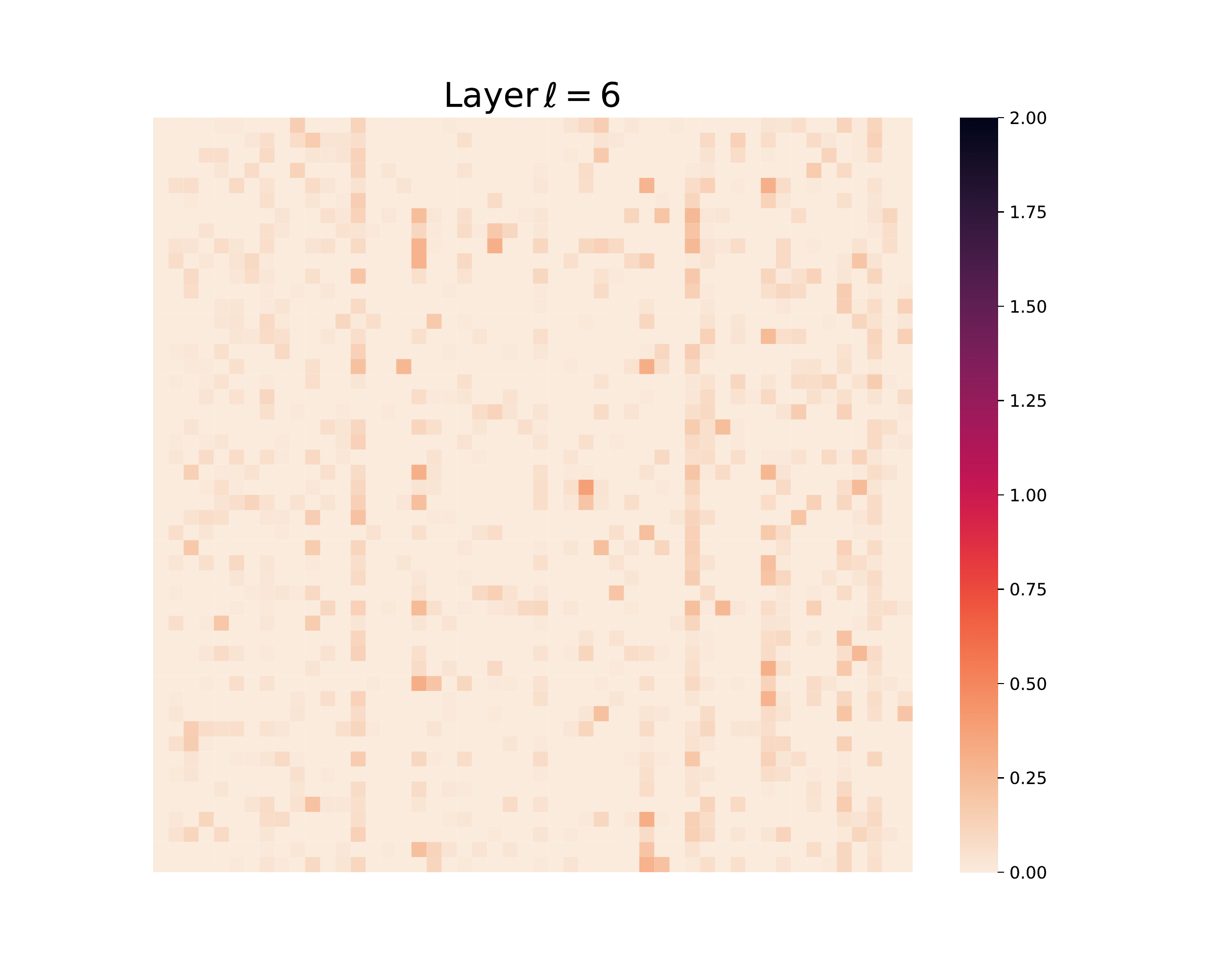}
     \end{subfigure}
     \begin{subfigure}[b]{0.22\textwidth}
         \centering
    \includegraphics[width=\textwidth]{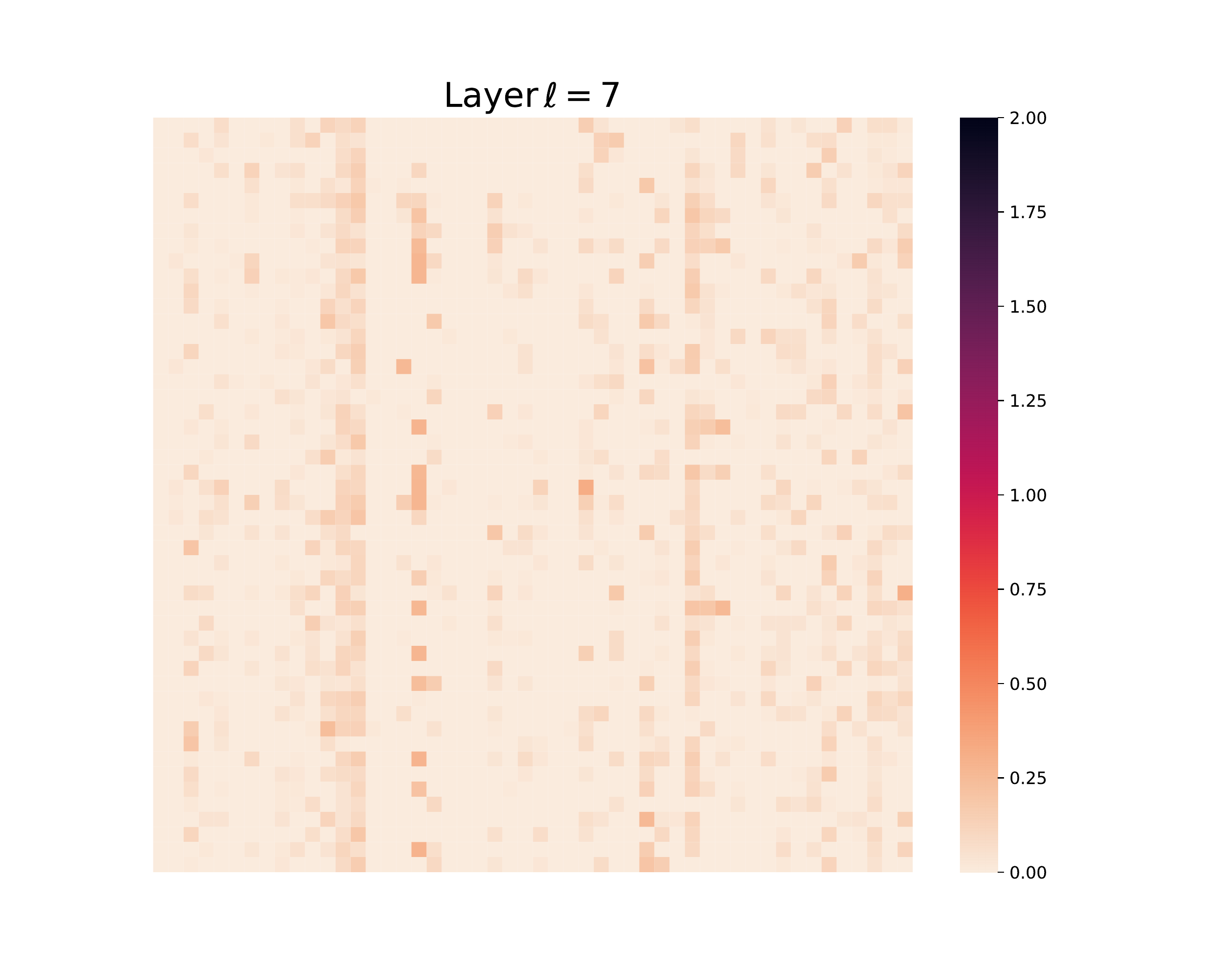}
     \end{subfigure}
     \begin{subfigure}[b]{0.22\textwidth}
         \centering
    \includegraphics[width=\textwidth]{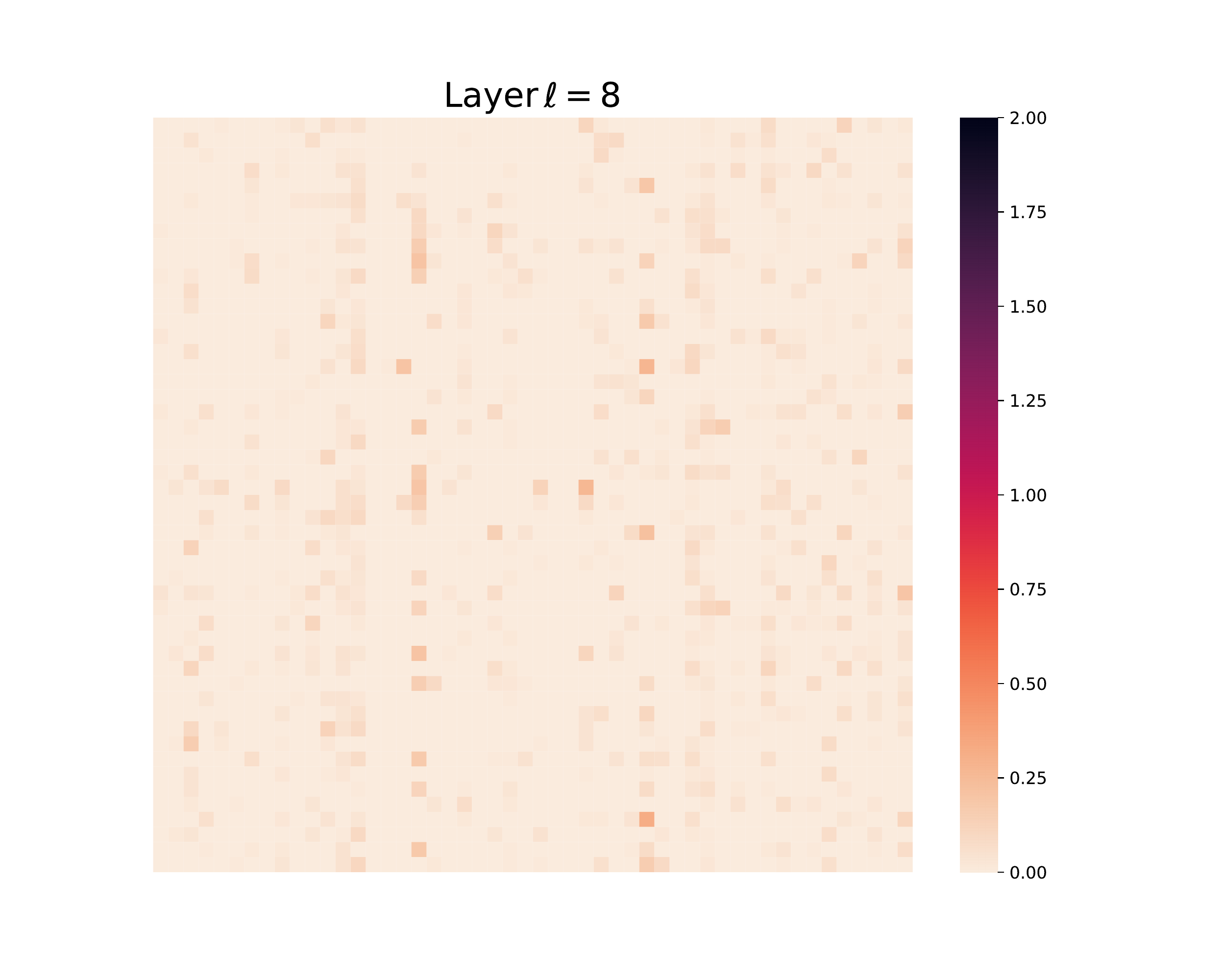}
     \end{subfigure}
     \begin{subfigure}[b]{0.22\textwidth}
         \centering
    \includegraphics[width=\textwidth]{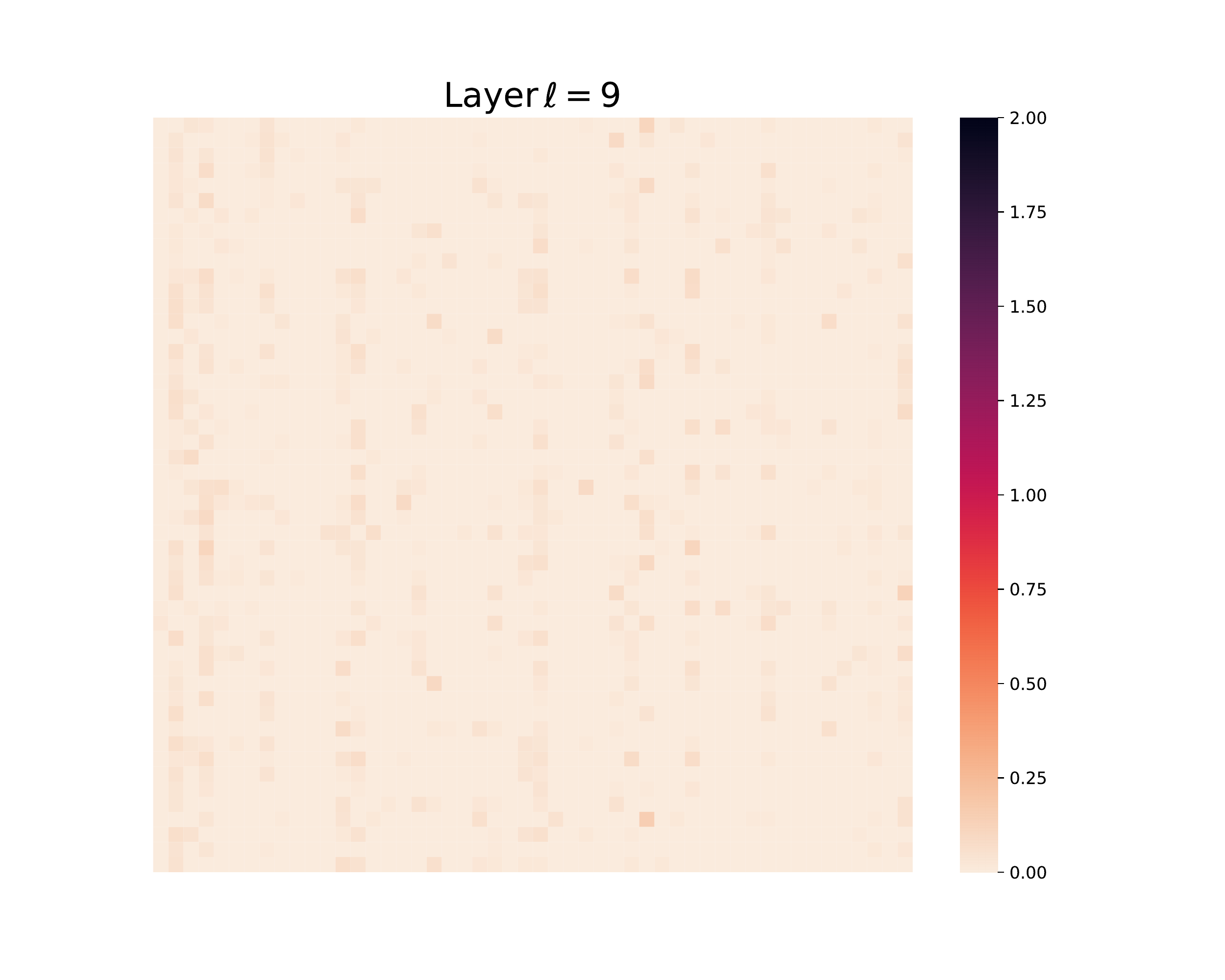}
     \end{subfigure}
     \begin{subfigure}[b]{0.22\textwidth}
         \centering
    \includegraphics[width=\textwidth]{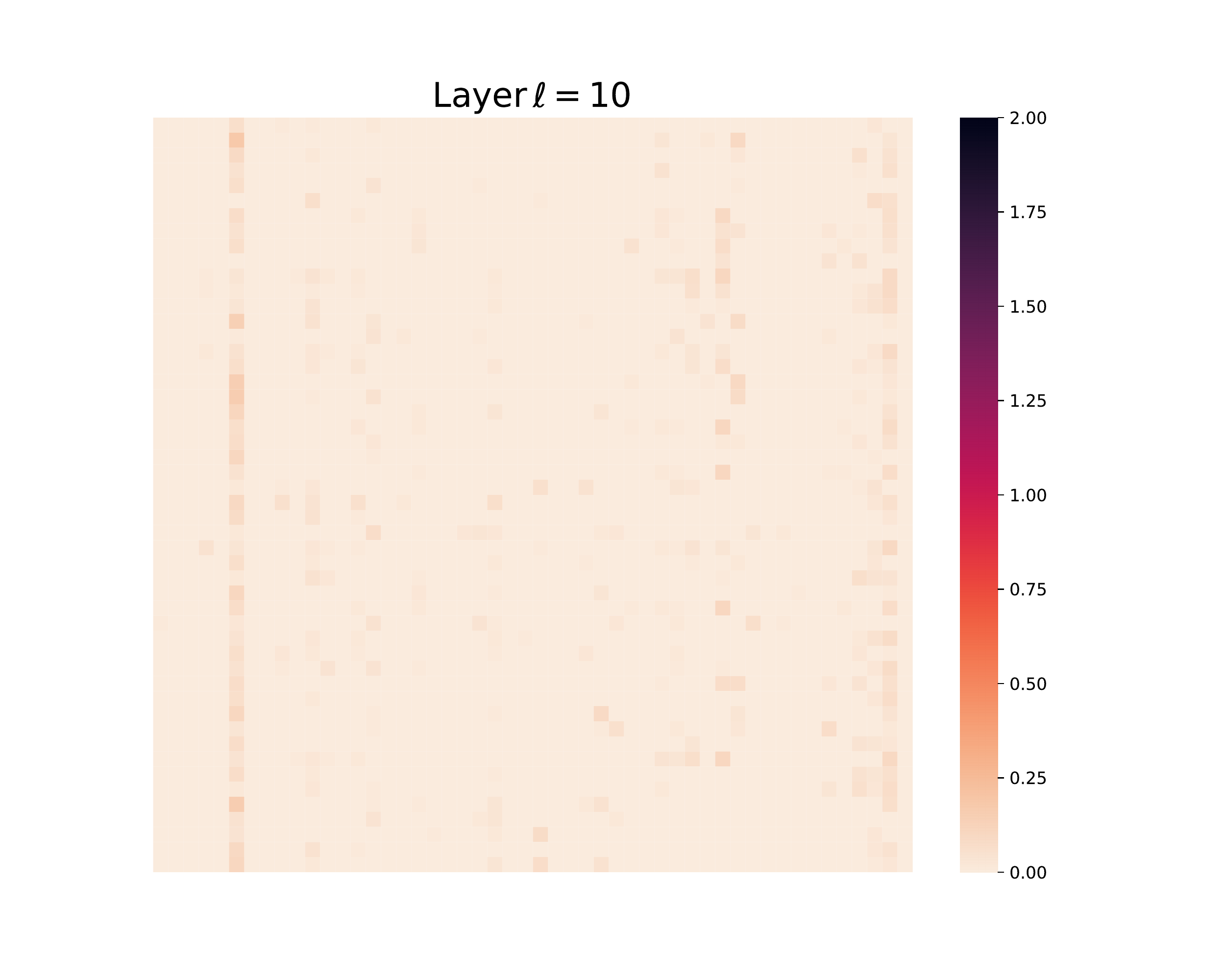}
     \end{subfigure}
     \begin{subfigure}[b]{0.22\textwidth}
         \centering
    \includegraphics[width=\textwidth]{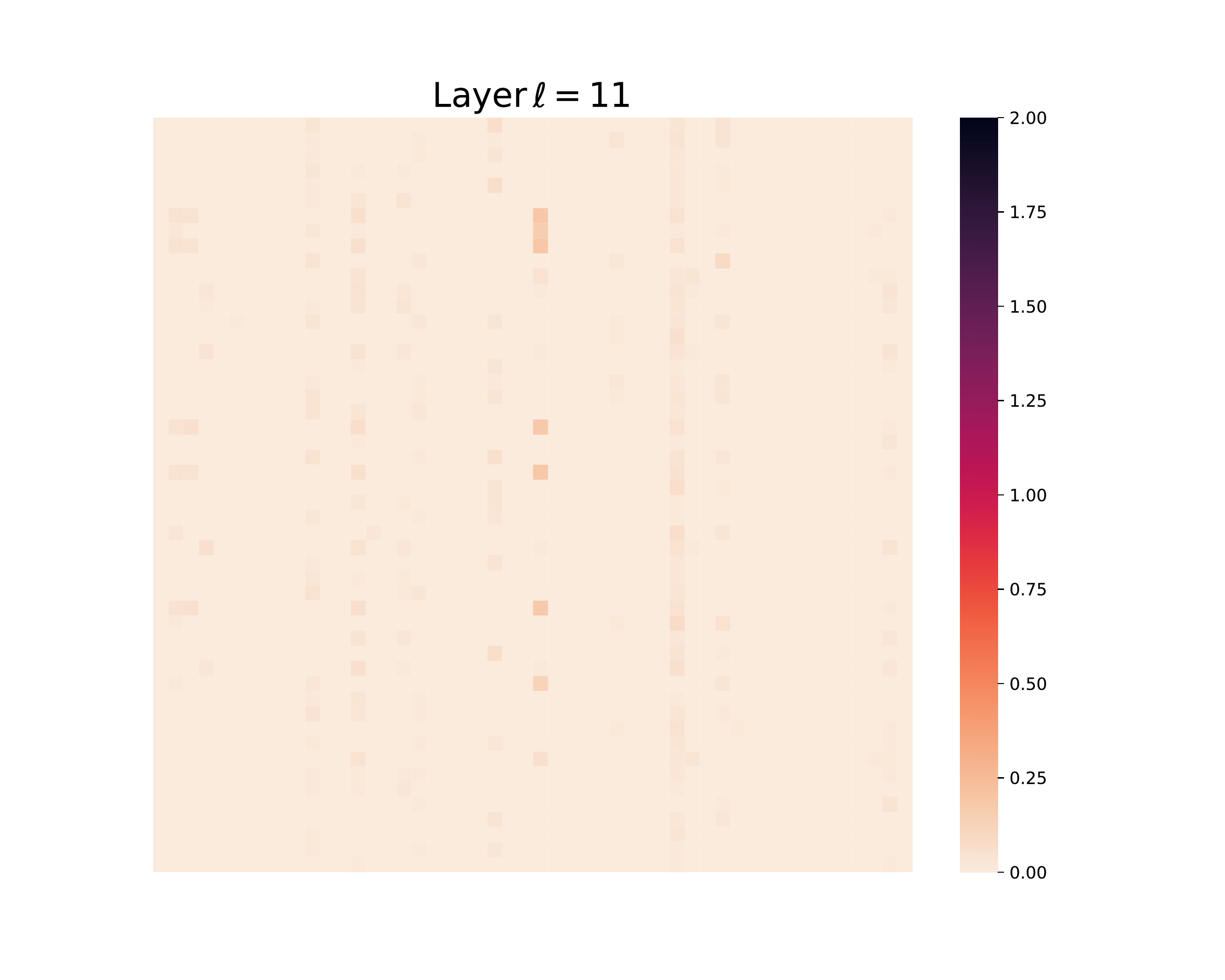}
     \end{subfigure}
     \begin{subfigure}[b]{0.22\textwidth}
         \centering
    \includegraphics[width=\textwidth]{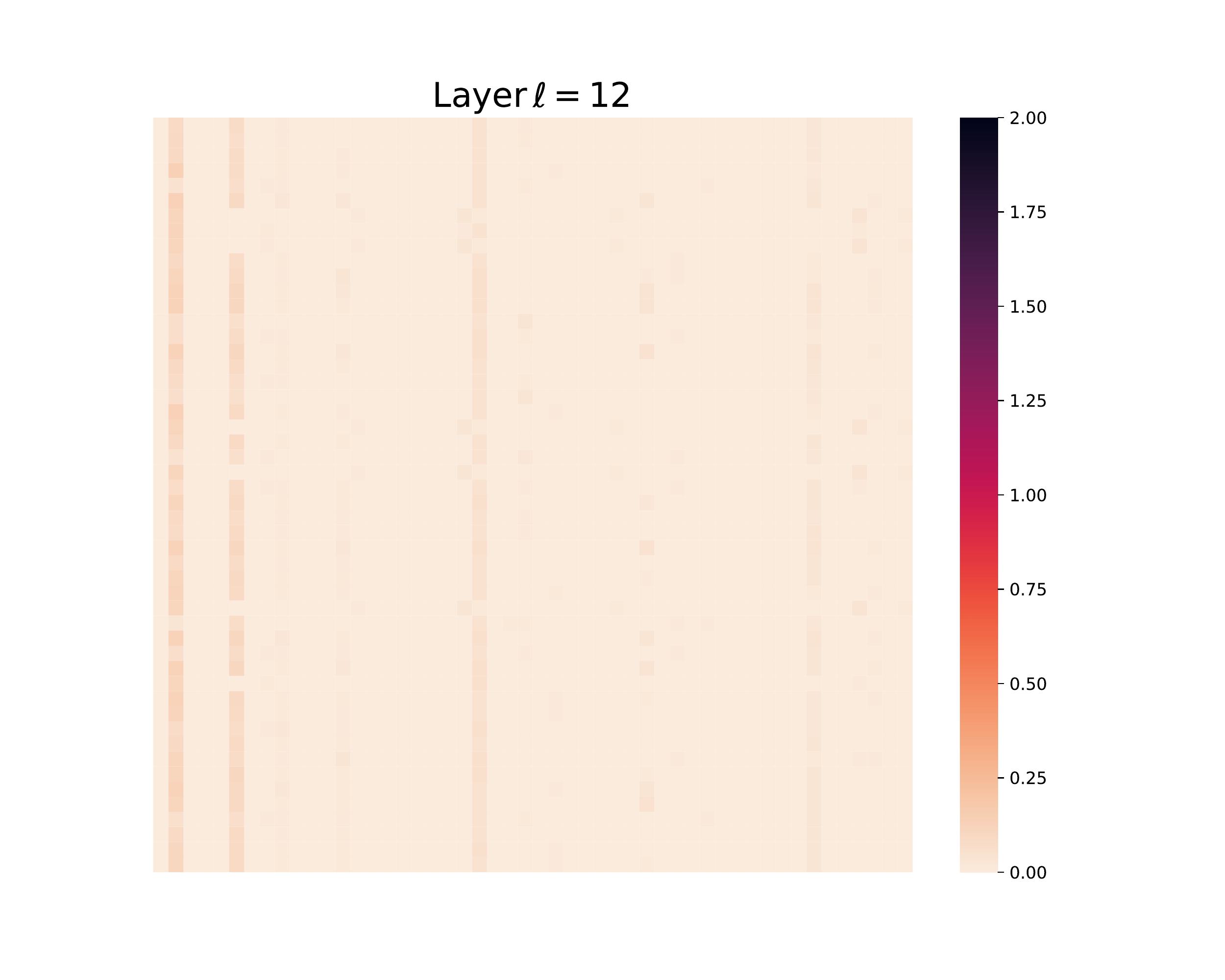}
     \end{subfigure}
        \caption{Visualizing layer-wise token $\Z^{\ell}$ representations at each layer $\ell$. To enhance the visual clarity, we randomly extract a 50$\times$50 sub-matrix from $\Z^{\ell}$ for display purposes. (\textit{Sample 3})}
        \label{fig:appendix-exp-ista-sparsity-heatmap-sample3}
        \vspace{-0.1in}
\end{figure}

\begin{figure}[ht]
     \centering
     \begin{subfigure}[b]{0.22\textwidth}
         \centering
    \includegraphics[width=\textwidth]{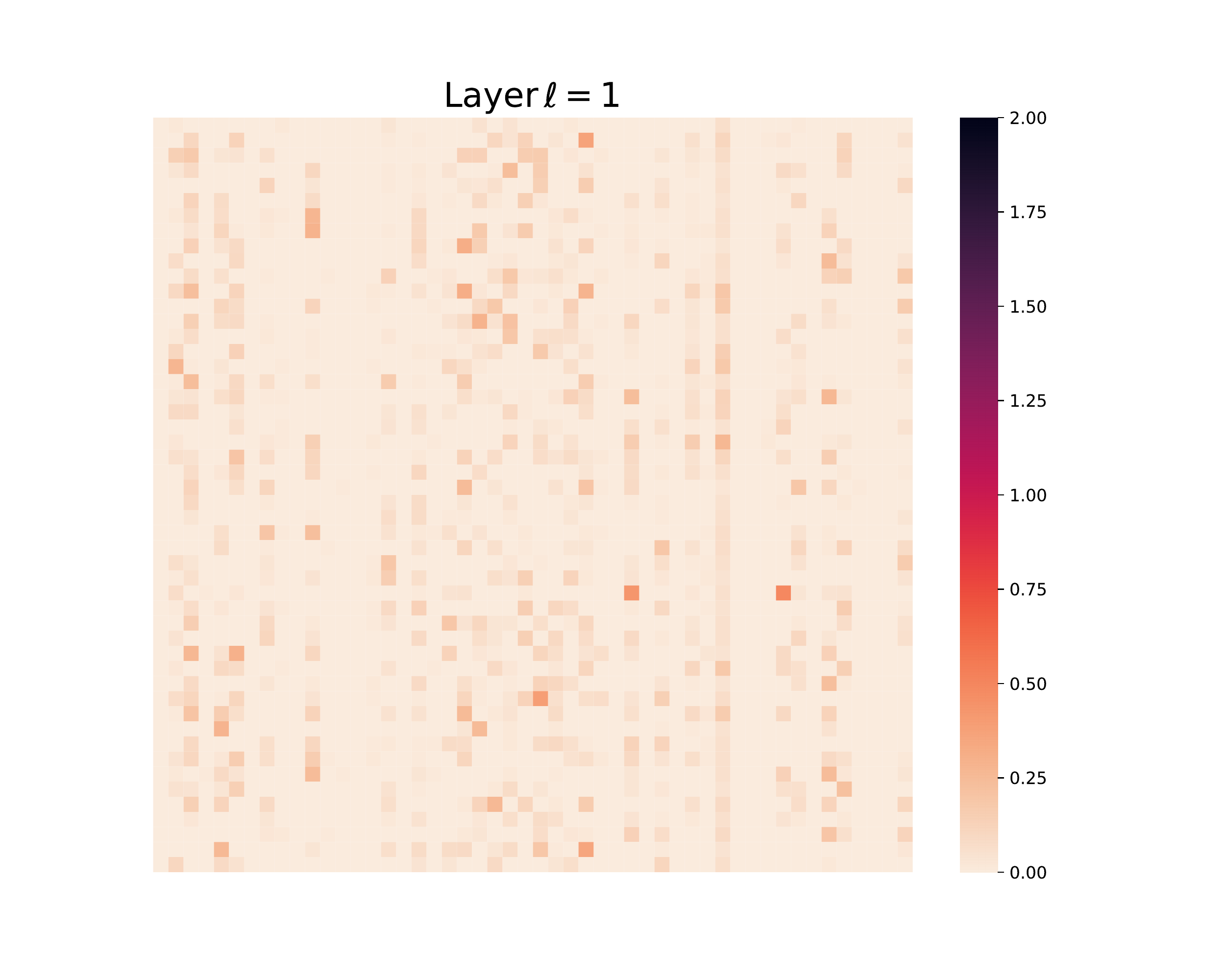}
     \end{subfigure}
     \begin{subfigure}[b]{0.22\textwidth}
         \centering
    \includegraphics[width=\textwidth]{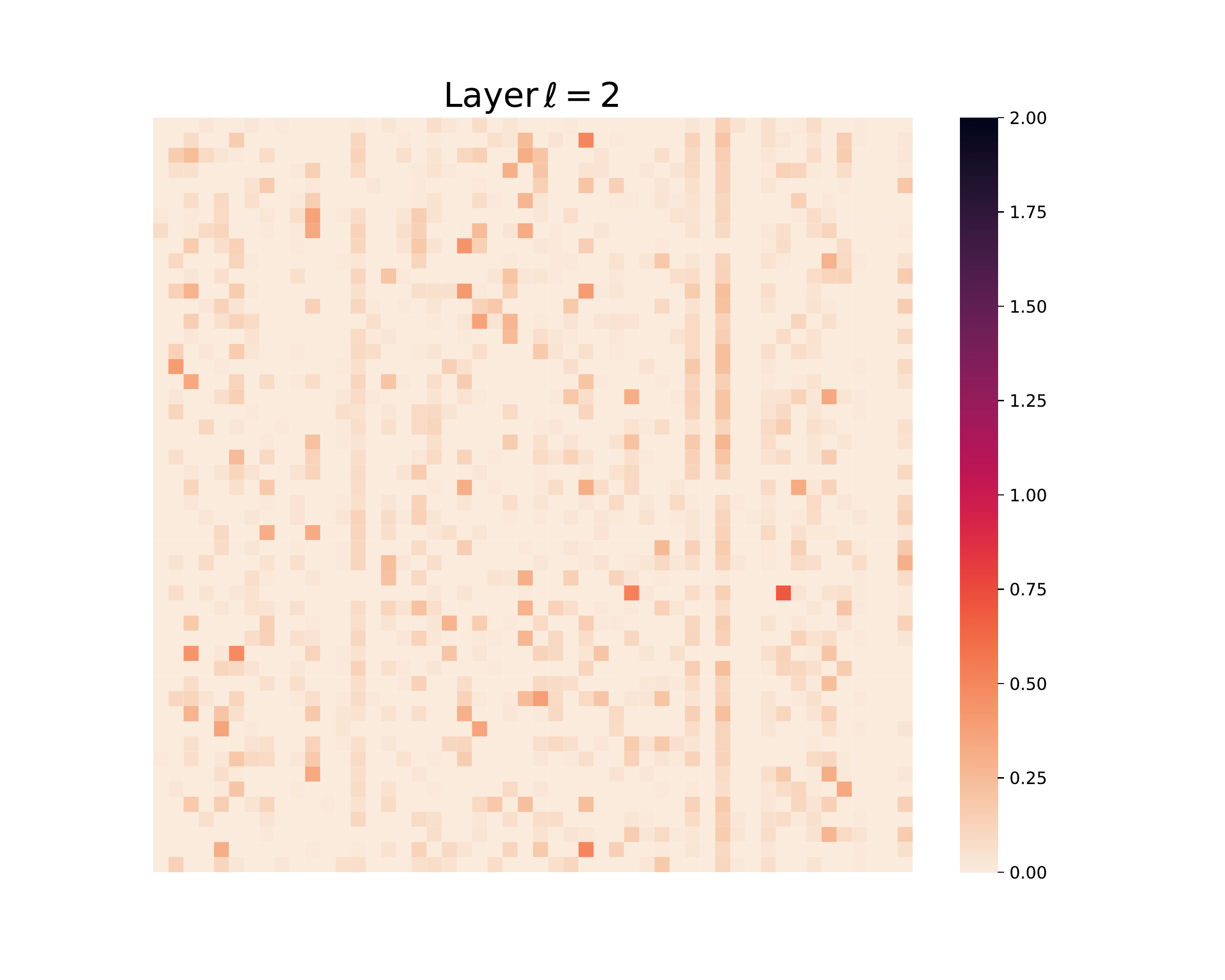}
     \end{subfigure}
     \begin{subfigure}[b]{0.22\textwidth}
         \centering
    \includegraphics[width=\textwidth]{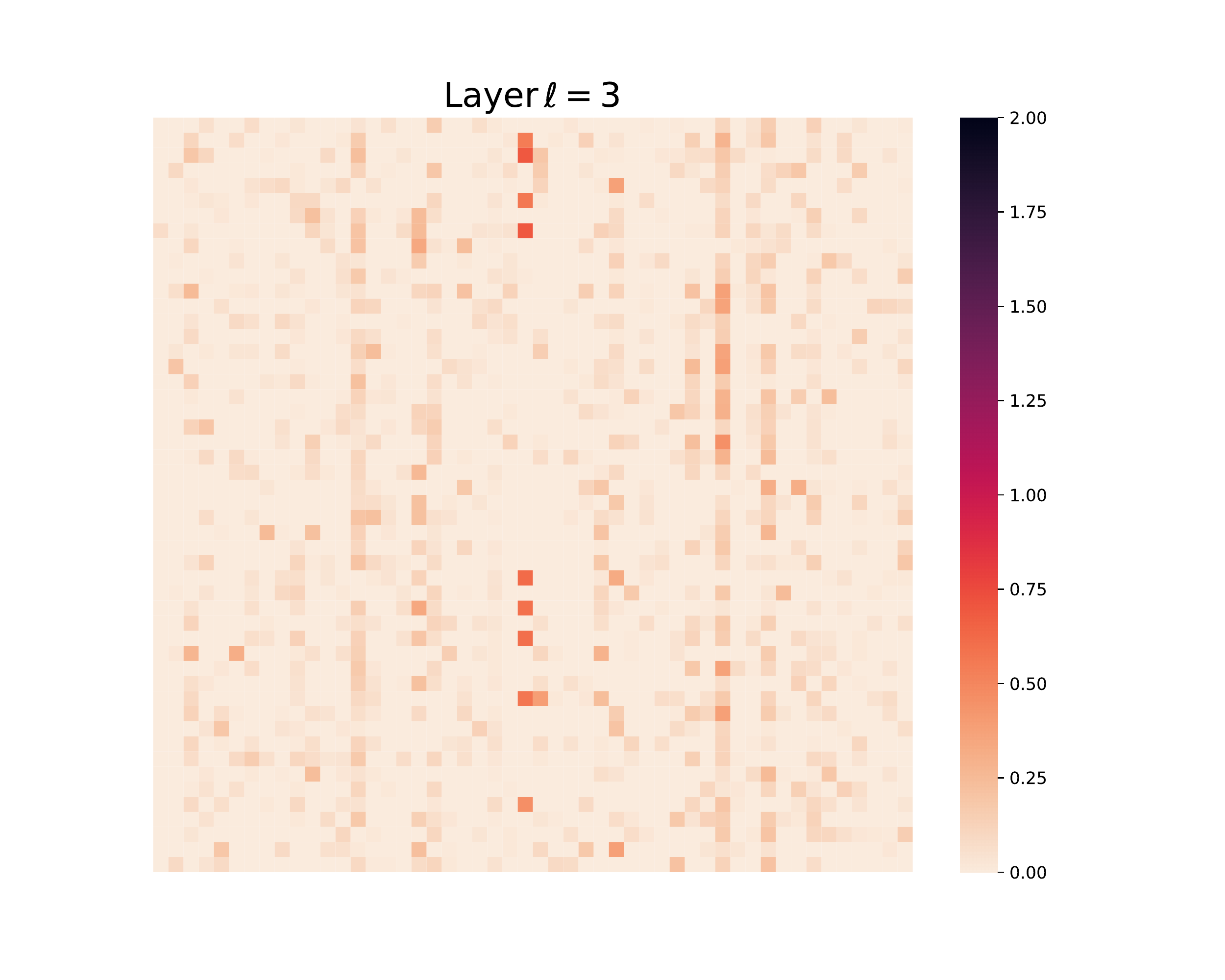}
     \end{subfigure}
     \begin{subfigure}[b]{0.22\textwidth}
         \centering
    \includegraphics[width=\textwidth]{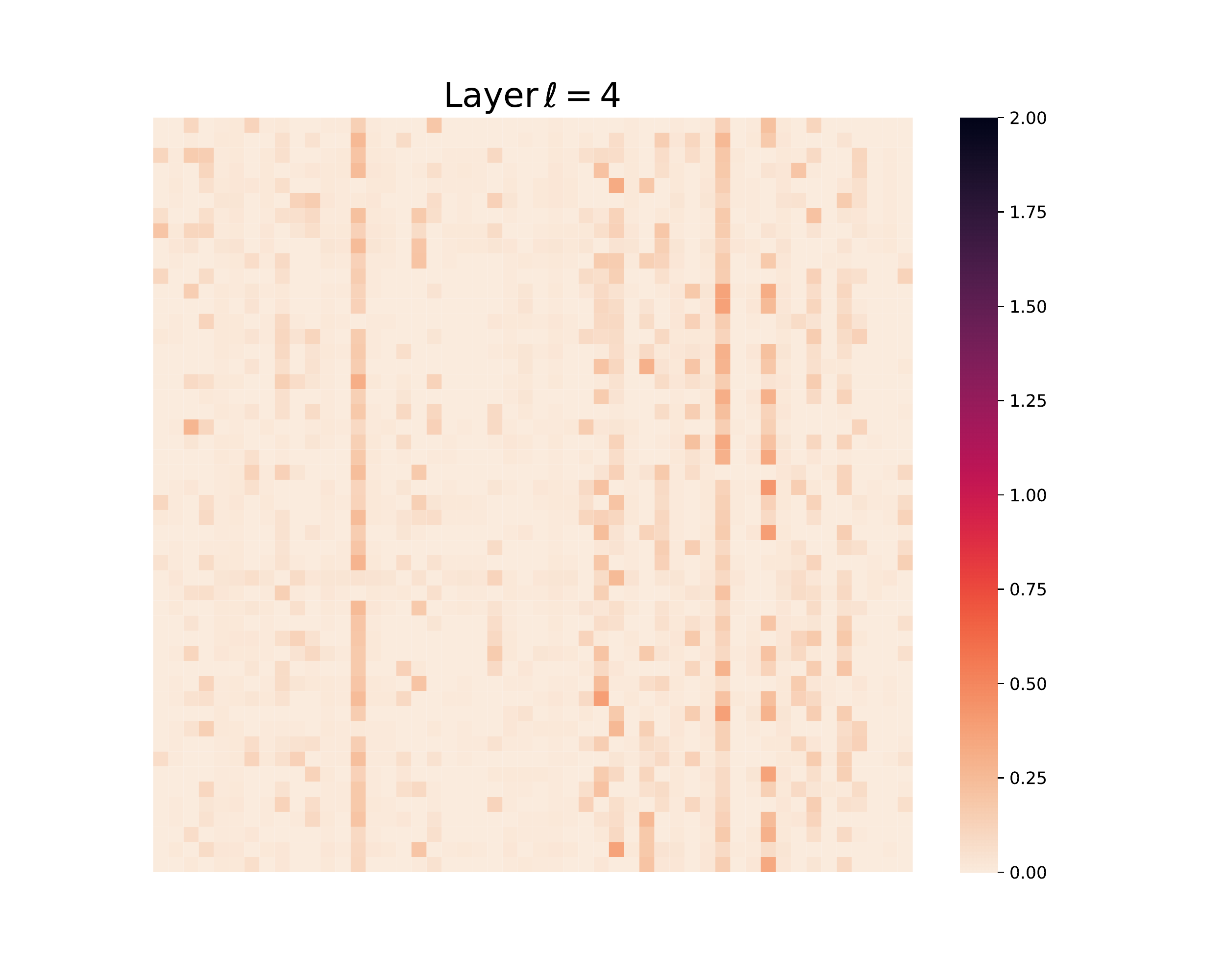}
     \end{subfigure}
     \begin{subfigure}[b]{0.22\textwidth}
         \centering
    \includegraphics[width=\textwidth]{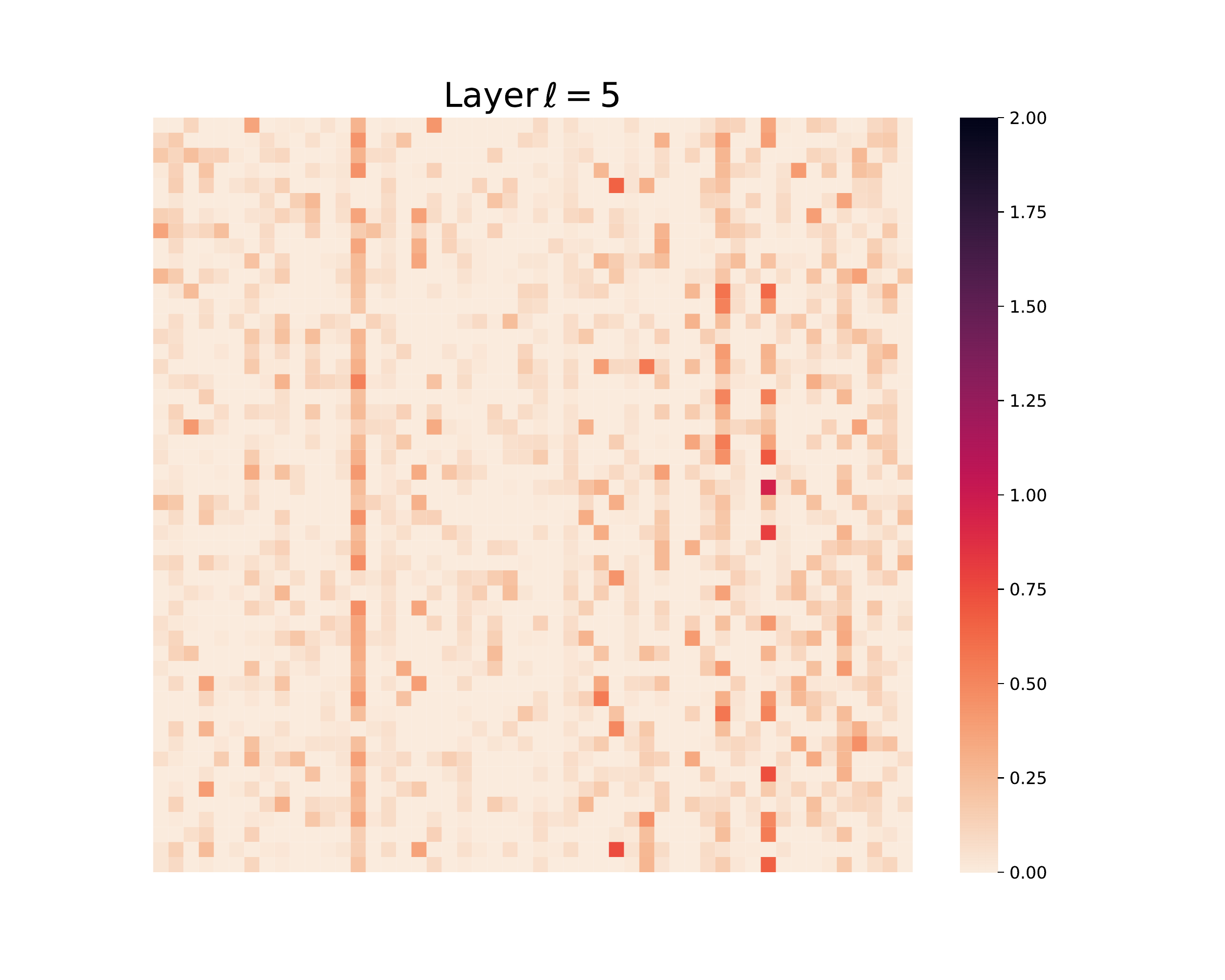}
     \end{subfigure}
     \begin{subfigure}[b]{0.22\textwidth}
         \centering
    \includegraphics[width=\textwidth]{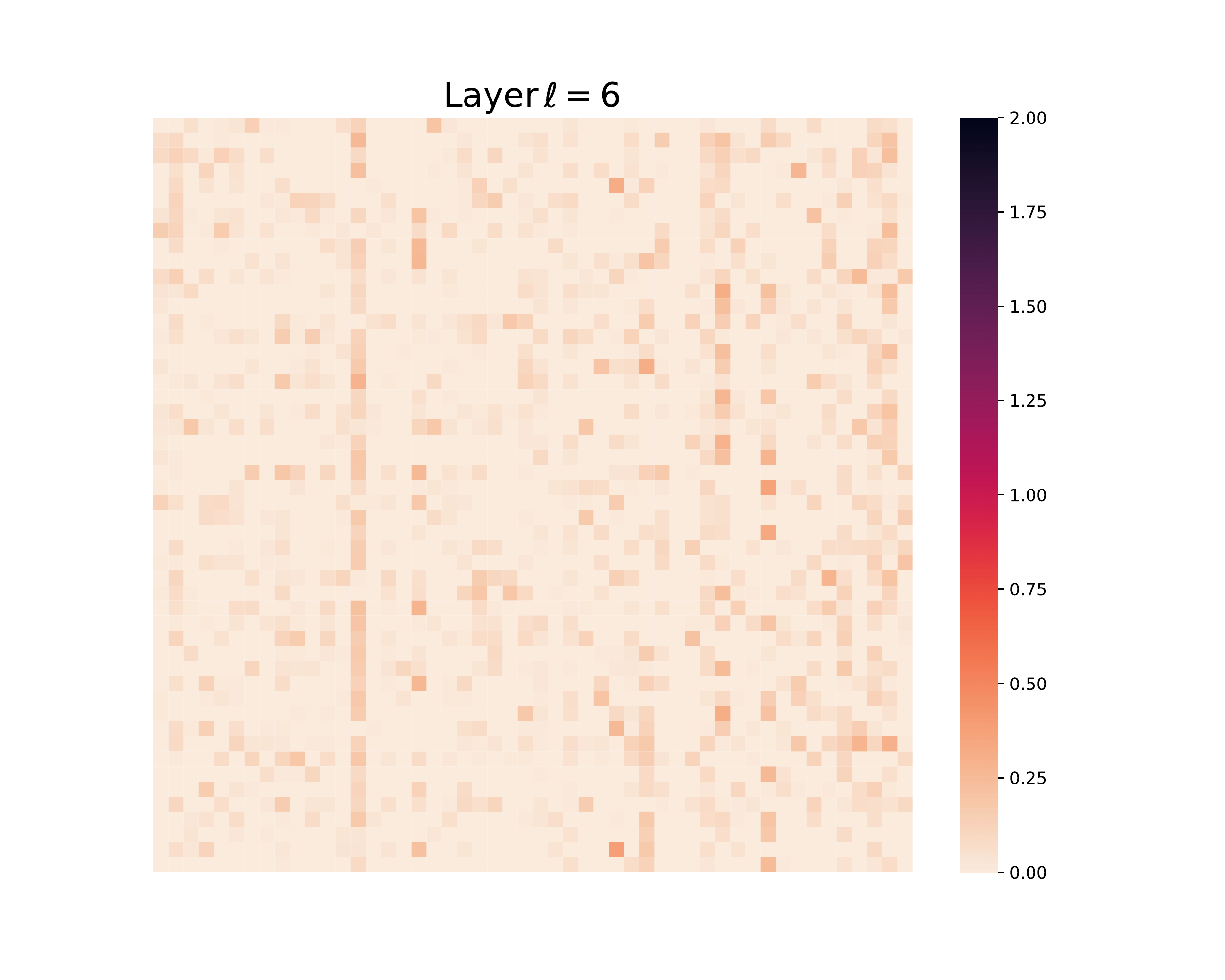}
     \end{subfigure}
     \begin{subfigure}[b]{0.22\textwidth}
         \centering
    \includegraphics[width=\textwidth]{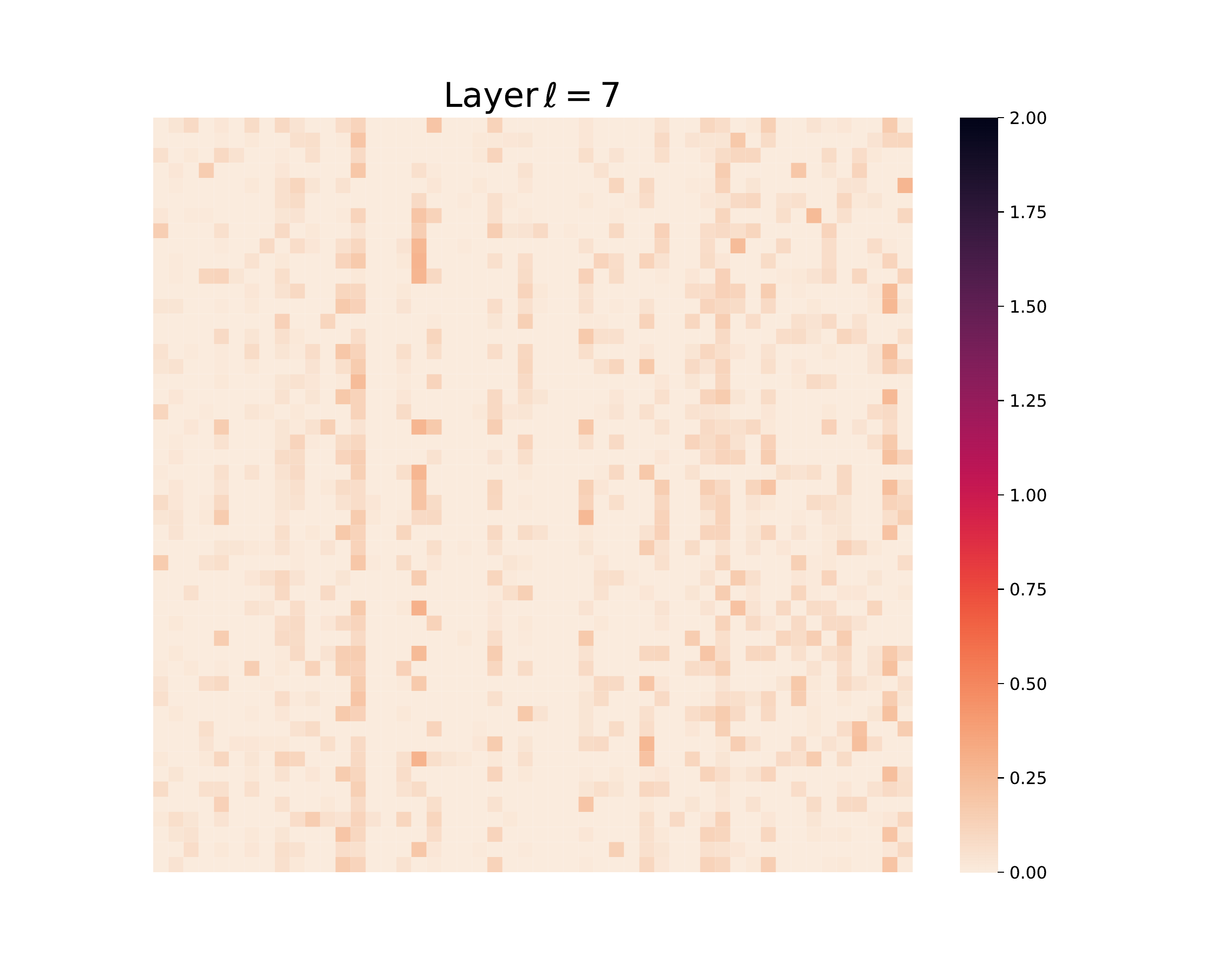}
     \end{subfigure}
     \begin{subfigure}[b]{0.22\textwidth}
         \centering
    \includegraphics[width=\textwidth]{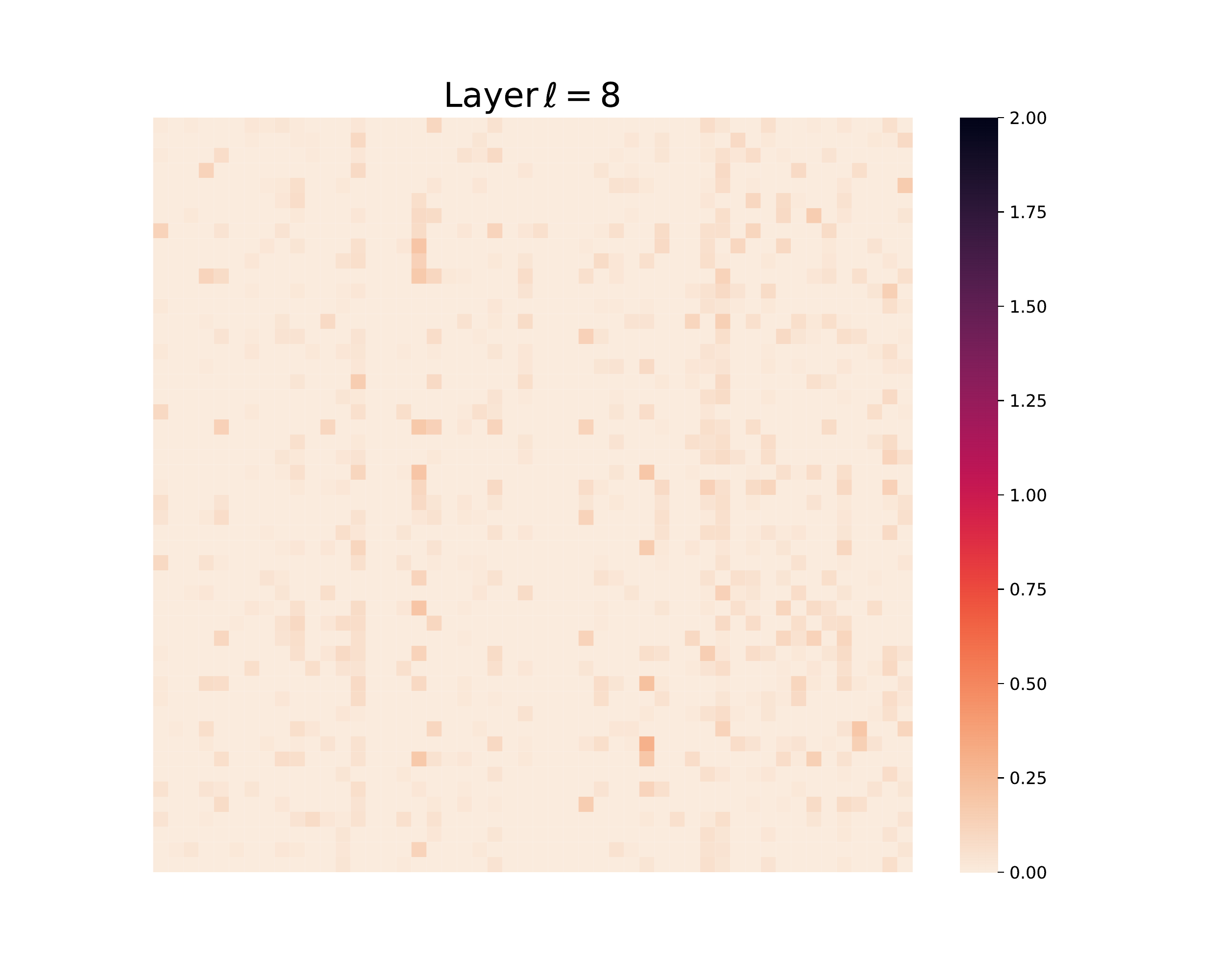}
     \end{subfigure}
     \begin{subfigure}[b]{0.22\textwidth}
         \centering
    \includegraphics[width=\textwidth]{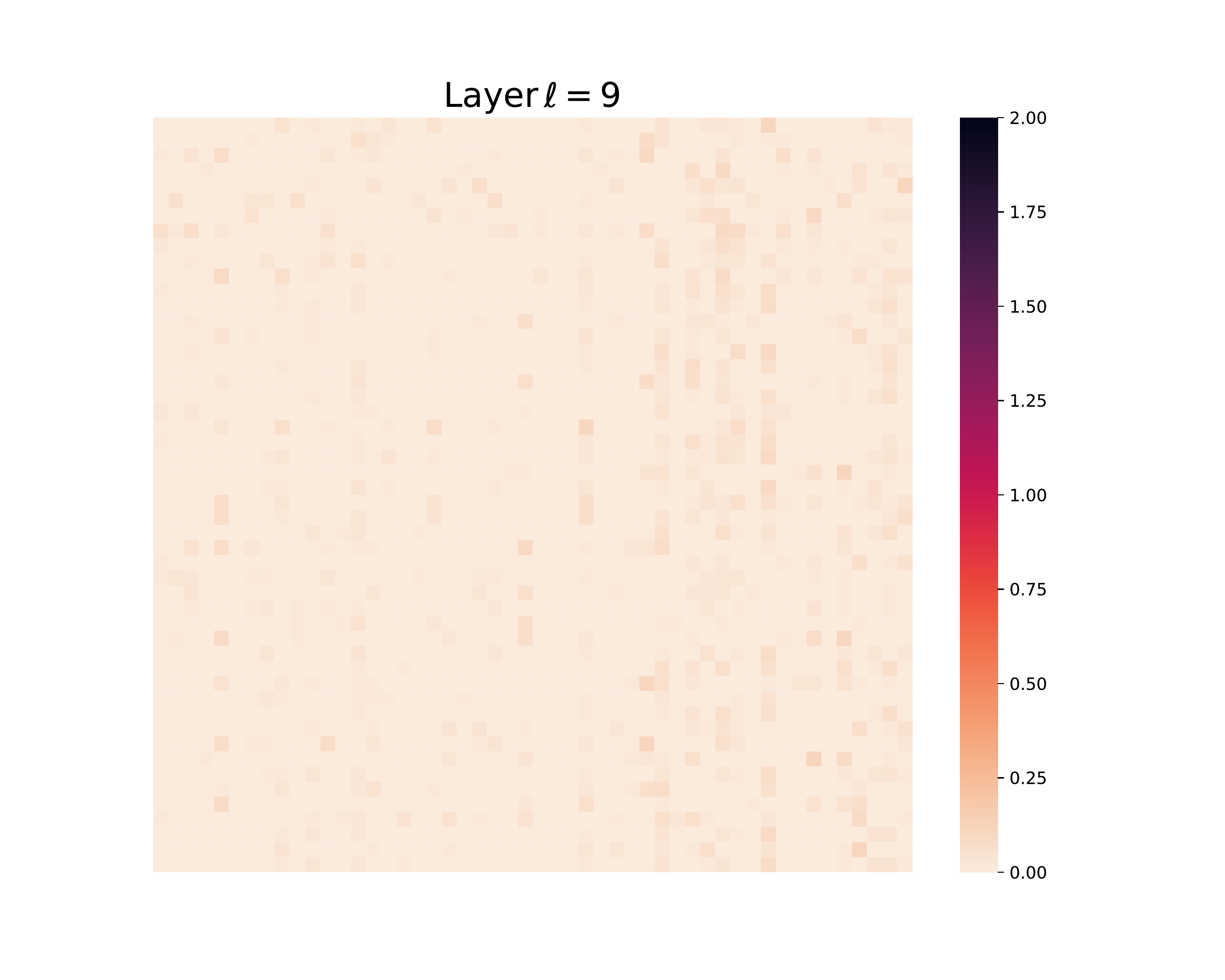}
     \end{subfigure}
     \begin{subfigure}[b]{0.22\textwidth}
         \centering
    \includegraphics[width=\textwidth]{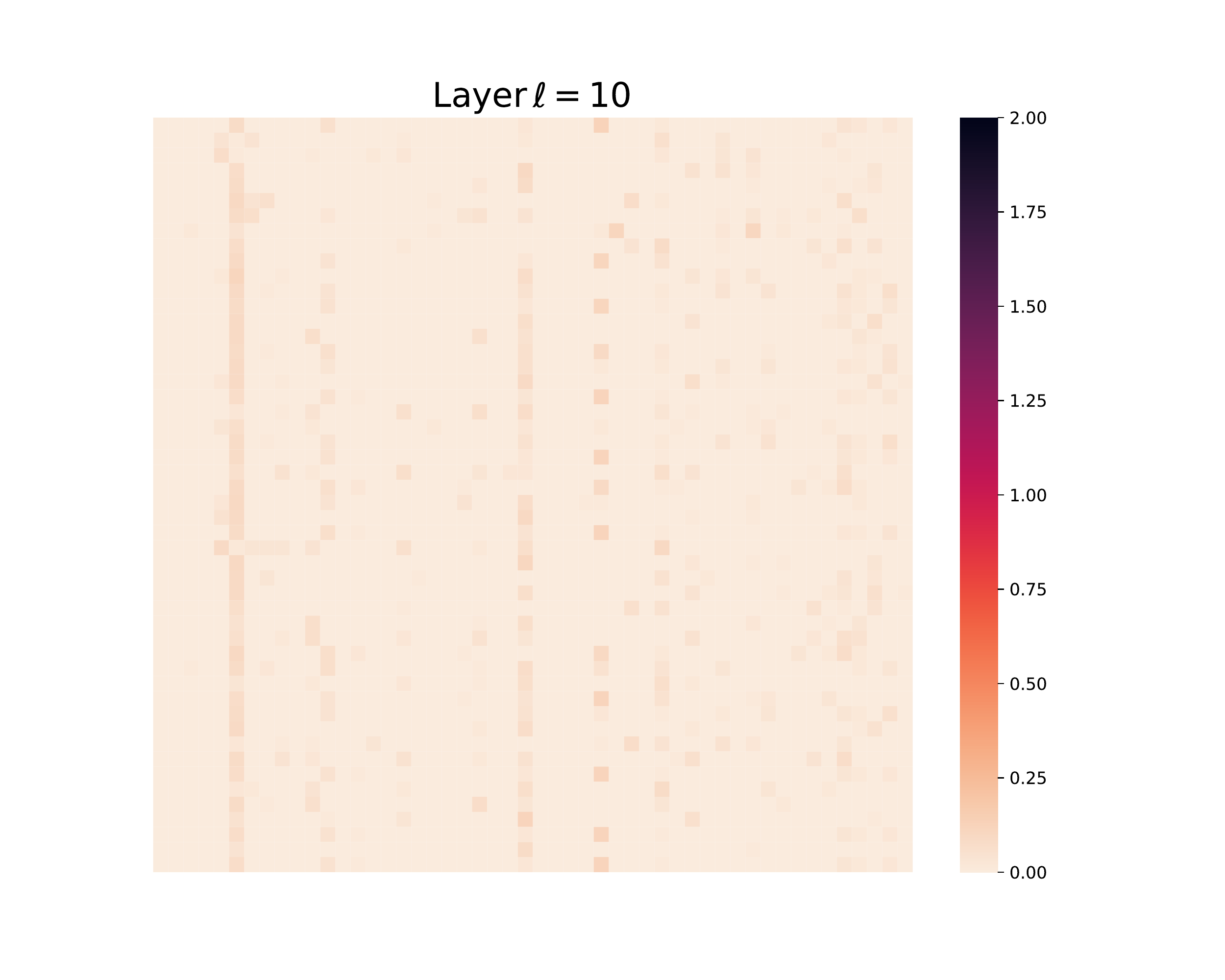}
     \end{subfigure}
     \begin{subfigure}[b]{0.22\textwidth}
         \centering
    \includegraphics[width=\textwidth]{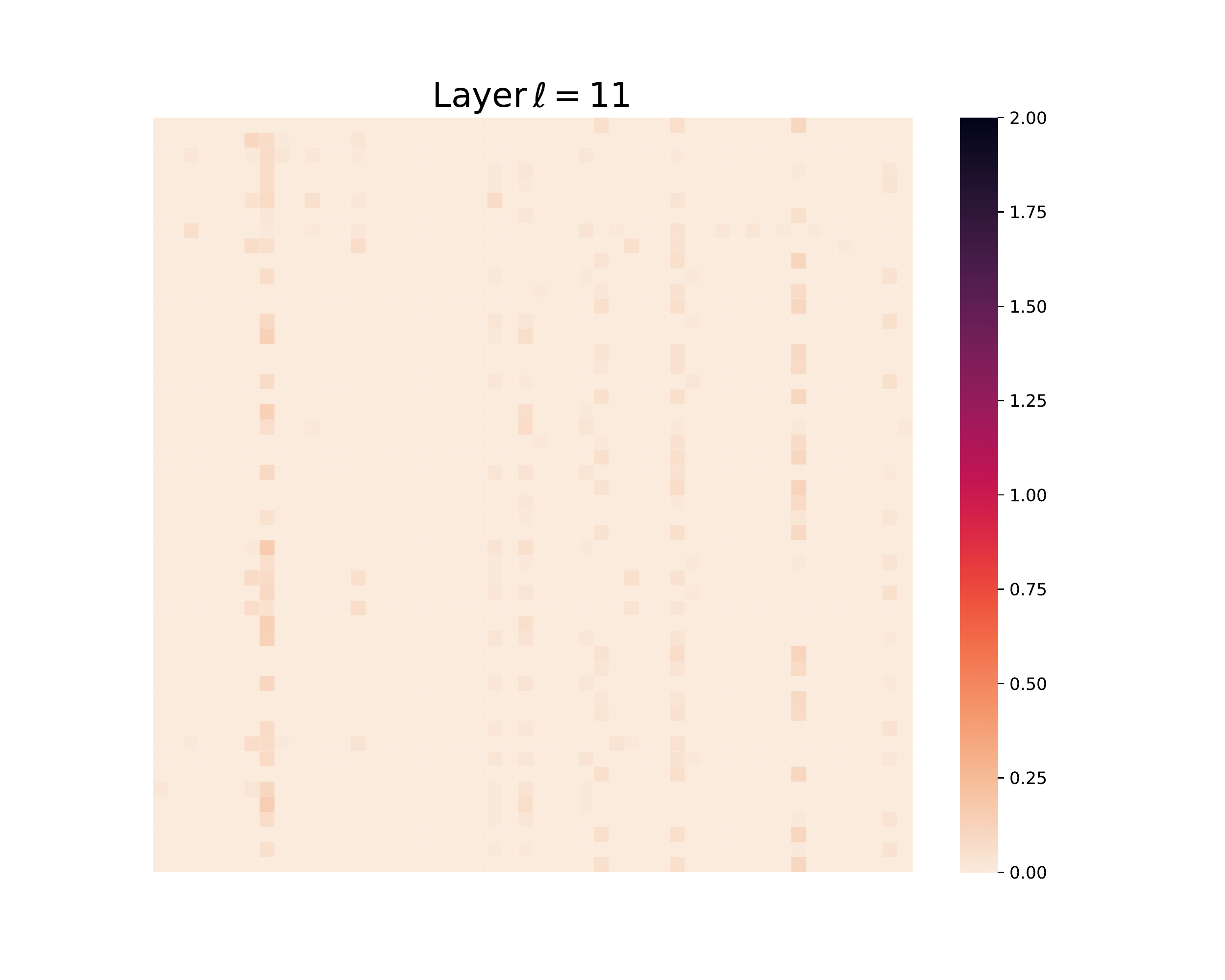}
     \end{subfigure}
     \begin{subfigure}[b]{0.22\textwidth}
         \centering
    \includegraphics[width=\textwidth]{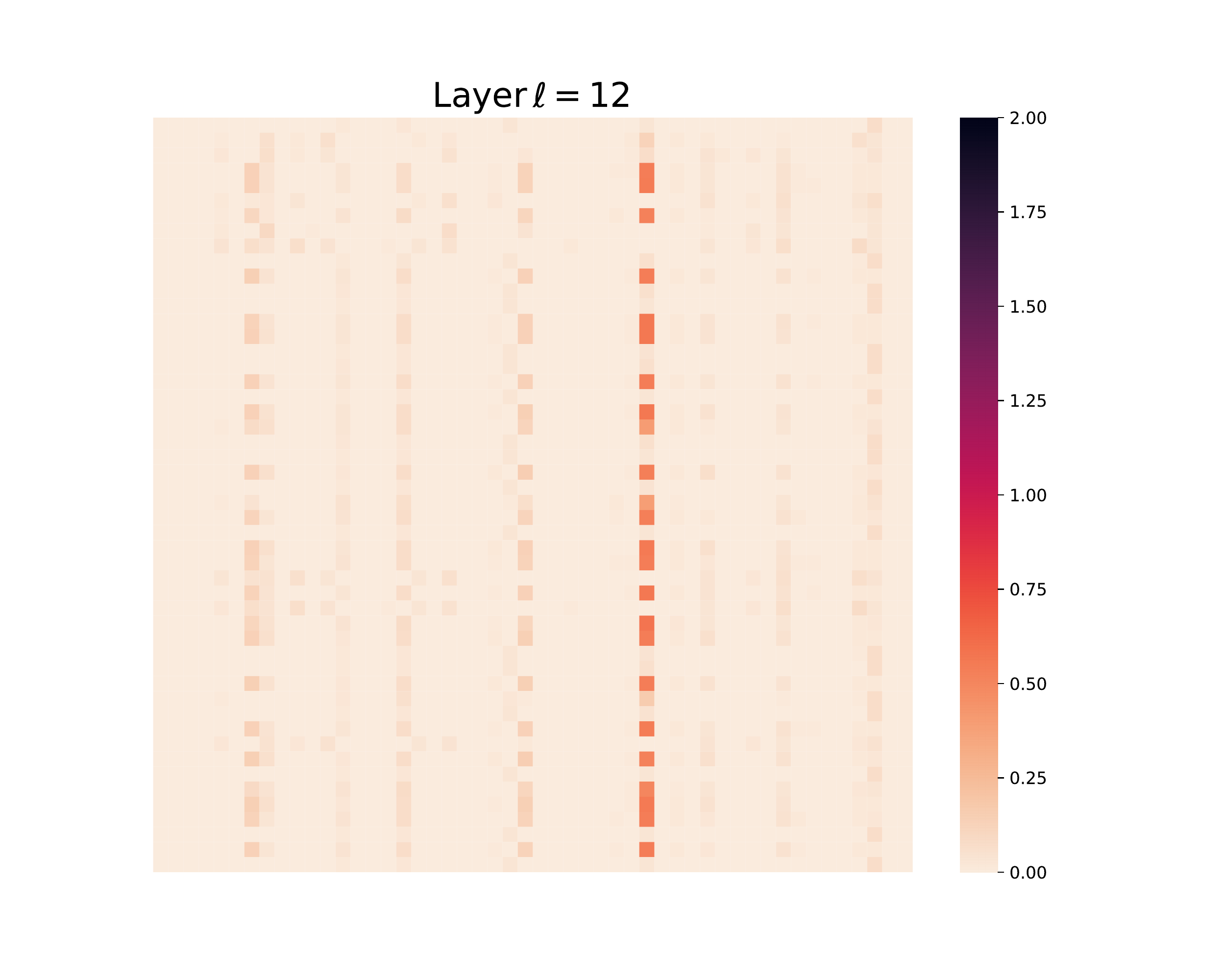}
     \end{subfigure}
        \caption{Visualizing layer-wise token $\Z^{\ell}$ representations at each layer $\ell$. To enhance the visual clarity, we randomly extract a 50$\times$50 sub-matrix from $\Z^{\ell}$ for display purposes. (\textit{Sample 4})}
        \label{fig:appendix-exp-ista-sparsity-heatmap-sample4}
        \vspace{-0.1in}
\end{figure}

\clearpage
\subsection{\ours{} Ablation}

\paragraph{Hyperparameters of \ours{}.} 
In \Cref{tab:ablation-parameters}, we present evaluation of \ours{} trained with various parameters.
More specifically, we investigate the effect of number of epochs, weight decay, learning rate, step size $(\eta)$ and the regularization term $(\lambda)$ in \texttt{ISTA} block. 
As shown in \Cref{tab:ablation-parameters}, \ours{} demonstrates consistently satisfactory performance across a diverse range of hyperparameters.

\begin{table*}[ht]
\centering
\caption{\small Top 1 accuracy of \ours{} on various datasets with different architecture design variants 
when trained on ImageNet. }
\label{tab:ablation-parameters}
\vspace{-1mm}
\small
    \setlength{\tabcolsep}{13.6pt}
\resizebox{0.98\textwidth}{!}{%
\begin{tabular}{@{}l|ccc|cc|cc@{}}
\toprule
\textbf{Model} & \texttt{epoch} & \texttt{weight decay}  &   \texttt{lr} &   $\eta$ (\texttt{ISTA}) &   $\lambda$ (\texttt{ISTA}) & ImageNet  \\ 
\midrule
\midrule
 \ours{-B} & 150 (default) & 0.5 (default) & $2.4\times 10^{-4}$ & 0.1 & 0.1 & 70.8 \\
 \midrule
 \midrule
 \ours{-B} & 150 & 0.5 & $2.4\times 10^{-4}$ & \textit{\color{gray} 0.02} & 0.1 & 70.7 \\
 \midrule
 \ours{-B} & 150 & 0.5 & $2.4\times 10^{-4}$ & \textit{\color{gray}0.5} & 0.1 & 66.7 \\
 \midrule
 \ours{-B} & 150 & 0.5 & $2.4\times 10^{-4}$ & 0.1 & \textit{\color{gray}0.02} & 70.8 \\
 \midrule
 \ours{-B} & 150 & 0.5 & $2.4\times 10^{-4}$ & 0.1 & \textit{\color{gray}0.5} & 70.5 \\
 \midrule
 \midrule
 \ours{-B} & \textit{\color{gray}90} & 0.5 & $2.4\times 10^{-4}$ & 0.1 & 0.1 & 69.5 \\
 \midrule
 \ours{-B} & \textit{\color{gray}300} & 0.5 & $2.4\times 10^{-4}$ & 0.1 & 0.1 & 70.9 \\
 \midrule
 \ours{-B} & 150 & \textit{\color{gray}1.0} & $2.4\times 10^{-4}$ & 0.1 & 0.1 & 70.3 \\
 \midrule
 \ours{-B} & 150 & \textit{\color{gray}0.05} & $2.4\times 10^{-4}$ & 0.1 & 0.1 & 70.2 \\
 \midrule
 \ours{-B} & 150 & 0.5 & \textit{\color{gray}$4.8\times 10^{-4}$} & 0.1 & 0.1 & 70.2 \\
 \midrule
 \ours{-B} & 150 & 0.5 & \textit{\color{gray}$1.2\times 10^{-4}$} & 0.1 & 0.1 & 70.3 \\
 \bottomrule
\end{tabular}%
}
\vspace{-0.1in}
\end{table*}

\subsection{Exploring Architecture Variants}\label{subsec:appendix-arch-variants}

In this section, we explore the two following alternative architectures. One architecture involves a modification to the attention mechanism, while the other involves a modification to the sparsification mechanism. Again, we re-emphasize that these choices, although principled, are entirely modular and the choices we make here still lead to very simple architectures. A more sophisticated analysis may lead to different, more complicated architectures that perform better in practice. The architectures we experiment with are:
\begin{itemize}
    \item Compression-inspired attention mechanism: revert the change in \Cref{eq:mssa_trainable_w}. That is, the attention mechanism implements \Cref{eq:SSA,eq:Multi-Head-SSA} directly.
    \item Majorization-minimization proximal step sparsification: instead of \Cref{eq:ista-block}, implement \Cref{eq:prox_maj_min_iteration}.
\end{itemize}

We obtain the following classification results in \Cref{tab:ablation-arch-variants}. 
After conducting additional simplifications to the network architecture (i.e., imposing additional constraints to the network architecture design), we discover that \ours{} maintains reasonable performance on ImageNet-1K.

\begin{table*}[ht]
\centering
\caption{\small Top 1 accuracy of \ours{} on various datasets with different architecture design variants 
when trained on ImageNet. }
\label{tab:ablation-arch-variants}
\vspace{-1mm}
\small
    \setlength{\tabcolsep}{13.6pt}
\resizebox{0.7\textwidth}{!}{%
\begin{tabular}{@{}lcc|cc|cc@{}}
\toprule
\textbf{Model} & \texttt{MSSA}-block  &   \texttt{ISTA}-block& ImageNet  \\ 
\midrule
\midrule
 \ours{-B} & default & default & 70.8 \\
 \midrule
 \ours{-B} & Eq.~\Cref{eq:SSA,eq:Multi-Head-SSA} & default & 63.3 \\
 \ours{-B} & default & Eq.~\Cref{eq:prox_maj_min_iteration} & 68.6 \\
 \bottomrule
\end{tabular}%
}
\vspace{-0.2in}
\end{table*}

\end{document}